\newtheorem{observation}{Observation}
\newcounter{axiomCounter}
\renewcommand{\theaxiomCounter}{\Roman{axiomCounter}}
\newenvironment{axiomCustom}[1]
  {\refstepcounter{axiomCounter} % \begin{Example}
  \par\addvspace{\topsep}%
  \noindent\textbf{Axiom~\theaxiomCounter{ } (#1)} %\par%
  \itshape\ignorespaces\noindent}
  {\par\addvspace{\topsep}%
  \ignorespacesafterend}% \end{Example}
\definecolor{navy}{RGB}{0, 0, 128}
\definecolor{pythonPurple}{RGB}{148,0,211}
\definecolor{PiColor}{HTML}{5D3A9B } 
\definecolor{darkgreen}{rgb}{0,0.3,0}
\definecolor{pythonOrange}{RGB}{255,165,0}
\definecolor{CBred}{RGB}{213, 94, 0}
\definecolor{CBblue}{RGB}{0,114,178}
\definecolor{VIColor}{HTML}{D55E00}
\definecolor{GVIColor}{HTML}{0072B2}
\definecolor{GVIColor1}{HTML}{0072B2}
\definecolor{GVIColor2}{HTML}{56B4E9}
\definecolor{FVIColor}{HTML}{009E73}
\definecolor{DVIColor}{HTML}{009E73}
\newcommand{\DVIColor}[1]{\textcolor{DVIColor}{#1}}%
\newcommand{\GVIColor}[1]{\textcolor{GVIColor1}{#1}}%
\newcommand{\VIColor}[1]{\textcolor{VIColor}{#1}}%
\newcommand{\acro}[1]{\textsc{#1}\xspace}
\newcommand{\GP}{{\acro{\smaller GP}}}
\newcommand{\DGP}{\acro{\smaller DGP}}
\newcommand{\DGPs}{\acro{\smaller DGPs}}
\newcommand{\K}{\acro{\smaller K}}
\newcommand{\GVI}{\acro{\smaller GVI}}
\newcommand{\ELBO}{\acro{\smaller ELBO}}
\newcommand{\KLD}{\acro{\smaller KLD}}
\newcommand{\DVI}{\acro{\smaller DVI}}
\newcommand{\RoT}{\acro{\smaller RoT}}
\newcommand{\BMM}{\acro{\smaller BMM}}
\newcommand{\Prior}{\acro{\smaller P}}
\newcommand{\Likelihood}{\acro{\smaller L}}
\newcommand{\Computation}{\acro{\smaller C}}
\newcommand{\PriorM}{\Lightning\acro{\smaller P}}
\newcommand{\LikelihoodM}{\Lightning\acro{\smaller L}}
\newcommand{\ComputationM}{\Lightning\acro{\smaller C}}
\newcommand{\PriorS}{\faLightbulbO\acro{\smaller P}}
\newcommand{\LikelihoodS}{\faLightbulbO\acro{\smaller L}}
\newcommand{\ComputationS}{\faLightbulbO\acro{\smaller C}}
\newcommand{\renyi}{R\'enyi }
\newcommand{\AD}{\acro{\smaller $D_{A}^{(\alpha)}$}}
\newcommand{\BD}{\acro{\smaller $D_{B}^{(\beta)}$}}
\newcommand{\GD}{\acro{\smaller $D_{G}^{(\gamma)}$}}
\newcommand{\ABGD}{\acro{\smaller $D_{G}^{(\alpha,\beta,r)}$}}
\newcommand{\ABGDReparam}{\acro{\smaller $\tilde{D}_{G}^{(\alpha,\beta)}$}}
\newcommand{\RAD}{\acro{\smaller $D_{AR}^{(\alpha)}$}}
\newcommand{\wKLD}{\acro{\smaller$\frac{1}{w}$KLD}}
\newcommand{\ADqp}{\acro{{\smaller $D_{A}^{(\alpha)}$}$(q||\pi)$}}
\newcommand{\BDqp}{\acro{{\smaller $D_{B}^{(\beta)}$}$(q||\pi)$}}
\newcommand{\GDqp}{\acro{{\smaller $D_{G}^{(\gamma)}$}$(q||\pi)$}}
\newcommand{\RADqp}{\acro{{\smaller $D_{AR}^{(\alpha)}$}$(q||\pi)$}}
\newcommand{\VI}{\acro{\smaller VI}}
\newcommand{\EP}{\acro{\smaller EP}}
\newcommand{\BNN}{\acro{\smaller BNN}}
\newcommand{\BNNs}{\acro{\smaller BNNs}}
\newcommand{\RMSE}{\acro{\smaller RMSE}}
\newcommand{\NLL}{\acro{\smaller NLL}}
\newcommand{\MVN}{\acro{\smaller MVN}}
\newcommand{\SPD}{\acro{\smaller SPD}}
\newcommand{\VAE}{\acro{\smaller VAE}}
\newcommand{\BBGVI}{\acro{\smaller BBGVI}}
\newcommand{\BOCPD}{\acro{\smaller BOCPD}}
\newtheorem{condition}{Condition}
\newcommand{\Lb}{\acro{\smaller $\mathcal{L}_{p}^{\beta}$}}
\newcommand{\Lg}{\acro{\smaller $\mathcal{L}_{p}^{\gamma}$}}
\newcommand{\bv}{Best viewed in color. }
\DeclareMathOperator*{\argmin}{arg\,min}
\newcommand{\EPSRC}{\acro{\smaller EPSRC}}
\newcommand{\OxWaSP}{\acro{\smaller OxWaSP}}
\def\*#1{\bm{#1}} %\def\*#1{#1}
\newcommand{\myitem}[1]{%
\item[#1]\protected@edef\@currentlabel{#1}%
}
\begin{document}

\title{Generalized Variational Inference: \\ Three arguments for deriving new Posteriors}

\author{\name Jeremias Knoblauch \email j.knoblauch@warwick.ac.uk \\
       \addr   The Alan Turing Institute\\
        Dept. of Statistics\\
        University of Warwick\\
        Coventry, CV4 7AL, UK \\
       \AND
       \name Jack Jewson \email j.e.jewson@warwick.ac.uk \\
       \addr   The Alan Turing Institute\\
        Dept. of Statistics\\
        University of Warwick\\
        Coventry, CV4 7AL, UK \\
        \AND
        \name   Theodoros Damoulas \email t.damoulas@warwick.ac.uk \\
        \addr The Alan Turing Institute\\
        Depts. of Computer Science \& Statistics\\
        University of Warwick\\
        Coventry, CV4 7AL, UK
        }

\editor{Leslie Pack Kaelbling}

\maketitle

\begin{abstract}%   <- trailing '%' for backward compatibility of .sty file
 %
 %This paper makes three major contributions: Firstly,  and foremost, we introduce Generalized Variational Inference (\GVI). 
 
 In this paper we advocate %argue\theo{advocate? argue is argumentative/negative} for 
 an optimization-centric view on Bayesian statistics and introduce a novel generalization of Bayesian inference.
 %\theo{big claims - could get loads of heat on novelty claim?.} %
 %
 On both counts, our  inspiration is the representation of Bayes' rule as an infinite-dimensional optimization problem as shown independently by \citet{Csiszar, DonskerVaradhan, Zellner}.
 First, we use this representation to prove a surprising
 %\theo{choice of word here implies a few things - your call} 
 optimality result of standard Variational Inference (\VI) methods: Under the proposed view, the standard Evidence Lower Bound (\ELBO) maximizing \VI posterior is always preferable to alternative approximations of the Bayesian posterior. 
 %Based on the optimization representation of Bayesian inference, 
 %  Arguing for an 
 %The result says that in the optimization-centric view on Bayesian inference, the standard Evidence Lower Bound (\ELBO) maximizing \VI posterior is always preferable to alternative approximations of the Bayesian posterior. 
 %
%   The immediate implication is that alternative
%  $\mathcal{Q}$-constrained approximations such as \VI minimizing other divergences \citep[e.g.][]{RenyiDiv, WassersteinVI, OperatorVI, ChiDiv, PerturbativeBBVI} and Expectation Propagation \citep[e.g.][]{EP, EP2} can produce more desirable posterior beliefs than standard \VI only by \textit{implicitly} targeting 
%  %a more appropriate 
%  different
%  Bayesian inference problem \theo{problems?} than the one which gave rise to Bayes rule
 %
 Next, we argue %\theo{maybe its fine here as its not an opening statement verb?} 
 for an optimization-centric generalization of standard Bayesian inference.
 The need for this generalization arises in situations of severe misalignment
 %\theo{too polemic - probability of pissing people off going up. Remember: ``if you can beat them hands down, allow them to join you"} 
 between reality and three assumptions underlying the standard Bayesian posterior: (1) Well-specified priors, (2) well-specified likelihood models and (3) the availability of infinite computing power.
 %\theo{computational power? any more formal ways of expressing that in terms of recognised specific quantities?}
 %\jack{, that the prior and model are correctly specified and that an infinite amount of computation is available,} and reality.
 %
 In response to this observation, our generalization is defined by three arguments and named the Rule of Three (\RoT).
 Each of its three arguments relaxes one of the assumptions underlying standard Bayesian inference. 
 We axiomatically derive the \RoT and recover existing methods as special cases, including the Bayesian posterior and its approximation by standard Variational Inference (\VI). 
 In contrast, alternative approximations to the Bayesian posterior maximizing other \ELBO-like objectives violate these axioms.
 %
  %\jack{Notably, alternative posterior approximation minimising divergence different from the \KLD do not satisfy our axioms (I feel like we should mention something about \DVI here, we want to make sure guys using \DVI read our work and switch to \GVI!). We provide counter examples demonstrating the importance of satisfying our axioms}
 %
 Finally, we introduce a special case of the \RoT that we call
 Generalized Variational Inference (\GVI).
 \GVI posteriors are a large and tractable family of belief distributions specified by three arguments:
 %novel and flexible class of posteriors.
 %
 %\GVI is the 
 %special case of the \RoT becomes tractable and depends on three arguments: 
 A loss, a divergence  and a variational family.
 \GVI posteriors possess appealing theoretical properties, including consistency and an interpretation as an approximate \ELBO. 
 The last part of the paper explores some attractive applications of \GVI in popular machine learning models, including robustness and more appropriate marginals.
 After deriving black box inference schemes for \GVI posteriors, their predictive performance is investigated on Bayesian Neural Networks and Deep Gaussian Processes, where \GVI can comprehensively improve upon existing methods.
\end{abstract}

\begin{keywords}
  Bayesian Inference, Generalized Bayesian Inference, Variational Inference, Bayesian Neural Networks, Deep Gaussian Proceses
\end{keywords}

\section{Introduction}

%Though attaining its modern form much later, the origins of Bayesian inference reach back much earlier. 
Though famously first discovered in \citet{BayesThm}, the version of Bayes' Theorem that a modern audience would be familiar with is much closer to the one in \citet{LaplaceBayes}.
%
%\theo{but tracing their roots to Laplace/bayes.. again you can do what you want much smoother while bringing everyone onboard with you. No need to play the ``controversial guy" anywhere you can. It will just slow you down}
%, modern Bayesian statistics has since become a corner stone of contemporary statistical methodology.
%
%Bayesian statistics constitute a set of methods that owe their name to a fundamental result in probability theory: Bayes' Theorem, so named due to its first appearance in an essay published posthumously in the name of Thomas \citet{BayesThm}.
%
Bayes' Theorem (or Bayes' rule) is one of the most fundamental results in probability theory and states that for a probability measure $\mathbb{P}$ and two events $A$, $B$, it holds that
\begin{IEEEeqnarray}{rCl}
    \mathbb{P}\left(A|B\right) & = & \dfrac{\mathbb{P}\left(B|A \right)\mathbb{P}\left(A\right)}{\mathbb{P}\left(B\right)}.
    \nonumber
\end{IEEEeqnarray}
As usual, $\mathbb{P}\left(A|B\right)$ denotes the conditional probability of event $A$ given that event $B$ occured.
It would take nearly two more centuries for this mathematical result to be used as the basis for an entire school of statistical inference \citep{WhenDidBayesBecomeBayes}. 
More precisely, \citet{WhereBayesOccurs} makes the first mention of the term \textit{Bayesian} in our modern understanding \citep{FirstOccurenceBayes}.

Bayesian statistics uses Bayes' Theorem to conduct inference on an unknown and unobservable event $A$. 
Specifically, suppose that one can compute for an observable event $B$ the probability $\mathbb{P}(B|A)$ and has a prior belief $\mathbb{P}(A)$ about the event $A$ before observing $B$. 
In this situation, Bayes' rule tells us that we should be able to draw probabilistic inferences on $A|B$ by computing the probability $\mathbb{P}(A|B)$. %\jack{probabilistic inference on a probability? probabilistic inference on the event $A|B$, or correctly calculate the probability $\mathbb{P}(A|B)$}
In practice, the events $A$ quantify the uncertainty about a parameter of interest $\*\theta \in \*\Theta$ and so are of the form $A \subset \*\Theta$. 
The prior beliefs about events $A$ are usually specified by some probability {density} $\pi:\*\Theta \to \mathbb{R}_+$ inducing the probability measure $\mathbb{P}(A) = \int_{A}d\pi(\*\theta)$.
This leaves us with the need to specify a probability distribution $\mathbb{P}\left(B|A \right)$ that relates the (unobserved) parameter $\*\theta$ to the (observable) event $B$. 
In practice, one typically sets $B = x_{1:n}$ to correspond to $n$ observations $x_{1:n}$. 
The next step is to define a distribution of $B|A$. This amounts to positing a likelihood function $p_n(x_{1:n}|\*\theta)$ and setting $\mathbb{P}(B|A) = p_n(x_{1:n}|\*\theta)$. 
Put together, this yields the standard \textit{Bayesian posterior} that we denote as $q^{\ast}_{\text{B}}(\*\theta)$ throughout the paper and which is given by
\begin{IEEEeqnarray}{rCl}
    q^{\ast}_{\text{B}}(\*\theta) & = & 
    \dfrac{p_n(x_{1:n}|\*\theta)\pi(\*\theta)}{Z}. %= q^{\ast}_{\text{B}}(\*\theta). 
    \nonumber
    %\label{eq:standard_bayes_rule_intro}
\end{IEEEeqnarray}
Here, $Z = \int_{\*\Theta}p_n(x_{1:n}|\*\theta)d\pi(\*\theta)$ is the normalizing constant---also known as partition function---whose computation generally makes the Bayesian posterior intractable.
Bayesian inference is appealing both conceptually and practically: Unlike frequentist inference, Bayesian methods allow inferences to be informed by domain expertise in form of a carefully specified prior belief $\pi(\*\theta)$.
Further, Bayesian inference produces belief distributions (rather than point estimates) over the parameter of interest $\*\theta \in \*\Theta$ that best fits the observed data $x_{1:n}$ while taking into account a prior belief $\pi(\*\theta)$ about appropriate values of $\*\theta$. 
%\theo{not only fits the observed data - also while taking into account/updating prior beliefs - else that would be a degenerate belief based on MLE right?}. 
%
As a consequence, Bayesian inferences automatically quantify uncertainty about $\*\theta$. This is practically useful in many situations, but especially if one uses $\*\theta$ predictively: Integrating over $q^{\ast}_{\text{B}}(\*\theta)$ avoids being over-confident about the best value of $\*\theta$, substantially improving predictive performance \citep[see e.g.][]{PredictionBayes}.
Amongst other benefits, it is this enhanced predictive performance that has cast Bayesian inference as one of the predominant paradigms in contemporary large-scale statistical inference and machine learning.

While Bayesian methods automatically quantify the uncertainty about their inferences, this comes at a cost: 
In the translation of Bayes' rule into the Bayesian posterior $q^{\ast}_{\text{B}}(\*\theta)$, we have made three implicit but crucial assumptions.
Firstly, we have assumed that the modeller has a prior belief $\pi(\*\theta)$ which is worth being taken into account and which the modeller is capable of writing out mathematically. 
Secondly, we specified the likelihood function $p_n(x_{1:n}|\*\theta)$ as a conditional probability. In other words, we have assumed that the model is correctly specified, which is to say that $p_n(x_{1:n}|\*\theta^{\ast}) = d\mathbb{P}(x_{1:n})$ for some unknown value of $\*\theta^{\ast} \in \*\Theta$.
Thirdly, we have assumed the availability of enough computational power to compute and perform exact inference based upon the generally intractable posterior $q^{\ast}_{\text{B}}(\*\theta)$.  
In many situations, these three assumptions built into $q^{\ast}_{\text{B}}(\*\theta)$ are harmless.
For modern large-scale statistical machine learning tasks however, they are frequently violated. 
%\theo{too strong? how about frequently violated?}.

%\jeremias{Generally works great, useful paradigm in many disciplines that deal with a limited amount of data and more traditional Bayesian models. BUT even there it often does not work well anymore: One can see various attempts at fixing these issues (computation: VI + approx, likelihood model: up/down-weighting + scoring rules, prior: uninformative priors etc). But we also don't want to lose the advantages of Bayesian inference (UQ, beliefs).}

%\jeremias{Paragraph about how using the standard Bayesian posteriors needs three assumptions: prior likelihood mdoel computation. How this does not work well in practice anymore, and various attempts at fixing these issues (computation: VI + approx, likelihood model: up/down-weighting + scoring rules, prior: uninformative priors etc). But we also don't want to lose the advantages of Bayesian inference (UQ, beliefs).}

%\jeremias{In response to this, the current paper take a step back ...}

\begin{figure}[b!]
        \vskip 0.1in
        \begin{center}
            \centerline{\includegraphics[trim= {0cm 0.0cm 0cm 0.0cm}, clip, %{1.5cm 3cm 2.2cm 3.6cm},clip, 
            width=1.00\columnwidth]{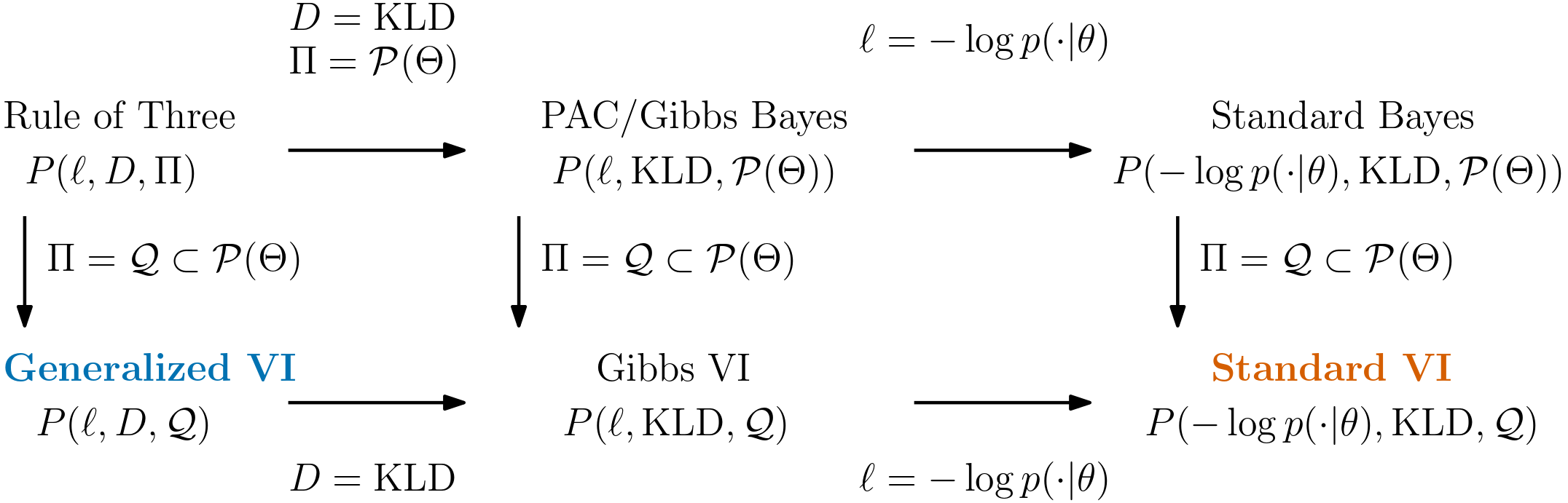}}
            \caption{
            A taxonomy of some important belief distributions as special cases of the \RoT. 
            }
            \label{Fig:taxonomy}
        \end{center}
        \vskip -0.2in
\end{figure}

To address this, the current paper takes a step back from Bayes' Theorem and the standard Bayesian posterior $q^{\ast}_{\text{B}}(\*\theta)$ to define a generalized class of posterior belief distributions.
Throughout, we motivate this with the tension between the three main assumptions underlying standard Bayesian inference on the one hand and the requirement of many contemporary statistical applications on the other hand.
To resolve this tension, we define a generalization of Bayesian inference that we call the Rule of Three (\RoT). 
The \RoT is specified by an optimization problem over the space of probability measures $\mathcal{P}(\*\Theta)$ on $\*\Theta$ with three arguments. 
These arguments are a loss function $\ell$, a divergence $D$ measuring the deviation of the posterior from the prior and a space $\Pi \subseteq \mathcal{P}(\*\Theta)$ of feasible solutions. 
Together, these three ingredients define posterior beliefs of the form 
\begin{IEEEeqnarray}{rCl}
    q^{\ast}(\*\theta) 
    & = & 
    \argmin_{q \in \Pi}\left\{
        \mathbb{E}_{q(\*\theta)}\left[ \sum_{i=1}^n \ell(\*\theta, x_i) \right]
        +
        D(q\|\pi)
    \right\}
    \overset{\text{def}}{=}
    P(\ell, D, \Pi).
    \label{eq:intro:RoT}
\end{IEEEeqnarray}
This recovers previous generalizations of Bayesian inference, including those inspired by Gibbs posteriors \citep[e.g.][]{GoshBasuPseudoPosterior, Bissiri, Jewson, RobustBayesGamma, MMDBayes}, tempered posteriors \citep[e.g.][]{SafeLearning,SafeBayesian,holmes2017assigning,InconsistencyBayesInference,DunsonCoarsening}, as well as PAC-Bayesian approaches \citep[for a recent overview, see][]{PACPrimer}.
we illustrate this taxonomy in Figure \ref{Fig:taxonomy}.
Unlike any of these previous generalizations however, posteriors taking the form $P(\ell, D, \Pi)$ may be non-multiplicative.
One of the most important implications of this is that in contrast to previous generalizations, the \RoT can recover standard 
%\theo{going to VI too fast? without explaining if this generalisation contains known posterios with different choices? make the transition a bit smother to VI}
Variational Inference (\VI) posteriors based on minimizing the Kullback-Leibler Divergence (\KLD) to $q^{\ast}_{\text{B}}(\*\theta)$. 
%
%\jack{Insert sentence here about \DVI similar to abstract?}
%
Notably, this is true even though standard \VI is derived as an approximation to the Bayesian posterior $q^{\ast}_{\text{B}}(\*\theta)$. 
Even more remarkably, variational approximations to the Bayesian posterior that are constructed by minimizing  divergences other than the \KLD are \textit{not} recovered by the \RoT.
%
%In sharp contrast, alternative approximations such as \VI posteriors based on minimizing alternative divergences $D$ to $q^{\ast}_{\text{B}}(\*\theta)$ are \textit{not} recovered by the \RoT. 
%
%This implies that such alternative approximations violate the axiomatic foundation underlying the \RoT.
%The same is not true for most non-standard \VI posteriors\theo{unclear at this stage as non-standard VI not defined yet}. 
%It follows that unlike standard \VI, such alternatives will violate the axiomatic foundation underlying the \RoT. 
%At the same time, 
%
%\jeremias{'But \DVI works well, so we want something that can do the same things without violating basic axioms'}
%\jeremias{sentence connecting alternative approximations with \GVI}
%
This inspires us to define and investigate Generalized Variational Inference (\GVI), the tractable special case for the \RoT in which $\Pi = \mathcal{Q} = \{q(\*\theta|\*\kappa): \*\kappa \in \*K\} \subset \mathcal{P}(\*\Theta)$ is chosen to be a variational family. 
Various theoretical and empirical findings lead us to conclude that \GVI posteriors are well-suited to real world inference problems and are an exciting first step on the way to derive generalized and tractable posterior belief distributions.
%In spite of this, we argue that they are well-aligned with the both the spirit and the goal of Bayesian inference. %
%
The paper draws these conclusions in five steps.
\begin{itemize}
    \myitem{}\textbf{Section \ref{sec:VI_and_Bayes_posteriors}}: 
        We recapitulate the standard approach to Bayesian inference and various variational approximation schemes for $q^{\ast}_{\text{B}}(\*\theta)$.
        Unconventionally, we do so through the lens of infinite-dimensional optimization. 
        This view provides a number of interesting insights: 
        For example, it enables a natural breakdown of variational approximation methods. 
        Further, it reveals that relative to the infinite-dimensional optimization problem whose solution is the Bayesian posterior, the standard Variational Inference (\VI) posterior is the optimal solution in its finite-dimensional variational family.
        Perhaps surprisingly, this also implies that for any fixed variational family, alternative approximations in the same variational family are sub-optimal.
        %
        %\jeremias{Hint at 'implicit' vs 'explicit' appropriate target. 'In an optimization-centric view of Bayesian inference, this implies that the empirical performance gain of \DVI = better objective.'}
        %\jeremias{Hint at negative result for alternative approx. Either here or in Sec 4.}
        %
    \myitem{}\textbf{Section \ref{sec:Examining_Bayesian_paradigms}}:
        We explain why a generalized view on Bayesian inference is necessary. 
        To this end, we first give a brief overview over the three assumptions that justify Bayesian inference:
        %$q^{\ast}_{\text{B}}(\*\theta)$ a desirable quantity to compute:
        %
        The availability of both an appropriately specified prior belief and likelihood as well as sufficient computational power to address the intractability of $q^{\ast}_{\text{B}}(\*\theta)$.
        We then proceed to contrast these three assumptions with the realities of modern day large-scale statistical inference and use three examples to explain the real world problems arising from this misalignment between assumptions and reality.
    \myitem{}\textbf{Section \ref{sec:RoT}}:
        We derive a generalized representation of Bayesian inference that we call the Rule of Three (\RoT) based on three simple axioms.
        The \RoT is inspired by our optimality finding regarding standard \VI in Section \ref{sec:VI_and_Bayes_posteriors}. Thus, unlike previous generalizations it defines an \textit{optimization-centric} outlook on Bayesian inference.
        We discuss the \RoT and explain how it can address the adverse effects of violating the assumptions underlying standard Bayesian inference.
        Further, we connect the \RoT to existing Bayesian methods, the information bottleneck method and PAC-Bayesian approaches.
    \myitem{}\textbf{Section \ref{sec:GVI}}:
        Translating the conceptual contribution of the \RoT into a methodological one, we introduce Generalized Variational Inference (\GVI).
        We explain how to use \GVI for robust inference and more appropriate marginal variances. We also point to some theoretical findings, including frequentist consistency and an interpretation of \GVI as approximate evidence lower bound.
        Lastly, we discuss computation of \GVI posteriors. While special cases permit closed form objectives, one generally needs to rely on stochastic Black Box \GVI (\BBGVI).
    \myitem{}\textbf{Section \ref{sec:experiments}}:
        We reinforce the conceptual and methodological appeal of \GVI with two large-scale inference applications: Bayesian Neural Networks (\BNNs) and Deep Gaussian Processes (\DGPs). 
        In different ways, both model classes are representative for the different ways in which contemporary large-scale inference is often misaligned with the assumptions underlying the standard Bayesian posterior. 
        We show that appropriately addressing this misalignment dramatically improves performance.
        %\theo{these bullets read like a charm}
\end{itemize}
Throughout, we radically simplify the presentation for improved readability: 
For example, we do not incorporate latent variables into our notation in spite of demonstrating \GVI on a Deep Gaussian Process (\DGP) latent variable model in Section \ref{sec:experiments_GPs}. 
Further, we assume that losses are additive, homogeneous and such that the $i$-th loss term $\ell(\*\theta, x_i)$ only depends on $x_i$. None of these assumptions are necessary, and one could replace $\ell(\*\theta, x_i)$ by $\ell_i(\*\theta, x_{1:i})$ or $\ell_i(\*\theta, x_{\text{nbh}(i)})$ for some neighbourhood $\text{nbh}(i) \subset \{1, 2, \dots n\}$ throughout the paper without violating the principles of the \RoT or \GVI\footnote{
    For the interested reader, we note that the appropriate notational extension of \GVI to latent variables and non-homogeneous losses is formalized in \citet{GVIConsistency}, see e.g. Assumption 1 and Remarks 1,2 and 3 therein.
}.

\section{An optimization-centric view on Bayesian inference}
\label{sec:VI_and_Bayes_posteriors}

Before presenting our main findings, we set the stage by introducing an optimization-centric view on (generalized) Bayesian inference.
Specifically, we draw attention to an isomorphism between the Bayesian posterior and an infinite-dimensional optimization problem and discuss three implications of this relationship.
%
%We then demonstrate that this isomorphism has a number of important implications by discussing three of its interesting implications.
%The remainder demonstrates that this isomorphism has a number of important implications 
%in three steps\theo{rephrase ``in three steps"? it reads weird - maybe just delete 3 steps word}:
%
\begin{itemize}
    %\myitem{(1)}
    \item[]
    \textbf{Section \ref{sec:objective_commitment}:}
    Committing to any exact Bayesian posterior is equivalent to committing to a particular optimization problem over the space of probability measures
    %\label{isomorphic:objective_commitment}
    %\myitem{(2)}
    \item[]
    \textbf{Section \ref{sec:VI_opt}:}
    Taking an optimization-centric view of Bayesian inference and
    holding the variational family fixed, standard Variational Inference (\VI) produces \textit{optimal} approximations of the exact Bayesian posterior. %\jeremias{EDWIN: I'd be careful with the word 'approximation' here, since you later argue that VB produces another belief distribution instead of approximating Bayes}
    % \label{isomorphic:VI_opt}
    % \myitem{(3)}
    \item[]
    \textbf{Section \ref{sec:reconciling_suboptimality}:}
    non-standard \VI methods based on alternative divergences are \textit{suboptimal} approximations of the exact Bayesian posterior.
    %\label{isomorphic:alternative_VI}
\end{itemize}
%
%First, we take a thorough look at the infinite-dimensional optimization problem characterizing Bayesian inference.

\subsection{Preliminaries}

%The holy grail of Bayesian inference is the Bayesian posterior.
%
Given a prior belief $\pi(\*\theta)$ about the parameter and observations $x_{1:n}$ linked to $\*\theta$ via a likelihood function $p(x_i|\*\theta)$, the \textbf{standard Bayesian posterior} belief $q_{\text{B}}^{\ast}(\*\theta)$ is computed through a multiplicative updating rule  with $\ell(\*\theta, x_i) = -\log p(x_i|\*\theta)$ as
\begin{IEEEeqnarray}{rCl}
    q_{\text{B}}^{\ast}(\*\theta) & \propto & \pi(\*\theta)\prod_{i=1}^n\exp\{-\ell(\*\theta, x_i)\}.
    \label{eq:generalized_bayes_posterior}
\end{IEEEeqnarray}
While this way of writing Bayes rule might seem cumbersome, it reveals that the multiplicative structure is in principle applicable to {any} loss function. 
This leads to the development of a \textbf{generalized Bayesian posterior}  by replacing the negative log likelihood with any loss $\ell: \*\Theta \times \mathcal{X} \to \mathbb{R}$.
If the normalizer of eq. \eqref{eq:generalized_bayes_posterior} exists, such treatment provides a coherent and principled way to update beliefs about an arbitrary parameter $\*\theta$ \citep{Bissiri}.

To make this generalization tangible, imagine that $\*\theta$ denotes the median for the data generating mechanism that produced $x_{1:n}$ and that one wishes to update beliefs about it in a Bayesian manner.
A loss-based Bayesian treatment of this problem would combine a prior belief $\pi$ about the median with the loss $\ell(\*\theta, x_i) = |\*\theta - x_i|_1$. 
Together, these two ingredients yield a generalized Bayesian posterior belief about the median as given above. For some applications with $\ell(\*\theta, x_i) \neq -\log p(x_i|\*\theta)$, see \citet{GoshBasuPseudoPosterior, VBConsistencyAlquier2, InconsistencyBayesInference, Jewson, RBOCPD, MMDBayes} or \citet{RobustBayesGamma}.

Throughout this paper, we do not notationally distinguish standard and generalized Bayesian posteriors.
%
%\jack{This is justified by the fact that in the model misspecification paradigm considered here, Bayes' rule becomes a special case of GBI minimising the log-score}
%
Unless we make the distinction explicit, we subsume both types of belief distributions under the name \textbf{Bayesian posterior}
and denote any posterior belief computed as in eq. \eqref{eq:generalized_bayes_posterior} by $q_{\text{B}}^{\ast}(\*\theta)$.
%
%Whenever a posterior belief is computed in accordance with a multiplicative Bayesian update rule as in eq. \eqref{eq:generalized_bayes_posterior}, we denote it by $q_{\text{B}}^{\ast}(\*\theta)$. %\jack{Do we need to be careful about exactly, we don't know the normalisisng constant do we so we don't have exact posterior. I know you mean `unconstrained optimisation' by exact but this could be confusing} we denote the resulting distribution by $q_{\text{B}}^{\ast}$.
%
%The subscript in $q_{\text{B}}^{\ast}$ is required as we treat Bayesian posteriors as only one of many possible recipes to derive belief distributions. 
%Similarly, other subscripts will be indicative of the other recipes that produced the relevant posterior distributions. For example, posterior beliefs derived with standard Variational Inference (\VI) will be denoted as $q_{\text{\VI}}^{\ast}(\*\theta)$.
%
%Lastly, the asterisk superscript will be used for any kind of posterior belief distribution. 
%
The asterisk superscript in $q_{\text{B}}^{\ast}(\*\theta)$ emphasizes an observation we make next and that will be a recurrent theme throughout the paper: \textit{Any} posterior belief distribution is the result of an appropriately specified optimization problem.
%, an important theme of the current paper introduced in the next Section.

\subsection{Bayesian inference as infinite-dimensional optimization}
\label{sec:objective_commitment}

The traditional perspective on Bayesian posteriors derives from the basic laws of probability and in particular Bayes' Theorem:
If $p(x_i|\*\theta)$ denotes the conditional density of $x_i$ given $\*\theta$ and $\pi(\*\theta)$ denotes the  density of $\*\theta$, the conditional probability of $\*\theta$ given $x_{1:n}$ is given by $q^{\ast}_{\text{B}}(\*\theta)$ in eq. \eqref{eq:generalized_bayes_posterior} with $\ell(\*\theta, x_i) = -\log p(x_i|\*\theta)$. 
%$q_{\text{B}}^{\ast}(\*\theta) = Z^{-1}\prod_{i=1}^np(x_i|\*\theta)\pi(\*\theta)$ for $Z = \int_{\*\Theta}\prod_{i=1}^np(x_i|\*\theta)\pi(\*\theta)d\*\theta$ as in  eq. \eqref{eq:generalized_bayes_posterior}. 
%
% This multiplicative update rule is motivated slightly differently for generalized Bayesian posteriors, but the inherent logic largely remains the same\theo{vague - only says there is a different path to it and leaves qs open} \jack{I remember being initially disliking this but below (commented out) you gave me an more precise explanation of what you meant, maybe you could use this to clairify what you mean here. What i think you mean is that GBI is still; motivated from as an updating of beliefs hence coherence ect., so is at least inspired by the laws of probability even if it doesn't following them}.
% %
% Specifically, the notions of coherency and Bayesian additivity driving the theoretical development of \citet{Bissiri} are inherently\theo{clashes with inherent again above} centered around multiplicative updating\footnote{Note that the literature on PAC-Bayesian procedures can provide different justifications for multiplicative update rules in certain circumstances \citep[see e.g.][]{PACmeetsBayesianInference, PACPrimer}.}.
%\jeremias{I think we should link this up with the PAC-Bayes literature}
%
This multiplicative update rule is motivated slightly differently for generalized Bayesian posteriors, but the inherent logic largely remains the same: By imposing coherence, one forces the priors and losses into an exponentially additive relationship (see also Section \ref{sec:coherence}). One interpretation of this is that one  treats the loss terms $\exp\{-\ell(\*\theta, x_i \}$ as quasi-likelihoods, rendering the resulting posteriors at very strongly inspired by conditionalization and the fundamental rules of probability that underlie Bayes' rule\footnote{Note that the literature on PAC-Bayesian procedures can provide different justifications for multiplicative update rules \citep[see e.g.][]{PACmeetsBayesianInference, PACPrimer}.}.

While Bayes' rule and eq. \eqref{eq:generalized_bayes_posterior} are well-known, there is a conceptually rather different path for arriving at $q_{\text{B}}^{\ast}(\*\theta)$:
Dating back at least to \citet{Csiszar} and \citet{DonskerVaradhan}, it was shown that Bayesian inference can  be recast as the solution to an infinite-dimensional optimization problem. This result was rediscovered in statistics by \citet{Zellner} and states that for $\mathcal{P}(\*\Theta)$ denoting the space of all probability measures on $\*\Theta$, the Bayesian posterior is given by
%
%
% Dating back to \citet{Csiszar} and \citet{DonskerVaradhan}, this result was rediscovered in statistics by \citet{Zellner} and is sometimes called the \textbf{variational characterisation} of the standard Bayesian posterior.
%\theo{might wanna say something brief to distinguish/generalize from VI lingo that people might have? - e.g. ``note that this does not imply an approximation?" depends how much hand-holding we do } \jack{Variational here meaning to optimise over the space of densities not necessarily implying an approximaion}
%
%Specifically, denoting $\mathcal{P}(\*\Theta)$ as the space of all probability measures on $\*\Theta$, these authors demonstrated that
%
\begin{IEEEeqnarray}{rCl}
    q^{\ast}_{\text{B}}(\*\theta) & = & 
    \argmin_{q \in \mathcal{P}(\*\Theta)}\left\{
        \mathbb{E}_{q(\*\theta)}\left[
        -\sum_{i=1}^n
            \log(p(x_i|\*\theta))
        \right]
        +
        \KLD\left(q || \pi \right)
    \right\},   \nonumber \label{eq:Zellner_standard}
\end{IEEEeqnarray}
where \KLD is the Kullback-Leibler divergence \citep{kullback1951information} given by
\begin{IEEEeqnarray}{rCl}
    \KLD(q||\pi) & = &  \mathbb{E}_{q(\*\theta)}\left[\log\left(\dfrac{q(\*\theta)}{\pi(\*\theta)}\right)\right] = \mathbb{E}_{q(\*\theta)}\left[\log q(\*\theta)\right] - \mathbb{E}_{q(\*\theta)}\left[\log \pi(\*\theta)\right].
    \nonumber
\end{IEEEeqnarray}
Similarly, the generalized Bayesian posteriors of \citet{Bissiri} 
%that are commonly interpreted as a form of generalized Bayes theorem 
solve the optimization problem given by %\jack{see my comment above}
\begin{IEEEeqnarray}{rCl}
    q^{\ast}_{\text{B}}(\*\theta) & = & 
    \argmin_{q \in \mathcal{P}(\*\Theta)}\left\{
        \mathbb{E}_{q(\*\theta)}\left[
        \sum_{i=1}^n
            \ell(\*\theta, x_i)
        \right]
        +
        \KLD\left(q || \pi \right)
    \right\}.   \label{eq:Zellner}
\end{IEEEeqnarray}
This objective allows a re-interpretation of Bayesian inference as regularized optimization: As in maximum likelihood inference or other empirical risk minimization tasks, one wishes to minimize some loss function over the data. 
Unlike with frequentist methods however, one wishes to quantify uncertainty and obtain a belief distribution rather than a point estimate. Consequently, one adds the \KLD regularization term. 
Note that if this \KLD term were absent from eq. \eqref{eq:Zellner}, the solution of the optimization problem would simply be a Dirac mass $\delta_{\widehat{\*\theta}_n}(\*\theta)$ at the empirical risk minimizer $\widehat{\*\theta}_n$, since $\delta_{\widehat{\*\theta}_n}(\*\theta) \in \mathcal{P}(\*\Theta)$. 
%
%This is also why  we call the divergence term in eq. \eqref{eq:intro:RoT} the \textit{uncertainty quantifier}. 
%We will revisit the significance of this observation in Remark \ref{remark:axiom:representation}. 
%

%\jeremias{Re-interpretation of Bayesian inference as regularized optimization. Regularizer = \KLD. As we will see, \VI = constrained + regularized opt.}
%
For completeness' sake, we now provide a self-contained proof of eq. \eqref{eq:Zellner} which is based on the one in the supplementary material of \citet{Bissiri}. 
This encompasses the original result of \citep{Csiszar} and \citep{DonskerVaradhan} for $\ell(\*\theta, x_i) = -\log p(x_i|\*\theta)$.
\begin{theorem}
    If $Z = \int_{\*\Theta} \exp\left\{-\sum_{i=1}^n\ell(\*\theta, x_i)\right\} \pi(\*\theta)d\*\theta<\infty$, then the solution of eq. \eqref{eq:Zellner} exists and is equivalent to the generalized Bayesian posterior $q^{\ast}_{\text{B}}(\*\theta)$ as given in eq. \eqref{eq:generalized_bayes_posterior}.
    \label{thm:Zellner}
\end{theorem}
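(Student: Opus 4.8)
The plan is to rewrite the objective functional as a single Kullback-Leibler divergence (up to an additive constant that does not depend on $q$), from which the minimizer can be read off immediately because the \KLD is non-negative and vanishes only when its two arguments agree. Concretely, define the candidate posterior $q^{\ast}_{\text{B}}(\*\theta) = Z^{-1}\exp\{-\sum_{i=1}^n\ell(\*\theta, x_i)\}\pi(\*\theta)$, which is a valid probability density precisely because the hypothesis $Z<\infty$ guarantees it normalizes (and $Z>0$ since the integrand is strictly positive wherever $\pi$ puts mass). The key algebraic step is the identity
\begin{IEEEeqnarray}{rCl}
    \mathbb{E}_{q(\*\theta)}\left[\sum_{i=1}^n \ell(\*\theta, x_i)\right] + \KLD(q\|\pi)
    & = & \mathbb{E}_{q(\*\theta)}\left[\log\dfrac{q(\*\theta)}{\exp\{-\sum_{i=1}^n\ell(\*\theta, x_i)\}\pi(\*\theta)}\right] \nonumber \\
    & = & \KLD\!\left(q \,\|\, q^{\ast}_{\text{B}}\right) - \log Z,
    \nonumber
\end{IEEEeqnarray}
obtained by expanding the \KLD as $\mathbb{E}_q[\log q] - \mathbb{E}_q[\log\pi]$, absorbing the loss term into the logarithm, and then multiplying and dividing by $Z$ inside the log so that the numerator-over-denominator becomes exactly $q/q^{\ast}_{\text{B}}$.

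**Next I would** invoke non-negativity of the \KLD (Gibbs' inequality / Jensen applied to $-\log$): since $-\log Z$ is a constant independent of $q$, the right-hand side is minimized over $q\in\mathcal{P}(\*\Theta)$ exactly when $\KLD(q\|q^{\ast}_{\text{B}})=0$, which by the equality case of Gibbs' inequality holds if and only if $q = q^{\ast}_{\text{B}}$ ($\pi$-almost everywhere, hence as elements of $\mathcal{P}(\*\Theta)$). This establishes both existence of the minimizer and that it coincides with the generalized Bayesian posterior of eq.~\eqref{eq:generalized_bayes_posterior}, giving the minimum value $-\log Z$ as a byproduct.

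**The main obstacle** is not conceptual but a matter of handling support/measure-theoretic technicalities cleanly: one must argue that the manipulation above is legitimate for every competitor $q$, including those not absolutely continuous with respect to $\pi$. For such $q$ the \KLD$(q\|\pi)$ is $+\infty$ by convention, so those $q$ are trivially non-optimal and can be excluded from the outset; restricting to $q\ll\pi$, the densities are well-defined and the identity goes through with $q$-integrable integrands. A secondary subtlety is confirming $Z>0$ so that $q^{\ast}_{\text{B}}$ is well-defined — this follows because $\exp\{-\sum_i\ell(\*\theta,x_i)\}>0$ everywhere and $\pi$ is a probability density — and noting that $\mathbb{E}_{q^{\ast}_{\text{B}}}[\sum_i\ell(\*\theta,x_i)]$ is finite (or at worst the objective is interpreted as $+\infty$ uniformly, contradicting finiteness of $Z$), so the claimed optimum is genuinely attained rather than merely approached. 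I would keep these points brief, flag them explicitly, and otherwise let the one-line \KLD rewriting carry the proof.
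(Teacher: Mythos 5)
Your proposal is correct and follows essentially the same route as the paper's own proof: absorb the expected loss into the logarithm, insert the normalizer $Z$, recognize the objective as $\KLD(q\|q^{\ast}_{\text{B}}) - \log Z$, and conclude via non-negativity and the uniqueness of the zero of the divergence. The only difference is that you make explicit the minor measure-theoretic points (restricting to $q\ll\pi$, checking $Z>0$) that the paper leaves implicit.
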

\begin{proof}
One may rewrite the objective in eq. \eqref{eq:Zellner} as 
\begin{IEEEeqnarray}{rCl}
    q^{\ast}(\*\theta) & = &
    \argmin_{q \in \mathcal{P}(\*\Theta)}\left\{
        \int_{\*\Theta} 
            \left[\log\left( \exp\left\{ 
                \sum_{i=1}^n\ell(\*\theta, x_i)
            \right\} \right) + 
            \log\left(\dfrac{q(\*\theta)}{\pi(\*\theta)}\right)
            \right]q(\*\theta)d\*\theta
    \right\} \nonumber \\
    & = &
    \argmin_{q \in \mathcal{P}(\*\Theta)}\left\{
        \int_{\*\Theta} 
            \log\left( \dfrac{q(\*\theta)}{
            \pi(\*\theta)\exp\left\{ 
                -\sum_{i=1}^n\ell(\*\theta, x_i)
            \right\}}\right)q(\*\theta)
            d\*\theta
    \right\}.
    \nonumber
\end{IEEEeqnarray}
As one only cares about the minimizer $q^{\ast}(\*\theta)$ (and not the objective value), it also holds that for any constant $Z>0$, the above is equal to
\begin{IEEEeqnarray}{rCl}
    q^{\ast}(\*\theta) & = &
    \argmin_{q \in \mathcal{P}(\*\Theta)}\left\{
        \int_{\*\Theta} 
            \log\left( \dfrac{q(\*\theta)}{
            \pi(\*\theta)\exp\left\{ 
                -\sum_{i=1}^n\ell(\*\theta, x_i)
            \right\}Z^{-1}}\right)q(\*\theta)d\*\theta - \log Z
    \right\} \nonumber \\
    &=& 
    \argmin_{q \in \mathcal{P}(\*\Theta)}\left\{
        \KLD\left(q(\*\theta)\Big\|\pi(\*\theta)\exp\left\{ 
                -\sum_{i=1}^n\ell(\*\theta, x_i)
            \right\}Z^{-1}\right)
    \right\}.
    \nonumber
\end{IEEEeqnarray}
Lastly, one sets $Z = \int_{\*\theta} \exp\left\{-\sum_{i=1}^n\ell(\*\theta, x_i)\right\} \pi(\*\theta)d\*\theta$ and notes that as the $\KLD$ is a statistical divergence, it is minimized uniquely if its two arguments are the same, so $q^{\ast}(\*\theta) = q^{\ast}_B(\*\theta)$.
\end{proof}
The result in Theorem \ref{thm:Zellner} implies an important isomorphism that drives much of the current paper's development: Any commitment to a (standard or generalized) Bayesian posterior is always a commitment to an optimization objective.
In other words, the Bayesian posterior for a given inference problem is adequate if and only if the objective in eq. \eqref{eq:Zellner} is.
%
%We formalize this in the following Observation for conceptual clarity.

\begin{observation}[Isomorphism]
    Suppose an agent A wishes to conduct inference based upon the Bayesian posterior $q^{\ast}_{\text{B}}(\*\theta)$ in eq. \eqref{eq:generalized_bayes_posterior}.
    Then agent A conducts inference by solving an  optimization problem, namely the one in eq. \eqref{eq:Zellner}.
    \label{observation:isomorphism}
\end{observation}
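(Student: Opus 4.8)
The statement to be established is Observation~\ref{observation:isomorphism}, which asserts that committing to a Bayesian posterior $q^{\ast}_{\text{B}}(\*\theta)$ is the same as committing to solve the infinite-dimensional optimization problem in eq.~\eqref{eq:Zellner}. Since this is an \emph{observation} rather than a deep theorem, the plan is essentially to invoke Theorem~\ref{thm:Zellner} and spell out the logical equivalence it encodes, rather than to produce new mathematics.

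First I would fix the setup: an agent A who conducts inference via $q^{\ast}_{\text{B}}(\*\theta)$ has, by definition, specified a prior $\pi$ and a (possibly generalized) loss $\ell(\*\theta, x_i)$, and computes the posterior through the multiplicative rule in eq.~\eqref{eq:generalized_bayes_posterior}. The only standing regularity requirement is that the normalizer $Z = \int_{\*\Theta}\exp\{-\sum_{i=1}^n \ell(\*\theta, x_i)\}\pi(\*\theta)\,d\*\theta$ be finite, which is exactly the hypothesis under which $q^{\ast}_{\text{B}}$ is well-defined and the hypothesis of Theorem~\ref{thm:Zellner}. Second, I would appeal directly to Theorem~\ref{thm:Zellner}: under that finiteness condition, the optimization problem in eq.~\eqref{eq:Zellner} has a solution and that solution equals $q^{\ast}_{\text{B}}(\*\theta)$. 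Hence the object agent A has committed to using \emph{is} the argmin of the objective in eq.~\eqref{eq:Zellner}, so agent A is, whether or not A phrases it this way, solving that optimization problem.

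The final step is to articulate the converse direction so that the word ``isomorphism'' is justified: conversely, any agent who commits to minimizing the objective in eq.~\eqref{eq:Zellner} over $\mathcal{P}(\*\Theta)$ obtains precisely $q^{\ast}_{\text{B}}(\*\theta)$ by the uniqueness clause in the proof of Theorem~\ref{thm:Zellner} (the \KLD is minimized uniquely when its arguments coincide). Thus the map sending a prior--loss pair $(\pi,\ell)$ to the Bayesian posterior and the map sending it to the optimization problem eq.~\eqref{eq:Zellner} have the same fibers, and specifying one determines the other. I would close by restating the contrapositive phrasing already anticipated in the surrounding text: the Bayesian posterior is adequate for an inference problem if and only if the regularized-risk objective in eq.~\eqref{eq:Zellner} is adequate.

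The only genuine subtlety --- and the thing I would be careful about --- is not a computational obstacle but a matter of precision in what ``commitment'' and ``equivalent'' mean: one must be explicit that the correspondence is at the level of the \emph{pair} $(\pi, \ell)$ (equivalently $(\pi, \text{likelihood})$) together with the data $x_{1:n}$, and that the equivalence is an identity of minimizers, not merely of objective values (the proof of Theorem~\ref{thm:Zellner} already exploits that one may add the constant $-\log Z$ without changing the argmin). Beyond flagging this, there is nothing to prove that is not already contained in Theorem~\ref{thm:Zellner}, so the ``proof'' is really a one-line corollary-style remark.
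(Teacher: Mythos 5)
Your proposal is correct and matches the paper's treatment: the paper offers no separate proof of Observation~\ref{observation:isomorphism}, presenting it as an immediate consequence of Theorem~\ref{thm:Zellner}, which is exactly the route you take. Your extra remarks on the converse direction (uniqueness of the minimizer) and on the correspondence being pinned down by the pair $(\pi,\ell)$ and the data are consistent with the surrounding text (including the paper's own Remark acknowledging that other solution-inducing problems exist but are tautological), so nothing is missing.
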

The above observation implies that computing the Bayesian posterior $q^{\ast}_{\text{B}}(\*\theta)$ is equivalent to assuming that the objective of eq. \eqref{eq:Zellner} is appropriate for producing belief distributions.
%It is really impossible to stress this enough\theo{rephrase - tone down a bit?}: 
%It is scarcely possible to stress this interpretation of Bayesian inference enough: Computing the Bayesian posterior always amounts to believing that the objective of eq. \eqref{eq:Zellner} is appropriate for producing belief distributions for the inference problem at hand.
%
In Section \ref{sec:Examining_Bayesian_paradigms}, we will explain why the usefulness of the standard Bayesian posterior---and thus of the objective in eq. \eqref{eq:Zellner}---is at least doubtful for many contemporary statistical machine learning problems.
%In Section \ref{sec:Examining_Bayesian_paradigms}, we shall see that the  validity of the standard Bayesian posterior for many contemporary statistical machine learning tasks is at least doubtful.\theo{make less emphatic and less defensive?}
%
%By implication, this raises concerns about the very objective that brought about the Bayesian posterior.
%
%Building on this insight, Sections \ref{sec:RoT} and \ref{sec:GVI} develop the Rule of Three (\RoT) and Generalized Variational Inference (\GVI)  to directly redesign the objective in eq. \eqref{eq:Zellner} for the construction of more appropriate posteriors.

%as a way of encoding desirable properties into the posterior by redesigning .

\begin{remark}
    The sceptical reader may notice that given the Bayesian posterior,   eq. \eqref{eq:Zellner} is not the unique solution-inducing problem.
    Specifically, suppose that $D$ is a statistical divergence. Then $D(q\|p) \geq 0$ and $D(q\|p) = 0$ if and only if $q(\*\theta)=p(\*\theta)$ almost everywhere. 
    Hence, one could object that in fact any optimization problem of the form
    \begin{IEEEeqnarray}{rCl}
        q^{\ast}(\*\theta) & = & \argmin_{q\in\mathcal{P}(\*\Theta)}D(q\|q_{\text{B}}^{\ast}) \label{eq:generic_FVI}
    \end{IEEEeqnarray}
    will be solved setting $q^{\ast}(\*\theta) = q_{\text{B}}^{\ast}(\*\theta)$.
    While true, this is tautological: In particular, such optimization problems shed no light on how $q_{\text{B}}^{\ast}(\*\theta)$ was arrived at. 
    By their very construction, problems of this form pre-suppose that $q_{\text{B}}^{\ast}(\*\theta)$ is the posterior belief one wishes to obtain.

    Thus, while there exist infinitely many optimization problems whose solution is $q^{\ast}_{\text{B}}(\*\theta)$, some are more meaningful than others. 
    Specifically, whenever one seeks to solve an objective of the form given in eq. \eqref{eq:generic_FVI}, the Bayesian posterior appears deus ex machina.
    This does not allow us any interpretation about what $q^{\ast}(\*\theta)$ itself stands for and why it is a desirable belief distribution to target. 
    In contrast, eq. \eqref{eq:Zellner} shows that the Bayesian posterior arises as the solution of a clearly interpretable optimization problem.
    %
    %As we will formally show in Section \ref{sec:RoT-addresses-traditional-issues} and Theorem \ref{Thm:GVI_modularity}, the \RoT inherits this modular interpretation by construction.
    %
\end{remark}
%\jack{Where do we first interpret this Bayesian objective function, modularity ect., maybe refer forward}
% \jack{I think this Section is pretty awesome other than the bit about GBI i mention. The first part of remark 2 is great, i can see exactly what you are trying to do with the discrete example but i think it reads a bit cumbersomely at the moment, maybe we can brainstorm a little about how to make this clearer. There is something in our axioms that says we have the correct objective function!}
% \jeremias{Please feel free to develop the last part of Remark 2 in a way that you find less cumbersome}

% \jack{So what is it that leads our objective function GENERATING posterior rather than APPROXIMATING ones that already exist. That is the real point we are making here aren't we? And the answer to this must be our axioms. Our Axioms elicit our objective function, and these axioms were derived thinking about generating desirable posteriors, exactly what you say in that last sentence} 

% \jeremias{Yes, that's what we want here. I am still not sure I understand how you want to change this. Please add your additions/thoughts directly into the text in red. The axioms don't really make much sense here for me because all I am saying is that F-VI foregoes the question of what an appropriate posterior is. This is really only a point about there being infinitely many optimization problems leading to the same optimum, but only some of them being meaningful. I added a paragraph, but I am not sure it is helpful.}

% \jeremias{Cut middle paragraph}

\subsection{Optimality of standard Variational Inference}
\label{sec:VI_opt}

While Bayesian posteriors of the form given in eq. \eqref{eq:generalized_bayes_posterior} are analytically available up to a normalizing constant, this is not immediately useful.
Specifically, (asymptotically) exact computations of expectations and integrals with respect to these posteriors are in general only possible through sampling methods and incur a large computational burden.
To alleviate this problem, numerous approximations to the exact Bayesian posterior have been proposed.
The principal idea of any such approximation is to force the posterior belief into some parametric form.
%The principal idea of any such approximation is to force the posterior $q^{\ast}_{\text{B}}(\*\theta)$ into some parametric form\theo{force the exact posterior? you are controlling/assuming things about the approximate one instead no? eq 6 below talks about the family of distros for the approximate being parameterized not for the ``true" one}.
%
Specifically, one seeks to approximate $q^{\ast}_{\text{B}}(\*\theta) \approx q^{\ast}_{\text{A}}(\*\theta)$, where $q^{\ast}_{\text{A}}(\*\theta) \in \mathcal{Q}$ and 
\begin{IEEEeqnarray}{rCl}
    \mathcal{Q} & = & \left\{ q(\*\theta|\*\kappa): \*\kappa \in \*K \right\}
    \label{eq:variational_family}
\end{IEEEeqnarray}
denotes a family of distributions parameterized by $\*\kappa$. 
It is obvious that this significantly reduces the computational burden, as it transforms the optimization from an infinite-dimensional into a finite-dimensional one. 
%
%In practical terms, this translates into making the posterior's normalizing constant tractable.
%\jack{Something about optimising over finite dimensional real valued parameters being easier than optimising over infinite dimensional densities}

The literature on approximations of this sort is large and has diverse origins. 
Their development arguably started with Laplace Approximations \citep[see e.g. the seminal papers of][]{Laplace1, Laplace3, Laplace2}, which have recently been refined substantially into Integrated Nested Laplace Approximations \citep{INLA}.  
A second family of approximation methods known as Expectation Propagation \citep{EP2,EP} was motivated through factor graphs and message passing \citep{MinkaDivergences}.
The third and arguably most successful approach originated by connecting the Expectation-Maximization algorithm \citep{EM} and the variational free energy from physics \citep{EMFreeEnergy},  culminating in Variational Inference (\VI) \citep{JordanVI, BealThesis}.
For these methods, $\mathcal{Q}$ in eq. \eqref{eq:variational_family} is called the variational family.

Traditionally, two main interpretations of \VI prevail. Firstly, one can derive its objective function as an Evidence Lower Bound (\ELBO). Secondly, one can show that the same objective function minimizes the \KLD between $\mathcal{Q}$ and $q^{\ast}_{\text{B}}(\*\theta)$.
In this paper, we introduce and advocate a third--to the best of our knowledge novel---view on \VI: Relative to the objective in eq. \eqref{eq:Zellner}, \VI corresponds to the best $\mathcal{Q}$-constrained solution.

\begin{figure}	
	\centering
	\begin{subfigure}[t]{0.42\columnwidth}
		\centering
		\includegraphics[width=1\columnwidth]{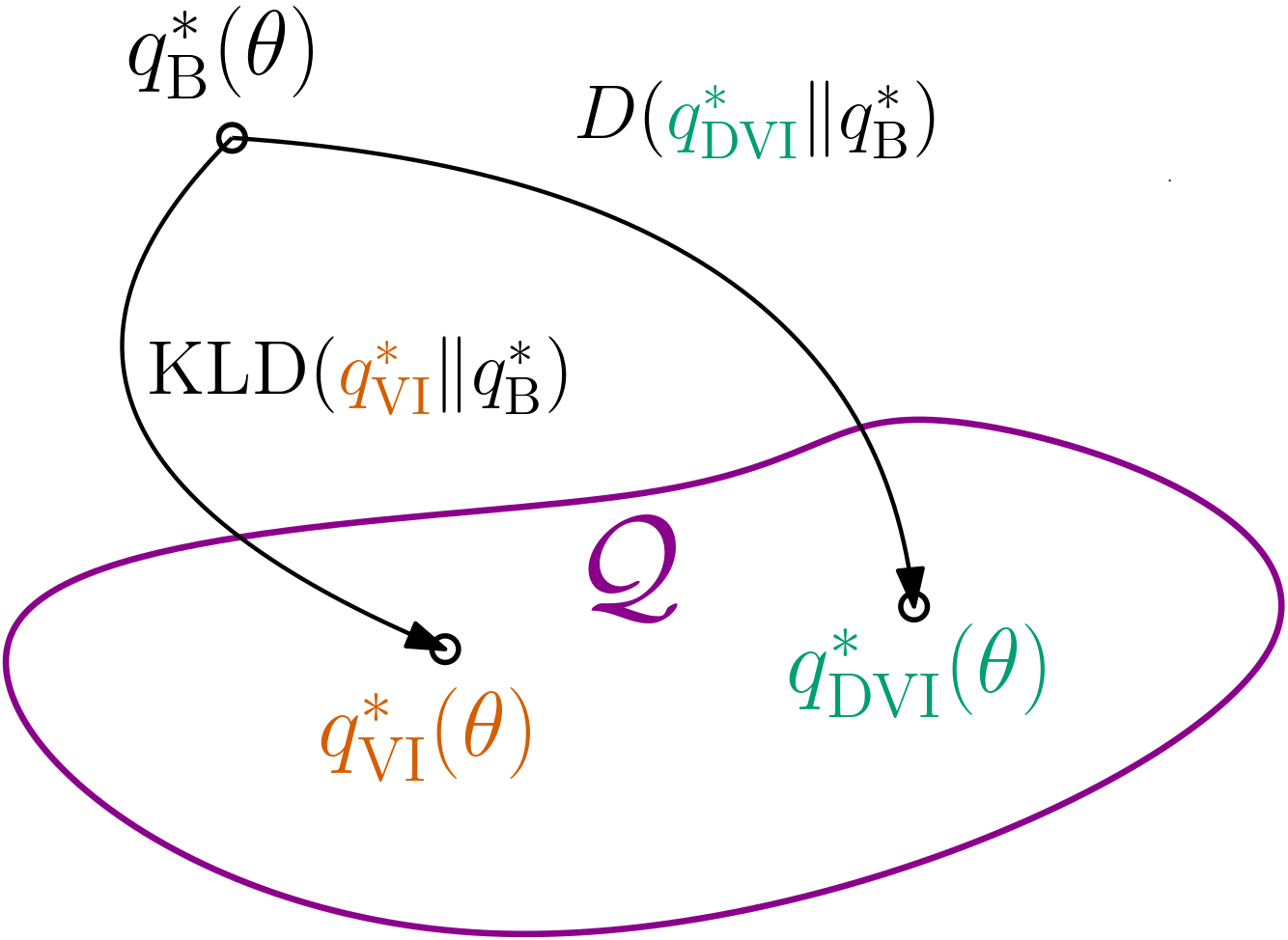}
		\hspace*{0.05cm}
		\caption{\DVIColor{\textbf{\DVI}} interpretation of \VIColor{\textbf{\VI}}}
		\label{SubFig:DVI_projection}		
	\end{subfigure}
	\quad
	\begin{subfigure}[t]{0.53\columnwidth}
		\centering
		\includegraphics[width=1\columnwidth]{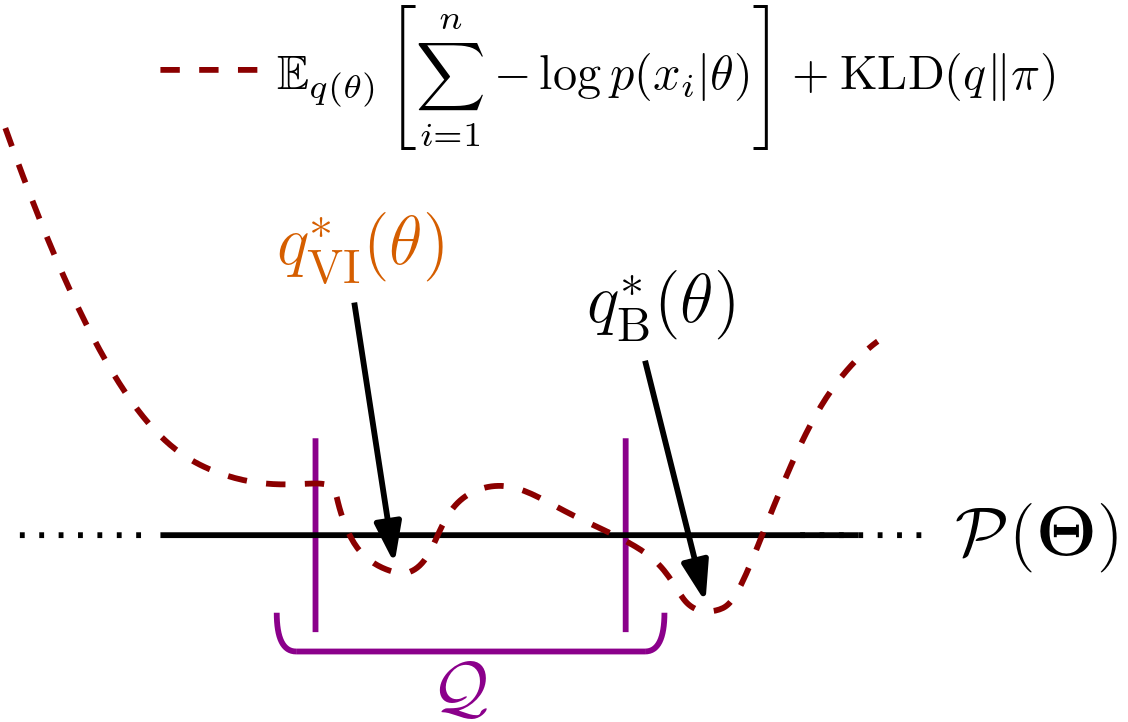}
		\hspace*{0.05cm}
		\caption{Interpretation of \VIColor{\textbf{\VI}} as in Theorem \ref{thm:VI_optimality}}
		\label{SubFig:GVI_projection}		
	\end{subfigure}
	\caption{
	   \bv
	    Depicted is a schematic to clarify the conceptual distinction between two interpretations of \VIColor{\textbf{\VI}}.
	    \DVIColor{\textbf{\DVI}} methods interpret \VIColor{\textbf{\VI}} as the \KLD-projection of $q^{\ast}_{\text{B}}(\*\theta)$ into the variational family $\color{pythonPurple}{\mathcal{Q}}$. New methods are then derived by replacing the \KLD with alternative projection operators.
	    In contrast, Theorem \ref{thm:VI_optimality} interprets \VIColor{\textbf{\VI}} posteriors as best solutions to a constrained optimization problem. 
	    Rather than finding the global optimum $q^{\ast}_{\text{B}}(\*\theta)$ of the optimization problem in eq. \eqref{eq:Zellner_standard}, \VIColor{\textbf{\VI}} finds the best solution in the subset ${\color{pythonPurple}{\mathcal{Q}}}\subset \mathcal{P}(\*\Theta)$.
	    This optimization-centric view on variational methods is also the logic underlying {\textbf{\GVI}} posteriors.
	}
	\label{Fig:DVI_vs_GVI_interpretation}
\end{figure}
    
% \begin{figure}[tbp]
% 	\centering
% 	\subfloat[Caption 1]{\includegraphics[width=1in]{DVI_projection.png}}\quad
% 	\subfloat[Caption 2]{\includegraphics[width=1in]{GVI_constrained_opt_2.png}}\\	
% 	\caption{Main figure caption.}
% \end{figure}

%\end{figure*}

\subsubsection{\VI as log evidence bound}
\label{sec:VI_as_ELBO}

A standard way of deriving \VI is the following: Since one wishes to obtain the posterior maximizing the evidence in the data, a reasonable objective is to pick the element in the approximating family $\mathcal{Q}$ that maximizes the evidence.
This logic is operationalized by observing that for $Z$ the normalizing constant (or partition function) and any $q(\*\theta) \in \mathcal{Q}$,
\begin{IEEEeqnarray}{rCl}
    \log p(x_{1:n}) 
    & = & 
    \log \left( \int_{\*\Theta}Z^{-1}\exp\{-\sum_{i=1}^n\ell(\*\theta, x_i)\}\pi(\*\theta)d\*\theta \right)
    \nonumber \\
    &= &
    \log \left( \int_{\*\Theta}\exp\{-\sum_{i=1}^n\ell(\*\theta, x_i)\}\dfrac{\pi(\*\theta)}{q(\*\theta)}q(\*\theta)d\*\theta \right) - \log Z
    \nonumber \\
    &\overset{JIE}{\geq} &
    \int_{\*\Theta}\log \left( \exp\left\{-\sum_{i=1}^n\ell(\*\theta, x_i)\right\}\dfrac{\pi(\*\theta)}{q(\*\theta)}\right)q(\*\theta)d\*\theta 
    - \log Z \label{Equ:EvidenceLowerBound}
    % \nonumber \\
    % & = &
    % \mathbb{E}_{q(\*\theta)}\left[
    %     \sum_{i=1}^n
    %         -\ell(\*\theta, x_i)
    %     \right]
    %     -
    %     \KLD\left(q || \pi \right) - \log Z,
    %     \nonumber
    %   \label{eq:standard_VI_ELBO}
\end{IEEEeqnarray}
where we have applied Jensen's Inequality in the last step.
The right hand side of eq. \eqref{Equ:EvidenceLowerBound} is called the Evidence Lower Bound (\ELBO), and
maximizing it over $\mathcal{Q}$ is independent of $Z$. %
With this, one finally obtains the standard \VI posterior as solution to an optimization problem given by
\begin{IEEEeqnarray}{rCl}
	q_{\VI}^{\ast}(\*\theta) & = & \argmin_{q \in \mathcal{Q}}
	\left\{ 
		\mathbb{E}_{q(\*\theta)}\left[
        \sum_{i=1}^n
            \ell(\*\theta, x_i)
        \right]
        +
        \KLD\left(q || \pi \right) 
	\right\}.
	\label{eq:standard_VI_ELBO}
\end{IEEEeqnarray} 
Here, $q_{\VI}^{\ast}(\*\theta) = q(\*\theta|\*\kappa^{\ast})$ for some optimal parameter $\*\kappa^{\ast} \in \*K$.

%Ignoring the constant $Z$, the last line of this equation is known as Evidence Lower Bound (\ELBO).
%
%Maximizing it over the variational parameters $\*\kappa$ to find the optimum $\*\kappa^{\ast}$ provides the posterior belief distribution $q_{\VI}^{\ast}(\*\theta) = q(\*\theta|\*\kappa^{\ast})$ known as the standard variational posterior.
%
Taking inspiration from the Evidence Lower Bound interpretation of the \VI objective, alternative approximation methods produce \VI posteriors maximizing generalized Evidence Lower Bounds
\citep[e.g.][]{GLBO, IWVI, IWVAE}. For some bound $\log p(x_{1:n}) \leq \text{G-}\ELBO(q)$ on the evidence, such methods produce posteriors as %
%
%Methods of this type propose generalized \ELBO objectives of the form
%
\begin{IEEEeqnarray}{rCl}
    q^{\ast}_{\text{G}-\ELBO}(\*\theta) & = &
    \argmin_{q\in\mathcal{Q}}\left\{
        \text{G-}\ELBO(q)
    \right\}.
    \nonumber
\end{IEEEeqnarray}
%
%However, only for the case of standard \VI where the \ELBO is derived as in eq. \eqref{eq:standard_VI_ELBO} can the problem can be rewritten as
%

%
%

%Theorem \jeremias{REFERENCE OPTIMALITY THM}.

\subsubsection{\VI as \KLD-minimization \& Discrepancy \VI (\DVIColor{\textbf{\DVI}})}
\label{sec:VI_as_KLD_min_and_DVI}

A second well-known perspective on standard \VI posteriors is motivated by rewriting the objective in eq. \eqref{eq:standard_VI_ELBO} in terms of the Kullback-Leibler Divergence (\KLD) as follows:
%
%\theo{is this really a second perspective? or just a consequence of maximising the lower bound? i.e. they are directly connected - but ok they are different sides of the same coin that can motivate different variations I guess}
%
\begin{IEEEeqnarray}{rCl}
     q_{\VI}^{\ast}(\*\theta) & = & 
     %\argmin_{q \in \mathcal{Q}}
% 	\left\{ 
% 		\mathbb{E}_{q(\*\theta)}\left[
%         \log\left(\exp\left\{
%             \sum_{i=1}^n
%                 \ell(\*\theta, x_i)
%             \right\}
%         \right)
%         \right]
%         +
%         \KLD\left(q || \pi \right) 
% 	\right\} = 
% 	& = &
% 	\argmin_{q \in \mathcal{Q}}
% 	\left\{ 
% 	    \KLD\left(q(\*\theta)\Big\|\pi(\*\theta)\exp\left\{ 
%                 -\sum_{i=1}^n\ell(\*\theta, x_i)
%             \right\}\right)
%     \right\} \nonumber \\
% 	& = &
	\argmin_{q \in \mathcal{Q}}
	\left\{ 
	    \KLD\left(q(\*\theta)\Big\|q^{\ast}_{\text{B}}(\*\theta)\right)
    \right\}
    \nonumber
    %\label{eq:VI_as_KLD_minimizer}
\end{IEEEeqnarray}
The relevant algebraic arguments are similar to the ones used in the proof of Theorem \ref{thm:Zellner}. In effect, the resulting re-arrangement of terms above shows that standard \VI finds the element $q_{\VI}^{\ast}(\*\theta) \in \mathcal{Q}$ closest to the Bayesian posterior belief in the \KLD-sense.

%\jack{Do we need to make a local global distinction here just to not sound uninformed?}
This insight has produced a growing body of literature seeking to minimize (local or global) discrepancies $D$ between $\mathcal{Q}$ and $q^{\ast}_{\text{B}}(\*\theta)$ that are different from the \KLD \citep[e.g.][]{EP, EP2, RenyiDiv, GeometricRenyiAlphaVI, ChiDiv, AlphaDiv, FisherFVI, ABCdiv, OperatorVI, fDivVI}. 
For a disrepancy measure $D: \mathcal{P}(\*\Theta) \times \mathcal{P}(\*\Theta) \to \mathbb{R}$, methods of this kind compute approximations to $q^{\ast}_{\text{B}}(\*\theta)$ based on objectives of the form
\begin{IEEEeqnarray}{rCl}
    q^{\ast}_{\DVI}(\*\theta) & = &
    \argmin_{q \in \mathcal{Q}}
	\left\{ 
	    D\left(q(\*\theta)\Big\|q^{\ast}_{\text{B}}(\*\theta)\right)
    \right\}.
    \nonumber
    %\label{eq:DVI_equation}
\end{IEEEeqnarray}
Throughout the remainder of this paper, we call procedures of this nature with $D\neq \KLD$ \DVIColor{\textbf{Discrepancy Variational Inference (\DVI)}} methods. Their logic and intuitive appeal is summarized in Figure \ref{SubFig:DVI_projection}.

\subsubsection{\VI as constrained optimization}
\label{sec:VI_constrained_opt}

% \jeremias{
% This is interesting for a simple reason: This particular optimization problem is remarkably similar to the one in eq. \eqref{eq:Zellner}. In fact, the only argument that differs between the objectives is the space over which the optimization is performed: As opposed to the set of all distribution functions $\mathcal{P}(\*\Theta)$ as in eq. \eqref{eq:Zellner}, eq. \eqref{eq:standard_VI_ELBO} only considers the parameterized subset $\mathcal{Q}$. This observation is rather significant and bears important implications that are investigated in Section \ref{sec:VI_constrained_opt}. 
% }

While the interpretations of \VI as optimizing over an evidence lower bound and as minimizing a discrepancy are well-known, this paper presents a third interpretation: \VI posteriors are also the $\mathcal{Q}$-optimal solutions to the optimization problem in eq. \eqref{eq:Zellner} characterizing Bayesian inference.
To spot this, simply compare eq. \eqref{eq:Zellner} to eq. \eqref{eq:standard_VI_ELBO} and notice that the optimization problem only differs through the space over which optimization is performed: As opposed to the set of all probability measures $\mathcal{P}(\*\Theta)$ as in eq. \eqref{eq:Zellner}, eq. \eqref{eq:standard_VI_ELBO} only considers the parameterized subset $\mathcal{Q}$. This observation is rather significant and bears important implications summarized in the following Theorem and Figure \ref{SubFig:GVI_projection}.
%
%To the best of our knowledge and perhaps surprisingly, we are first to formalize this third interpretation.
%
%\jeremias{Mention figure here.}
%
\begin{theorem}[Optimality of standard \VI]
    Relative to the infinite-dimensional optimization problem over $\mathcal{P}(\*\Theta)$ characterizing Bayesian inference and a fixed finite-dimensional variational family $\mathcal{Q}$, standard \VI produces the optimal posterior belief in $\mathcal{Q}$.
    \label{thm:VI_optimality}
\end{theorem}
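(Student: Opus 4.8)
The plan is to make precise what ``optimal posterior belief in $\mathcal{Q}$'' means and then observe that the claim is essentially immediate from Theorem \ref{thm:Zellner} together with a trivial remark about constrained optimization. Let $J(q) = \mathbb{E}_{q(\*\theta)}\left[\sum_{i=1}^n \ell(\*\theta, x_i)\right] + \KLD(q\|\pi)$ denote the objective appearing in eq. \eqref{eq:Zellner}, defined on all of $\mathcal{P}(\*\Theta)$. By Theorem \ref{thm:Zellner}, provided $Z<\infty$, this functional is minimized over $\mathcal{P}(\*\Theta)$ uniquely at $q^{\ast}_{\text{B}}(\*\theta)$; moreover, the proof of that theorem shows $J(q) = \KLD\!\left(q\,\|\,q^{\ast}_{\text{B}}\right) - \log Z$, so $J$ differs from $q\mapsto\KLD(q\|q^{\ast}_{\text{B}})$ only by an additive constant that does not depend on $q$.

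First I would fix the notion of optimality: since $\mathcal{Q}\subseteq\mathcal{P}(\*\Theta)$, the ``best posterior belief in $\mathcal{Q}$'' relative to the problem in eq. \eqref{eq:Zellner} is by definition any minimizer of $J$ restricted to $\mathcal{Q}$, i.e. any element of $\argmin_{q\in\mathcal{Q}} J(q)$. Second, I would invoke the constant-shift identity above to rewrite $\argmin_{q\in\mathcal{Q}} J(q) = \argmin_{q\in\mathcal{Q}}\left\{\KLD\!\left(q\,\|\,q^{\ast}_{\text{B}}\right) - \log Z\right\} = \argmin_{q\in\mathcal{Q}} \KLD\!\left(q\,\|\,q^{\ast}_{\text{B}}\right)$, the last equality because $\log Z$ is a $q$-independent constant. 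Third, I would recall from Section \ref{sec:VI_as_KLD_min_and_DVI} that the standard \VI posterior $q^{\ast}_{\VI}(\*\theta)$ is precisely defined as $\argmin_{q\in\mathcal{Q}}\KLD\!\left(q\,\|\,q^{\ast}_{\text{B}}\right)$, equivalently as the minimizer over $\mathcal{Q}$ of the \ELBO objective in eq. \eqref{eq:standard_VI_ELBO}. Chaining these three equalities gives $q^{\ast}_{\VI} \in \argmin_{q\in\mathcal{Q}} J(q)$, which is exactly the assertion that standard \VI produces the optimal posterior belief in $\mathcal{Q}$ with respect to the infinite-dimensional objective characterizing Bayesian inference.

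The only genuine subtlety — and the step I expect to require the most care — is the book-keeping around the additive constant and the domain of definition of $J$. One must check that $J(q)$ and $\KLD(q\|q^{\ast}_{\text{B}})$ are finite on the same subset of $\mathcal{Q}$ (so that the reparametrization is valid rather than a manipulation of $\infty-\infty$), which follows because $Z<\infty$ and the algebra in the proof of Theorem \ref{thm:Zellner} is an exact identity term by term whenever the relevant integrals exist; for $q\in\mathcal{Q}$ with $J(q)=+\infty$ both sides agree trivially. Beyond that, no analysis is needed: the result is a corollary of Theorem \ref{thm:Zellner} plus the elementary fact that restricting the feasible set of an optimization problem and shifting its objective by a constant do not change the set of constrained minimizers. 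I would close by emphasizing, as the surrounding text does, the conceptual payoff: the argument does \emph{not} use the \ELBO or \KLD-projection interpretations as primitive, but rather shows \VI is the $\mathcal{Q}$-constrained analogue of the very problem whose unconstrained solution is $q^{\ast}_{\text{B}}$ — and, as the next subsection will show, this fails for \DVI objectives built from divergences $D\neq\KLD$.
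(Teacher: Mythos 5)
Your argument is correct and is in essence the paper's own: the standard \VI posterior of eq. \eqref{eq:standard_VI_ELBO} minimizes exactly the objective of eq. \eqref{eq:Zellner} with the feasible set restricted from $\mathcal{P}(\*\Theta)$ to $\mathcal{Q}\subset\mathcal{P}(\*\Theta)$, so it is by definition the $\mathcal{Q}$-optimal belief. The detour through the identity $J(q)=\KLD(q\|q^{\ast}_{\text{B}})-\log Z$ and the associated finiteness book-keeping is harmless but not needed, since eq. \eqref{eq:standard_VI_ELBO} already exhibits $q^{\ast}_{\VI}$ as the constrained minimizer of the same objective, which is precisely the paper's three-line proof.
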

\begin{proof}
    %First, note that $\mathcal{Q} \subset \mathcal{P}(\*\Theta)$. 
    %
    First, notice that the \VI posterior belief distribution $q^{\ast}_{\VI}(\*\theta)$ and
    the Bayesian posterior belief distribution $q^{\ast}_{\text{B}}(\*\theta)$ both seek to minimize 
    \begin{IEEEeqnarray}{rCl}
        \mathbb{E}_{q(\*\theta)}\left[\sum_{i=1}^n\ell(\*\theta, x_i)\right] 
        +
        \KLD(q\|\pi) \nonumber
    \end{IEEEeqnarray}
    over $q(\*\theta)$.
    Second, notice that $q^{\ast}_{\VI}(\*\theta)$ is the minimizer of this objective relative to $\mathcal{Q}$ while $q^{\ast}_{\text{B}}(\*\theta)$ is the minimizer relative to $\mathcal{P}(\*\Theta)$.
    Third, note that $\mathcal{Q} \subset \mathcal{P}(\*\Theta)$.
    %
    %This completes the proof.
\end{proof}

\begin{remark}
    The result of Theorem \ref{thm:VI_optimality} is important: As Observation \ref{observation:isomorphism} explained, committing to a Bayesian posterior $q^{\ast}_{\text{B}}(\*\theta)$ is equivalent to committing to the objective function in eq. \eqref{eq:Zellner}. 
    In other words, if we judge the posterior belief $q^{\ast}_{\text{B}}(\*\theta)$ to be desirable, we are also saying that the objective function in eq. \eqref{eq:Zellner} encodes properties that we want our posterior belief to adhere to. 
    Accordingly, once we restrict our posterior beliefs to be in a subset $\mathcal{Q} \subset \mathcal{P}(\*\Theta)$, we should want to compute the best possible solution to the \textit{same} objective in $\mathcal{Q}$.
    As Theorem \ref{thm:VI_optimality} shows, this is exactly what \VI does.
    Importantly, this has two implications: Firstly, it gives another meaningful interpretation to standard \VI approximations to $q^{\ast}_{\text{B}}(\*\theta)$. Secondly, it provides important insights into the suboptimality of alternative approximation methods summarized in the following Corollary.
    %\theo{very nicely written :)}\jack{hear, hear}
\end{remark}

\begin{corollary}[Suboptimality of alternative methods]
    Relative to the infinite-dimensional optimization problem over $\mathcal{P}(\*\Theta)$ characterizing Bayesian inference and a fixed finite-dimensional variational family $\mathcal{Q}$, methods different from standard \VI  produce sub-optimal posterior beliefs.
    \label{corollary:suboptimality}
\end{corollary}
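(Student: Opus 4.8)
The plan is to obtain Corollary \ref{corollary:suboptimality} as an almost immediate consequence of Theorem \ref{thm:VI_optimality}. Write $\mathcal{O}(q) = \mathbb{E}_{q(\*\theta)}\!\left[\sum_{i=1}^n \ell(\*\theta, x_i)\right] + \KLD(q\|\pi)$ for the objective in eq. \eqref{eq:Zellner}, which by Observation \ref{observation:isomorphism} is exactly the objective one implicitly commits to whenever one adopts the Bayesian posterior $q^{\ast}_{\text{B}}(\*\theta)$. Theorem \ref{thm:VI_optimality} states precisely that $q^{\ast}_{\VI}(\*\theta) = \argmin_{q\in\mathcal{Q}}\mathcal{O}(q)$. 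So the first step is to record that any competing approximation method — a \DVI scheme minimizing some $D(q\|q^{\ast}_{\text{B}})$ with $D\neq\KLD$, or a generalized-\ELBO method minimizing some other lower bound on $\log p(x_{1:n})$ — also returns an element of the \emph{same} family $\mathcal{Q}$; call it $q^{\ast}_{\text{A}}(\*\theta)\in\mathcal{Q}$. This is just a matter of recalling the constructions from Sections \ref{sec:VI_as_ELBO}--\ref{sec:VI_as_KLD_min_and_DVI}.

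The second step is to compare. Since $q^{\ast}_{\VI}$ minimizes $\mathcal{O}$ over all of $\mathcal{Q}$ and $q^{\ast}_{\text{A}}\in\mathcal{Q}$, we have $\mathcal{O}(q^{\ast}_{\text{A}}) \geq \mathcal{O}(q^{\ast}_{\VI})$, with equality only if $q^{\ast}_{\text{A}}$ is itself an $\mathcal{O}$-minimizer over $\mathcal{Q}$. Hence whenever the alternative method produces a belief distinct from $q^{\ast}_{\VI}$ — which is the generic situation, because it is by construction optimizing a different functional than $\mathcal{O}$ — it incurs a larger value of the objective that, by the isomorphism of Observation \ref{observation:isomorphism}, encodes exactly the properties one wanted the posterior to possess. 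That is the content of ``sub-optimal'': relative to eq. \eqref{eq:Zellner}, only standard \VI attains the best value achievable within $\mathcal{Q}$.

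The point that deserves care — and the place the argument is more than a one-liner — is the precise meaning of ``sub-optimal,'' i.e. whether one wants the strict inequality $\mathcal{O}(q^{\ast}_{\text{A}}) > \mathcal{O}(q^{\ast}_{\VI})$. This requires ruling out that an alternative method accidentally lands on an $\mathcal{O}$-minimizer, and, if one wants uniqueness of that minimizer, a structural assumption: $\mathcal{O}$ is strictly convex in $q$ because $\KLD(\cdot\|\pi)$ is, but a variational family $\mathcal{Q}$ is typically not a convex subset of $\mathcal{P}(\*\Theta)$, so uniqueness of the \emph{constrained} minimizer is not automatic. I would therefore phrase the corollary as: any method whose output differs from $q^{\ast}_{\VI}(\*\theta)$ is sub-optimal for the objective in eq. \eqref{eq:Zellner}, and observe that \DVI and generalized-\ELBO methods generically do differ from $q^{\ast}_{\VI}$ precisely because they optimize a functional other than $\mathcal{O}$. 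The remaining steps — that these alternatives all land in $\mathcal{Q}$ and that they do not in general minimize $\mathcal{O}$ — are low-risk and follow by inspection of their defining objectives.
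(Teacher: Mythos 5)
Your proposal is correct and follows essentially the same route as the paper: both arguments rest on Theorem \ref{thm:VI_optimality} together with the observation that any competing method's output lies in the same family $\mathcal{Q}$, so it cannot attain a strictly smaller value of the objective in eq. \eqref{eq:Zellner} than $q^{\ast}_{\VI}$. The paper merely packages this as a proof by contradiction and disposes of the equality case by demanding equality for every data sequence and every $n$ (which would make the method ``equivalent to standard \VI''), which is the same resolution of the strictness issue that you handle by refining what ``sub-optimal'' means.
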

\begin{proof}
    We prove this by contradiction: Suppose the posterior belief $q^{\ast}_{\text{A}}(\*\theta)$ was produced by some alternative method $A$ that is not equivalent to standard \VI. Suppose also that $q^{\ast}_{\text{A}}(\*\theta)$ is the $\mathcal{Q}$-optimal posterior relative to  eq. \eqref{eq:Zellner}. 
    By definition of standard \VI, it then holds that that for \textit{any} sequence of observations $x_{1:n}$ and for all $n$,
    \begin{IEEEeqnarray}{rCl}
        	\mathbb{E}_{q^{\ast}_{\VI}(\*\theta)}\left[
            \sum_{i=1}^n
                \ell(\*\theta, x_i)
            \right]
            +
            \KLD\left(q^{\ast}_{\VI} || \pi \right) 
            & \leq &
            \mathbb{E}_{q^{\ast}_{\text{A}}(\*\theta)}\left[
            \sum_{i=1}^n
                \ell(\*\theta, x_i)
            \right]
            +
            \KLD\left(q^{\ast}_{\text{A}} || \pi \right). 
            \nonumber
    \end{IEEEeqnarray}
    Suppose the inequality is strict. This immediately yields a contradiction with the supposition that $q^{\ast}_{\text{A}}(\*\theta)$ is the $\mathcal{Q}$-optimal posterior relative to  eq. \eqref{eq:Zellner}. Alternatively, suppose the inequality is an equality for {any} sequence of observations $x_{1:n}$ and for all $n$. This too immediately yields a contradiction: In particular, it violates the supposition that the method producing $q^{\ast}_{\text{A}}(\*\theta)$ is not equivalent to standard \VI. 
    Thus, the desired contradiction is obtained.
    % %
    % But then, it holds by definition of standard \VI that
    % %
    % \begin{IEEEeqnarray}{rCl}
    %     q^{\ast}_{\text{A}}(\*\theta) 
    %     & \neq & 
    %     \argmin_{q \in \mathcal{Q}}
	   % \left\{ 
    % 		\mathbb{E}_{q(\*\theta)}\left[
    %         \sum_{i=1}^n
    %             \ell(\*\theta, x_i)
    %         \right]
    %         +
    %         \KLD\left(q || \pi \right) 
	   % \right\},
	   % \nonumber
    % \end{IEEEeqnarray}
    % %
    % which yields the desired contradiction.
\end{proof}
%\jack{Do we need to guard against an alternative method producing the same solution to \VI? i.e. when the variational family contains the Bayesian posterior, could we wirte that the objective function evaluated at $q^{\ast}_A$ can be no greater (but possibly equal) to the value at the \VI posterior}
%\jeremias{Yes, you are right -- thanks for spotting this, will have to rethink/rewrite this.}

\begin{remark}
    Corollary \ref{corollary:suboptimality}  implies that once the variational family $\mathcal{Q}$ is fixed, producing an approximation to $q^{\ast}_{\text{B}}(\*\theta)$ which does not correspond to standard \VI posteriors is sub-optimal under an optimization-centric view on Bayesian inference. 
    %\jack{which does not correspond to standard}
    %
    %This notion of sub-optimality is unambigiously optimization-centric and thus relative to the objective in eq. \eqref{eq:Zellner}. 
    %
    This notion of optimality is important because Theorem \ref{thm:Zellner} and Observation \ref{observation:isomorphism} expose the isomorphic relationship between the optimization problem in eq. \eqref{eq:Zellner} and the exact Bayesian posterior.
    An immediate consequence of Corollary \ref{corollary:suboptimality} is thus that  methods built around generalized evidence lower bound formulations, alternative Discrepancy Variational Inference (\DVI) methods or Expectation Propagation (\EP) approaches \citep[e.g.][]{EP, EP2} are sub-optimal relative to standard \VI: 
    If one wishes to minimize the objective that defines the Bayesian posterior $q^{\ast}_{\text{B}}(\*\theta)$, it is irrational to pick any $q \in \mathcal{Q}$ not produced by standard \VI.
    \label{remark:suboptimality}
\end{remark}

\subsection{Reconciling (sub)optimality with empirical evidence}
\label{sec:reconciling_suboptimality}

%\jeremias{Need to address the over-concentration of \VI too (i.e., say that misspec is only one (of many) ways in which objectives can be misspecified)}

At first glance, the conclusions from Corollary \ref{corollary:suboptimality} and Remark \ref{corollary:suboptimality} seem to contradict numerous landmark findings in the area of approximate Bayesian inference:
Firstly, there are various issues with standard \VI that are well-known and hinder its effectiveness in certain situations \citep[see e.g.][]{TurnerSahani11}.
For this reason, various alternative approximations have proven successful in practice \citep[e.g.][]{EP, INLA} and often produce more desirable posterior inferences.
All this seems to contradict the (sub)optimality results in Theorem \ref{thm:VI_optimality} and Corollary \ref{corollary:suboptimality}. 

%As we shall see next, this contradiction is an illusion. Specifically, any optimality result hinges on assuming that the original objective in eq. \eqref{eq:Zellner} is appropriately specified.
%
This contradiction resolves itself upon closer examination.
%is an illusion. \jack{not sure about illusion!} 
Specifically, the practical relevance of any optimality result hinges on two crucial assumptions that are typically violated in practice: Firstly, one needs to assume that the original objective in eq. \eqref{eq:Zellner} is appropriately specified. 
Secondly, one needs the variational family $\mathcal{Q}$ to be rich enough so that the statement $q^{\ast}_{\VI}(\*\theta) \approx q^{\ast}_{\text{B}}(\*\theta)$ is not completely vacuous.
%
%In other words, if the optimization problems in eq. \eqref{eq:Zellner} and by extension eq. \eqref{eq:standard_VI_ELBO} are \textit{not} defining the problem one actually wishes to solve, then alternative posterior approximations $q_{\text{A}}^{\ast}(\*\theta)$ of $q^{\ast}_{\text{B}}(\*\theta)$ may well produce belief distributions with more desirable properties in practice.
%
%Put more concisely, 
Conversely, this means that $q_{\text{A}}^{\ast}(\*\theta)$ can produce more desirable approximations to $q^{\ast}_{\text{B}}(\*\theta)$ than $q^{\ast}_{\VI}(\*\theta)$ whenever one of the following holds:
%In summary, if alternative sub-optimal approximations $q_{\text{A}}^{\ast}(\*\theta)$ produce more desirable approximations to $q^{\ast}_{\text{B}}(\*\theta)$ than $q^{\ast}_{\VI}(\*\theta)$, this must be so because of at least one of two problems:
%is
\begin{itemize}
    \myitem{(i)} the original objective in eq. \eqref{eq:Zellner} is misspecified and does not reflect the belief distribution we wish to compute;
    \label{item:VI-failure:misspec}
    \myitem{(ii)} The approximating family $\mathcal{Q}$ is inappropriately specified so that the statement $q(\*\theta) \approx q^{\ast}_{\text{B}}(\*\theta)$ is vacuous for any $q \in \mathcal{Q}$.
    \label{item:VI-failure:Q}
    %the approximating family $\mathcal{Q}$ of its standard \VI counterpart in eq. \eqref{eq:standard_VI_ELBO} are inappropriately specified and
    %\item[(ii)] the objective for $q_{\text{A}}^{\ast}(\*\theta)$ {implicitly} encodes desirable properties for the posterior belief distribution that eqs. \eqref{eq:Zellner} and \eqref{eq:standard_VI_ELBO} do not.
    %\theo{encode twice - maybe describes/captures/contained etc}
\end{itemize} 
%
%In situations like these, $q_{\text{A}}^{\ast}(\*\theta)$ can outperform $q^{\ast}_{\VI}(\*\theta)$. 
%
Under \ref{item:VI-failure:misspec}, $q_{\text{A}}^{\ast}(\*\theta)$ will outperform $q^{\ast}_{\VI}(\*\theta)$ whenever its objective {implicitly} encodes desirable properties for the posterior belief distribution that are not part of the objective in eq. \eqref{eq:Zellner}.
Similarly, if $q_{\text{A}}^{\ast}(\*\theta)$ is designed to accommodate specific choices of $\mathcal{Q}$ that $q^{\ast}_{\VI}(\*\theta)$ struggles with, it will perform well under \ref{item:VI-failure:Q}.

For example, virtually all posteriors produced within the \DVI family \citep[e.g.][]{RenyiDiv, AlphaDiv, ChiDiv, SABDiv} are designed to address (ii): In particular, these methods prevent unimodal approximations from focusing too strongly around the empirical risk minimizer of $\*\theta$. 
For standard \VI, this phenomenon is common whenever $\mathcal{Q}$  is the mean field variational family, which explains why \DVI often empirically outperforms standard \VI for this popular choice of $\mathcal{Q}$.
%
%In other words, \DVI methods produce posterior belief distributions in $\mathcal{Q}$ that are sub-optimal relative to eq. \eqref{eq:Zellner}, but outperform the $\mathcal{Q}$-optimal belief distribution given by standard \VI.
%
%Furthermore, this performance differential is especially large whenever $\mathcal{Q}$ is poorly suited to produce an approximation to $q^{\ast}_{\text{B}}(\*\theta)$.
%
%Together with fact that $\mathcal{Q}$ is unsuitable for any meaningful approximation to $q^{\ast}_{\text{B}}(\*\theta)$ 
%
Taking the optimization-centric view on posterior beliefs, this implies that in spite of being sub-optimal relative to eq. \eqref{eq:Zellner}, \DVI methods pose objectives that are often better-suited to produce belief distributions in $\mathcal{Q}$.
This raises an interesting question: 
%\jack{Rather than viewing inference in $\mathcal{Q}$ as an approximation (we really want to highlight the contrast between approximating and generating)} 
Rather than thinking of inference in a subset $\mathcal{Q}\subset \mathcal{P}(\*\Theta)$ as approximate, can we adapt a radical optimization-centric view and \textit{directly} design appropriately specified objectives to generate posterior beliefs with desirable properties?
The remainder of this paper gives an affirmative answer to this question in the form of Generalized Variational Inference (\GVI).

Taking inspiration from \ref{item:VI-failure:misspec} and \ref{item:VI-failure:Q}, the next section first takes a step back and explores the conditions under which such alternative \GVI posteriors could be desirable.
As we shall see, the isomorphic relationship between eq. \eqref{eq:Zellner} and the Bayesian posterior provides a comprehensive answer to this question:
%, we actually have a very solid grasp in which situations this objective does (or does not) define the problem appropriately.
%
%The next section investigates this in detail. 
%
Specifically, we explain the ways in which the assumptions underpinning the traditional Bayesian paradigm giving rise to the Bayesian posterior $q^{\ast}_{\text{B}}(\*\theta)$ and eq. \eqref{eq:Zellner} are often misaligned with the realities of contemporary statistical machine learning.
% Specifically, we show that the traditional Bayesian paradigm giving rise to eq. \eqref{eq:Zellner} is inappropriate for the large majority of contemporary statistical Machine Learning tasks\theo{not clear what this means and sounds like an attack on Bayes rule rather then the assumptions violated in practice - careful}. \jack{Specifically, we show that the assumptions underpinning the traditional Bayesian paradigm giving rise to eq.\eqref{eq:Zellner} are often violated for the large majority of contemporary statistical Machine Learning task}
%
This misalignment problem has three important dimensions: The information contained in the prior belief \ref{assumption:prior}, the role of the likelihood model \ref{assumption:lklhood}, and the availability of computational resources \ref{assumption:computation}.
%
%This mismatch between the foundations underlying traditional Bayesian inference and modern inference tasks explains why the optimization problem underlying the Bayesian posterior or its \VI relaxation to $\mathcal{Q} \subset \mathcal{P}(\*\Theta)$ are often misspecified in practice. 

%This insight is developed in Section \ref{sec:Examining_Bayesian_paradigms} and illuminates why alternative approximations $q_{\text{A}}^{\ast}(\*\theta)$ often empirically outperform standard \VI. 
%
%\jeremias{Add sentence here about how they implicitly define better objectives when assumptions are violated or the approx. family is garbage.}
%
%Beyond that, it also paves the way for a new axiomatically motivated recipe for producing belief distributions in Section \ref{sec:RoT}. 
%
%We call this recipe the Rule of Three (\RoT) and show how it can be used to produce new posterior belief distributions via Generalized Variational Inference (\GVI) in Section \ref{sec:GVI}. %\jack{Presumably we will have a paragraph to this effect at the end of the intro also}
%\jeremias{It's not there yet, but we will put it in there}

%Lastly, this finding also directly motivates the axiomatic development of a generalized recipe for producing posterior belief distributions. We call this recipe the Rule of Three (\RoT) and show how to put it to use through Generalized Variational Inference (\GVI).

% \jack{\VI bad for estimating posterior marginals impacts predictive performance}

% \jack{\FVI methods exists because people are unsatisfies with \VI}

% \jack{The paradox with the optimality of \VI inspires us to...}

\section{A reality check: Re-examining the traditional Bayesian paradigm}
\label{sec:Examining_Bayesian_paradigms}
%
%\jack{Can we use misalignment rather than contradiction, misalignment makes it seem like statisticians have provided the wrong tools rather than that ML people are doing the wrong thing!} 
%
In the following section, we illuminate 
%\jack{illuminate, zoom in a bit informal} 
the misalignment between the assumptions underlying the traditional Bayesian paradigm and the way in which modern statistical machine learning uses (approximate) Bayesian posteriors to conduct inference. %We do so in three consecutive steps:
\begin{itemize}
    \item[] First, \textbf{Section \ref{sec:traditional_bayesian_paradigm}} recalls and elaborates on the three crucial assumptions underlying the standard Bayesian posterior: An appropriate prior \ref{assumption:prior} and likelihood \ref{assumption:lklhood} and an infinite computational budget \ref{assumption:computation}.
    \item[] Next, \textbf{Section \ref{sec:challenging_traditional_paradigm}} exposes the misalignment of these three assumptions with inferential practices in contemporary statistical machine learning and large-scale inference.
    \item[] Lastly, \textbf{Sections \ref{sec:violation_P}--\ref{sec:violation_C}} 
    illustrates the adverse real-world consequences arising from violating these assumptions.
    %gives a more detailed discussion on the problem with violating these three assumptions and complements the theoretical arguments with examples. \theo{last one could be phrased better perhaps}
\end{itemize}

%
%These findings motivate the development of a new Bayesian paradigm in Section \ref{sec:RoT}. % enshrined by the Rule of Three (\RoT).

\subsection{The traditional Bayesian paradigm}
\label{sec:traditional_bayesian_paradigm}

Due to their direct correspondence with the fundamental rules of probability, Bayesian posteriors $q^{\ast}_{\text{B}}(\*\theta)$ are desirable objects to be basing inference on.
%
%The rationale behind this is as follows. 
%
To see why, suppose the following three conditions hold true. 
\begin{itemize}
\myitem{\textbf{(\Prior)}}
    The \textbf{\Prior}rior $\pi(\*\theta)$ is correctly specified: 
    %It describes the true probability measure according to which the (unknown but fixed) realization $\*\theta^{\ast}$ was sampled by some unspecified process in the natural world
    It encodes the best available judgement about $\*\theta$ based on \textit{all} information available to the modeller. Crucially, the distribution $\*\pi(\*\theta)$ 
    %\jack{density, $\pi(\*\theta)$, is assumed... (saves you repeating prior belief)}  
    is assumed to reflect this prior belief \textit{exactly}.
    %Thus, the prior should thus encode all available knowledge about the problem domain and the statistical model one uses for inference.
    This implies that $\pi(\*\theta)$ should \textit{completely} %\jack{\textit{should} rather than is supposed to?} 
    reflect all information available to the modeller such as previously observed observations $x_{-m:0}$ of the same phenomenon or domain expertise relating to the problem domain and the statistical model.
    \label{assumption:prior}
    \myitem{\textbf{(\Likelihood)}}
        \label{assumption:lklhood}
    There exists an (unknown but fixed) $\*\theta^{\ast}$ making the \textbf{\Likelihood}ikelihood model equivalent to the data generating mechanism of $x_{i}$. This is to say that $x_i \sim p(x_i|\*\theta^{\ast})$.\footnote{
        We note here that to keep the presentation simpler, we are giving conditions that are stricter than what is required for Bayesian analysis. 
        In particular, \ref{assumption:lklhood} corresponds to an objectivist treatment of the likelihood and can be weakened under the subjectivist paradigm for Bayesian analysis. 
        In this paradigm, the treatment of the likelihood mirrors that of the prior: It now simply corresponds to the modeller's belief about the process that generated the data. 
        While this first sounds like a weaker requirement, it ends up producing the same misspecification problems as \ref{assumption:lklhood}. 
        Specifically, a subjectivist treatment of the likelihood requires the modeller to express her beliefs about the likelihood function \textit{exactly}. This forces her to make more probability statements than she realistically has time or introspection for \citep[see e.g.][]{goldstein1990influence,o2004probability, goldstein2006subjective}.
        The result is that the likelihood function supplied by the modeller is  \textit{at best} going to be an approximate description of the modeller's beliefs.
        This provides the subjectivist interpretation of misspecification. 
        Notice that it directly mirrors the objectivist interpretation of misspecification in \ref{assumption:lklhood}: The likelihood function supplied is \textit{at best} going to be an approximate description of the true data generating mechanism. 
    }
    \myitem{\textbf{(\Computation)}}
    The budget for \textbf{\Computation}omputation is infinite, so the  complexity of computing the belief $q_{\text{B}}^{\ast}(\*\theta)$ %, by optimising the Bayesian objective function over the space of all densities, 
    can be ignored.
    \label{assumption:computation}
\end{itemize}
If \ref{assumption:lklhood}, \ref{assumption:prior} and  \ref{assumption:computation} are satisfied, it immediately follows that the best belief for the event $\{\*\theta^{\ast} = \*\theta\}|\{\*x_{1:n} = x_{1:n}\}$ is simply given by the analytically available posterior
\begin{IEEEeqnarray}{rCl}
    d\mathbb{P}\left(\*\theta|x_{1:n}\right) & \propto & 
    d\mathbb{P}\left( \*\theta \right)
    \prod_{i=1}^n d\mathbb{P}\left( x_i|\*\theta \right) = 
    \pi(\*\theta)\prod_{i=1}^n p(x_i|\*\theta) = q^{\ast}_{\text{B}}(\*\theta). 
    \label{eq:standard_bayes_rule}
\end{IEEEeqnarray}
Note that \ref{assumption:prior} and \ref{assumption:lklhood} lend a meaningful interpretation to Bayes' rule in form of conditional probability updates. 
Complementing this, \ref{assumption:computation} ensures that it is feasible to compute the generally intractable solution $q^{\ast}_{\text{B}}(\*\theta)$ of eq. \eqref{eq:Zellner_standard}.
%
% Except for the 
% special 
% case of conjugacy, this generally infinite-dimensional solution cannot be simplified into a finite-dimensional representation. 
%
Accordingly, \ref{assumption:computation} generally is interpreted to mean that a Markov Chain Monte Carlo algorithm can be run for long enough to accuratley represent? $q^{\ast}_{\text{B}}(\*\theta)$.
%
%
%Moreover, by virtue of \ref{assumption:computation}, there is no computational constraint.
%\jack{While \ref{assumption:lklhood} and \ref{assumption:prior}  are required to provide a true probabilistic meaning to Bayes' rule updating of conditional probabilities, Condition  \ref{assumption:computation} ensures it is possible to optimise the traditional Bayesian objective function over the infinite dimensional space of all normalised densities. In practice this assumption is traditionally interpreted as assuming any Markov Chain Monte-Carlo (\MCMC) algorithm can be run for infinite $n$ to sample from the Bayes' rule posteriro exactly. Of course, if the prior and likelihood are conjugate however, then very little computational power is required.}  
%
In summary, if \ref{assumption:prior}, \ref{assumption:lklhood} and \ref{assumption:computation} hold, $q^{\ast}_{\text{B}}(\*\theta)$ is the only desirable posterior belief distribution. 
%
%no other posterior belief than $q^{\ast}_{\text{B}}(\*\theta)$ could ever be preferable for inferential practice. 

%Conversely, once these assumptions are severely violated, it is unclear why one should suppose that $q^{\ast}_{\text{B}}(\*\theta)$ is still well-suited for statistical inference.
%
%This raises an age-old question: 
%This raises the question how well reality aligns with \ref{assumption:prior}, \ref{assumption:lklhood} and \ref{assumption:computation}.
But how well does reality align with \ref{assumption:prior}, \ref{assumption:lklhood} and \ref{assumption:computation}?
Turning attention to \ref{assumption:computation} first, most traditional scientific disciplines have little need to worry about computational complexity and will resort to sampling schemes for two reasons: Firstly, 
%the number of parameters are of manageable size and 
the models are often relatively simple and thus straightforward to infer. %\jack{not sure this sentence makes sense}.
Secondly, even for more complicated models the experimental setup and data collection typically outweighs the cost of computation by orders of magnitude.
%
%Even when sampling is computationally infeasible, a number of methods can produce qualitatively good approximations $q^{\ast}_{\text{A}}(\*\theta) \in \mathcal{Q}$ to $q^{\ast}_{\text{B}}(\*\theta)$. 
%
%When the approximating family $\mathcal{Q}$ is sufficiently rich, $q^{\ast}_{\text{A}}(\*\theta)$ will be close to $q^{\ast}_{\text{B}}(\*\theta)$ \citep[see for instance][]{INLA, VINormalizingFlows, VariationalGP, VIARNormalizingFlows, SylvesterNormalizingFlows} so that assuming \ref{assumption:computation} is often harmless.
%
As for \ref{assumption:prior} and \ref{assumption:lklhood}, neither prior nor likelihood are ever perfect reflections of one's full prior beliefs \citep[see e.g.][]{goldstein1990influence,o2004probability, goldstein2006subjective} or the data generating mechanism \citep[see e.g.][]{Bernardo}.
In other words, \ref{assumption:prior} and \ref{assumption:lklhood} are invariably violated when interpreted literally.
However and as enshrined in Box's aphorism that \textit{all models are wrong, but some are useful}, this is not a problem so long as these violations are sufficiently small.
In traditional statistics, ensuring that these violations are small has typically been enforced through a simple recursion \cite[e.g.][]{box1980sampling,robustBayes}. Specifically, until you are confident that both \ref{assumption:prior} and \ref{assumption:lklhood} are close enough to the truth, repeat the following: Check if \ref{assumption:lklhood} or \ref{assumption:prior} are violated severely. If they are, choose a more appropriate likelihood and prior.
%\theo{not sure if the tone sounds too pendantic or ironic..? I know you are pissed off but dont show it}
%
In order to operationalize this iterative logic, batteries of descriptive statistics, tests and model selection criteria have been developed over the years. 
%
%Together with domain expertise, these diagnostic tools have proven effective in a wide range of scientific disciplines.
%\jeremias{CITATION, maybe a list of disciplines with papers giving an overview over diagnostics}

In summary then, ignoring the computational overhead and 
%\jack{assuming that?} 
iteratively refining likelihoods and priors is rightfully the predominant inferential strategy for traditional scientific endeavours. 
%\jack{not sure this sentence makes sense}. 
Not only is domain expertise relevant for designing priors and likelihood, but the process of finding an appropriate model often provides valuable insights in itself. Further, the expensive part of the analysis is typically data \textit{collection}. Consequently, it is typically not prohibitive to perform inference even with the most computationally expensive of sampling schemes.
In line with this, most methodological contributions in statistical sciences rely to a substantial degree on \ref{assumption:prior}, \ref{assumption:lklhood} and \ref{assumption:computation}.

\subsection{Machine Learning: Challenging the traditional paradigm}
\label{sec:challenging_traditional_paradigm}

Contemporary large-scale inference applications have frequently turned the traditional schematic of statistical model design upside down:
Rather than carefully designing an appropriate likelihood model $p(x_i|\*\theta)$ for a specific data domain, statistical machine learning research 
%\jack{generally searches for a flexible...} 
is typically characterized by the search of a flexible algorithm that can fit \textit{any} data set $x_{1:n}$ well enough to produce useful inferences.
%
% Rather than carefully designing an appropriate likelihood model $p(x_i|\*\theta)$ for a specific data domain, the modern paradigm is  the search of a flexible algorithm acting as silver bullet that can fit \textit{any} data set $x_{1:n}$ thrown at it.\theo{Again careful with unnecessary categorisations and boundaries - you will loose your main points in a forrest of side comments and arguments of what ML/Stats does or doesnt do... tone it down a bit}
%
The resulting likelihood models are typically not attempting to describe any data generating processes in the sense of \ref{assumption:lklhood}. Rather, they are highly over-parameterized functions of $\*\theta$ and typically un-identifiable, meaning that $\*\theta^{\ast}$ is neither interpretable nor unique.
%and often does not even exist.
%
Such statistical machine learning models have three major issues under the traditional paradigm of Bayesian inference that are readily identified:
\begin{itemize}
    \myitem{\textbf{(\PriorM)}} Invariably, the \textbf{\Prior{}}rior is misspecified. 
    Two factors compound this issue: Firstly,  the large number of parameters over-parameterizing the likelihoods of many statistical machine learning models  are no longer interpretable. 
    This often prohibits domain experts to carry out carefully guided prior elicitation. 
    Secondly, priors are typically selected at least in part for their computational feasibility. 
    This fundamentally alters the interpretation of the prior: Rather than the result of an attempt to capture the modeller's knowledge before observing the data, the prior takes the role of a reference measure or regularizer.
    To make matters worse, the number of parameters is often large relative to $n$. 
    In turn, this implies that the priors have a disproportional effect on inference via $q^{\ast}_{\text{B}}(\*\theta)$, a problem we will discuss in Example \ref{example:violation_P} in the context of Bayesian Neural Networks.
    \label{violation:prior}
    \myitem{\textbf{(\LikelihoodM)}} Clearly, the \textbf{\Likelihood{}}ikelihood is misspecified.
    This often has adverse side effects: While
    using an off-the-shelf and often over-parameterized likelihood function can provide a good fit for the typical behaviour of the data, it often causes severe problems with heterogeneous or untypical data points. 
    We will demonstrate this phenomenon on a changepoint problem in Example \ref{example:violation_L}.
    \label{violation:likelihood}
    \myitem{\textbf{(\ComputationM)}}
    %
    %Clearly no \textbf{\Computation{}}omputational budget is ever infinite. 
    With increasingly complex statistical models, \ref{assumption:computation} has proven an increasingly infeasible description of reality.
    %a severe drawback of the traditional Bayesian paradigm. 
    %
    Accordingly, this problem has inspired numerous directions of research, including variational methods and Laplace approximations.
    Example \ref{example:violation_C} illustrates this for the case of Gaussian Processes.
    %
    %Amongst these, variational inference strategies projecting the Bayesian posterior into some parameterized subset $\mathcal{Q}$ have arguably proven to be the most effective.
    %
    \label{violation:computation}
\end{itemize}
Under the challenges outlined in \ref{violation:prior}, \ref{violation:likelihood} and \ref{violation:computation}, standard Bayesian posteriors often do not provide appropriate belief distributions.
In the remainder, we will explain how and why this is the case for many parts of modern large-scale inference.

\begin{figure}[h!]
    %\vskip -1.0cm
    \begin{center}
    \includegraphics[trim= {1.15cm 3.50cm 2.50cm 0.0cm}, clip,  
    width=1\columnwidth]{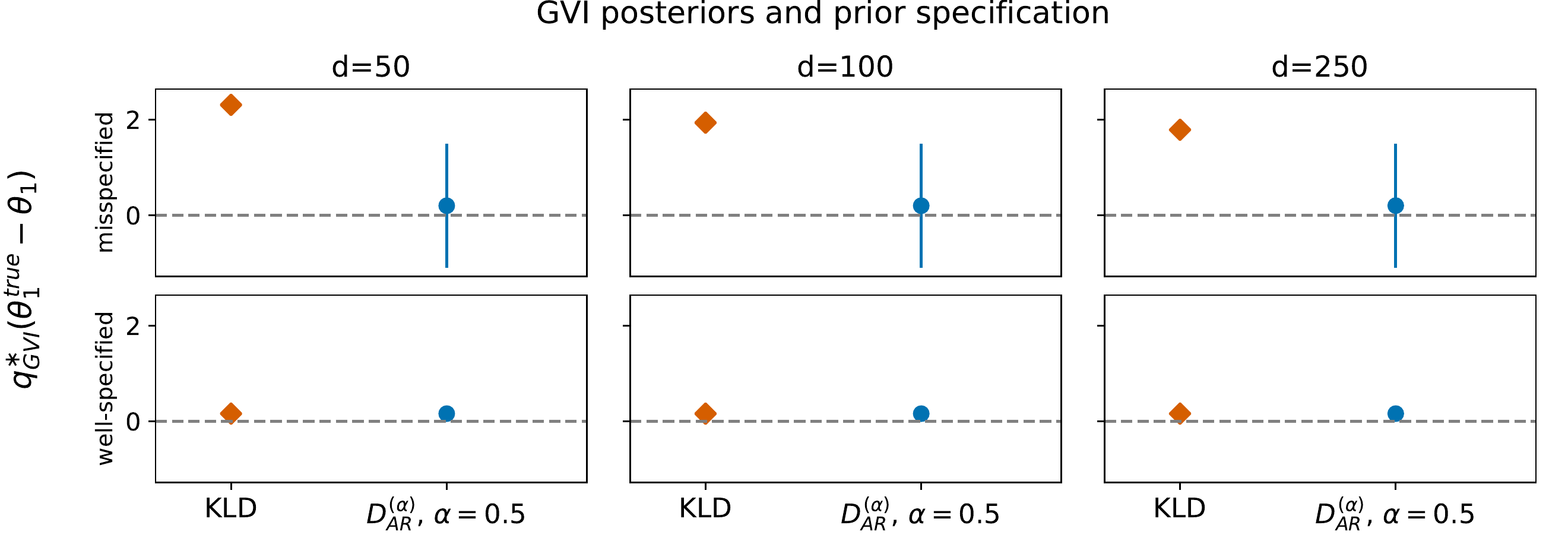}
    \caption{
        \bv
        Taken from \citet{GVIConsistency}, the plot shows the impact of different prior beliefs on inference in a Bayesian normal mixture model with $n=50$ observations and mixture components in $\mathbb{R}^d$ for different choices of $d$.
        Specifically, the plot compares inference outcomes under a misspecified prior (\textbf{Top}) against those under a well-specified prior (\textbf{Bottom}).
        It does so by depicting the average absolute difference between the true parameter values and their MAP estimate on the $y$-axis. 
        Here, the solid whiskers' length corresponds to one standard deviation of the underlying posterior. 
        %
        %Note in particular that the uncertainty quantification of \VIColor{\textbf{Standard \VI}} in  is so bad that the whiskers 
        %
        %Similarly, the dashed wishers' length corresponds to one standard deviation of the MAP estimate across all of the 100 data sets based upon whose simulation the plot was compiled.
        %
        The plot shows that using the \VIColor{\textbf{\KLD}} as uncertainty quantifier as in \VIColor{\textbf{ standard Variational Inference (\VI)}} will produce undesirable uncertainty quantification under misspecified prior beliefs.  
        %
        %The dashed whiskers also show that the adverse effect of the prior converges to a point mass as the parameter's dimensionality $d$ increases.
        %
        In contrast, \GVIColor{\textbf{Generalized Variational Inference (\GVI)}} with  \GVIColor{\textbf{R\'enyi's} $\*\alpha$-\textbf{divergence}} as uncertainty quantifier produces desirable uncertainty quantification in both settings.
        %
        %\jack{Why are there two dots for each priro regulariser? One is the average posterior standard deviation, and the second is the standard deviation of the MAP across n repeats. Can we maybe refer to these as left and right (as you can't always see wiskers)?}
        %\jeremias{Under construction as I will need a minor re-write of my consistency paper. The second column here reflects the sampling variation, but I may well get rid of it for the GVI paper.}
    }
    \label{Fig:prior_misspec_BMM}
    \vskip -0.25in
    \end{center}
\end{figure}

\subsection{Prior misspecification}
\label{sec:violation_P}

For most finite-dimensional parameters, even severely misspecified priors can often be harmless.
%
% \jack{For finite dimensional parameteris models?} In traditional statistics, even severely misspecified priors can often be harmless.\theo{not so sure about playing the boundaries of ML and Stats in general - a lot of the things you put on ML only are actually happening in modern comp. stats. Rethink your strategy as it can cause allienation of reviewers..}
%
Sometimes, this expresses itself in theoretical results: For example, prior misspecification is typically no problem in the asymptotic sense.  Specifically, so long as \ref{assumption:lklhood} holds, it suffices that  $\pi(\*\theta^{\ast})>0$ for the standard Bayesian posterior to contract around $\*\theta^{\ast}$ at rate $O(n^{-1/2})$ \citep[see e.g.][and references therein]{BayesConsistencyReview, BayesianConsistencyClassic1, BayesianConsistencyClassic2, BayesianConsistencyWalker}. 

Oftentimes, these results are used as an apology to neglect the role of prior specification.
While  it is reassuring that the sequence of standard Bayesian posteriors shrinks to the population-optimum as $n \to \infty$, this does not describe the real world:
In particular, $n$ is usually fixed and only a single posterior is computed. 
Note that whenever $n$ is fixed, it is possible to specify an arbitrarily bad prior belief. 
This means that once one departs from assuming that \ref{assumption:prior} is at least approximately correct, the standard Bayesian posterior belief about $\*\theta^{\ast}$ can be made arbitrarily inappropriate---even if \ref{assumption:lklhood} still holds.
Figure \ref{Fig:prior_misspec_BMM} illustrates this on a Bayesian Mixture Model and also shows how Generalized Variational Inference (\GVI) can solve this problem.
%
%\jack{it's a bit weird to present this plot for dimension decreasing rather than increasing, unless there is some specific point I am missing, it's also weird that the prior has less of an affect for larger dimensions}\jeremias{fixed. The larger damage of the prior for small d is perhaps surprising, but I think mostly due to the distribution being non-symmetric: When you actually plotted the variance of the bias over 100 data sets, you could see that for small d the variance was extremely big.}
As the Figure shows, finite data can make prior misspecification a more serious issue, even more so if (i) the parameter space is large relative to $n$ or (ii) it is impossible to specify priors in a principled way.
As we discuss in the next example, a model invariably affected by both problems is the Bayesian Neural Network (\BNN{}).
%
%Using them as stand in for contemporary machine learning techniques,\theo{again - dont play the dichotomy... just talk about modern comp. stats and ML as one..} we discuss the potential pitfalls of \ref{assumption:prior} in Example \ref{example:violation_P}.
% Using them as stand in for contemporary machine learning techniques,\theo{again - dont play the dichotomy... just talk about modern comp. stats and ML as one..} we discuss the potential pitfalls of \ref{assumption:prior} in Example \ref{example:violation_P}.

\begin{example}[Deep Bayesian models as violations of \ref{assumption:prior}]

    Bayesian Neural Networks (\BNN{}s) \citep[][]{BNNs2, BNNs} seek to combine Deep Learning models with Bayesian uncertainty quantification.
    %
    %To make our observations on this class of models more precise, we first introduce basic notation. 
    %
    For the parameter vector $\*\theta$ of weights, let $F(\*\theta)$ be the non-linear composition of activation functions specified by a Neural Network. 
    A conceptually appealing way of thinking about \BNN{}s is as an arbitrarily flexible likelihood function with a large number of parameters $d = |\*\Theta|$.
    This is to say that one believes that (at least approximately), $x_i \sim p(x_i|F(\*\theta^{\ast}))$ for some $\*\theta^{\ast} \in \*\Theta$.
    For a prior $\pi(\*\theta)$ about $\*\theta$, this means that \BNN{}s seek to do inference on the posterior given by
    \begin{IEEEeqnarray}{rCl}
        q^{\ast}(\*\theta) & \propto & \pi(\*\theta)\prod_{i=1}^np(x_i|F(\*\theta)). \nonumber
    \end{IEEEeqnarray}
    At first, this approach seems conceptually appealing: Not only does one circumvent most issues with \ref{assumption:lklhood} by making the likelihood function almost arbitrarily flexible, but one also  quantifies uncertainty in the usual Bayesian manner.
    While both observations are correct, they mask a potentially severe issue with this approach: Namely, specifying the prior $\pi(\*\theta)$ in a principled way and in (approximate) accordance with \ref{assumption:prior} is impossible in practice:
    Firstly, because the vector $\*\theta$ indexes a black box model, its entries do not correspond to interpretable quantities. Accordingly, building prior beliefs based on domain expertise about the data $x_{1:n}$ is not feasible.
    Secondly, since computational aspects are a major concern for \BNN{}s, one is typically constrained to choosing priors that factorize over $\*\theta$.
    %
    %For a more detailed discussion on the problems with the latter, we refer to Section \ref{sec:GVI_prior_misspecification}.
    %
    As a consequence, practitioners often resort to choosing default priors which are not motivated as prior beliefs in the original sense or by an attempt to approximately satisfy \ref{assumption:prior}. 
    Specifically, one typically just picks $\pi(\*\theta) = \prod_{j=1}^d\pi_j(\*\theta_j)$, where $\pi_j(\*\theta_j)$ is a standard normal distribution for all $j$.
    Choosing priors in this ad-hoc fashion violates the principles underlying classical Bayesian modelling (see also Section \ref{sec:GVI_prior_misspecification}).
    %
    % This issue is compounded by the fact that uncertainty quantification is especially important whenever $n$ is small relative to $D$---the situation where the prior influence is strongest.
    %
    This is 
    especially problematic whenever $n$ is small relative to $d$: In these situations, prior influence serves as a strong source of information about $\*\theta$.
    Thus, if the prior is misspecified and $n$ is small relative to $d$, the (incorrect) information contained in the prior often overshadows the information in the data.
    At the same time, reliable uncertainty quantification is most important whenever $n$ is small relative to $d$.
    Indeed, this is a well-known issue and is addressed in various contributions by up-weighting the likelihood (down-weighting the \KLD term in the \ELBO), see \citet{ BNNUQ2,BNNUQ5,BNNUQ1,BNNUQ3,BNNUQ4}. 
    \label{example:violation_P}
\end{example}

For completeness, we note that the current paper does not discuss uninformative and so-called objective priors \cite[see, e.g.][]{jeffreys1961theory, zellner1977maximal,bernardo1979reference,berger1992development,jaynes2003probability, berger2006case}. 
Priors of this kind are constructed to be as uninformative as possible and thus in some ways objective. 
In many ways, they are a principled and natural response to the critique of ill-informed priors.  
Generally however, their construction results in so-called improper priors--densities that do not correspond to a finite measure and thus do not integrate to one.
While this is not generally prohibitive, it would severely complicate the developments of Section \ref{sec:RoT} because most divergences are not well-defined for improper priors\footnote{
    The \KLD is the exception to this rule: As it depends on the log normalizer of $\pi(\*\theta)$ in an additive fashion, improper priors can still be admissible so long as eq. \eqref{eq:Zellner} yields a solution for the unnormalized version of the \KLD as given in \citet{ABCdiv}.
}. 

\subsection{Likelihood Misspecification}
\label{sec:violation_L}

While prior misspecification affects inference adversely,
the issue for inferential practice is even more serious if \ref{assumption:lklhood} is violated thoroughly: Whenever the likelihood model for $x_i$ is severely misspecified, inference outcomes suffer dramatically.
Moreover, not even the asymptotic regime offers a remedy and the adverse effects of misspecification persist as $n\to\infty$.
The traditional approach to addressing this issue is straightforward: If the likelihood model $p(x_i|\*\theta)$ is misspecified, simply investigate why exactly it fits the data poorly.
After residual analysis, intense study of descriptive statistics and consultation with domain experts, redesign it to arrive at a likelihood model $p'(x_i|\*\theta')$, which hopefully provides a better fit  to the data and (approximately) satisfies \ref{assumption:lklhood}.
In other words, the traditional view is that any problem with misspecification is really a problem with careless modelling.
%\theo{too personal - no need for a ``careless modeller" to appear.}

As outlined in Section \ref{sec:challenging_traditional_paradigm}, this strategy is neither practiced nor feasible with contemporary large-scale models.
%
% While this paradigm has served traditional statistics well, it is ill-suited to many situations in contemporary large-scale inferential practice.
% %
% Specifically, Machine Learning applications have frequently turned the traditional schematic of statistical model design upside down:
%
% Traditionally, one starts with domain expertise about $x_{1:n}$ and careful analysis to finally arrive at an appropriate likelihood model $p(x_i|\*\theta)$ for the data. In contrast, the modern paradigm is  the search of a flexible likelihood function $p(x_i|\*\theta)$ acting as silver bullet that can fit \textit{any} data set $x_{1:n}$ thrown at it.
% Rather than carefully designing an appropriate likelihood model $p(x_i|\*\theta)$ for a specific data domain, the modern paradigm is  the search of a flexible likelihood function $p(x_i|\*\theta)$ acting as silver bullet that can fit \textit{any} data set $x_{1:n}$ thrown at it.
% %
% The resulting model families are typically not meant as serious attempts to describe any form of true likelihood in the sense of \ref{assumption:lklhood}. Rather, they are over-parameterized functions of $\*\theta$ and typically un-identifiable, meaning that $\*\theta^{\ast}$ does not exist.
%
The naive interpretation of likelihood functions as corresponding to an appropriately good description of the true data generating process in the sense of \ref{assumption:lklhood} is thus wholly inappropriate.
This is especially important as many large-scale models are mainly interested in capturing the \textit{typical} behaviour of the data---rather than \textit{fully} modelling every aspect of a population. 
While this may appear to be a minor point at first glance, it has serious consequences for inferential practice. To see why, suppose a population contains a small number of outlying observations, local heterogeneities or spiky noise.
The naive interpretation of the likelihood as in \ref{assumption:lklhood} \textit{assumes} that these untypical aspects are encoded in the likelihood function. 
Hence, if $x_i$ is an outlier so that $p(x_i|\*\theta)$  is very close to zero for some value of $\*\theta$ constructed to fit the rest of the data,
%\jack{for $\*\theta$ estimated to fit the rest of the data} 
the inference machinery of traditional statistics interprets this as a strong signal: After all, if the likelihood model is an approximately correct description of the data generating mechanism, the most informative observations are those that do \textit{not} fit the model fitted to the rest of the data.
%fit the model \jack{fitted to the rest of the data}.
%
It follows that aberrant parts of the data will have a disproportional
%\theo{disproportional? overly important? significant?} 
impact on inference outcomes---leading standard inference methods to break down \citep[see also][]{Jewson}. 
%
%Clearly, this is incompatible with the interest in modelling typical behaviour (as opposed to modelling the full population).
%
%This requirement is challenging in practice: Modern data is often generated in real time and with noisy, largely unmonitored processes, which means that models often tend to overfit to the most untypical and outlying observations.

While it is in general hard to visualize this issue, influence functions provide a concise way of showcasing the problem. 
Roughly speaking, influence functions in a Bayesian context quantify the impact the $(n+1)$-th observation $x_{n+1}$ has on the posterior distribution $q^{\ast}_{\text{B}}(\*\theta)$ constructed using the first $n$ observations \citep{peng1995bayesian}. 
This discrepancy is measured by computing a divergence between the posteriors based on $x_{1:n}$ and on $x_{1:(n+1)}$. 
%\theo{ok you are defining this here - maybe point to this section in the figure or move the figure after this?}
%Originally, this technique was proposed for outlier detection by . 
%Here we use the Fisher-Rao divergence to quantify influence because of it's desireable geometric properties \cite{InfFct}.
%show by how much the $n$-th observation moves the posterior as based on the first $n-1$ observations. \jeremias{REFERENCES! MORE PRECISE}
%
%\jack{Roughly speaking, influence functions in a Bayesian context quantify the impact an observation $z$ has on the posterior distribution $\pi(\theta|\*x)$ by estimating a divergence between the posteriors with or without conditioning on $z$ in addition to $\*x$. This technique was originally proposed for outlier detection by \cite{peng1995bayesian}. Here we use the Fisher-Rao divergence to quantify influence becuase of it's desireable geometric properties \cite{InfFct}.}
%on moves the posterior as based on the first $n-1$ observations.
%
Using the Fisher-Rao divergence \citep[for its geometric properties as explained in][]{InfFct}, Figure \ref{Fig:Influence_fct_pic} compares the influence of a standard Bayesian posterior with that of a posterior belief computed using Generalized Variational Inference (\GVI).
%\theo{a bit weird to keep on seeing results of GVI before its introduction}\jack{Could we possibly give a brief definition of \GVI in the intro to solve this, i have this sort of structure in papers before}
%
The left side of the Figure formalizes the intuition we have just developed: In the standard Bayesian case, the influence of $x_{n+1}$ on the posterior belief grows stronger and stronger the more untypical
%\theo{using this word a lot - could be just me but unaware of its meaning}\jack{I did have to google it but it seems to be the correct word!} 
it is relative to previously observed data.
Similarly, the right side shows the adverse effect this has on the posterior predictive. %\jack{approximation to the non-outlying observations}.
%\jeremias{Took the liberty of ignoring this comment as I didn't feel it would improve understanding while complicating readability. Please object if you feel strongly about this!}
%
To make the implications of influence functions for inferential practice more tangible, we additionally demonstrate the outlier problem in Example \ref{example:violation_L}.
\begin{figure}[h!]
    %\vskip -1.0cm
    \begin{center}
    \includegraphics[%trim= {1.5cm 4cm 2cm 2.5cm}, clip,  
    width=0.49\columnwidth]{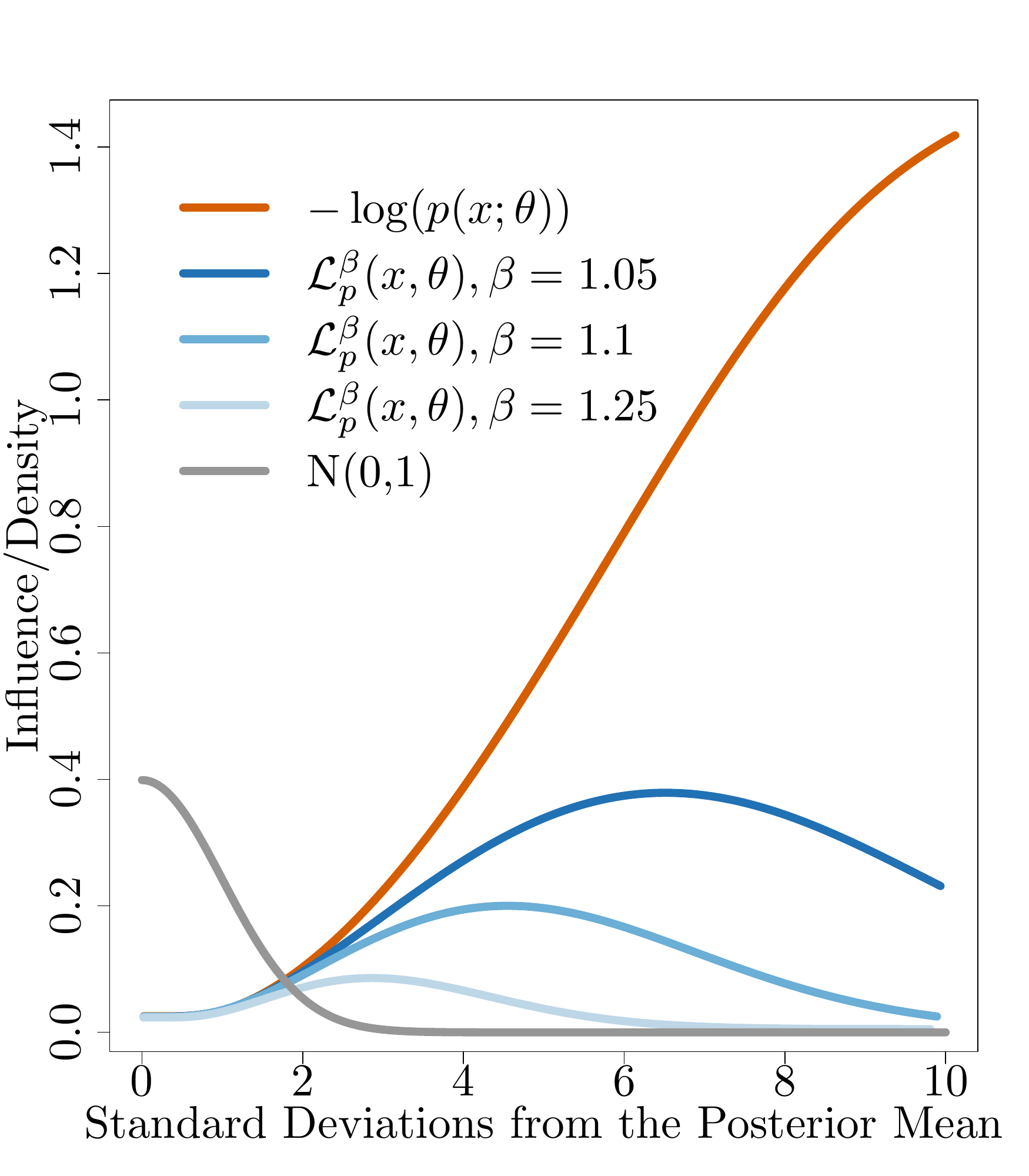}
    \includegraphics[%trim= {1.5cm 4cm 2cm 2.5cm}, clip,  
    width=0.49\columnwidth]{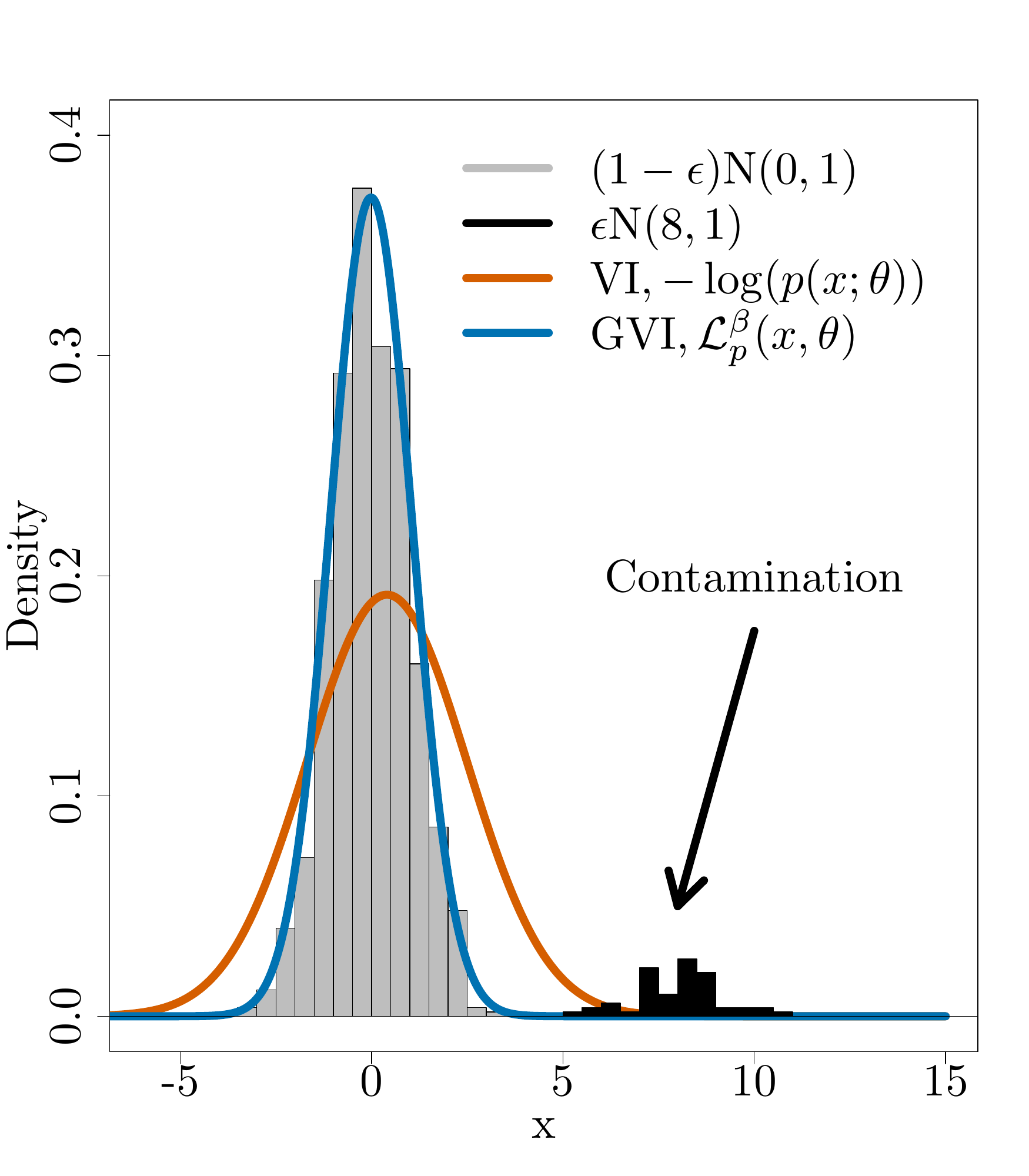}
    \caption{
        \bv
        The plots compare influence functions (\textbf{Left}) and predictive posteriors (\textbf{Right}) of a \textcolor{VIColor}{\textbf{standard}} Bayesian  against a \textcolor{GVIColor}{\textbf{\GVI}} posterior.
        \textbf{Left:} The influence functions of scoring the normal likelihood with a \textcolor{VIColor}{\textbf{standard}} negative log likelihood against a \textcolor{GVIColor}{\textbf{robust}} scoring rule derived from $\beta$-divergences.
        \textbf{Right:} A univariate normal is fitted using all the data depicted, including the outlying contamination. The posterior predictive corresponding to the \textcolor{GVIColor}{\textbf{robust}} scoring rule is able to ignore these outliers. This stands in contrast to the posterior predictive based on \textcolor{VIColor}{\textbf{standard}}, which assigns increasingly large influence to outlying observations.
    }
    \label{Fig:Influence_fct_pic}
    \vskip -0.25in
    \end{center}
\end{figure}

\begin{example}[Outliers as violations of \ref{assumption:lklhood}]
    While there exist many formalizations of the notion of an outlier, the conceptually most useful one is probably the $\varepsilon$-contamination model.
    %One situation that is particularly easy to understand are outliers.
    In the $\varepsilon$-contamination model, the data points $x_i$ are generated according to a contaminated density composed additively as
    \begin{IEEEeqnarray}{rCl}
        p_{\text{true}}(x_i) & = & (1-\varepsilon) \cdot p(x_i|\*\theta^{\ast}) + \varepsilon \cdot o(x_i), \nonumber
    \end{IEEEeqnarray}
    for some small and fixed $0<\varepsilon<1$, a fixed parameter value $\*\theta^{\ast}$ of interest and a contaminating outlier-generating density $o$.
    %
    %Supposing that the modeller is oblivious to the outliers and wishes to conduct inference about $\*\theta^{\ast}$, a natural situation to consider 
    An obvious violation of \ref{assumption:lklhood} for this case would be fitting the data only to the non-contaminated component $p(x_i|\*\theta)$ in order to infer $\*\theta^{\ast}$.

    Considering this type of model misspecification is especially poignant in Bayesian On-line Changepoint Detection (\BOCPD), a well-studied family of models that yield computationally efficient algorithms \citep[see e.g.][]{BOCD, FearnheadOnlineBCD, HazardLearningBOCD, GPBOCD, CaronDoucet, TurnerVB, BOCPDMS, RBOCPD}.
    \BOCPD aims to segment a data stream in real time and achieves this via an efficient recursion updating the Bayesian posterior with each newly arriving observation.
    A canonical application example of \BOCPD is the well-log data set first discussed by \citet{FirstWellLog}. Its observations describe the abruptly changing nuclear responses of rock stratification during the course of drilling a well. 
    Generally, the different rock strata are clearly distinguishable from one another. However, rock formation processes are noisy and sometimes interrupted by extraordinary events such as tsunamis, earth quakes or eruptions. 
    Accordingly, the data points generated are surprisingly close to an $\varepsilon$-contaminated normal distribution within each of the clearly distinguishable  rock strata.
    Figure \ref{Fig:RBOCPD_example} is taken from \citet{RBOCPD} and shows how this phenomenon renders vanilla \BOCPD an unreliable algorithm.
    It also shows that this issue can be remedied by constructing alternative posterior belief distributions via a Generalized Variational Inference (\GVI) procedure relying on a robust loss function derived from the $\beta$-divergence.
    %\theo{deriving and derived - maybe constructing for first one}
    %
    %
    \label{example:violation_L}
\end{example}

\begin{figure}[h!]
        \vskip 0.1in
        \begin{center}
            \centerline{\includegraphics[trim= {2.25cm 0.25cm 3.5cm 0.9cm}, clip, %{1.5cm 3cm 2.2cm 3.6cm},clip, 
            width=1.00\columnwidth]{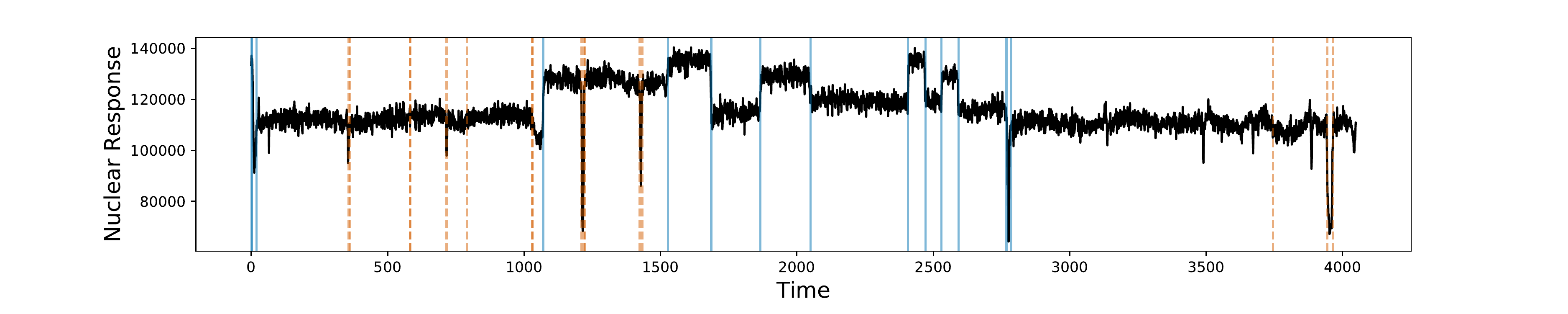}}
            %left lower right upper
            %DO: Check if we can make textbf a little less fat
            \caption{
            \bv
            Inference outcomes of \BOCPD on the well log data set using the \VIColor{\textbf{standard Bayesian posterior}} and a \GVIColor{\textbf{\GVI posterior}}  constructed with robust losses based on the $\beta$-divergence.
            Solid vertical lines correspond to Maximum A Posteriori (MAP) segmentation of \GVIColor{\textbf{\GVI posterior}}, dashed vertical lines mark incorrect changepoints \textit{additionally} detected under \VIColor{\textbf{standard Bayesian inference}}.
            %\textbf{Top:} Raw data and  { Maximum A Posteriori segmentation}. {\color{blue} \textbf{Robust}}  segmentation depicted using solid lines, changepoints additionally declared under {\color{red} \textbf{standard}} \BOCPD with dashed lines.
            %\textbf{Middle (Bottom):}  {\color{blue} \textbf{Robust}} ({\color{red} \textbf{standard}}) run-length distributions in grayscale. The most likely run-lengths are colored.
            %\jeremias{ADJUST COLOR TO SAME COLOR SCHEME AS }{\color{GVIColor} \GVI} vs {\color{DVIColor}\DVI}
            %False discovery rate of changepoints in the standard regime is $>98\%$.
            }
            \label{Fig:RBOCPD_example}
        \end{center}
        \vskip -0.2in
    \end{figure}

\subsection{Computation mismatch}
\label{sec:violation_C}

As Theorem \ref{thm:Zellner} shows, the Bayesian posterior $q^{\ast}_{\text{B}}(\*\theta)$ is the result of an optimisation problem over the infinite-dimensional space $\mathcal{P}(\*\Theta)$. Generally therefore, the posterior itself also does not have a closed form expressed in a finite-dimensional parameter.% \theo{not sure i follow why the posterior cannot be finite-dimensional - e.g. in a parametric BLR setting}.
In fact, the only case in which $q^{\ast}_{\text{B}}(\*\theta)$ can be represented through a finite-dimensional parameter is when prior and likelihood are conjugate to one another.
%
% As Theorem \ref{thm:Zellner} shows, the Bayesian posterior $q^{\ast}_{\text{B}}(\*\theta)$ is the result of an infinite-dimensional optimisation problem and generally infinite-dimensional itself\theo{not sure i follow why the posterior cannot be finite-dimensional - e.g. in a parametric BLR setting}.
% %
% In fact, except for the cases where likelihood and prior are conjugate, $q^{\ast}_{\text{B}}(\*\theta)$ can never be represented in a finite-dimensional space.\theo{sounds like i'm missing something here in my understanding - will email - i dont see how we show this or support this statement with refs in general irrespective of my questioning}
%
Accordingly, performing inference with $q^{\ast}_{\text{B}}(\*\theta)$ is in general a hard problem, which manifests itself through the need to compute the generally intractable normalizing constant.
To circumvent this problem, it is common to leverage Markov Chain Monte Carlo algorithms that produce an exact representation of $q^{\ast}_{\text{B}}(\*\theta)$ if the chain runs indefinitely and collects infinitely many (correlated) samples.
In practice, collecting a finite number of samples from the chain will often yield a reasonable approximation to the posterior so long as $d = |\*\Theta|$ is not too large. 
For large values of $d$ however, the number of samples required to make the approximation useful is often too large to make sampling a computationally viable strategy: 
For example, in the \textit{best} case scenario, Random Walk Metropolis Hastings scales like $\mathcal{O}(d^2)$ \citep{roberts1997weak}, the Metropolis-adjusted Langevin algorithm like $\mathcal{O}(d^{4/3})$ \citep{roberts1998optimal} and Hamiltonian Monte Carlo like $\mathcal{O}(d^{5/4})$ \citep{beskos2013optimal}. Note that these results assume independence and Gaussiantiy---so in practice scaling rates  are typically much worse. 

An alternative way of avoiding the computation of a normalizing constant are various approximation strategies seeking to project $q^{\ast}_{\text{B}}(\*\theta)$ into some parameterized subset $\mathcal{Q} \subset \mathcal{P}(\*\Theta)$. 
This strategy will produce approximations $q^{\ast}_{\text{A}}(\*\theta)$ of high quality only if the set $\mathcal{Q}$ is chosen to be  sufficiently rich so that the statement $q^{\ast}_{\text{A}}(\*\theta) \approx q^{\ast}_{\text{B}}(\*\theta)$ is not completely vacuous.
Importantly however, most posterior belief distributions $q^{\ast}_{\text{A}}(\*\theta)$ that are regularly computed this way barely deserve to be called approximations to $q^{\ast}_{\text{B}}(\*\theta)$. 
For example, consider the mean field normal variational family given by
\begin{IEEEeqnarray}{rCl}
    \mathcal{Q}_{\text{MFN}} &= & 
    \left\{ 
        \prod_{j=1}^d \mathcal{N}(\*\theta_j|\mu_j, \sigma_j^2): \mu_j \in \mathbb{R}, \sigma^2_j \in \mathbb{R}_{>0} \text{ for all } j
    \right\}.
    \label{eq:Q_MFN}
\end{IEEEeqnarray}
For most interesting non-trivial posterior distributions $q^{\ast}_{\text{B}}(\*\theta)$, there will not exist any element in $\mathcal{Q}_{\text{MFN}}$ that could be considered an approximation to $q^{\ast}_{\text{B}}(\*\theta)$ in any meaningful way: After all, this variational family directly assumes $\mathcal{O}(d^2)$ independence relationships in the approximate posterior belief for $\*\theta$.
Worse still: As approximations are particularly attractive when $|\*\Theta|=d$ is large, in practice we will resort to these insufficiently expressive ``approximations'' to $q^{\ast}_{\text{B}}(\*\theta)$ \textit{precisely} when the elements in $\mathcal{Q}_{\text{MFN}}$ are structurally most dissimilar from $q^{\ast}_{\text{B}}(\*\theta)$.
As we proceed to explain in the remainder of the paper, we think it is often unhelpful to think of posterior beliefs $q^{\ast}_{\text{A}}(\*\theta)$ computed in this way as approximations to $q^{\ast}_{\text{B}}(\*\theta)$.
Rather, we think of them as defining a new and distinct posterior belief distribution in their own right.

To make the preceding discussion more tangible and highlight the importance that the frequent violation of \ref{assumption:computation} has played in research on statistical machine learning, Example \ref{example:violation_C} illuminates the importance of computational considerations for Gaussian Processes.

\begin{example}[large-scale Gaussian processes as violations of \ref{assumption:computation}]
    Many Bayesian machine learning models prohibit exact computation. One particularly interesting case are Gaussian Processes (\GP{}s): Even in the special cases where they admit closed form posteriors, it may well be impossible to compute them exactly for sufficiently large inference problems.
    The reason is that for $n$ observations, direct computation of the associated \GP{} posterior takes $\mathcal{O}(n^3)$ time.
    As a consequence, an entire literature is dedicated to bringing down this prohibitive computational complexity \citep[see for instance][]{GPApprox1,GPApprox3, GPApprox2, GPApprox4} and developing software or computer-architecture specific methods geared towards inference with \GP{}s \citep[e.g.][]{gpflow, GPytorch, BayesOptPyTorch, GPexample1}.
    Furthermore, with deep (i.e., hierarchical) approaches to \GP{}s introduced in \citet{DGPs} and extended in various directions \citep[e.g.][]{VAEDGPs, diffGP}, this challenge has only become more important \citep[see e.g.][]{DGPEP, randomFeaturesDGP, DeepGPsVI}.
    \label{example:violation_C}
\end{example}

\section{The Rule of Three: A new Bayesian paradigm}
\label{sec:RoT}

As the last sections have shown, the assumptions which form the core of traditional Bayesian inference are often not a good basis for modern large-scale statistical inference.
% As the last part\theo{we are far from the last part of the paper? maybe preceding sections?} of this paper has comprehensively shown, the traditional Bayesian paradigm\theo{again your phrasing is controversial - blaming the paradigm rather then its usage and violation in practice?} is often not a good basis for statistical inference in contemporary machine learning\theo{why is it good for contemporary comp. stats or large-scale statistical applications? again generating divides and enemies here with no need} problems.
%
In response to this observation, the following section seeks to generalize the Bayesian paradigm. As we shall see, the way in which we do so is strongly inspired by the preceding development in two important ways: 
Firstly, as we did in Section \ref{sec:VI_and_Bayes_posteriors}, we take inspiration from an optimization-centric view on Bayesian inference.
Secondly, since we saw in Section \ref{sec:Examining_Bayesian_paradigms} that there are three potentially problematic assumptions underlying Bayesian inference, we construct our generalization in order to address each of these concerns directly and modularly.
% In response to this finding, the following section sets out to update the Bayesian paradigm to suit the needs of the machine learning age.\theo{... too pompous and divisive - not making your life easier here or your message focused}
%
This development proceeds in three steps.
\begin{itemize}
    \item[] \textbf{Section \ref{sec:axioms}} sets out axioms that  are a minimal requirement for any posterior belief distribution. In accordance with these axioms, we derive the Rule of Three (\RoT).
    \item[] \textbf{Sections \ref{sec:deriving_RoT} \& \ref{sec:RoT-addresses-traditional-issues}}  discuss the \RoT as a recipe for producing posterior belief distributions  and elaborate on its three interpretable ingredients.
    We also show how the \RoT can \textbf{directly} address the concerns associated with imposing \ref{assumption:prior}, \ref{assumption:lklhood} and \ref{assumption:computation}.
    \item[]  \textbf{Section \ref{sec:insight_RoT}} demonstrates that the axiomatic development is both helpful and useful by comparing the \RoT with existing methods that generate belief distributions. 
\end{itemize}
%
%First, we set out axioms that we believe are a minimal requirement for any posterior belief distribution. Second, we proceed to derive the Rule of Three (\RoT), a novel recipe for producing posterior belief distributions which satisfy these axioms and depend on three interpretable ingredients. Third and last, we demonstrate that the axiomatic development is both helpful and useful by comparing the \RoT with existing Bayesian belief distributions. 
%

\subsection{An axiomatic foundation for Bayesian inference}
\label{sec:axioms}

In this section, we set out to produce a novel axiomatic foundation for Bayesian inference that is flexible enough to deal with contemporary real-world challenges.
Before doing so, we define two core concepts required in their development.
\begin{definition}[Loss Function]
    Losses are functions $\ell:\*\Theta \times \mathcal{X} \to \mathbb{R}$ which for any observation sequence $x_{1:n} \in \mathcal{X}^n$
    have empirical risk minimizers 
    \begin{IEEEeqnarray}{rCl}
        \widehat{\*\theta}_n & = &
        \argmin_{\*\theta \in \*\Theta}\left\{
            \sum_{i=1}^n\ell(\*\theta, x_i)
        \right\}.
        \nonumber
    \end{IEEEeqnarray} 
\end{definition}
\begin{definition}[Statistical Divergence]
    Statistical divergences are functions $D:\mathcal{P}(\*\Theta) \times \mathcal{P}(\*\Theta) \to \mathbb{R}_{\geq 0}$ which satisfy that $D(q\|\pi) \geq 0$ and $D(q\|\pi) = 0$ if and only if $q(\*\theta) = \pi(\*\theta)$ almost everywhere.
\end{definition}
In the axiomatic development to follow, we will avoid introducing measure-theoretic notation. 
Thus, for simplicity we will assume that all densities are defined with respect to the Lebesgue measure on $\mathbb{R}^d$.
To the same end, we will also slightly abuse notation in two ways.
Firstly, we will write that $q^{\ast} \in \mathcal{P}(\*\Theta)$ for probability densities $q^{\ast}(\*\theta)$ on $\*\Theta$, even though probability densities are not in $\mathcal{P}(\*\Theta)$. 
However, $q^{\ast}(\*\theta)$ induces a measure $\mu_{q^{\ast}} \in \mathcal{P}(\*\Theta)$ as $\mu_{q^{\ast}}(A) = \int_{A}dq^{\ast}(\*\theta)$ for any measurable set $A \subset \*\Theta$. 
Thus, whenever we write $q^{\ast} \in \mathcal{P}(\*\Theta)$, what we mean is that $\mu_{q^{\ast}} \in \mathcal{P}(\*\Theta)$.
Similarly, we will sometimes write $q_1^{\ast} \neq q_2^{\ast}$ to mean that there exists a measurable set $A \in \*\Theta$ such that $\mu_{q_1^{\ast}}(A) \neq \mu_{q_2^{\ast}}(A)$.
%
%\jeremias{perhaps better to write $q_1^{\ast}\neq q^{\ast}_2$, leaving out the $\*\theta$ argument! Raises question if we should do this when writing $q(\*\theta) \in \mathcal{P}(\*\Theta)$, too...}
%
We are now finally ready to state the axiomatic foundations. To avoid confusion, note that Axioms will build on each other in the order in which they are stated. %E.g., Axiom \ref{Axiom:informationAdjustment} builds on  \ref{Axiom:representation}, albeit to a minimal degree.

%\jack{Should the definitions be capitalised?}\jeremias{yes, thanks!}

\begin{axiomCustom}{Representation}
   The posterior $q^{\ast} \in \mathcal{P}(\*\Theta)$ is constructed by solving an optimization problem over some space $\Pi \subseteq \mathcal{P}(\*\Theta)$.
    The optimization seeks to minimize exactly two criteria that do not interact:
    \begin{itemize}
        % \myitem{(i)} 
        % solving an optimization problem over some space $\Pi \subseteq \mathcal{P}(\*\Theta)$.
        % \label{axiom:repr:optimization}
        %
        \myitem{(i)} 
        %Picking\theo{picking?} a belief distribution $q(\*\theta) \in \Pi$ which minimizes the magnitude of 
        The in-sample loss $\sum_{i=1}^n\ell(\*\theta, x_i)$ to be expected under $q^{\ast}(\*\theta)$.
        \label{axiom:repr:loss}
        \myitem{(ii)}
        %Picking\theo{picking?} a belief distribution $q(\*\theta) \in \Pi$ whose 
        The deviation from the prior $\pi(\*\theta)$ as measured by the magnitude of some statistical divergence $D$.
        %
        %This divergence quantifies the uncertainty about the %population-optimal values $\*\theta^{\ast}$ of
        %best value for
        %$\*\theta$. \theo{I am wondering if this last statement is wrong this way - if we think of extremes where your likelihood offers no information then the divergence will be 0 since your posterior will be your prior. However your prior uncertainty remains and is non-zero, no? So that divergence does not exactly quantify the uncertainty for $\theta$ but rather the reduction of uncertainty due to the evidence, no?}
        %
        %This uncertainty is assessed via some statistical divergence $D$ between $q^{\ast}(\*\theta)$ relative to the prior $\pi$.
        \label{axiom:repr:divergence}
    \end{itemize}
    \label{Axiom:representation}
\end{axiomCustom}
\begin{remark}
    By inspecting eq. \eqref{eq:Zellner}, it becomes clear that the above axiom is a generalization of traditional Bayesian inference:
    Firstly, eq. \eqref{eq:Zellner} reveals that standard Bayesian inference solves an optimization problem over $\Pi = \mathcal{P}(\*\Theta)$. 
    Secondly, eq. \eqref{eq:Zellner} also shows that \ref{axiom:repr:loss} of the above Axiom also holds for standard Bayesian posteriors. In traditional Bayesian inference, the loss to be minimized is a negative log likelihood, while more recent iterations have allowed for broader classes of losses \citep[e.g.][]{Bissiri, Jewson}. 
    Thirdly, eq. \eqref{eq:Zellner} demonstrates that \ref{axiom:repr:divergence} is also satisfied for standard Bayesian inference: The Kullback-Leibler Divergence (\KLD) penalizes large deviations of $q^{\ast}_{\text{B}}(\*\theta)$ from the prior $\pi(\*\theta)$.
\end{remark}
%
% \begin{remark}
%     It is immediately clear that any posterior belief produced in accordance with Axiom \ref{Axiom:representation} will have three distinct ingredients: A space of admissible posteriors $\Pi \subseteq \mathcal{P}(\*\Theta)$, a loss function $\ell:\*\Theta \times \mathcal{X} \to \mathbb{R}$ and a divergence relative to the prior $D(\codt\|$
% \end{remark}
%
Reiterating the essence of Observation \ref{observation:isomorphism}, the previous  axiom formalizes our understanding of Bayesian inference as an optimization problem over a potentially infinite-dimensional function space.
In fact, it already tells us that for a fixed prior, posterior beliefs derived under our new axiomatic approach have three distinct ingredients: The loss $\ell$, the divergence $D(\cdot\|\pi)$ and the space $\Pi$.
Making this insight more precise immediately yields the following representation Theorem.
\begin{theorem}[Form 1]
    Under Axiom \ref{Axiom:representation}, posterior belief distributions can be written as  
    \begin{IEEEeqnarray}{rCl}
        q^{\ast}(\*\theta) = \argmin_{q\in\Pi}
    \left\{
            f\left(\mathbb{E}_{q(\*\theta)}\left[
                \sum_{i=1}^n \ell(\*\theta, x_i)
            \right], D(q||\pi)\right)
    \right\},
    \nonumber
    \end{IEEEeqnarray} 
    where $f:\mathbb{R}^2\to\mathbb{R}$ is some function which is non-decreasing in both its arguments.
    %\theo{no conditions or spaces required for this function?} . %which is jointly convex in its arguments.
    \label{Thm:Form}
\end{theorem}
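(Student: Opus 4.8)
The plan is to treat Theorem~\ref{Thm:Form} as an unpacking of Axiom~\ref{Axiom:representation} rather than as a computation. The axiom asserts two things: (a) the posterior $q^{\ast}$ is the $\argmin$ over $\Pi$ of \emph{some} scalar objective $G:\Pi\to\mathbb{R}$, and (b) this objective is assembled from \emph{exactly} the two non-interacting criteria $L(q):=\mathbb{E}_{q(\*\theta)}\left[\sum_{i=1}^n\ell(\*\theta,x_i)\right]$ and $D(q\|\pi)$. Accordingly the argument has two steps: first collapse $G$ into a bivariate function $f$ of the pair $\bigl(L(q),D(q\|\pi)\bigr)$, and second show that ``seeking to minimize'' each criterion forces $f$ to be non-decreasing in each coordinate.

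\textbf{Step 1 (collapsing the objective).} I read ``exactly two criteria that do not interact'' as: $G(q)$ depends on $q$ only through the ordered pair $\bigl(L(q),D(q\|\pi)\bigr)$, with no third criterion and with each criterion entering only through its scalar value. Concretely, if $q_1,q_2\in\Pi$ have $L(q_1)=L(q_2)$ and $D(q_1\|\pi)=D(q_2\|\pi)$, then $G(q_1)=G(q_2)$. Writing $S:=\bigl\{\bigl(L(q),D(q\|\pi)\bigr):q\in\Pi\bigr\}\subseteq\mathbb{R}^2$ for the set of realisable criterion-pairs, the map $f:S\to\mathbb{R}$ defined by $f\bigl(L(q),D(q\|\pi)\bigr):=G(q)$ is therefore well-defined, and $q^{\ast}=\argmin_{q\in\Pi}f\bigl(L(q),D(q\|\pi)\bigr)$. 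If one insists on $f:\mathbb{R}^2\to\mathbb{R}$, extend it off $S$ in any monotone-preserving fashion; this is immaterial because the $\argmin$ only ever evaluates $f$ on $S$.

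\textbf{Step 2 (monotonicity).} Suppose for contradiction that $f$ is not non-decreasing in its first argument: there exist $(a_1,b),(a_2,b)\in S$ with $a_1<a_2$ but $f(a_1,b)>f(a_2,b)$. Choosing $q_1,q_2\in\Pi$ that realise these pairs, the optimisation would strictly prefer $q_2$ over $q_1$, i.e. it would select a belief with strictly larger in-sample expected loss even though the divergence terms coincide---contradicting the clause of Axiom~\ref{Axiom:representation} that the optimisation \emph{seeks to minimize} the in-sample loss $\sum_{i=1}^n\ell(\*\theta,x_i)$. The symmetric argument, exchanging the two coordinates and invoking criterion~\ref{axiom:repr:divergence}, shows $f$ is non-decreasing in its second argument as well, which is the claimed form.

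\textbf{Main obstacle.} There is no genuine analysis here; the only real work is to pin down the informal clauses of Axiom~\ref{Axiom:representation}. ``Criteria that do not interact'' has to be taken to mean precisely that the objective factors through the pair $\bigl(L(q),D(q\|\pi)\bigr)$ and nothing else---in particular not directly through $q$, and not through any further summary statistic of $q$---so that Step~1 goes through; and ``seeks to minimize'' the loss has to be read as: all else equal, a smaller value of $L(q)$ is never disfavoured, which is exactly weak monotonicity of $f$ in that coordinate. I would also flag the minor wrinkle that $f$ is only determined on the realised set $S$ of criterion-pairs, so the representation is not literally unique as a function on $\mathbb{R}^2$; but since $\Pi$ never probes $f$ outside $S$, any such ambiguity is harmless, and (notably) the later axioms in the hierarchy are what will further restrict $f$ to the additive form appearing in eq.~\eqref{eq:Zellner} and eq.~\eqref{eq:intro:RoT}.
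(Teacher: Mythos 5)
Your proposal is correct and follows essentially the same route as the paper's own proof: read Axiom I as saying the posterior is an $\argmin$ over $\Pi$ of an objective that depends on $q$ only through the pair $\bigl(\mathbb{E}_{q}[\sum_i\ell(\*\theta,x_i)],\,D(q\|\pi)\bigr)$, which yields the representation via some $f:\mathbb{R}^2\to\mathbb{R}$, with monotonicity forced by the fact that both criteria are to be minimized and do not interact. Your version merely spells out more carefully what the paper states informally (well-definedness of $f$ on the realised criterion-pairs and the contradiction argument for monotonicity), which is a welcome tightening rather than a different argument.
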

\begin{proof}
    This follows directly from Axiom \ref{Axiom:representation}: 
    Firstly, any posterior belief distribution $q^{\ast}(\*\theta)$ is the solution to an optimization problem over $\Pi$. 
    Thus, for an appropriately structured objective $\text{Obj}$, one can write 
    \begin{IEEEeqnarray}{rCl}
        q^{\ast}(\*\theta) = \argmin_{q\in\Pi}
    \left\{
            \text{Obj}(q)
    \right\}.
    \nonumber
    \end{IEEEeqnarray}
    Hence, the question becomes what the objective looks like. The answer is provided by parts \ref{axiom:repr:loss} and \ref{axiom:repr:divergence} of Axiom \ref{Axiom:representation}:
    By \ref{axiom:repr:loss}, the optimization's objective depends on $\mathbb{E}_{q(\*\theta)}[\ell(\*\theta, x_i)]$. Further, by \ref{axiom:repr:divergence} it also depends on the divergence $D$ between prior and $q$, i.e. on $D(q||\pi)$. 
    Clearly then, for some function $f:\mathbb{R}^2\to\mathbb{R}$, 
    \begin{IEEEeqnarray}{rCl}
        \text{Obj}(q) = f\left(\mathbb{E}_{q(\*\theta)}\left[
                \sum_{i=1}^n \ell(\*\theta, x_i)
            \right], D(q||\pi)\right).
        \nonumber
    \end{IEEEeqnarray} 
    Noting that both arguments are not allowed to interact and are to be minimized, it is clear that $f$ is non-decreasing in both arguments, which completes the proof.
    %completes the proof.
\end{proof}
This result is a first and helpful step, but in itself does not suffice to yield objectives that are useful in practice.
Specifically, we need to get a handle on the function $f$. It is clear that under Axiom \ref{Axiom:representation} alone, very little can be said about $f$.
The next two axioms seek to address this very issue.
The Axiom \ref{Axiom:translationInvariance} is imposed for a simple purpose: We want posteriors to be invariant to uninformative components of the loss $\ell$ as well as the divergence term $D$.
%\theo{uninformative quantities? must be a more formal way to say this? }. 
In other words, adding a constant $C$ to the loss or the divergence from the prior should not change our inferences about $\*\theta$.
%
%This yields the next Axiom: 
%
\begin{axiomCustom}{Information Equivalence}%\theo{left bracket issue}
     Take any two constants $C, M \in \mathbb{R}_{>0}$ and 
     let the posteriors $q^{\ast}_1, q^{\ast}_{2} \in \mathcal{P}(\*\Theta)$ be computed based on the same optimization problem, but with two different loss functions $\ell^{(1)}$ and $\ell^{(2)}$ as well as two different divergences $D^{(1)}, D^{(2)}$ so that $D^{(1)} =  D^{(2)} + M$ and so that $\ell^{(1)} = \ell^{(2)} + C$. 
     Then, %the prior deviation measured by $D^{(1)}$ and $D^{(2)}$ as well as the information measured by $\ell^{(1)}$ and $\ell^{(2)}$ are the same and 
     $q^{\ast}_1(\*\theta) = q^{\ast}_{2}(\*\theta)$ (almost everywhere).     \label{Axiom:translationInvariance}
\end{axiomCustom}
\begin{remark}
    At this point, one may pause and wonder if one would not like the above axiom to be stronger. Specifically, would one want different posteriors if $\ell^{(1)} = w\cdot \ell^{(2)}$ for some $w \neq 1$?
    Since we want our methods to recover existing Bayesian inference techniques as special cases, this question is readily answered.
    In particular, notice that many Bayesian inference techniques \textbf{do} attach information to pre-multiplying losses with a constant.
    %
    %\jeremias{Also mention composite likelihoods here}
    %
    For example, in the Power Bayesian framework \cite[e.g.][]{SafeLearning,SafeBayesian,holmes2017assigning,InconsistencyBayesInference,DunsonCoarsening} one wishes to down-weight the information in the likelihood terms by considering the pseudo-likelihood terms $p(x_i|\*\theta)^w$ for some $w \in (0,1) $ instead of $p(x_i|\*\theta)$. 
    %
    %\jeremias{A more exte
    %\theo{composite likelihood literature too as an alternative domain/name - can also cite our recent work on CL for deep GP models in neurips with Ollie?}
    %
    This procedure is attractive since it produces alternative posterior beliefs that contract to a point mass at a slower rate than the standard posteriors. 
    Re-examining eq. \eqref{eq:Zellner}, it becomes clear that relative to standard Bayes rule, power likelihoods are in fact a simple re-weighting scheme. Specifically,  one replaces $\ell(\*\theta, x_i) = -\log p(x_i|\*\theta)$ by the weighted version $\ell(\*\theta, x_i) = -w\log p(x_i|\*\theta)$.
    \label{remark:power_bayes_axioms}
\end{remark}
We are now ready to state the last axiom. Observe that Axiom \ref{Axiom:informationAdjustment} determines conditions under which the posterior must not be affected by changing inputs. Complementing this, we still require sufficient conditions under which the posteriors are guaranteed to differ.
\begin{axiomCustom}{Generalized Likelihood Principle} %\theo{again left bracket issue}
     Suppose the posteriors $q^{\ast}_n, q^{\ast}_{n+m} \in \Pi$ are computed based on an optimization problem satisfying Axiom \ref{Axiom:representation}. Assume that the same optimization problem is used for both posteriors, except for a difference in the samples $x_{1:n}$ and $x_{1:n+m}$  and potentially different loss functions $\ell^{(1)}$ and $\ell^{(2)}$ that are used, respectively.
     \begin{itemize}
         \myitem{(i)}
         Provided that there is an information difference between $x_{1:n}$ and $x_{1:n+m}$ for $m>0$, the posteriors are different.
         In other words, $\sum_{i=1}^n\ell^{(1)}\left(\*\theta,x_i\right)\neq \sum_{i=1}^{n+m}\ell^{(2)}\left(\*\theta,x_i\right)$ implies that 
         %there are measurable sets $A\subset\*\Theta$ such that $\int_{A}dq^{\ast}_n(\*\theta) \neq \int_{A}dq^{\ast}_{n+m}(\*\theta)$
         $q^{\ast}_n \neq q^{\ast}_{n+m}$, even if $\ell^{(1)} = \ell^{(2)}$.
         \myitem{(ii)}   
         Provided that there is a difference in the measure of information between $\ell^{(1)}$ and $\ell^{(2)}$, the posteriors are different.
         In other words, $\ell^{(1)} \neq \ell^{(2)}$ implies that 
         %there are measurable sets $A\subset\*\Theta$ such that $\int_{A}dq^{\ast}_n(\*\theta) \neq \int_{A}dq^{\ast}_{n+m}(\*\theta)$
         $q^{\ast}_n \neq q^{\ast}_{n+m}$, even if $m = 0$ so that $x_{1:n} = x_{1:n+m}$.
     \end{itemize}
    \label{Axiom:informationAdjustment}
\end{axiomCustom}
\begin{remark}
    This axiom has a particularly interesting interpretation as a generalized version of the well-known likelihood principle associated with standard Bayesian inference. 
    Roughly speaking, the likelihood principle says that all information in $x_{1:n}$ relevant to conducting inference on the model parameters is contained in the likelihood functions evaluated at $x_{1:n}$.
    Similarly, Axiom \ref{Axiom:informationAdjustment} says that all information the sample $x_{1:n}$ contains about $\*\theta$ is contained in the loss function evaluated on the relevant data sample.
    %
    %Crucially, this also holds when $\ell^{(1)} = \ell^{(2)}$.
\end{remark}
\begin{remark}
    Notice that part (ii) of Axiom \ref{Axiom:informationAdjustment} requires that $\ell^{(1)}$ and $\ell^{(2)}$ measure information differently, which precludes that $\ell^{(1)} = \ell^{(2)} + C$ for some $C \in \mathbb{R}_{>0}$ by Axiom \ref{Axiom:translationInvariance}.
\end{remark}
Recalling the discussion of Remark \ref{remark:power_bayes_axioms}, we next state a result showing that our axiomatic approach respects the logic of Power Bayes and related procedures. Specifically, re-weighting the likelihood (or the losses more generally) will yield different posteriors.
\begin{corollary}
    If Axiom \ref{Axiom:informationAdjustment} holds, and if the
    posteriors $q^{\ast}_1 \in \Pi$ and $q^{\ast}_2 \in \Pi$ are based on the same observations and losses $\ell$ and $w \cdot \ell$ for $w \in \mathbb{R}_{>0}\setminus \{1 \}$, respectively, then $q^{\ast}_1 \neq q^{\ast}_2$.
    \label{Thm:multiplication_axiom}
\end{corollary}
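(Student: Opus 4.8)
The plan is to apply part (ii) of Axiom~\ref{Axiom:informationAdjustment} directly, after checking that its hypotheses are met. Concretely, set $\ell^{(1)} = \ell$ and $\ell^{(2)} = w\cdot\ell$, take $m=0$ so that the two posteriors $q^{\ast}_1$ and $q^{\ast}_2$ are built from the identical data sample $x_{1:n}$, and keep the optimization problem (hence the space $\Pi$ and the divergence $D$) the same for both. Axiom~\ref{Axiom:informationAdjustment}(ii) then says precisely that $\ell^{(1)}\neq\ell^{(2)}$ implies $q^{\ast}_1\neq q^{\ast}_2$, which is the claim. So the entire content of the proof reduces to verifying the single premise $\ell\neq w\cdot\ell$, i.e. that $\ell$ and $w\ell$ genuinely ``measure information differently'' in the sense the axiom requires.

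The one subtlety — and the step I expect to be the main (indeed only) obstacle — is ruling out the degenerate possibility that $\ell \equiv w\cdot\ell$ as functions. This fails exactly when $\ell(\*\theta, x_i) = 0$ for all $\*\theta$ and all $x_i$, since $w\neq 1$; equivalently, $(1-w)\ell \equiv 0$ forces $\ell\equiv 0$. I would dispatch this by observing that the zero loss is excluded by the Definition of a Loss Function: a loss must admit empirical risk minimizers $\widehat{\*\theta}_n$, and while the zero function technically has every $\*\theta$ as a minimizer, such a loss carries no information about $\*\theta$ and is not a meaningful object in the framework — more to the point, for $\ell\equiv 0$ both posteriors coincide trivially with $\argmin_{q\in\Pi}D(q\|\pi)$, so the statement is vacuous in that case and we simply assume $\ell$ is not almost-everywhere zero. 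Under that mild non-degeneracy, $w\cdot\ell$ and $\ell$ differ on a set of positive measure, and moreover they cannot differ merely by an additive constant: if $w\ell = \ell + C$ for some $C\in\mathbb{R}_{>0}$ then $(w-1)\ell \equiv C$, forcing $\ell$ to be the constant function $C/(w-1)$, which again is excluded (a constant loss has every point as an ERM and conveys no information).

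Having established $\ell^{(1)}\neq\ell^{(2)}$ and that the difference is not a pure translation — the latter point matters because the Remark following Axiom~\ref{Axiom:informationAdjustment} notes that part~(ii) is to be read modulo the equivalence of Axiom~\ref{Axiom:translationInvariance}, so ``$\ell^{(1)}\neq\ell^{(2)}$'' there implicitly means ``differ by more than a constant'' — I would then invoke Axiom~\ref{Axiom:informationAdjustment}(ii) with $m=0$ to conclude $q^{\ast}_1 \neq q^{\ast}_2$. I do not expect this proof to require any calculation beyond the elementary algebra above; its role in the paper is conceptual, confirming (as anticipated in Remark~\ref{remark:power_bayes_axioms}) that the axioms are compatible with Power-Bayes-type reweighting rather than silently collapsing them onto standard Bayes.
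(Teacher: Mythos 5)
Your proposal is correct and takes essentially the same route as the paper, whose entire proof is a one-line invocation of part (ii) of Axiom \ref{Axiom:informationAdjustment} with $m=0$, $\ell^{(1)}=\ell$ and $\ell^{(2)}=w\cdot\ell$. Your additional verification of the hypothesis---ruling out the degenerate cases $\ell\equiv 0$ and constant $\ell$, where $w\cdot\ell$ would coincide with $\ell$ or differ only by an additive constant---is a sensible refinement that the paper leaves implicit.
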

\begin{proof}
    This follows from part (ii) of Axiom \ref{Axiom:informationAdjustment}.
\end{proof}
Finally, we investigate how the axiomatic developments set out above simplify the structure of objectives producing posterior belief distributions.
To achieve this, it is clear that we need to re-investigate $f:\mathbb{R}^2 \to \mathbb{R}$ as in Theorem \ref{Thm:Form}. 
Since we want $f$ to recover the standard Bayesian posterior of eq. \eqref{eq:Zellner} as well as the standard \VI posterior in eq. \eqref{sec:VI_as_ELBO}, it is natural to restrict attention to elementary operations. 
Doing so produces the following representation result that is both stricter and more useful than Theorem \ref{Thm:Form}.
\begin{theorem}
    Suppose the posterior belief $q^{\ast} \in \mathcal{P}(\*\Theta)$ satisfies Axiom \ref{Axiom:representation}.
    Recall from Theorem \ref{Thm:Form} that this is equivalent to saying that for some $\Pi \subseteq \mathcal{P}(\*\Theta)$, some loss function $\ell:\*\Theta \times \mathcal{X} \to \mathbb{R}$ and some divergence $D(\cdot\|\pi):\mathcal{P}(\*\Theta) \to \mathbb{R}_{\geq 0}$, the posterior can be written as
    \begin{IEEEeqnarray}{rCl}
        q^{\ast}(\*\theta) = \argmin_{q\in\Pi}
    \left\{
            f\left(\mathbb{E}_{q(\*\theta)}\left[
                \sum_{i=1}^n \ell(\*\theta, x_i)
            \right], D(q\|\pi)\right)
    \right\},
    \nonumber
    \end{IEEEeqnarray} 
    where $f$ is some function $f:\mathbb{R}^2\to\mathbb{R}$.
    If $f(x,y) = x \circ y$ is an elementary  operation on $\mathbb{R}$ and $q^{\ast} \in \mathcal{P}(\*\Theta)$ satisfies Axiom \ref{Axiom:translationInvariance}, this objective is uniquely identified as
    \begin{IEEEeqnarray}{rCl}
        q^{\ast}(\*\theta) = \argmin_{q\in\Pi}
        \left\{
            \mathbb{E}_{q(\*\theta)}\left[
                \sum_{i=1}^n \ell(\*\theta, x_i)
            \right] + D(q\|\pi)
        \right\}.
        \label{eq:RoT_thm}
    \end{IEEEeqnarray}
    \label{Thm:additivityD}
\end{theorem}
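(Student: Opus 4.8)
The plan is to enumerate the finitely many elementary binary operations $\circ$ on $\mathbb{R}$ — namely addition, subtraction, multiplication, and division — and rule out all but addition by invoking Axiom \ref{Axiom:translationInvariance} (Information Equivalence), which forces the posterior to be unchanged when either argument is shifted by a positive constant. Throughout, write $L(q) = \mathbb{E}_{q(\*\theta)}\left[\sum_{i=1}^n \ell(\*\theta, x_i)\right]$ and $R(q) = D(q\|\pi)$ for brevity, so that the objective from Theorem \ref{Thm:Form} is $L(q)\circ R(q)$, and recall that $R(q)\ge 0$ always and $L$ can take any real value over $q\in\Pi$ (by the empirical risk minimizer definition and the fact that losses map into $\mathbb{R}$).

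First I would dispatch subtraction: if $f(x,y)=x-y$, then replacing $D$ by $D+M$ changes the objective from $L(q)-R(q)$ to $L(q)-R(q)-M$, a constant shift that leaves the $\argmin$ untouched — so far consistent — but replacing $\ell$ by $\ell+C$ changes $L(q)$ to $L(q)+nC$, again only a constant shift, still consistent. So subtraction is \emph{not} immediately excluded by translation invariance alone on constants; the cleaner route is to note that $f$ must be non-decreasing in both arguments by Theorem \ref{Thm:Form}, and $x-y$ is decreasing in $y$. That kills subtraction. Division $x/y$ is likewise not monotone in the right way (and is ill-defined at $y=0$, which occurs when $q=\pi\in\Pi$), so it is excluded by the same monotonicity clause together with well-definedness. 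Multiplication $f(x,y)=xy$ survives the monotonicity test only on the region where $x\ge 0$, but more decisively it fails Axiom \ref{Axiom:translationInvariance}: adding $C$ to the loss sends $L(q)R(q)$ to $(L(q)+nC)R(q) = L(q)R(q) + nC\,R(q)$, and the extra term $nC\,R(q)$ is \emph{not} constant in $q$ — it depends on $q$ through $R(q)=D(q\|\pi)$ — so the minimizer generically changes, contradicting the axiom. (One should exhibit a concrete instance where it changes to make this airtight: e.g. two candidates with equal product but unequal $R$-values, which exist whenever $\Pi$ is rich enough, certainly when $\Pi=\mathcal P(\*\Theta)$.) That leaves only $f(x,y)=x+y$, which manifestly satisfies both monotonicity and translation invariance, yielding eq. \eqref{eq:RoT_thm}.

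The main obstacle I anticipate is pinning down precisely what "elementary operation on $\mathbb{R}$" is taken to mean, since the uniqueness claim is only as strong as that definition: the argument above implicitly treats the admissible $\circ$ as the four arithmetic operations (possibly with their arguments scaled or swapped). If scalings are allowed — i.e. $f(x,y)=ax+by$ — then translation invariance in the loss forces $b\cdot(\text{shift in }R)$-type terms again, pinning $a>0$ and, after the multiplicative-scaling ambiguity is absorbed into $\ell$ and $D$ themselves (recall Corollary \ref{Thm:multiplication_axiom} shows rescaling $\ell$ genuinely matters, so the scale is already fixed by the choice of loss), one recovers $x+y$ up to that absorbed constant. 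A secondary subtlety: one must ensure the excluded operations are ruled out \emph{for the relevant domain} of $(L,R)$ values, not merely generically — e.g. division by a quantity that can vanish must be flagged as inadmissible precisely because $\pi\in\Pi$ is a legitimate feasible point. Handling these domain/definition issues carefully is where the real content lies; the rest is a short case check.
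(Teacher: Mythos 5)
Your proposal is correct and takes essentially the same route as the paper: a case check over the four elementary operations in which multiplication (and, in the paper, division) is excluded because adding a constant $C$ to the loss produces the $q$-dependent extra term $C\cdot D(q\|\pi)$, violating Axiom \ref{Axiom:translationInvariance}, while subtraction is excluded because the divergence term must be minimized rather than effectively maximized, leaving addition. Your only deviations are cosmetic---you rule out subtraction and division via the non-decreasingness clause of Theorem \ref{Thm:Form}, where the paper cites part (ii) of Axiom \ref{Axiom:representation} for subtraction and reuses the translation-invariance argument for division---and your caveats about the meaning of ``elementary operation'' and about exhibiting a concrete minimizer change reflect informalities that are equally present in the paper's own proof.
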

\begin{proof}
    %First rewrite $\mathbb{E}_{q(\*\theta)}\left[
  %\ell_n(\*\theta, \*x)\right]
  %\circ D(q||\pi)$ 
    %
    The elementary operations are addition, subtraction, multiplication and division.
    Consider the losses $\ell^{(1)}$ and $\ell^{(2)} = \ell^{(1)} + C$ for $C \in \mathbb{R}_{>0}$ a constant.
    It is straightforward to see that Axiom \ref{Axiom:translationInvariance} is violated if $\circ$ is multiplication:
    \begin{IEEEeqnarray}{rCl}
    &&\argmin_{q \in \Pi} \left\{
        \mathbb{E}_{q(\*\theta)}\left[
                \sum_{i=1}^n \ell^{(2)}(\*\theta, x_i)
            \right] \cdot D(q\|\pi)
    \right\} \nonumber \\
%  &= & \argmin_q \left\{
%         \mathbb{E}_{q(\*\theta)}\left[
%                 \sum_{i=1}^n \ell^{(1)}(\*\theta, x_i) + C
%             \right] \cdot D(q\|\pi)
%     \right\} \nonumber \\
 & = & \argmin_{q \in \Pi}\left\{ 
        \mathbb{E}_{q(\*\theta)}\left[
                \sum_{i=1}^n \ell^{(1)}(\*\theta, x_i)
            \right] \cdot D(q\|\pi) + C\cdot D(q\|\pi)
      \right\} \nonumber \\
 & \neq & 
     \argmin_{q \in \Pi}\left\{ 
        \mathbb{E}_{q(\*\theta)}\left[
                \sum_{i=1}^n \ell^{(1)}(\*\theta, x_i)
            \right] \cdot D(q\|\pi)
      \right\} \nonumber 
    \end{IEEEeqnarray}
    and similarly if $\circ$ is division.
    However, it  is straightforward to see that the optimization problem is invariant to adding constants to the loss or the divergence if $\circ$ is addition or subtraction.
    Finally,  subtracting the prior regularizer is a direct and obvious violation of  part \ref{axiom:repr:divergence} in Axiom \ref{Axiom:representation}, it follows that addition is the only elementary operation on $\mathbb{R}$ satisfying both Axioms and the result follows.
\end{proof}
%
% \begin{remark}
%     \jeremias{Explain why it is not restrictive to consider only elementary ops. (We have a hypothesis that we can only get $f(x,y) = g(x + y)$ for monotonic $g$, e.g. $g = \exp$)}
% \end{remark}
%
The last result is of crucial importance for the further development of the paper: Specifically, eq. \eqref{eq:RoT_thm} provides a generic and flexible recipe for the design of novel posterior distributions. 
In fact, it is this equation that we discuss next.

\subsection{The Rule of Three}
\label{sec:deriving_RoT}

%\jeremias{Should mention that these results still hold even if I have multiple posteriors!}
%
Following from the axiomatic developments of the last section that culminated in Theorem \ref{Thm:additivityD}, we next discuss the interpretations and theoretical properties of posterior belief distributions generated from objectives as in eq. \eqref{eq:RoT_thm}.
To simplify the representation throughout the remainder, we first define notation for posteriors of this form.
\begin{definition}[Rule of Three (\RoT)]
    Take observations $x_{1:n}$, a prior $\pi(\*\theta)$, a space $\Pi \subseteq \mathcal{P}(\*\Theta)$, a loss function $\ell:\*\Theta \times \mathcal{X} \to \mathbb{R}$ and a divergence $D(\cdot\|\pi): \Pi \to \mathbb{R}_{\geq 0}$.
    With this in hand, we say that posteriors have been constructed via the \textbf{Rule of Three (\RoT)} if they can be written as
    \begin{IEEEeqnarray}{rCl}
        q^{\ast}(\*\theta) = \argmin_{q\in\Pi}
        \left\{
            \mathbb{E}_{q(\*\theta)}\left[
                \sum_{i=1}^n \ell(\*\theta, x_i)
            \right] + D(q\|\pi)
        \right\} = P(\ell, D, \Pi).
        \nonumber
    \end{IEEEeqnarray}
    Here, $P(\ell, D, \Pi)$ is a short-hand notation for the \RoT suppressing dependence on $n$ and $\pi$. 
    \label{def:RoT}
\end{definition}
\begin{remark}
    Before moving on, note that in the \RoT, $D$ determines the shape of the uncertainty, i.e. how exactly uncertainty is quantified.
    To see that this is true, consider eq. \eqref{eq:Zellner} but leave out the \KLD from the optimization problem. This yields the (non-\RoT) problem
    \begin{IEEEeqnarray}{rCl}
        \widehat{q}(\*\theta)
        & = &
        \argmin_{q \in \mathcal{P}(\*\Theta)}\left\{ 
            \mathbb{E}_{q}\left[ \sum_{i=1}^n \ell(\*\theta, x_i) \right]
        \right\}.
        \label{eq:D_quantifies_uncertainty}
    \end{IEEEeqnarray}
    Denoting $\widehat{\*\theta}_n = \arg\min_{\*\theta \in \*\Theta}\left\{ \sum_{i=1}^n \ell(\*\theta, x_i)\right\}$ as the empirical risk minimizer (maximum likelihood estimate if the loss is a negative log likelihood) and $\delta_{y}(x)$ as the Dirac measure at $y$, it is immediately clear that
    $\widehat{q}(\*\theta) = \delta_{\widehat{\*\theta}_n}(\*\theta)$, which holds as $\delta_{\widehat{\*\theta}_n} \in \mathcal{P}(\*\Theta)$. 
    %
    %In other words, without a term forcing $\pi$ and the candidate posterior, the Bayesian posterior would \textit{not} quantify uncertainty 
    \label{remark:axiom:representation}
\end{remark}
The \RoT is a corner stone of our contribution: Based on the axiomatic development, we argue that posteriors should take the form $P(\ell, D, \Pi)$. 
The most practically feasible versions of these problems correspond to the case where $\Pi$ is a $\*\kappa$-parameterized family of distributions $\mathcal{Q} = \{q(\*\theta|\*\kappa): \*\kappa \in \*K\}$. Beliefs of this form are Generalized Variational Inference (\GVI) posteriors. They are a special case of the \RoT that we explore in Section \ref{sec:GVI}.

On top of the axiomatic foundation from which the \RoT directly originates,  
the next two subsections provide additional reasons for designing posterior beliefs according to the form $P(\ell, D, \Pi)$.
First, we give an interpretation of the three components of the \RoT. In particular, we show that they directly address the three issues \ref{violation:prior}, \ref{violation:likelihood} and \ref{violation:computation} via an intuitive modularity result.
Second, we recover some existing Bayesian methods as special cases of the \RoT. We discuss the meaning of a Bayesian method (not) being representable via $P(\ell, D, \Pi)$ and use this as a springboard to motivate \GVI.

\subsection{Modularity of the Rule of Three}
\label{sec:RoT-addresses-traditional-issues}

Taking another look at the constituent parts 
of $P(\ell, D, \Pi)$, it becomes clear that each component of the optimization problem serves a specific and separate purpose. In particular, posteriors generated by the \RoT have three ingredients.
\begin{itemize}
    \myitem{\textbf{(\LikelihoodS})}
    A \textbf{loss} $\ell:\*\Theta\times \mathcal{X}\to\mathbb{R}$. The loss \textit{defines} the parameter of interest $\*\theta$ by linking it to the observations $x_{1:n}$. 
    Throughout, we make a number of assumptions on the loss. None of these assumptions are required, but they significantly simplify the presentation: 
    Firstly, we assume that the losses are additive and identical over all observations\footnote{
        The losses are in fact not required to be identical and we can easily replace $\ell(\*\theta, x_i)$ by $\ell_i(\*\theta, x_i)$. For example, one can set $\ell_i(\*\theta, x_i) =\ell(\*\theta, x_i|x_{1:i-1})$. Here, the $x_i$-th observation is conditioned on the first $i-1$ observations as is common in time series models. More generally, any conditional dependence structure is easily incorporated into the \RoT at the expense of complicating notation, see also \citet{GVIConsistency}.
    }. 
    Secondly, we assume that the loss depends on a parameter $\*\theta$ rather than a latent variable\footnote{
        Except in the experiments on Deep Gaussian Processes in Section \ref{sec:experiments}. Here, the losses are in fact directly defined relative to latent variables.
    }.
    Thirdly, we assume that the losses are deterministic and do not depend on unknown (local or global) latent variables\footnote{
        While latent variable models are not the focus of the current paper, the \RoT and \GVI are easily extended to the latent variable case, see \citet{GVIConsistency}.
    }.
    \myitem{\textbf{(\PriorS})}
    A divergence ${D}:\mathcal{P}(\*\Theta)\times \mathcal{P}(\*\Theta) \to \mathbb{R}_+$ that imposes a cost for the posterior to deviate too much from the prior $\pi(\*\theta)$.
    Recalling Remark \ref{remark:axiom:representation} and specifically eq. \eqref{eq:D_quantifies_uncertainty}, it is clear that
     $D$ determines how uncertainty about $\*\theta$ is quantified with the posterior. Accordingly, we also call $D$ the \textbf{uncertainty quantifier}. 
    \myitem{\textbf{(\ComputationS})}
    A set of \textbf{feasible posteriors $\Pi \subseteq \mathcal{P}(\*\Theta)$} the objective specified by the \RoT is minimized over. 
    The word ``feasible'' here is used in the optimization sense: As the form of $P(\ell, D, \Pi)$ reveals, any $q(\*\theta) \in \Pi$ is a feasible solution (i.e. posterior). 
    %\jack{I think I preferred admissible to feasible} \jeremias{Standard optimization definitions make 'feasible' the preferable choice: \url{https://www.solver.com/interpreting-solutions}}
    %
\end{itemize} 
From this, it is clear that $P(\ell, D, \Pi)$ has a modular interpretation and decomposes into three parts with distinct functions.
Moreover, taking another glance at the problems \ref{violation:prior}, \ref{violation:likelihood} and \ref{violation:computation}, observe that each of the arguments of $P(\ell, D, \Pi)$ addresses one of the concerns raised in Section \ref{sec:Examining_Bayesian_paradigms}:
Firstly, as the loss $\ell$ determines the parameter, it can be used to tackle model misspecification and other violations of \ref{assumption:lklhood}. 
Secondly---and assuming one has already specified the best possible prior belief that is available and/or computationally feasible---the uncertainty quantifier $D$ can be deployed to change the way in which priors influence inference, directly tackling issues surrounding \ref{assumption:prior}. Accordingly, changing $D$ shifts the uncertainty quantification about $\*\theta$.
Thirdly, the space of potential posterior beliefs $\Pi$ can be chosen in such a way as to address the problems with \ref{assumption:computation}: The more computational power is available, the more complex $\Pi$ can become.
In fact, making this modularity conceptually more precise, we arrive at the following result:
\begin{theorem}[\RoT modularity]
    {
        % Take $q^{\ast}_1(\*\theta), q^{\ast}_2(\*\theta) \in \Pi$ as two posteriors computed over the same space through $P(\ell^{(1)}, D^{(2)},  \Pi)$ and $P(\ell^{(1)}, D^{(2)},  \Pi)$.
        %
        Hold $n, \pi(\*\theta)$ and  $\Pi$ fixed and
        take $q_1^{\ast}(\*\theta) \in \Pi$ as a posterior computed via $P(\ell, D,  \Pi)$.
        If one wishes to derive an alternative posterior $q_2^{\ast}(\*\theta) \in \Pi$ through the \RoT
        \begin{itemize}
            \item[(1)]
             which avoids or is robust to model misspecification, this amounts to changing $\ell$.
            \item[(2)]
            %If one wishes to derive an alternative posterior $q_3^{\ast}(\*\theta) \in \Pi$ through the \RoT 
            which is robust to prior misspecification without changing the parameter of interest, this amounts to changing $D$.
            \item[(3)]
            %If one wishes to derive an alternative posterior $q_4^{\ast}(\*\theta) \in \Pi$ through the \RoT 
            which affects uncertainty quantification without changing the parameter of interest, this amounts to changing $D$.
        \end{itemize}
    } \label{Thm:GVI_modularity}
\end{theorem}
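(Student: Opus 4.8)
The plan is to establish each of the three claims by combining the modular structure of $P(\ell, D, \Pi)$ revealed in Theorem \ref{Thm:additivityD} with the diagnostic roles of $\ell$ and $D$ that have already been pinned down: namely, $\ell$ \emph{defines} the parameter of interest via the empirical risk minimizer $\widehat{\*\theta}_n$ (Definition of a Loss Function plus the loss ingredient (\LikelihoodS)), and $D$ \emph{quantifies} the uncertainty in the posterior (Remark \ref{remark:axiom:representation} and eq. \eqref{eq:D_quantifies_uncertainty}). Since $n$, $\pi(\*\theta)$ and $\Pi$ are held fixed throughout, the only free arguments in $P(\ell, D, \Pi)$ are $\ell$ and $D$, so the claim reduces to showing which of the two must be altered to achieve each stated effect.

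For (1), I would argue as follows. The ``parameter of interest'' under the \RoT is operationally the minimizer of $\sum_{i=1}^n \ell(\*\theta, x_i)$ (and asymptotically its population analogue), and model misspecification is precisely the statement that this target, induced by the misspecified negative log likelihood $\ell(\*\theta,x_i) = -\log p(x_i|\*\theta)$, is not the one we want. By Theorem \ref{Thm:additivityD} the objective is $\mathbb{E}_{q}[\sum_i \ell(\*\theta,x_i)] + D(q\|\pi)$, and the only additive term carrying information about $x_{1:n}$ is the loss term — this is exactly the content of the Generalized Likelihood Principle (Axiom \ref{Axiom:informationAdjustment}(ii)), which guarantees that changing $\ell$ to a genuinely different information-measuring loss $\ell'$ changes the posterior, whereas changing $D$ alone (with $\Pi$, $\pi$, $n$ fixed) leaves the data-dependent part of the objective untouched. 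Hence achieving robustness to, or avoidance of, misspecification — which is a statement purely about how the data enter the objective — can be accomplished \emph{only} by modifying $\ell$, and any such modification does yield a different posterior by Corollary \ref{Thm:multiplication_axiom} / Axiom \ref{Axiom:informationAdjustment}(ii). This gives the ``amounts to'' equivalence.

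For (2) and (3) the arguments run in parallel. For (3): ``without changing the parameter of interest'' means the target minimizer $\widehat{\*\theta}_n$ must be preserved, which forces $\ell$ to be held fixed up to the information-equivalence freedom of Axiom \ref{Axiom:translationInvariance} (additive constants), i.e.\ effectively fixed; since $\Pi$ and $\pi$ are also fixed, the only remaining argument is $D$. Conversely, that varying $D$ genuinely changes uncertainty quantification is exactly the content of Remark \ref{remark:axiom:representation}: stripping $D$ collapses the posterior to $\delta_{\widehat{\*\theta}_n}$, so $D$ is what ``inflates'' this point mass into a spread-out belief, and different $D$ produce different spreads. For (2): prior misspecification is harmful because the fixed $\pi$ exerts influence through the $D(q\|\pi)$ term; ``robustness to prior misspecification without changing the parameter of interest'' again rules out touching $\ell$ (parameter fixed) and $\pi$ itself is held fixed by hypothesis, so the only lever is $D$ — choosing an uncertainty quantifier that down-weights or reshapes the prior's pull (the Figure \ref{Fig:prior_misspec_BMM} example with R\'enyi's $\alpha$-divergence being the illustrative case). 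I would close by noting these are genuine equivalences, not mere sufficiency: the ``this amounts to'' phrasing is justified because $n,\pi,\Pi$ are fixed and the constraints (parameter unchanged / prior influence only) eliminate every argument except the one named.

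The main obstacle is that the statement is somewhat informal — terms like ``robust to model misspecification'' and ``affects uncertainty quantification'' are not given crisp mathematical definitions in the excerpt — so the real work is not a calculation but fixing precise-enough readings of these phrases (via the empirical-risk-minimizer characterization of the parameter and the $\delta_{\widehat{\*\theta}_n}$-inflation characterization of $D$) under which the ``amounts to'' claims become the near-tautological consequences of Theorems \ref{Thm:additivityD} and the axioms that I have sketched. Once those readings are fixed, each part is a one- or two-line deduction; the burden is entirely in the setup, and I would be careful to flag explicitly that the result is a structural/modularity observation rather than a deep theorem, consistent with how it is motivated in the surrounding text.
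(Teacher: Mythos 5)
Your reading of part (1) is essentially the paper's: the paper first formalizes robustness (following Tukey) as a property of the population-optimal parameter $\*\theta^{\ast} = \argmin_{\*\theta}\mathbb{E}_{\*X}[\ell(\*\theta,\*X)]$, and then observes that $\*\theta^{\ast}$ depends on $\ell$ but not on $D$ or $\Pi$, so robustness to model misspecification must go through $\ell$. Your phrasing via Axiom III(ii) and the data-dependent term is less formal but captures the same structural point, and the observation that the informality of ``robust'' is the real work is fair --- the paper resolves it exactly by fixing that definition.

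There is, however, a genuine gap in your treatment of parts (2) and (3). You claim that preserving the parameter of interest forces $\ell$ to be held fixed ``up to the information-equivalence freedom of Axiom \ref{Axiom:translationInvariance} (additive constants), i.e.\ effectively fixed,'' and from there conclude that $D$ is the only remaining lever. But additive constants are not the only loss modifications that leave $\widehat{\*\theta}_n$ and $\*\theta^{\ast}$ unchanged: any positive rescaling $\ell \mapsto w\cdot\ell$ with $w>0$, $w\neq 1$ also preserves both minimizers, and by the paper's own Corollary \ref{Thm:multiplication_axiom} (and the Power-Bayes discussion in Remark \ref{remark:power_bayes_axioms}) such a rescaling produces a \emph{different} posterior with different uncertainty quantification. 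So under your reduction there remains a way to alter uncertainty quantification and prior sensitivity without touching $D$, which would contradict the ``amounts to changing $D$'' claim as you have argued it. The paper closes exactly this loophole: it notes that any admissible change of loss has the form $\ell' = C + w\cdot\ell$, discards $C$ by Axiom \ref{Axiom:translationInvariance}, and then uses the identity $P(w\cdot\ell, D, \Pi) = P\bigl(\ell, \tfrac{1}{w}D, \Pi\bigr)$ (divide the objective in eq.\ \eqref{eq:RoT_thm} by $w$) to show that the residual multiplicative freedom in $\ell$ is itself equivalent to a change in $D$. Without this absorption step your argument for (2) and (3) does not go through; with it, the rest of your sketch is fine.
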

\begin{remark}
    The proof can be found in the Appendix \ref{Appendix:modularity_proof}. While it is easy to prove, it requires carefully defining robustness to model misspecification as in \citet{Tukeey1960} and thus is somewhat laborious.
\end{remark}

\begin{table}[th!]
%\vskip -0.25cm
\begin{center}
%\vskip 0.1in
\begin{small}
%\begin{sc}
\begin{tabular}{ p{4.45cm}p{4.25cm}p{2.5cm}p{1.5cm}}
    \multicolumn{1}{l}{Method} & 
    \multicolumn{1}{l}{$\ell(\*\theta, x_i)$} & \multicolumn{1}{l}{$D$} & 
    \multicolumn{1}{l}{$\Pi$}  \\[0.1cm]
    %\cline{2-4}
    \hline\hline &&&\\[-0.25cm]
    %                & Unconstrained   &  Constrained \\    
    Standard Bayes & 
        $-\log p(x_i|\*\theta)$ & 
        \KLD & 
        $\mathcal{P}(\*\Theta)$ \\[0.1cm]
    Power Likelihood Bayes$^{1}$ & 
        $-\log p(x_i|\*\theta)$ &
        $\frac{1}{w}$\KLD, $w<1$ & 
        $\mathcal{P}(\*\Theta)$ \\[0.1cm]
    Composite Likelihood Bayes$^{2}$ & 
        $-w_i\log p(x_i|\*\theta)$ &
        \KLD & 
        $\mathcal{P}(\*\Theta)$ \\[0.1cm]
    Divergence-based Bayes$^{3}$ & 
        divergence-based $\ell$ & 
        \KLD & 
        $\mathcal{P}(\*\Theta)$ \\[0.1cm]
    PAC/Gibbs Bayes$^{4}$  & 
        any $\ell$ & 
        \KLD & 
        $\mathcal{P}(\*\Theta)$ \\[0.1cm]
    \VAE{}$^{5,\dagger}$ & 
        $-\log p_{\*\zeta}(x_i|\*\theta) $ &
        $\KLD$  & 
        $\mathcal{Q}$ \\[0.1cm]
    $\beta$-\VAE{}$^{6,\dagger}$ & 
        $-\log p_{\*\zeta}(x_i|\*\theta)$ &
        $\beta\cdot\KLD$, $\beta > 1$  & 
        $\mathcal{Q}$ \\[0.1cm]
    Bernoulli-\VAE{}$^{7,\dagger}$ & 
        continuous Bernoulli &
        $\KLD$  & 
        $\mathcal{Q}$ \\[0.1cm]
    \textcolor{VIColor}{\textbf{Standard \VI}} & 
        $-\log p(x_i|\*\theta)$ & 
        \KLD & 
        $\mathcal{Q}$ \\[0.1cm]
    Power \VI{}$^{8}$ & 
        $-\log p(x_i|\*\theta)$ & 
        $\frac{1}{w}$\KLD, $w<1$ & 
        $\mathcal{Q}$ \\[0.1cm]
    Utility \VI{}$^{9}$ & 
        $-\log p(x_i|\*\theta) + \log u(h, x_i)$ &
        \KLD & 
        $\mathcal{Q}$ \\[0.1cm]
        %VIUtility, VIUtilityOld
    %
    Regularized Bayes$^{10}$ & 
        $-\log p(x_i|\*\theta) + \phi(\*\theta, x_i)$ &
        \KLD & 
        $\mathcal{Q}$ \\[0.1cm]
    Gibbs \VI{}$^{11}$ & 
        any $\ell$ & 
        \KLD & 
        $\mathcal{Q}$ \\[0.1cm]
    %
    %  PAC-Bayes posteriors$^{3}$ & 
    % any $\ell$ & 
    % functions of \KLD & 
    % $\mathcal{P}(\*\Theta)$ \\[0.05cm]
    \textcolor{GVIColor1}{\textbf{Generalized \VI}} & 
    any $\ell$ & 
    any $D$ & 
    $\mathcal{Q}$ \\[0.1cm]
    \hline \hline  \\
\end{tabular}
%\end{sc}
\end{small}
{\renewcommand{\arraystretch}{1.2}
\caption{Relationship of $P(\ell, D, Q)$ to a selection of existing methods. 
$^{1}$\citep[e.g.][]{SafeLearning,SafeBayesian,holmes2017assigning,InconsistencyBayesInference,DunsonCoarsening},  
$^{2}$\citep[e.g.][]{compositeLikelihoodOverview, compositeLikelihood1, compositeLikelihood2, OllieDGP},
$^{3}$\citep[e.g.][]{MinDisparities, GoshBasuPseudoPosterior, AISTATSBetaDiv, Jewson, MMDBayes},
$^{4}$\citep{Bissiri,PACmeetsBayesianInference, PACPrimer}, 
$^{5}$\citep[][]{VAE}, 
$^{6}$\citep[][]{bVAE}, 
$^{7}$\citep{bernoulliVAE}
$^{8}$\citep[e.g.][]{alpha-VI, AnnealedVI}
$^{9}$\citep[e.g.][]{VIUtility, VIUtilityOld}
$^{10}$(\citet{RegBayes1}, but only if the regularizer can be written as $\mathbb{E}_{q(\*\theta)}\left[\phi(\*\theta, \*x)\right]$ as in \citet{RegBayes2}), 
$^{11}$\citep[e.g.][]{VBConsistencyAlquier2}
$^{\dagger}$For the \VAE entries in the table, we abuse notation by denoting the local latent variable for $x_i$ as $\*\theta$. Further, $\*\zeta$ denote the generative parameters.
}
\label{table:inference_methods}
}
\end{center}
\vskip -0.1in
\end{table}

\subsection{Connecting the Rule of Three to existing methods}
\label{sec:insight_RoT}

Beyond axiomatic foundations and the interpretable modularity result of the last section, the form $P(\ell, D, \Pi)$ also sheds light on existing methods. As we will see, most existing Bayesian methods are special cases of the \RoT. 
However, certain posterior beliefs derived as approximations to $q^{\ast}_{\text{B}}(\*\theta)$ are \textit{not}.
This section explains this distinction, how it is relevant and why it provides a theoretical argument for the construction of
%\theo{approximate? shouldnt we dinstiguish?} 
posterior belief distributions through Generalized Variational Inference (\GVI).

As Table \ref{table:inference_methods} shows, an impressive array of Bayesian methods are recovered by the \RoT. This includes a wide range of approxiate Bayesian methods and in particular \textcolor{VIColor}{\textbf{standard \VI}}. In contrast, alternative approximation methods such as Laplace Approximations, Discrepancy Variational Inference (\DVI), Expectation Propagation (\EP) or  \VI posteriors derived from Generalized Evidence Lower Bounds do \textit{not} satisfy these  axiomatic desiderata.

%

%($\Pi \subset \mathcal{P}(\*\Theta)$, e.g. $\Pi = \mathcal{Q}$ for some parameterized subset $\mathcal{Q}$ of $\mathcal{P}(\*\Theta)$)

\subsubsection{Methods as special cases of the \RoT}

Unsurprisingly, as the \RoT was constructed as a strict generalization of standard Bayesian inference methods, most existing Bayesian methods are recovered as special cases. 
Further, by virtue of incorporating the space $\Pi$ into the objective, $P(\ell, D, \Pi)$ also recovers many posterior beliefs that are constructed as approximations to the Bayesian posterior.
%
%The Appendix \jeremias{ADD APPENDIX REFERENCE + ENTRY} contains more details, but 
Table \ref{table:inference_methods} gives a non-exhaustive overview over {some} of belief distributions the \RoT recovers as special cases.

One of the key findings of the table is that \textcolor{VIColor}{\textbf{standard \VI}} satisfies the axiomatic foundations underlying the \RoT.
In other words, the \RoT does \textit{not} judge the  belief distribution of eq. \eqref{eq:generalized_bayes_posterior} derived from Bayes rule to be preferable to certain classes of approximations \textit{by default}. 
The reason for this is simple: Unlike the traditional Bayesian paradigm, the \RoT can explicitly encode the availability of finite or infinite computational resources through the argument $\Pi$.
Hence, the moment the computational resources are scarce and posterior beliefs can only be computed over a parameterized subset $\mathcal{Q} \subset \mathcal{P}(\*\Theta)$, \textcolor{VIColor}{\textbf{standard \VI}} produces the best computationally feasible 
%\theo{but approximate? I guess we still need a distinction to the comp. infeasible true posterior} 
posteriors  relative to the objective in eq. \ref{eq:Zellner}.
In this sense, the \RoT respects the optimality result of \textcolor{VIColor}{\textbf{standard \VI}} presented in Theorem \ref{thm:VI_optimality}.

\subsubsection{Coherence and the \RoT}
\label{sec:coherence}

Another insightful observation is that unlike previous generalizations such as the Generalized Bayesian update in eq. \eqref{eq:generalized_bayes_posterior}, posterior belief distributions generated under the \RoT are allowed to break a property referred to as \textit{coherence} or \textit{Bayesian additivity} \citep[e.g.][]{Bissiri, Edwin}.
In a nutshell, coherence says that posterior beliefs have to be generated according to some function $\psi:\mathbb{R}^2 \to \mathbb{R}$ which for the prior $\pi(\*\theta)$ and loss terms $\ell(\*\theta, x_1), \ell(\*\theta, x_2)$ behaves as
\begin{IEEEeqnarray}{rCl}
    \psi\left( \ell(\*\theta, x_2), \psi\left(\ell(\*\theta, x_1), \pi(\*\theta) \right) \right)
    & = &
    \psi\left(\ell(\*\theta, x_1), \ell(\*\theta, x_2), \pi(\*\theta) \right) 
    \nonumber.
\end{IEEEeqnarray}
Effectively, this property ends up enforcing a multiplicative update rule and exponential additivity as in eq. \eqref{eq:generalized_bayes_posterior}.
Recalling the construction of the standard Bayesian posterior in Section \ref{sec:traditional_bayesian_paradigm}, it should be clear that the desirability of coherence is a \textit{direct} result of assuming \ref{assumption:prior} and \ref{assumption:computation}. 
To see that this intuition is accurate, note that treating the prior belief according to \ref{assumption:prior} and assuming infinite computational power via \ref{assumption:computation} is exactly equivalent to setting $D = \KLD$ and $\Pi = \mathcal{P}(\*\Theta)$. Next, solving eq. \eqref{eq:Zellner} with these specifications as in the proof of Theorem \ref{thm:Zellner}, one obtains the coherent exponentially additive update rule in eq. \eqref{eq:generalized_bayes_posterior}. 
In other words, generating posteriors that can violate coherence is identical to generating posteriors that do not have to rely on \ref{assumption:prior} and \ref{assumption:computation}.
Since this is precisely what we set out to do in the first place, this is in fact a desirable property for posteriors generated by the \RoT.
%\theo{doesnt this sound a bit circular? hmm maybe not as you can break the conditions without breaking coherence}

\subsubsection{Links to information theory and PAC-Bayes}

It will be worthwhile exploring the interesting links between the \RoT and alternative methods for the construction of belief distributions in the future. Here, we simply note in passing that special cases of the form $P(\ell, D, \Pi)$ were arrived at with strong theoretical arguments starting from at least two completely different paradigmatic bases.

\subparagraph{PAC-Bayes:}
While PAC-Bayesian results often have intimate links with Bayesian inference \citep[see e.g.][]{PACmeetsBayesianInference, InconsistencyBayesInference}, their motivations and origins are distinct \citep[see e.g.][]{ShaweTaylor, PACPrimer}:
Unlike Bayesian inference, PAC-Bayesian results are not constructed based on \ref{assumption:prior} and \ref{assumption:lklhood}.
Rather, their aim is the derivation of generalization bounds for belief distributions $q(\*\theta)\in \mathcal{P}(\*\Theta)$ defined over some hypothesis space (corresponding to the parameter space $\*\Theta$) relative to a loss function (corresponding to $\ell$).
For example, under a prior belief $\pi(\*\theta)$, a loss $\ell$ and a data generating mechanism for $x_{1:n}$ satisfying appropriate regularity conditions and for any $q(\*\theta)\in \mathcal{P}(\*\Theta)$ as well as for any fixed value of $\varepsilon > 0$, a rescaled version of McAllester's seminal bound \citep{McAllester1, McAllester2} says that with probability at least $1-\varepsilon$, 
\begin{IEEEeqnarray}{rCl}
    %\mathbb{P}_{\*x_{1:n}}
    %    \left( 
            \mathbb{E}_{q(\*\theta)}\left[ 
                n\cdot \mathbb{E}_{\*x_{1:n}}\left[ 
                    \frac{1}{n}\sum_{i=1}^n\ell(\*\theta, \*x_i)
                \right]
            \right]
            \leq
            \mathbb{E}_{q(\*\theta)}\left[ 
                        \sum_{i=1}^n\ell(\*\theta, x_i)
                \right] 
            + 
            \sqrt{
                \dfrac{
                    \KLD(q, \pi) + \log\frac{2\sqrt{n}}{\varepsilon}
                }{
                    2n^{-0.5}
                }
            }.
    %\right) \geq 1-\varepsilon. 
    \nonumber
\end{IEEEeqnarray}
Minimizing the right hand side of this bound with respect to $q(\*\theta)$ over some set $\Pi \subseteq \mathcal{P}(\*\Theta)$ immediately recovers a special case for the \RoT given by $P(\ell, D_{\text{McAllester}}, \Pi)$.  Here, $D_{\text{McAllester}}$ is just the \KLD term of the original bound with a subtracted constant: 
\begin{IEEEeqnarray}{rCl}
    D_{\text{McAllester}}(q\|\pi) & = &
    \sqrt{
        \dfrac{
            \KLD(q, \pi) + \log\frac{2\sqrt{n}}{\varepsilon}
        }{
            2n^{-0.5}
        }
    }
    -
    \sqrt{
        \dfrac{
            \log\frac{2\sqrt{n}}{\varepsilon}
        }{
            2n^{-0.5}
        }
    }
    \nonumber
\end{IEEEeqnarray}
This is done to ensure that $D_{\text{McAllester}}(q\|\pi) = 0$ if and only if $\pi(\*\theta) = q(\*\theta)$ (almost everywhere). Note that it does not effect the minimizer of the right hand side. 
A similar logic can be applied to virtually all PAC-Bayesian bounds, including bounds based on alternative divergences \citep{PACRenyiDiv, PACfDiv}. 
This suggests that there are insightful connections between PAC-Bayes and the \RoT. 
Further, \GVI{}---the tractable case when $\Pi = \mathcal{Q}$ is a parameterized subset of $\mathcal{P}(\*\Theta)$---is a promising way forward to scale and operationalize PAC-Bayesian learning.
In fact, \citet{Dichotomize} constitutes the first step in this direction and we will explore these connections in future work.

\subparagraph{Information Theory:}
Another strong link exists between the \RoT and results derived from Information Theory and specifically the predictive Information Bottleneck \citep[see][]{infoBottleneckMethod, infoBottleneckMethod2, AlemiBottleneck}.
The name arises from the fact that this problem seeks to compress all information of the (possibly infinitely many) observations in $\*x_P$ into the finite-dimensional quantity $\*\theta$ which maximizes the amount of information about (possibly infinitely many) future observations denoted as $\*x_F$.
More concisely, the predictive Information Bottleneck is given by the infinite-dimensional optimization problem
\begin{IEEEeqnarray}{rCl}
    \max_{p(\*\theta|\*x_P)} I(\*\theta; \*x_F) \quad \text{ s.t. } \quad I(\*\theta; \*x_P) = I_0.
    \nonumber
\end{IEEEeqnarray}
Here, $I$ denotes predictive information of some kind---typically mutual information---so that $I(\*\theta; \*x_F)$ quantifies the amount of information the quantity $\*\theta$ can tell us about the future while $I(\*\theta; \*x_P)$ is a measure of its complexity.
As shown in \citet{AlemiBottleneck}, a straightforward application of Lagrangian optimization together with a natural variational bound readily transforms this problem into an optimization problem of the form $P(\ell, D, \Pi)$.

\section{Generalized Variational Inference (\GVI)}
\label{sec:GVI}

The next section finally introduces a version of the \RoT that is feasible for real-world inference and that we call Generalized Variational Inference (\GVI). As the name suggests, \GVI is any posterior distribution generated from $P(\ell, D, \mathcal{Q})$, where $\mathcal{Q}$ is a parameterized subset of $\mathcal{P}(\*\Theta)$ as given in eq. \eqref{eq:variational_family}. 
The remainder proceeds as follows:
\begin{itemize}
    \item[] \textbf{Section \ref{sec:GVI_advantages}} 
        motivates why \GVI procedures generate conceptually appealing and coherent posterior belief distributions.
    \item[] \textbf{Section \ref{sec:GVI_applications}} 
        focuses on the main applications of \GVI in the current paper. In particular, we explain how \GVI can be used for  inferences that (i) are robust to model misspecification, (ii) are robust to prior misspecification and (iii) produce more appropriate marginal distributions.
    \item[] \textbf{Section \ref{sec:GVI_theory}} 
        discusses two strong theoretical guarantees for \GVI: Frequentist consistency and an interpretation as an approximate lower bound on the evidence of a (generalized) Bayesian posterior.
    \item[] \textbf{Section \ref{sec:GVI_inference}}
        focuses on strategies for inference. First, we derive  a closed form objective for a particular set of \GVI posteriors  that are robust to model misspecification. Second, we introduce black box inference for \GVI and prove more results on closed form expressions.
\end{itemize}
%\theo{reads like a charm}

\subsection{Advantages of \GVI over alternative \VI methods}
\label{sec:GVI_advantages}

As Table \ref{table:inference_methods} shows, posteriors derived via $P(\ell, D, \Pi)$ recover many Bayesian methods motivated as approximations to the Bayesian posterior.
Crucially however, a large collection of approximation strategies for the standard Bayesian posterior are \textit{not} special cases of the \RoT. 
Unsurprisingly, these are the exact same methods that are suboptimal relative to the objective defining Bayesian inference (see Corollary \ref{corollary:suboptimality}). 
They encompass a variety of strategies we elaborated on in Section \ref{sec:VI_as_ELBO} and \ref{sec:VI_as_KLD_min_and_DVI}. The most prominent of these include \VI based on generalized \ELBO formulations, Laplace approximations, Expectation Propagation (\EP) as well as Discrepancy Variational Inference (\DVI).
%
%\jeremias{Perhaps mention axiom violation here?}

We stress that we do \textit{not} claim that these alternative posterior approximations are always performing worse than \VI in practice---this is \textit{not} what Corollary \ref{corollary:suboptimality} says. 
What we \textit{do} claim and prove is that such alternative approximations do \textit{not} provide the optimal posterior relative to eq. \eqref{eq:Zellner}---an equation that is isomorphic with the Bayesian posterior.
This is an important distinction because it means that the empirical success of alternative approximations has a theoretically appealing interpretation: 
As explained in some detail in Section \ref{sec:reconciling_suboptimality}, if alternative approximations $q_{\text{A}}^{\ast}(\*\theta)$ to the Bayesian posterior $q_{\text{B}}^{\ast}(\*\theta)$ perform better than the standard variational approximation $q^{\ast}_{\VI}(\*\theta)$, the objective underlying $q_{\text{A}}^{\ast}(\*\theta)$ must be targeting a more appropriate posterior belief.
Thus, undesirable inference outcomes with standard \VI are synonymous with an inappropriately designed objective. 
Following this line of reasoning, the most transparent way to improve poor performance of standard \VI posteriors is a \textit{direct} adjustment of the objective that generated them.
%
%If designed with care, this could easily rectify the ways in which standard \VI produces undesirable posterior belief distributions. 
%
Conveniently, the axiomatic development of Section \ref{sec:RoT} provides a solution of precisely  this kind: The \RoT, which defines a versatile and modular family of objectives useful for interpretably adapting the standard \VI objective, a technique we call Generalized Variational Inference (\GVI). 

\begin{definition}[Generalized Variational Inference (\GVI)]
    Solving any \RoT of form $P(\ell, D,  \mathcal{Q})$ for $\mathcal{Q} = \{q(\*\theta|\*\kappa): \*\kappa \in \*K\}$ a parameterized subset of $\mathcal{P}(\*\Theta)$ (also called a variational family) constitutes a procedure we call Generalized Variational Inference (\GVI).
    \label{Def:GVI}
\end{definition}
\begin{remark}
    Notice that \GVI posteriors satisfy Axioms \ref{Axiom:representation}, \ref{Axiom:translationInvariance} and \ref{Axiom:informationAdjustment}  \textbf{by definition}. It immediately follows that they also inherit the modularity result in Theorem \ref{Thm:GVI_modularity}.
    By the same token, approximations $q^{\ast}_{\text{A}}(\*\theta)$ to the Bayesian posterior that cannot be rewritten as a \GVI procedure will \textbf{violate} the Axioms set out in Section \ref{sec:axioms}. 
    Clearly, this need not be a problem if $\mathcal{Q}$ is a sufficiently rich set of approximating distributions: In this case, the approximation $q^{\ast}_{\text{A}}(\*\theta) \approx q^{\ast}_{\text{B}}(\*\theta)$ is very reliable. 
    As $q^{\ast}_{\text{B}}(\*\theta)$ itself satisfies the Axioms in Section \ref{sec:axioms}, $q^{\ast}_{\text{A}}(\*\theta)$ will not violate the Axioms in a practically meaningful way. 
    
    In practice however, $\mathcal{Q}$ is typically hopelessly restrictive. Consequently, it does not contain any qualitatively good approximations to $q^{\ast}_{\text{B}}(\*\theta)$. 
    In this case, violation of the Axioms is often a serious problem. 
    Example \ref{example:label_switching} and Figure \ref{Fig:MultiModality} illustrate this on a Bayesian Mixture Model (\BMM{}).
    We also revisit this issue with our experiments in Section \ref{sec:experiments_BNN}, where we observe its real world consequence on Bayesian Neural Networks.
    % \footnote{
    %     The same issue is also explored in Section \ref{sec:experiments_BNN}, where we demonstrate it on real world data and Bayesian Neural Networks}.
    %
    \label{remark:approximations_vs_'approximations'}
\end{remark}

\begin{example}[Label switching and multi-modality]
A recurrent theme in the research on variational approximations $q^{\ast}_{\text{A}}(\*\theta)$ to $q^{\ast}_{\text{B}}(\*\theta)$ is the observation that standard \VI with a mean field normal family will center closely around the maximum likelihood estimate \citep[e.g.][]{TurnerSahani11}.
This phenomenon is often referred to as the \textbf{zero-forcing} behaviour of the \KLD \citep{MinkaDivergences}. Its effect are  undesirably overconfident variational posteriors $q^{\ast}_{\VI}(\*\theta)$. Moreover, this problem is especially pronounced when  the approximated posterior beliefs $q^{\ast}_{\text{B}}(\*\theta)$ are multi-modal.
%especially if the correlation between the entries of $\*\theta$ in the true posterior is large. 
%
Popular approaches to address this issue are Expectation Propagation (\EP) \citep{EP, EP2} and Divergence Variational Inference (\DVI) methods as introduced in Section \ref{sec:VI_as_KLD_min_and_DVI} \citep[e.g.][]{AlphaDiv, RenyiDiv, ChiDiv}.
All of these approaches seek to (locally or globally) minimize an alternative \textbf{zero-avoiding} divergence $D$ between $q^{\ast}_{\text{A}}(\*\theta)$ and $q^{\ast}_{\text{B}}(\*\theta)$.
Unfortunately, none of these approaches satisfy the Axioms set out in \ref{sec:axioms}. 
An immediate consequence is that the modularity encoded by Theorem \ref{Thm:GVI_modularity} does \textit{not} apply to \EP or \DVI.
This may seem inconsequential, but it really is not: Unlike with \GVI, \textbf{changing the divergence in the \DVI-sense no longer  affects uncertainty quantification alone}.
In other words, we may accidentally interfere with the loss and warp the way the goodness of a parameter value $\*\theta$ is assessed in undesirable ways.
%\theo{I wonder if italics is too tiring to the eye for so long pragraphs?}

Using Bayesian mixture models (\BMM{}s), we show that problems associated with violating the Axioms or Section \ref{sec:axioms} indeed occur in practice.
\BMM{}s produce multi-modal posteriors because the likelihood function is invariant to switching parameter labels. 
In other words, \BMM{}s have multiple parameter values that constitute equally good fits to the data.
With this in mind, we simulate $n=100$ observations from the model 
\begin{IEEEeqnarray}{rCl}
    p\left(x|\*\theta = (\mu_1, \mu_2)\right) &=&
    0.5\cdot\mathcal{N}(x|\mu_1,0.65^2)+0.5\cdot\mathcal{N}(x|\mu_2,0.65^2)
    \nonumber
\end{IEEEeqnarray}
for two different settings of $\*\theta= (\mu_1, \mu_2)$.
For inference, we use the well-specified prior belief $\mu_j \sim \mathcal{N}(0,2^2)$, $j=1,2$.  
Using the correctly specified likelihood function $\ell(\*\theta, x_i) = -\log p\left(x_i|\*\theta = (\mu_1, \mu_2)\right)$, we then compare the standard Bayesian posterior $q^{\ast}_{\text{B}}(\*\theta)$, the standard \VI posterior, the \DVI posterior based on R\'enyi's $\alpha$-divergence (\RAD) \citep{RenyiDiv} and the \GVI posterior using $D=\RAD$ as uncertainty quantifier.
For all posteriors produced by variational methods, we use the mean field normal family for $\mathcal{Q}$.

Figure \ref{Fig:MultiModality} shows the results. 
Clearly, $q^{\ast}_{\text{B}}(\*\theta)$ is multi-modal because there are two equally good parameter values describing the data by virtue of the fact that $p\left(x|\*\theta = (\mu_1, \mu_2)\right) = p\left(x|\*\theta = (\mu_2, \mu_1)\right)$.
Because we enforce uni-modality through $\mathcal{Q}$, the variational posteriors have a clear interpretation: Firstly, their means have the interpretation of (one of the two) best parameter values of $\*\theta = (\mu_1, \mu_2)$. Secondly, their variances quantify the uncertainty about this best value.
For both settings of the true value for $\*\theta$, \DVI produces a posterior that reflects a highly undesirable belief: In particular, the mode of the \DVI posterior is located at a (locally) worst value of $\*\theta$. 
Unsurprisingly and as the bottom right plot shows, this  has adverse consequences for predictive performance.
This behaviour is entirely attributable to the fact that unlike \GVI posteriors, \DVI posteriors violate the axiomatic foundation set out in Section \ref{sec:axioms} and thus do not inherit the modularity result of Theorem \ref{Thm:GVI_modularity}.
%
%Unlike \GVI methods that \textbf{do} satisfy these axioms, \DVI methods do not satisfy the modularity result of Theorem \ref{Thm:GVI_modularity}. 
%
In other words: In the \GVI framework, changing the \KLD to another divergence only changes uncertainty quantification and \textbf{does not} affect the way the best parameter is found. In sharp contrast, the \DVI framework comes with no such guarantee. Accordingly, posteriors produced with \DVI may conflate uncertainty quantification and the way the best parameter is found.
%Without wanting to, our switch from a standard \VI to a \DVI method has conflated uncertainty quantification with t
%
%The left hand side of 
In this context, Figure \ref{Fig:MultiModality} serves as a morality tale: The modularity of standard \VI and \GVI ensures transparency. In contrast, \DVI methods have no such guarantees and can easily end up conflating the search for the best parameter with uncertainty quantification\footnote{Clearly, this problem is not guaranteed to occur: As
Remark \ref{remark:approximations_vs_'approximations'} explains, 
the conflation of loss and uncertainty quantification will not be a problem so long as $\mathcal{Q}$ is sufficiently rich to produce approximations that endow the statement $q^{\ast}_{\text{A}}(\*\theta) \approx q^{\ast}_{\text{B}}(\*\theta)$ with meaning.}.

 \begin{figure}[t!]
 %\vskip -2.0cm
 \begin{center}
 \includegraphics[trim= {0.0cm 0.5cm 0.5cm 0.5cm}, clip,   width=0.49\columnwidth]{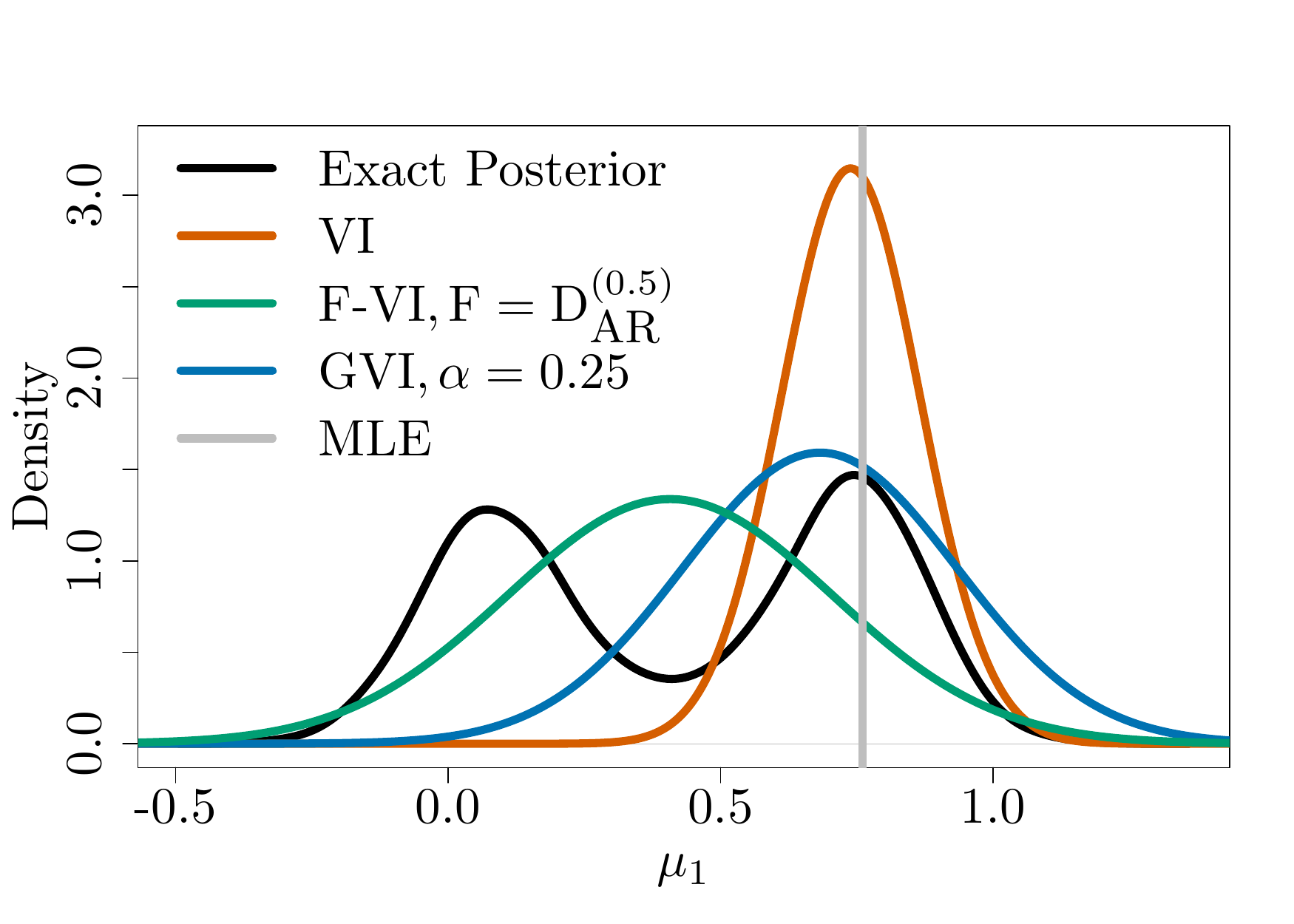}
 \includegraphics[trim= {0.0cm 0.5cm 0.5cm 0.5cm}, clip,   width=0.49\columnwidth]{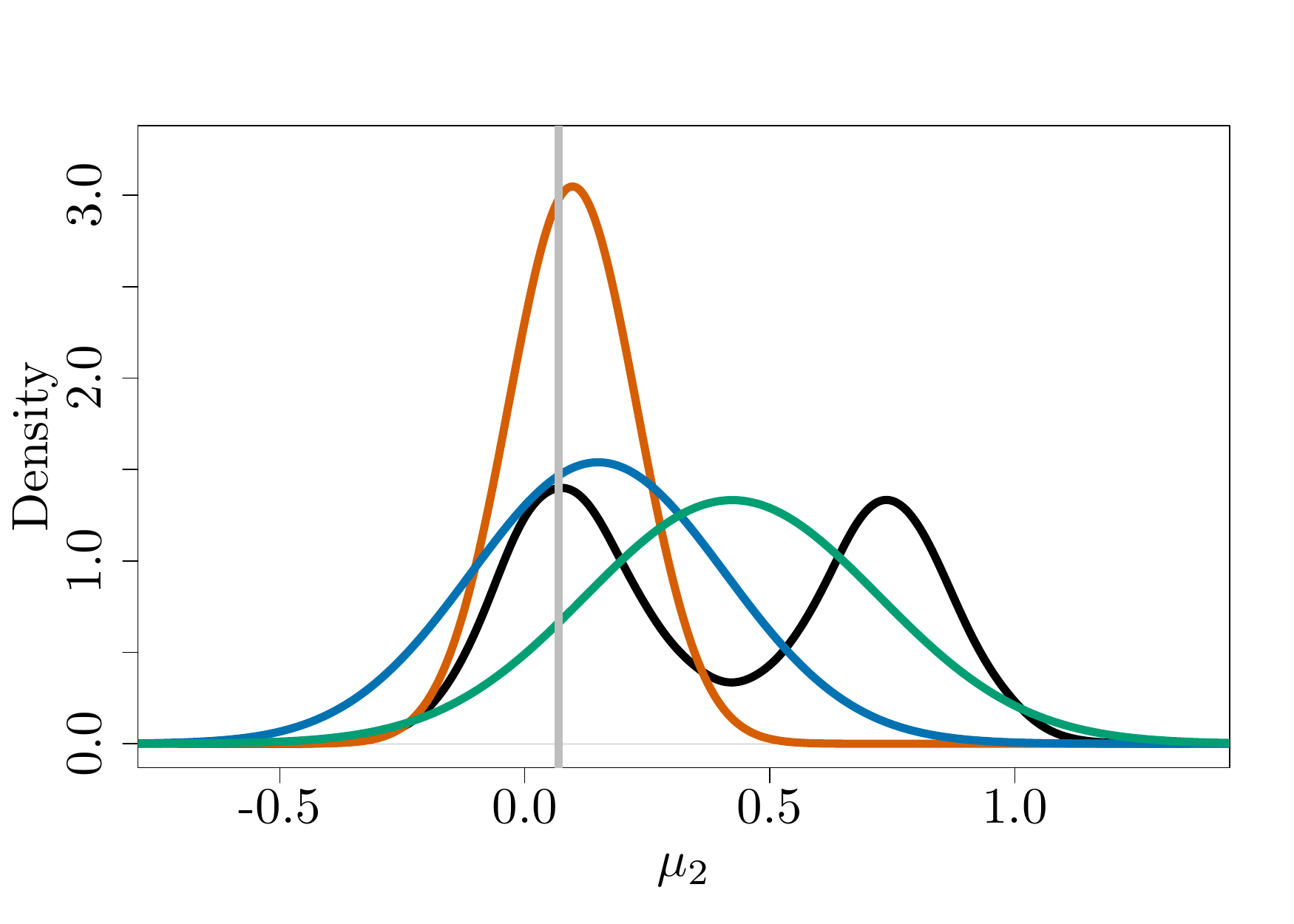}
 \includegraphics[trim= {0.0cm 0.5cm 0.5cm 0.5cm}, clip,   width=0.49\columnwidth]{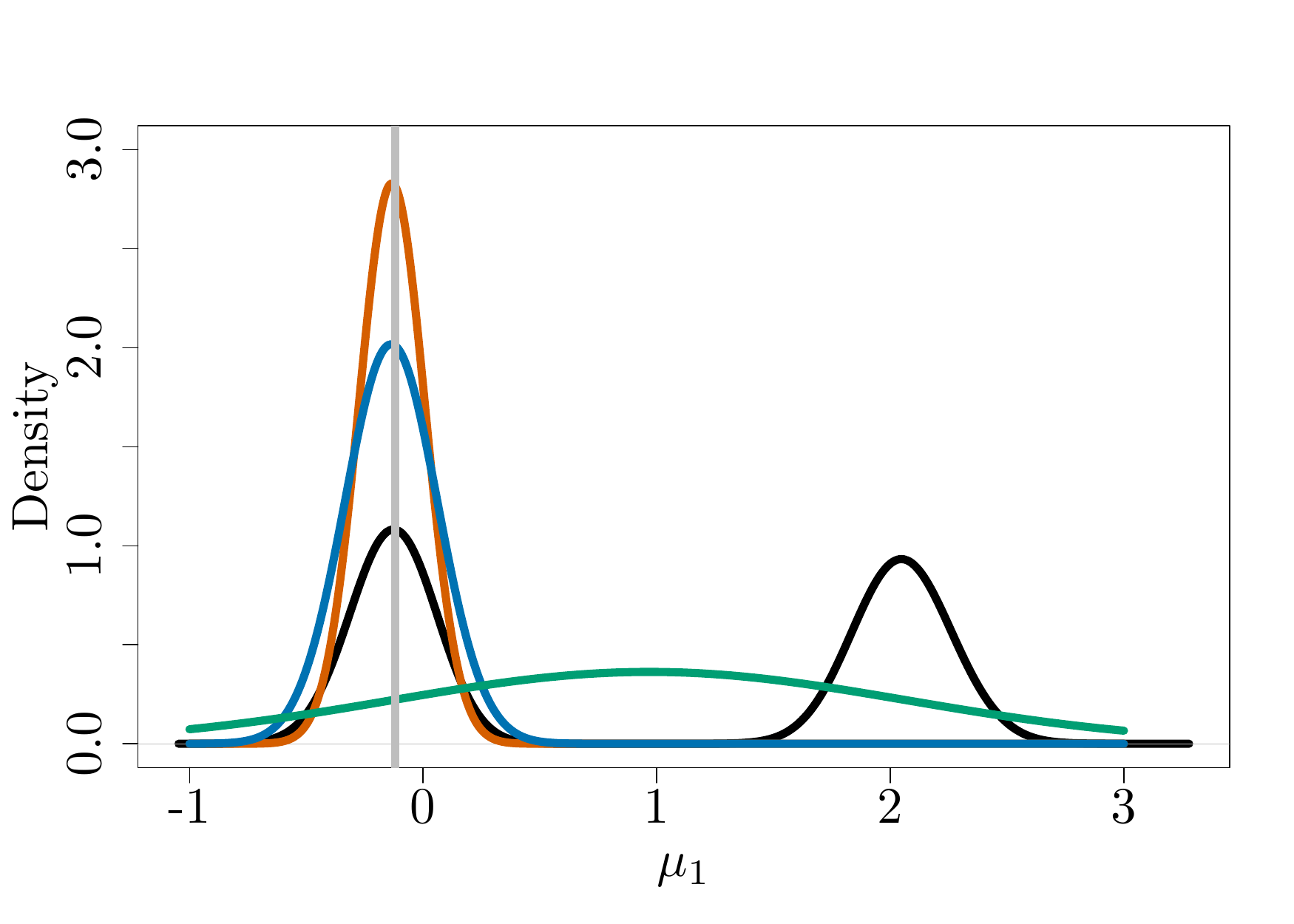}
 \includegraphics[trim= {0.0cm 0.5cm 0.5cm 0.5cm}, clip,   width=0.49\columnwidth]{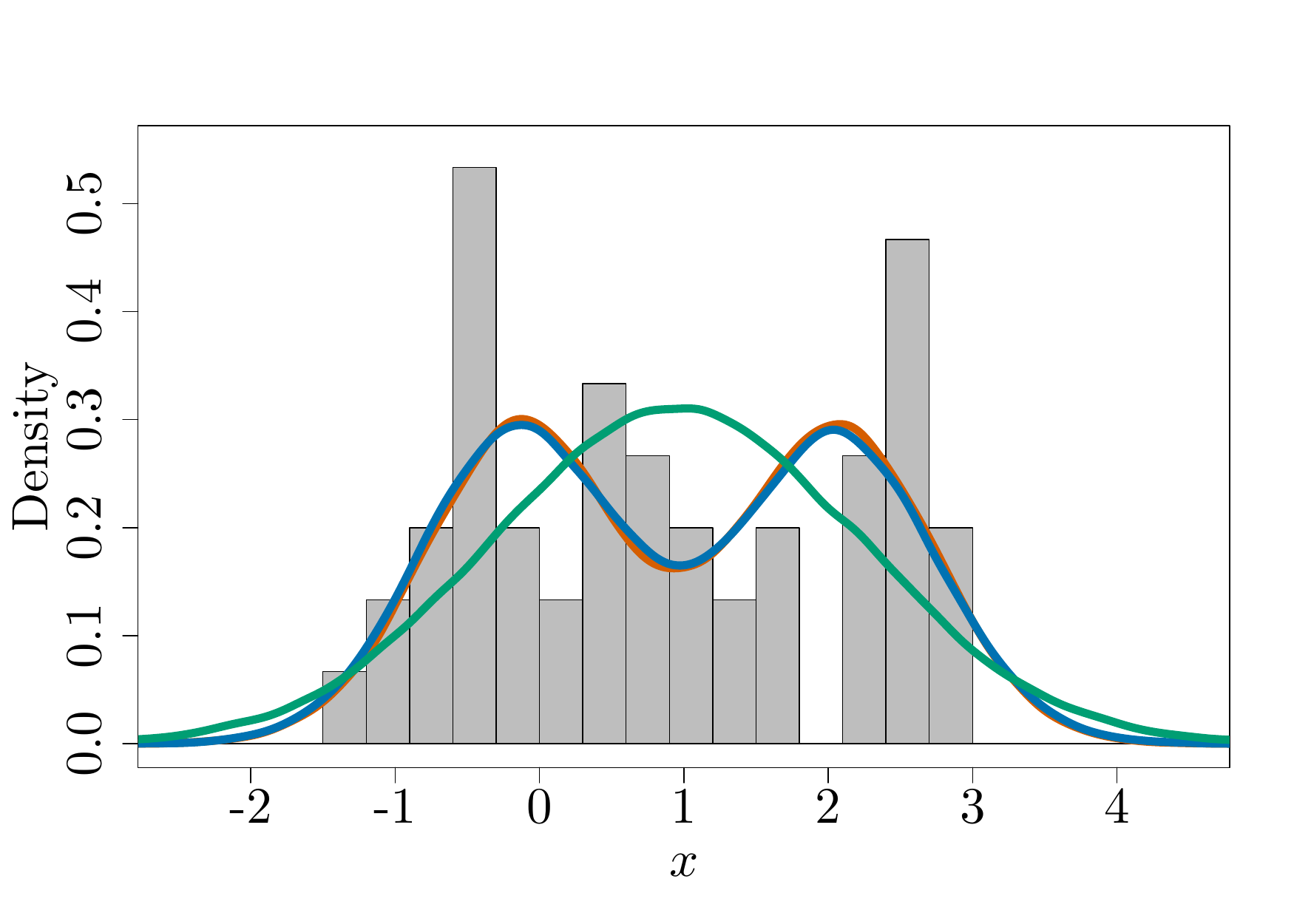}
 \caption{
 \bv
% %
 Depicted are the outcomes of inference in a \BMM{} model, namely the (multimodal) \textbf{exact Bayesian} inference posterior, \VIColor{\textbf{standard \VI{}}}, \DVIColor{\textbf{\DVI}} with \DVIColor{\RAD} \citep{RenyiDiv} and \GVIColor{\textbf{\GVI}} with \GVIColor{\RAD} as uncertainty quantifier. 
 \textbf{Top:} Posterior marginals for $\mu_1 = 0, \mu_2 = 0.75$. The Maximum A Posteriori estimate of the \DVIColor{\textbf{\DVI}} posterior yields a locally worst value for $\*\theta$ relative to the \textbf{exact Bayesian} posterior. In contrast, \VIColor{\textbf{standard \VI}} and \GVIColor{\textbf{\GVI}} respect the loss: They produce a posterior belief centered around one (of the two) values of $\*\theta$ minimizing the loss. 
 \textbf{Bottom left:} Posterior marginal for $\mu_1= 0, \mu_2 = 2$. The effects of the top row become even stronger as the modes move further apart. 
 \textbf{Bottom right:}
 Posterior predictive for $\mu_1= 0, \mu_2 = 2$ against the histogram depicting the actual data. 
 \VIColor{\textbf{\VI}}, \GVIColor{\textbf{\GVI}} and \textbf{exact Bayesian} inference perform well and almost identically.
 \DVIColor{\textbf{\DVI}} performs poorly and captures none of the two mixture components of the \BMM{}. 
 %
 %and \GVIColor{\GVI} using the $-\log(p(x_i|\*\theta))$ and \RAD prior regulariser posteriors for the coefficients $\*\theta=(\mu_1,\mu_2)$ of a 2-component mixture model. 
% %
% \textbf{Top:} Case 1.
% The exact posterior is bimodal as a result of label-switching. \VIColor{\VI} and \GVIColor{\GVI} are able to concentrate on one of the combinations of parameters minimising the loss. In contrast, \DVIColor{\DVI} is smoothing between the two modes. Note in particular that the highest posterior mass is placed at a locally (least) likely combination of $\mu_1$ and $\mu_2$.
% %
% \textbf{Bottom:} Case 2. Here the two components are separated further. The right hand plot shows the negative impact the \DVIColor{\DVI} posterior approximation has predictively.
 }
 \label{Fig:MultiModality}
% %\vskip -0.25in
 \end{center}
 \end{figure}

\label{example:label_switching}
\end{example}

\subsection{\GVI use cases: Robust inference \& better marginals}
\label{sec:GVI_applications}

Summarizing the findings of the last paragraphs, saying that the standard \VI posterior produces undesirable inferences or inappropriate uncertainty quantification amounts to saying that the objective in eq. \eqref{eq:standard_VI_ELBO} is inappropriate for the inference task at hand.
By virtue of the modularity result in Theorem \ref{Thm:GVI_modularity}, it is also clear that \GVI is uniquely positioned to define alternative objectives to address this in a direct and targeted fashion. 
Inspired by this, we next focus on three situations that cause problems for standard \VI, but are easily solved by \GVI: Prior misspecificaton \ref{violation:prior}, model misspecification \ref{violation:likelihood}, and marginal variances.

\subsubsection{\GVI and prior misspecification}
\label{sec:GVI_prior_misspecification}

Section \ref{sec:violation_P} outlines the problems associated with violating \ref{assumption:prior}: If the prior does not even approximately reflect the best available judgement, inference outcomes are adversely affected.
This phenomenon is particularly pronounced whenever the prior is specified according to some default setting. For example, in the case of Bayesian Neural Networks (\BNN{}s), a typical choice of prior is a standard normal distribution that factorizes over the network weights.
While this may seem harmless or even uninformative, a supposedly uninformative prior specification of this kind actually encompasses a large degree of information, e.g.
\begin{itemize}
    \myitem{\textbf{(U)}} The prior belief is \textit{unimodal}. In other words, we believe that there exists a \textit{uniquely most likely} parameterization of the network before observing any data. % is unique\theo{hmm? weird phrasing}. 
    \label{BNN-prior-assumption:unimodal}
    %\myitem{(ii)} Our prior belief is \textit{symmetric}. Thus, we also believe that the network setting all weights to the same number $+\varepsilon$ is exactly as likely as the network setting a fraction $p$ of the weights to $-\varepsilon$ and the remaining fraction $1-p$ to $+\varepsilon$.
    %\label{BNN-prior-assumption:symmetry}
    \myitem{\textbf{(I)}} The prior belief is that all network weights of a \BNN are {uncorrelated}. In fact, we even believe that all network weights of a \BNN are both \textit{pairwise and mutually independent}.\footnote{For joint normal distributions, variables are uncorrelated if and only if they are independent.} 
    %This means that until the data manages to convince us otherwise, we firmly believe that the parameters in a \BNN are completely independent from one another.
    \label{BNN-prior-assumption:independence}
\end{itemize}
The above implications are in direct and strong contradiction to our best possible judgements about \BNN{}s and thus violate \ref{assumption:prior}:
\begin{itemize}
    \myitem{\textbf{(\Lightning{}U)}} 
    Neural Networks are well-understood to have multiple parameter settings that are equally good \citep[e.g.][]{NNoptima}. The unimodality assumption outlined in \ref{BNN-prior-assumption:unimodal} is thus clearly not a reflection of the best judgement available: A prior belief in accordance with \ref{assumption:prior} would encode multimodality.
    \myitem{\textbf{(\Lightning{}I)}}  
    By construction, Neural Networks encode a significant degree of dependence in their parameters: The best values for parameters in the $l$-th layer will strongly depend on the best values for parameters in the $(l-1)$-th layer (and vice versa). 
    Hence, assuming uncorrelatedness (much less so independence!) directly contradicts our best judgement.
\end{itemize}

From this, it is obvious that a fully factorized normal distribution is hardly an appropriate default prior for \BNN{}s in the sense of \ref{assumption:prior} in Section \ref{sec:traditional_bayesian_paradigm}.
At the same time, it is often prohibitive or computationally infeasible to construct alternative prior beliefs that reflect our best judgements more accurately.
In other words, we are stuck with a sub-optimal prior.
Under the standard Bayesian paradigm, this is not an acceptable position.
In contrast, the new paradigm outlined in Section \ref{sec:axioms} does not require the prior to be flawless.
Using \GVI, we can thus use our very imperfect prior to design more appropriate posterior beliefs: Simply adapt the argument $D$ which regularizes the posterior belief against the prior.
In particular, we want to adapt $D$ such that the resulting posteriors satisfy two criteria: Firstly, they should be more robust to priors which strongly contradict the observed data. Secondly, they should still provide reliable uncertainty quantification.

As the toy example in Figure \ref{Fig:prior_misspec_BMM} shows, this can be achieved by picking a robust replacement $D_{\text{robust}}$ for the \KLD. A recent overview of robust divergences was provided by \cite{ABCdiv}.
Such divergences are constructed with a hyperparameter that regulates the degree of robustness and which recovers the \KLD as a limiting case. In the current paper, all robust divergences are parameterized such that one recovers the \KLD as the hyperparameter approaches unity. 
For example, R\'enyi's $\alpha$-divergence---henceforth denoted \RAD and introduced by \citet{renyi1961measures}---is such a robust alternative to the \KLD. Using the parameterization in \citet{ABCdiv}, it is given by
\begin{IEEEeqnarray}{rCl}
    \RAD(q\|\pi) & = & \frac{1}{\alpha(\alpha-1)}\log\left( 
            \mathbb{E}_{q(\*\theta)}\left[ \left(\frac{\pi(\*\theta)}{q(\*\theta)}\right)^{1-\alpha} \right]
        \right).
    \label{eq:RAD}
\end{IEEEeqnarray}
Originally, R\'enyi's $\alpha$-divergence was motivated as the \textit{geometric mean} information to discriminate between the two hypotheses $\*\theta \sim \pi(\*\theta)$ and $\*\theta \sim q(\*\theta)$ of order $\alpha$, for some $\alpha \in (0,1)$.
Similarly, the original motivations for the \KLD was its interpretation as the \textit{arithmetic mean} information to discriminate between $\*\theta \sim \pi(\*\theta)$ and $\*\theta \sim q(\*\theta)$ \citep{kullback1951information}.

\begin{figure}[b!]
%\vskip -2.0cm
\begin{center}
\includegraphics[trim= {1.75cm 0.0cm 2.2cm 0.5cm}, clip,  
 width=1\columnwidth]{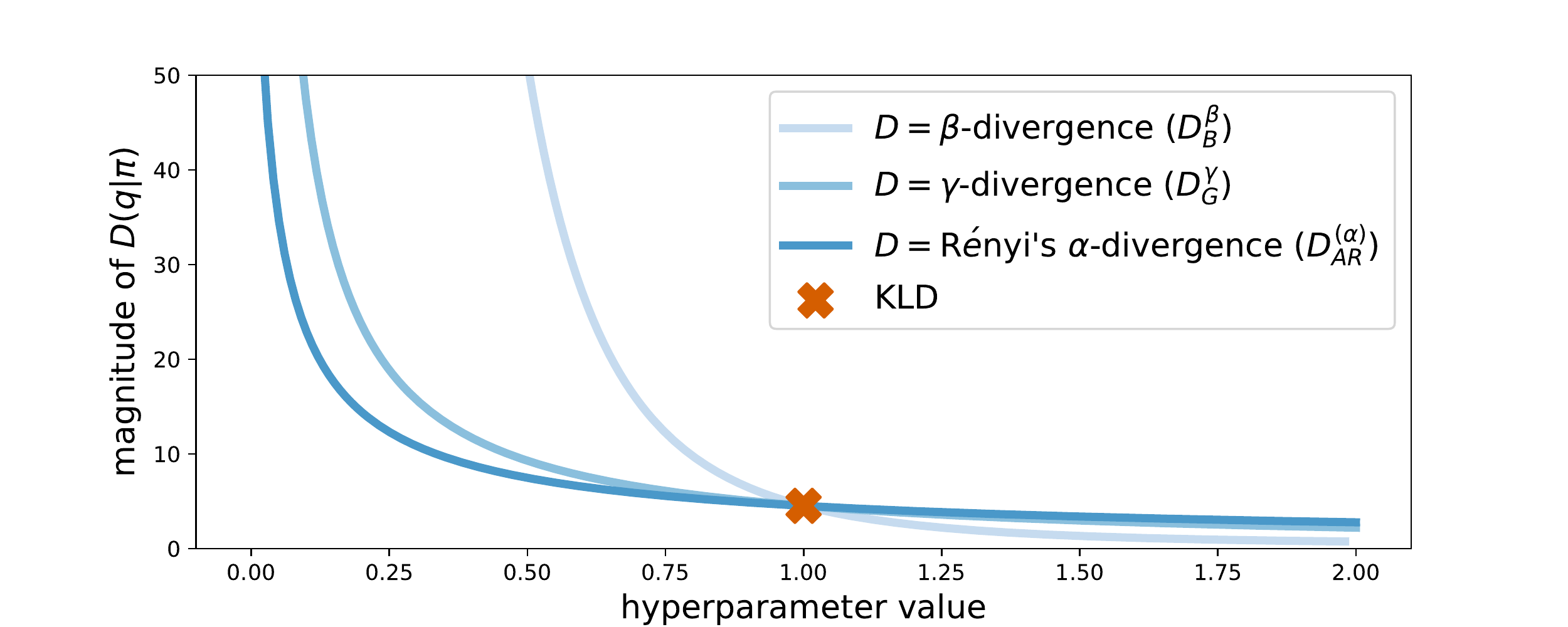}
\caption{
Depicted is the magnitude $D(q\|\pi)$ for different \GVIColor{\textbf{robust divergences $\*D$}} and the \VIColor{\textbf{\KLD}} for two Normal Inverse Gamma distributions given by $q(\*\theta) = \mathcal{NI}^{-1}(\*\theta; \*\mu_q, \*V_q, a_q, b_q)$ and $\pi(\*\theta) = \mathcal{NI}^{-1}(\*\theta; \*\mu_{\pi}, \*V_{\pi}, a_{\pi}, b_{\pi})$ with $\*\mu_{\pi} = (0, 0)^T$, $\*V_{\pi} = 25 \cdot I_2$, $a_{\pi} = 500$, $b_{\pi} = 500$ and $\*\mu_q = (2.5, 2.5)^T$, $\*V_q = 0.3 \cdot I_2$, $a_q = 512$, $b_q = 543$.
}
\label{Fig:div_magnitudes}
%\vskip -0.25in
\end{center}
\end{figure}

Since geometric means are more robust measures of central tendency than arithmetic means, the \RAD will generally be a more robust measure of discrepancy between $\pi(\*\theta)$ and $q(\*\theta)$ so long as $\alpha \in (0,1)$.
Clearly, the degree of this robustness will depend on $\alpha$. Generally, picking lower values of $\alpha\in (0,1)$ will produce more prior-robust measures of discrepancy. Indeed, $\RAD(q\|\pi)$ recovers $\KLD(q\|\pi)$ as $\alpha \to 1$ and $\KLD(\pi\|q)$ as $\alpha \to 0$.
A host of other divergences behave similarly, including $\alpha$-, $\beta$- and $\gamma$-divergences as well as their generalizations \citep{ABCdiv}. 
%
%\theo{break the text a bit with space or figure here? doesnt read perfectly}
We plot their magnitude for two Normal Inverse Gamma distributions with different divergence hyperparameter values in Figure \ref{Fig:div_magnitudes}. The plot illustrates that (i) hyperparameter values below (above) unity impose larger (smaller) penalties than the \KLD and that (ii) all robust divergences recover the \KLD as their hyperparameters approach one. 
Importantly, a larger regularization term does {not} necessarily imply that a misspecified prior dominates inference:
In fact, Figure \ref{Fig:prior_misspec_BMM} shows that the contrary holds using the example of \RAD %R\'enyi's $\alpha$-divergence 
with $\alpha = 0.5$.
As Appendix \ref{Appendix:uncertainty_quantification} demonstrates in great detail, R\'enyi's $\alpha$-divergence is representative of the behaviour displayed by various  robust divergences. 
To keep things simple, we thus focus on  \RAD{} %R\'enyi's $\alpha$-divergence
in the remainder of the paper. 
%This divergence is particularly well-suited for prior robust inference, see also Figure \ref{Fig:prior_robustness_thm}.

%\jack{Do we not want to also refer forward to Fig. \ref{Fig:prior_robustness_thm} here as this is also about priorr robustness}

\subsubsection{\GVI and model misspecification}\label{sec:GVI_model_misspecification}

Section \ref{sec:violation_L} explains how and why \ref{violation:likelihood} can severely impede the usefulness of the Bayesian posterior: Assuming that the likelihood is an accurate reflection of the data generating mechanism makes inferences susceptible to outliers, heterogeneity and other adversarial aspects of the data.
Further recalling the isomorphism between eq. \eqref{eq:Zellner_standard} and eq. \eqref{eq:standard_bayes_rule}, it also is clear that treating the likelihood model as (approximately) correct is synonymous to using the log score $\ell(\*\theta, x_i) = -\log p(x_i|\*\theta)$ to assess how well the model fits the data.

Based on this observation, a promising line of work has sought to produce more robust scoring rules for probability models \citep[see e.g.][and references therein]{dawid2016minimum}.
While there are other ways to derive proper scoring rules, the conceptually most appealing constructions are arguably based on statistical divergences.
The intuition behind this approach consists in two observations: First, if $n$ is large enough, one can invoke the law of large numbers and rewrite the parameter value $\widehat{\*\theta}_n$ minimizing the log score as 
%\jack{is it better to write this as a limit $\rightarrow$ rather than an approximation $\approx$?}
%
\begin{IEEEeqnarray}{rCCCCl}
 \widehat{\*\theta}_n & = &
 \min_{\*\theta \in \*\Theta} \left\{ \frac{1}{n}\sum_{i=1}^n-\log p(x_i|\*\theta) \right\}
 & \overset{n\to\infty}{\approx} &
 \min_{\*\theta \in \*\Theta} \left\{
 \mathbb{E}_{\mathbb{P}_{\*x}}\left[ - \log p(\*x|\*\theta) \right]
 \right\}
 \nonumber \\
 &= & 
 \min_{\*\theta \in \*\Theta} \left\{
 \mathbb{E}_{\mathbb{P}_{\*x}}\left[ - \log \left(\frac{p(\*x|\*\theta)}{d\mathbb{P}_{\*x}(\*x)}\right)\right]
 \right\}
 & = &
 \min_{\*\theta \in \*\Theta}\KLD
    \left(d\mathbb{P}_{\*x}\|p(\*x|\*\theta)
 \right).
 \nonumber
\end{IEEEeqnarray}
In other words, the log score targets the parameter value $\*\theta$ which is best in the \KLD-sense.
%\theo{shouldn't the subscript $n$ though dissappear as you go to infinity? or are you denoting this as subscript infty? I thought you had another notation for this earlier on}
%
Second, as it is well-known that the \KLD is not robust \citep[e.g.][]{ABCdiv}, it is often advantageous to find a robust alternative divergence $D_{\text{robust}}$ to reverse-engineer the above derivation and arrive at some robust scoring rule $\mathcal{L}^{\text{robust}}(\*\theta, x_i)$.

The first developments in this direction started with the family of $\beta$-divergences by \citet{BasuDPD, mihoko2002robust}, but the idea has since been extended to various other discrepancies. This includes the Fisher divergence \citep{scoreMatching}, the family of $\gamma$-divergences \citep[e.g.][]{GammaDivNotSummable, GammaDivSummable} as well as Minimum Stein Discrepancies \citep{MinSteinDiscrepancy}.
The Generalized Bayesian posterior associated with this procedure then arises from replacing the negative log likelihood in eqs. \eqref{eq:generalized_bayes_posterior} and \eqref{eq:Zellner_standard} by $\mathcal{L}^{\text{robust}}(\*\theta, x_i)$.
A growing literature has focused on using the scoring rules derived from these divergences to perform Generalized Bayesian inference of precisely this sort \citep[see e.g.][]{MinDisparities, GoshBasuPseudoPosterior, MMDBayes}. 
\citet{Jewson} provides a recent survey of this field and connects it to the idea of \citet{Bissiri}. 
As Figures \ref{Fig:Influence_fct_pic} and \ref{Fig:RBOCPD_example} demonstrate for the scoring rule $\mathcal{L}^{\beta}(\*\theta, x_i)$ derived from the $\beta$-divergence, this produces  reliably robust posterior inferences.
We state the analytic form of $\mathcal{L}^{\beta}(\*\theta, x_i)$ and another robust scoring rule $\mathcal{L}^{\gamma}(\*\theta, x_i)$ derived from the $\gamma$-divergence \citep[see][]{GammaDivSummable} in Section \ref{sec:experiments_GPs}, where we show how they can be used to robustify Deep Gaussian Processes. 
%
%In Section \ref{sec:experiments_GPs}, we will illustrate the same logic using another robust scoring rule $\mathcal{L}^{\gamma}(\*\theta, x_i)$ derived from the $\gamma$-divergence \citep[see][]{GammaDivSummable} to robustify Deep Gaussian Processes. 
%
%\jack{Do we at any point quote the \BD or \GD score}

%For example, Section \ref{sec:Examining_Bayesian_paradigms} elaborates on two of the most practically relevant conditions under which the objective of standard \VI or indeed standard Bayesian inference is inappropriate: A severe degree of model misspecification as per \ref{violation:likelihood} or serious prior misspecification as per \ref{violation:prior}.
%
%Other families of robust scoring rule are composite likelihoods  and power likelihoods (see also 

\subsubsection{\GVI and marginal variances}
The standard \VI objective can be inappropriate even in situations where 
%both the\theo{delete the? and maybe add -ing to specify-ing} 
assuming appropriately specified priors \ref{assumption:prior} and likelihood functions \ref{assumption:lklhood} underlying the traditional Bayesian paradigm are a useful working assumption. 
%\jack{But the traditional Bayesian paradigm also requires infinite computation} \jack{What we mean is when the model and prior and close to being correctly specified}
%
For instance, the uncertainty quantification of standard \VI is often inappropriate when $\mathcal{Q}$ is a mean field variational family factorizing dimension-wide over $\*\theta$ and the individual entries of $\*\theta$ exhibit strong dependence \citep[e.g.][]{TurnerSahani11}. 
Oftentimes, this phenomenon is referred to as \textit{mode seeking behaviour} \citep[e.g.][]{MinkaDivergences}. The name itself also reveals that this problem is intimately linked to unimodal---and thus in practice mean field normal---variational families $\mathcal{Q}$.
Again, the modularity result of Theorem \ref{Thm:GVI_modularity} can be helpful: Provided that one has no flexibility about the choice of $\mathcal{Q}$ and has fixed the prior $\pi(\*\theta)$, one can adapt the \GVI posterior's uncertainty quantification properties by changing $D = \KLD$ to an alternative divergence.
Figure \ref{Fig:marginal_variances} illustrates this  flexibility of \GVI by comparing the uncertainty quantification properties of three different robust divergences to standard \VI.

%\jack{I recently produced a plot for a presentation allowing me to compare these marginal variances in Fig. 7 with those produced by \DVI (\RAD for different values of $\alpha$), basically showing \GVI achieves very similar performance to \DVI in these cases. Is this of any interest here, or does it distract from the point?} 
%\jeremias{Would be interesting to have these in the Appendix if you're fine with that? If so, can you send me the code producing the data? I'll just use the same format for plotting them as I did in Fig. 7}

\begin{figure}[t!]
%\vskip -2.0cm
\begin{center}
\includegraphics[trim= {3.2cm 0.5cm 3.7cm 0.35cm}, clip,  
 width=1.00\columnwidth]{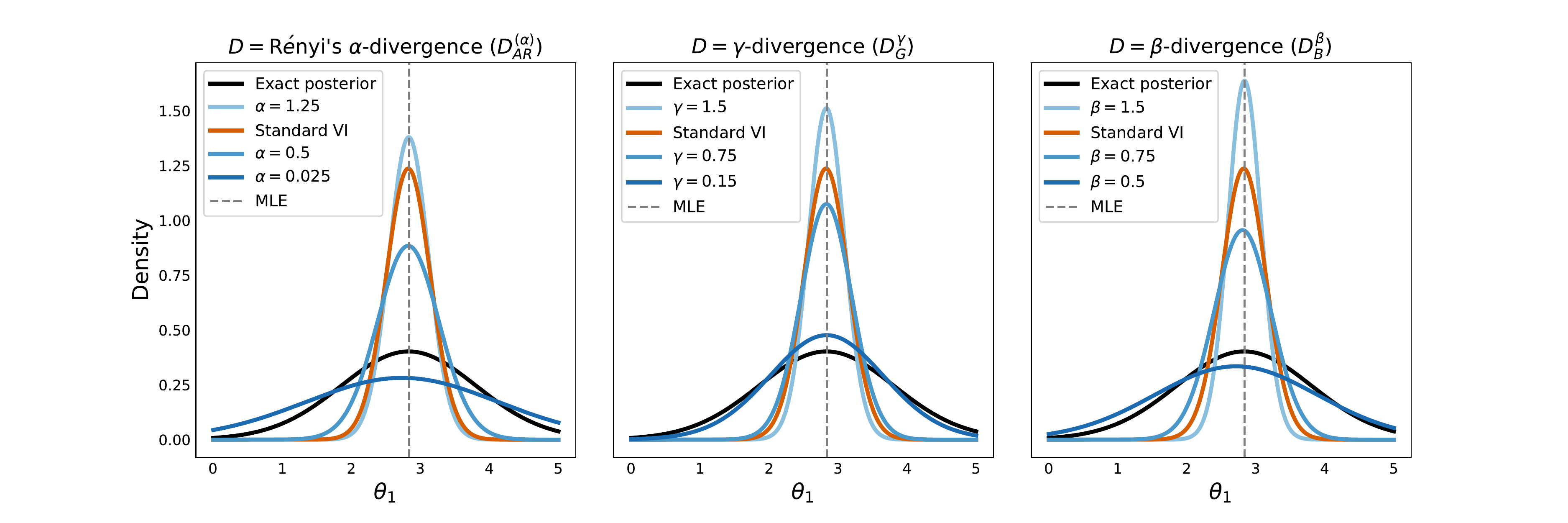}
\caption{
\bv
Marginal \VIColor{\textbf{\VI}} compared to different \GVIColor{\textbf{\GVI}} posteriors for the coefficient $\theta_1$ of data simulated from a Bayesian linear model (see Appendix \ref{Appendix:uncertainty_quantification} for details). 
For all posteriors, the loss $\ell$ is the correctly specified negative log likelihood of the true data generating mechanism. Further, for all variational posteriors the belief is constrained to lie inside a mean field normal family $\mathcal{Q}$.
Due to high correlation between the coefficients for the \textbf{exact posterior}, \VIColor{\textbf{standard \VI}} produces undesirably over-concentrated belief distributions. 
In contrast, appropriately choosing the hyperparameters of alternative robust divergences $D \neq \KLD$ provides more desirable uncertainty quantification.
}
\label{Fig:marginal_variances}
%\vskip -0.25in
\end{center}
\end{figure}

%\jeremias{Comparison between alternative \VI methods \& \GVI (using axioms, embedding favourable properties, ... )
%}

%\jeremias{Recap modularity \& explain how different aspects can be encoded into the optimization objective}

%\jack{Section 4 of the current paper}
%\jack{Maybe add some of the plots from the appendix where we show how the different prior regularizers work}

\subsection{Theoretical properties of \GVI}
\label{sec:GVI_theory}

Clearly, the principal appeal of \GVI lies in its modularity and the associated subjective choices of $\ell$, $D$ and $\mathcal{Q}$.
Because of this increased need for the statistical modeller to provide sensible problem specification, one may worry about producing inferences that are non-sensical. 
The following section studies two theoretical findings that impose meaningful limits to the damage a badly specified \GVI posterior can do: 
Firstly, we point to novel results showing that \GVI posteriors are consistent in the frequentist sense: As more and more data are observed, the \GVI posterior will collapse to the population optimum regardless of $D$.
Secondly, we show that \GVI posteriors with uncertainty quantifiers based on robust divergences can be interpreted as approximations to Bayesian posteriors with a power likelihood.

\begin{figure}[t!]
%\vskip -2.0cm
\begin{center}
\includegraphics[trim= {2.95cm 0.25cm 3.95cm 0.9cm}, clip,  
 width=1.00\columnwidth]{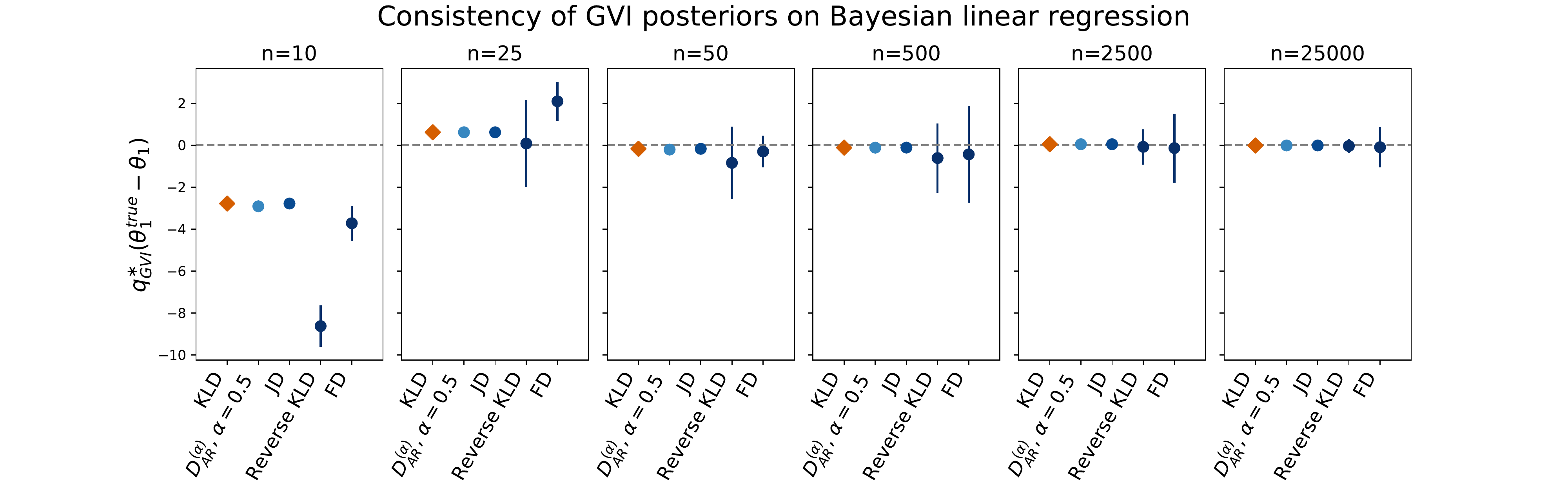}
\caption{
\bv
Marginal \VIColor{\textbf{\VI}} and different \GVIColor{\textbf{\GVI}} posteriors for the first coefficient of a simulated $20$-dimensional Bayesian Linear Model based on $n$ observations.
The loss $\ell$ is the correctly specified negative log likelihood of the true data generating mechanism and the uncertainty quantifier is varied along the $x$-axis. Depicted uncertainty quantifiers are the forward and reverse \KLD, R\'enyi's $\alpha$-divergence (\RAD), %the $\alpha$-divergence (\AD), 
Jeffrey's Divergence (JD) as well as the Fisher Divergence (FD).
All posteriors are members of the mean field normal family $\mathcal{Q}$.
Because all inferred posterior beliefs are normals, dots are used to mark out the posterior mean and whiskers to denote the posterior standard deviation.
All posteriors are re-centered around the true value of the coefficient, so that the $y$-axis shows how far the posterior belief is from the truth.
}
\label{Fig:convergence_plot}
%\vskip -0.25in
\end{center}
\end{figure}

%\jack{In Fig. \ref{Fig:convergence_plot} the variances seem to increase as n increases which is pretty weird, any idea whats going on there?} \jeremias{My reading is that for the choices of $D$ where this holds, the prior weight is much larger than previously. So at the beginning, we are very confident about the prior and do not have increased uncertainty. As more data arrives, we begin to doubt the prior and slowly adjust to the realization that the data and the prior do not cohere. Accordingly, our uncertainty then increaeses. Once we start focusing more on the data (rather than the prior), we finally will begin to shrink around the risk minimizer. You can see this behaviour occuring with the FD or reverse KLD (at different speeds), and the ED just about manages to start not ignoring the data at n=5000} \jack{Cool, and super nice explanation, not for this paper but I will think some more about whether/when this could be practically desirable}\theo{Fig 8 x-axis caption eats some of the legend letters like reverse. On the discussion above: this effect would be a function of the divergence hyper-parameter (if it has one) aswell as the divergence used - wondering if it will get picked up}

\subsubsection{Frequentist consistency}

\citet{GVIConsistency} shows that \GVI posteriors are consistent in the Frequentist sense. 
In other words, they collapse to a point mass at the population-optimal value as the number of observations tends to infinity.
This holds under a wide range of extremely mild regularity conditions on the arguments $\ell, D$ and $\mathcal{Q}$.
Here, we state a simple version of the result for independent data with the mean field normal variational family.
\begin{theorem}[Frequentist consistency of \GVI]
    Suppose that Assumption 1 in \citet{GVIConsistency} holds.
    Choosing $\mathcal{Q}$ to be the mean field normal family and letting $D$ be lower-semi-continuous in its first argument, suppose that $D(q\|\pi) < \infty$ for all $q \in \mathcal{Q}$.
    Further, let $\mathbb{P}_{\*x}$ be the true probability measure of some random variable $\*x$ and suppose that the observations $x_{1:n}$ are independent and identically distributed draws from $\*x$. 
    If the prior is not infinitely bad for the population of $\*x$ (which is to say that $\mathbb{E}_{\pi}\left[\mathbb{E}_{\mathbb{P}_{\*x}} \left[ \ell(\*\theta, \*x) \right]\right]< \infty$), then
    \begin{IEEEeqnarray}{rCl}
        q^{\ast}_{\GVI, n}(\*\theta)
        & \overset{D}{\longrightarrow} &
        \delta_{\*\theta^{\ast}}(\*\theta),
        \nonumber
    \end{IEEEeqnarray}
    where $q^{\ast}_{\GVI, n}(\*\theta)$  is the \GVI posterior corresponding to the problem $P(\ell, D, \mathcal{Q})$ based on $n$ observations and $\*\theta^{\ast} = \argmin_{\*\theta}\left[\mathbb{E}_{\mathbb{P}_{\*x}} \left[ \ell(\*\theta, \*x) \right]\right]$ is the population-optimal parameter value.
\end{theorem}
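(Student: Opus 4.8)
The plan is to exploit the $1/n$-rescaling of the \RoT objective and argue by epi-convergence ($\Gamma$-convergence). Since multiplying the \GVI objective by $1/n > 0$ leaves its minimizer unchanged, $q^{\ast}_{\GVI, n}(\*\theta)$ also minimizes over $\mathcal{Q}$ the rescaled functional
\[
F_n(q) \;=\; \mathbb{E}_{q(\*\theta)}\!\left[\tfrac{1}{n}\sum_{i=1}^n \ell(\*\theta, x_i)\right] + \tfrac{1}{n}D(q\|\pi).
\]
Writing $R(\*\theta) = \mathbb{E}_{\mathbb{P}_{\*x}}[\ell(\*\theta, \*x)]$ for the population risk, I expect $F_n$ to epi-converge, almost surely along the data stream, to $F_\infty(q) = \mathbb{E}_{q(\*\theta)}[R(\*\theta)]$ on the weak closure $\overline{\mathcal{Q}}$ of the mean field normal family. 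That closure contains every Dirac mass $\delta_{\*\theta}$, because a product of normals with common mean $\*\theta$ and component variances tending to zero converges weakly to $\delta_{\*\theta}$. Over $\overline{\mathcal{Q}}$ the functional $F_\infty$ satisfies $F_\infty(q) = \int R\,dq \geq R(\*\theta^{\ast})$ with equality exactly when $q$ is supported on the set of population minimizers, so under the identifiability built into Assumption~1 of \citet{GVIConsistency} its unique minimizer is $\delta_{\*\theta^{\ast}}$. The statement then follows from the standard fact that minimizers of an epi-convergent sequence converge to the minimizer of the limit, once the minimizing sequence is shown to be tight.

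First I would establish the two epi-convergence inequalities. For the \emph{liminf} inequality, take any sequence $q_n$ converging weakly to some $q$ in $\overline{\mathcal{Q}}$; since $D \geq 0$ one has $F_n(q_n) \geq \mathbb{E}_{q_n}[\tfrac1n\sum_i \ell(\*\theta, x_i)]$, the uniform law of large numbers supplied by Assumption~1 gives $\sup_{\*\theta}\big|\tfrac1n\sum_i \ell(\*\theta, x_i) - R(\*\theta)\big| \to 0$ almost surely, so this equals $\mathbb{E}_{q_n}[R] + o(1)$, and weak lower semicontinuity of $q \mapsto \mathbb{E}_q[R(\*\theta)]$ (Portmanteau/Fatou, using that $R$ is lower semicontinuous and bounded below, which Assumption~1 also secures) yields $\liminf_n F_n(q_n) \geq F_\infty(q)$; here the lower semicontinuity hypothesis on $D$ is what makes each $F_n$ itself weakly lower semicontinuous, so that the \GVI minimizer $q^{\ast}_{\GVI,n}$ exists and the argument is well posed. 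For the \emph{limsup} (recovery) inequality, construct $q_n = \prod_{j=1}^d \mathcal{N}(\theta_j \mid \theta^{\ast}_j, \sigma_n^2) \in \mathcal{Q}$ with $\sigma_n \downarrow 0$ chosen to decay slowly enough that $\tfrac1n D(q_n\|\pi) \to 0$; such a choice is available because $D(q_n\|\pi) < \infty$ for every $n$ by hypothesis and, under Assumption~1, grows sub-linearly in $n$ along a slowly shrinking sequence (for $D = \KLD$ against a Gaussian-type prior $D(q_n\|\pi) = O(d\log(1/\sigma_n))$, so even $\sigma_n = 1/\log n$ suffices). Together with $q_n$ converging weakly to $\delta_{\*\theta^{\ast}}$, the uniform LLN, and continuity of $R$ at $\*\theta^{\ast}$, this gives $\limsup_n F_n(q_n) \leq R(\*\theta^{\ast})$.

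Combining the two inequalities yields $\inf_{q \in \mathcal{Q}} F_n(q) \to R(\*\theta^{\ast})$, so $q^{\ast}_{\GVI, n}$ is an asymptotically minimizing sequence for $F_\infty$ over $\overline{\mathcal{Q}}$. The remaining step is tightness: from $F_n(q^{\ast}_{\GVI,n}) \leq F_n(q_n) \to R(\*\theta^{\ast}) < \infty$, together with the prior-not-infinitely-bad condition $\mathbb{E}_\pi[R(\*\theta)] < \infty$ and the growth conditions of Assumption~1 (which make $R$ coercive), no posterior mass can escape to infinity, so the sequence $\{q^{\ast}_{\GVI,n}\}$ is tight. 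Hence every weakly convergent subsequence has a limit that minimizes $F_\infty$ over $\overline{\mathcal{Q}}$, and by uniqueness that limit must be $\delta_{\*\theta^{\ast}}$; since all subsequential limits agree, the whole sequence converges, i.e. $q^{\ast}_{\GVI, n}(\*\theta) \overset{D}{\longrightarrow} \delta_{\*\theta^{\ast}}(\*\theta)$.

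I expect the main obstacle to be the interface between the divergence term and the loss: one needs the uniform law of large numbers for $\tfrac1n\sum_i \ell(\*\theta, x_i)$ to hold on a region large enough to capture where the \GVI posteriors actually concentrate, and \emph{simultaneously} a recovery sequence whose divergence-to-prior grows strictly slower than $n$ while remaining tight --- precisely the regularity content that Assumption~1 of \citet{GVIConsistency} is designed to package. A secondary subtlety is that when $\*\theta^{\ast}$ is not unique (as in the label-switching Bayesian mixture models discussed earlier), the conclusion should be read as convergence to the set of population minimizers, and one must identify which parts of Assumption~1 restore uniqueness or justify that reinterpretation.
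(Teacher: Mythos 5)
Your proposal cannot be compared line-by-line with an in-house argument, because the paper itself offers none: as the accompanying remark states, the theorem is proved simply by invoking Corollary 1 of \citet{GVIConsistency}, with Assumption 1 there packaging the regularity (existence of minimizers, finiteness of the population risk, coercivity, identifiability). Your $\Gamma$-convergence sketch is essentially a reconstruction of the strategy underlying that cited result --- rescale the objective by $1/n$, prove a liminf inequality using positivity of $D$ and lower semicontinuity of the population risk, build a recovery sequence of normals shrinking to $\delta_{\*\theta^{\ast}}$ whose divergence-to-prior is $o(n)$, and conclude via equi-coercivity/tightness --- so I would classify it as correct in outline and faithful to the route the authors delegate to. Two refinements are worth recording. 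First, the sublinear growth of $D$ along the recovery sequence does not need to be extracted from Assumption 1 at all: since $D(q_{\sigma}\|\pi)<\infty$ for every fixed $\sigma>0$ by hypothesis, a diagonal choice of $\sigma_n\downarrow 0$ (take $\sigma_n = 1/k$ for $n$ in a sufficiently late block depending on $D(q_{1/k}\|\pi)$) already guarantees $D(q_{\sigma_n}\|\pi)/n\to 0$; your $O(d\log(1/\sigma_n))$ computation is only a \KLD-specific illustration. Second, the global uniform law of large numbers $\sup_{\*\theta}|\frac1n\sum_i\ell(\*\theta,x_i)-R(\*\theta)|\to 0$ is stronger than what one can generally expect; the standard fix, and the one implicit in the cited Assumption 1, is a local uniform LLN on compacts combined with coercivity of the losses to confine posterior mass, which is exactly where your tightness step should be fused with the liminf step rather than treated separately. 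Your caveat about non-uniqueness of $\*\theta^{\ast}$ is well placed: the theorem's statement presupposes a unique, well-separated population minimizer, and this is again part of what Assumption 1 is assumed to supply.
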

\begin{remark}
    This Theorem is a straightforward invocation of Corollary 1 in \citet{GVIConsistency}. Assumption 1 guarantees a number of conditions that are required to make  \GVI  a well-defined optimization problem. For example, it ensures that the sum of the losses has minimizers for any finite $n$ and in the large data limit and that the loss expected under $\mathbb{P}_{\*x}$ is finite.
\end{remark}
This finding is illustrated in Figure \ref{Fig:convergence_plot}, which is taken from \citet{GVIConsistency}.
The plot shows that as the theory suggests, the posteriors collapse to a point mass under mild regularity conditions on the uncertainty quanitfier $D$. 
Unsurprisingly, speed and nature of the convergence depends on the choice of $D$.

%To take an extreme example of this phenomenon, \GVI based on the exponential divergence (ED) converges to a point mass only very slowly and produces posterior beliefs that are close to the mode of the prior belief until $n=2500$.
%
%Once $n=2500$, the data starts contradicting the prior strongly enough to shift the posterior belief in the neighbourhood of the empirical risk minimizer. 
%
%However, because the weight of the ED between posterior and prior remains very important relative to the weight of the data the posterior variance shrinks very slowly. 
%
%In fact, the amount by which it shrinks from $n=2500$ to $n=25000$ is impossible to discern visually.
%
%While \GVI posteriors based on both the Fisher Divergence (FD) and the reverse \KLD follow the same pattern as those based on the ED, all of them converge much slower than standard \VI. \jack{Do you need the while at the start of this sentence?}
%\jeremias{Under construction, as the ED that I used to produce the plots was actually inaccurately computed \& I will probably leave it out entirely.}

%: While most \GVIColor{\textbf{\GVI}} posteriors are tightly centered around the true value after $n=50$ observations, the exponential divergence is far away even for $n=500$.

\subsubsection{\GVI as a posterior approximation}

Although the axiomatic development in Section \ref{sec:axioms} shows that \GVI produces a posterior belief distribution that is valid in its own right, 
one can also interpret certain \GVI posteriors as approximations to (generalized) Bayesian posteriors as in eq. \eqref{eq:generalized_bayes_posterior}.
%\theo{this connects to my "approximate" comments before - but i agree it might be confusing to give both views in one sentence like "computationally constrained approximate posterior" when you have just been discussing RoT.}
%
In particular, we show that for robust divergences $D_{\text{robust}}^{(\rho)}$ parameterized by some hyperparameter $\rho$ so that $\lim_{\rho \to 1} D_{\text{robust}}^{(\rho)} = \KLD$, the \GVI objective constitutes a lower bound on the evidence of a (generalized) Bayesian posterior.
Results of this form can be shown to hold for R\'enyi's $\alpha$-divergence (\RAD), the $\gamma$-divergence (\GD) as well as the $\beta$-divergence (\BD). 
As they are structurally similar, we only state the bound corresponding to $D=\RAD$ and defer the results for \GD and \BD as well as all proofs to Appendix \ref{Appendix:lower_bound_proofs}.

\begin{theorem}[\GVI as approximate Evidence Lower bound with $D=\RAD$]
%The objective function associated with the \RoT $P(\RAD, \ell_n, Q)$ can be used to lower bound the marginal loss-likelihood (normalising constant) of the \GBI~ posterior  in the following way:
%
The objective of a \GVI posterior based on $P(\ell, \RAD, \mathcal{Q})$ has an interpretation as lower bound on the $c(\alpha)$-scaled (generalized) evidence lower bound of $P(w(\alpha)\cdot \ell, \KLD, \mathcal{P}(\*\Theta))$:

\begin{IEEEeqnarray}{rCl}
    \mathbb{E}_{q(\*\theta)}\left[\sum_{i=1}^n\ell(\*\theta,\*x_i)\right]
    +
    \RAD(q||\pi)
    & \geq & 
    -c(\alpha) \cdot \ELBO^{w(\alpha)\ell}(q) + S_1(\alpha, q, \pi)
    \label{Equ:LowerBoundMarginalLossLikelihoodRAD}
    %&&c(\alpha)\KLD(q(\*\theta)||q^{w(\alpha)\ell}_B(\theta)) - c(\alpha)\log\int\pi(\*\theta)\exp(-w(\alpha)\ell(\*\theta,x))d\*\theta+S(\alpha, q, \pi)\nonumber
    %\nonumber
\end{IEEEeqnarray}
where $\ELBO^{w(\alpha)\ell}$ denotes the Evidence Lower Bound associated with standard \VI relative to the generalized Bayesian posterior given by
\begin{IEEEeqnarray}{rCl}
    q^{w(\alpha)\ell}_B(\*\theta)& \propto &
    \pi(\*\theta)\exp\left(-w(\alpha)\sum_{i=1}^n\ell(\*\theta,\*x_i)\right),
    \nonumber
\end{IEEEeqnarray}
where $c(\alpha) = \min\{1, \alpha^{-1}\}$, $w(\alpha) = \max\{1, \alpha\}$ and $S_1(\alpha, q, \pi)$ is an interpretable slack term.
%
%where $q^{w(\alpha)\ell}_B(\theta)=\frac{\pi(\*\theta)\exp\left(-w(\alpha)\sum_{i=1}^n\ell(\*\theta,\*x_i)\right)}{\int\pi(\*\theta)\exp\left(-w(\alpha)\sum_{i=1}^n\ell(\*\theta,\*x_i)\right)d\*\theta}$   is the solution to the optimisation problem Eq. \eqref{eq:Zellner}.
\label{Thm:LowerBoundMarginalLossLikelihoodRAD}
\end{theorem}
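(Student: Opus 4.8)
The plan is to reduce eq.~\eqref{Equ:LowerBoundMarginalLossLikelihoodRAD} to a purely divergence-theoretic inequality by first unwinding the tempered evidence lower bound and then exploiting a cancellation between the loss coefficients on the two sides. First I would write $\ELBO^{w(\alpha)\ell}(q)$ explicitly: repeating the Jensen's-inequality computation that produced eq.~\eqref{Equ:EvidenceLowerBound}, but with $\ell$ replaced by the tempered loss $w(\alpha)\ell$, gives
\[
\ELBO^{w(\alpha)\ell}(q) \;=\; -\,w(\alpha)\,\mathbb{E}_{q(\*\theta)}\!\left[\sum_{i=1}^n\ell(\*\theta,\*x_i)\right] \;-\; \KLD(q\|\pi),
\]
the normalizer of $q^{w(\alpha)\ell}_B(\*\theta)$ dropping out exactly as in the proof of Theorem~\ref{thm:Zellner}. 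Multiplying through by $-c(\alpha)$ turns the right-hand side of eq.~\eqref{Equ:LowerBoundMarginalLossLikelihoodRAD} into $c(\alpha)w(\alpha)\,\mathbb{E}_{q(\*\theta)}[\sum_{i}\ell(\*\theta,\*x_i)] + c(\alpha)\KLD(q\|\pi) + S_1(\alpha,q,\pi)$.

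Second, I would record the elementary identity $c(\alpha)w(\alpha) = \min\{1,\alpha^{-1}\}\cdot\max\{1,\alpha\} = 1$, checked in the two cases $\alpha\ge 1$ (where $c(\alpha)=\alpha^{-1}$, $w(\alpha)=\alpha$) and $\alpha\le 1$ (where $c(\alpha)=w(\alpha)=1$). Hence the expected-loss terms on the two sides of eq.~\eqref{Equ:LowerBoundMarginalLossLikelihoodRAD} are identical and cancel, and the claim collapses to the divergence inequality $\RAD(q\|\pi)\ge c(\alpha)\,\KLD(q\|\pi) + S_1(\alpha,q,\pi)$. I would then define the slack term as $S_1(\alpha,q,\pi) := \RAD(q\|\pi) - c(\alpha)\,\KLD(q\|\pi)$, so the bound holds with equality, and devote the rest of the argument to showing that $S_1$ is interpretable and, for $\alpha>1$, nonnegative.

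For the interpretability and sign, I would substitute the definition \eqref{eq:RAD}, writing $\RAD(q\|\pi) = \tfrac{1}{\alpha(\alpha-1)}\log\mathbb{E}_{q(\*\theta)}\big[(\pi(\*\theta)/q(\*\theta))^{1-\alpha}\big]$ and $\KLD(q\|\pi) = -\,\mathbb{E}_{q(\*\theta)}\big[\log(\pi(\*\theta)/q(\*\theta))\big]$. For $\alpha>1$ one has $c(\alpha)=\alpha^{-1}=\tfrac{\alpha-1}{\alpha(\alpha-1)}$, so after pulling out $\tfrac{1}{\alpha(\alpha-1)}$ the slack is exactly a rescaled Jensen gap,
\[
S_1(\alpha,q,\pi) \;=\; \frac{1}{\alpha(\alpha-1)}\left(\log\mathbb{E}_{q(\*\theta)}\!\Big[\big(\tfrac{\pi(\*\theta)}{q(\*\theta)}\big)^{1-\alpha}\Big] - \mathbb{E}_{q(\*\theta)}\!\Big[\log\big(\tfrac{\pi(\*\theta)}{q(\*\theta)}\big)^{1-\alpha}\Big]\right),
\]
which is nonnegative because $\log$ is concave and $\alpha(\alpha-1)>0$; this simultaneously proves $\RAD(q\|\pi)\ge \tfrac1\alpha\KLD(q\|\pi)$ and identifies $S_1$ as a measure of the concentration of $(\pi(\*\theta)/q(\*\theta))^{1-\alpha}$ under $q$. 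The $\gamma$- and $\beta$-divergence versions deferred to Appendix~\ref{Appendix:lower_bound_proofs} should follow the same three-move template, with the Jensen gap replaced by the appropriate concavity/convexity estimate.

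The main obstacle is not any single hard step but a cluster of sign-bookkeeping hazards: the factor $\alpha(\alpha-1)$ changes sign at $\alpha=1$, so every inequality obtained by dividing through by it must be handled in the two branches separately, and in the branch $\alpha\in(0,1)$ the tempered posterior $q^{w(\alpha)\ell}_B$ is just the ordinary (generalized) Bayesian posterior ($w(\alpha)=c(\alpha)=1$) and the slack $S_1=\RAD(q\|\pi)-\KLD(q\|\pi)$ need not be sign-definite, so the statement there is an identity rather than a strict gain. The remaining care needed is (i) confirming the tempered-posterior normalizer really drops out of $\ELBO^{w(\alpha)\ell}$, which it does by the argument of Theorem~\ref{thm:Zellner}, and (ii) supplying the mild integrability hypotheses ($q\ll\pi$ and finiteness of the relevant expectations, i.e.\ the regime where the \GVI objective is finite) under which every displayed quantity is well defined.
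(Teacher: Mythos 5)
Your proposal is correct and follows essentially the same route as the paper's Appendix~\ref{Appendix:lower_bound_proofs} proof: the substantive step in both is Jensen's inequality applied to the logarithm in the definition of \RAD{} (giving $\RAD(q\|\pi)\ge \alpha^{-1}\KLD(q\|\pi)$ for $\alpha>1$), while the case $\alpha\in(0,1)$ is a pure algebraic identity with slack $\RAD(q\|\pi)-\KLD(q\|\pi)$. The only difference is bookkeeping: you keep the Jensen gap inside $S_1$ for $\alpha>1$ so the bound becomes an identity, whereas the paper discards that nonnegative gap and sets $S_1=0$ there; both are consistent with the stated inequality.
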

\begin{remark}
     The take-away from the bound in eq. \eqref{Equ:LowerBoundMarginalLossLikelihoodRAD} is that the slack term $S_1(\alpha, q, \pi)$ 
    %there will be some slack term $\slack(q, \pi, \alpha, \ell, x_{1:n})$ satisfying that
    % %
    % \begin{IEEEeqnarray}{rCl}
    %     \mathbb{E}_{q(\*\theta)}\left[\sum_{i=1}^n\ell(\*\theta,\*x_i)\right]
    %     +
    %     \RAD(q||\pi)
    %     & \overset{!}{=} & 
    %     c(\alpha) \cdot \ELBO^{w(\alpha)\ell}(q) + \slack(q, \pi, \alpha, \ell, x_{1:n}).
    %     \nonumber
    % \end{IEEEeqnarray}
    % %
    % In fact, 
    %it is this very slack term 
    %(which we know to be $\geq 0$ from Theorem \ref{Thm:LowerBoundMarginalLossLikelihoodRAD}) 
     introduces the main difference between $P(\ell, \RAD, \mathcal{Q})$ and $P(w(\alpha)\cdot\ell, \KLD, \mathcal{Q})$.
    As Appendix \ref{Appendix:lower_bound_proofs} shows, is possible but rather tedious to make analytically more concise statements about $S_1(\alpha, q, \pi)$.
    Specifically, the main function of the slack term  is the introduction of more conservative uncertainty quantification.
    %However, the main interpretation of the slack term that is developed in the Appendix can still be found by studying it empirically. 
    %
    Studying this empirically reveals that for R\'enyi's $\alpha$-divergence, this has the effect of enabling prior robust inference.
    This point is demonstrated in Figure \ref{Fig:prior_misspec_BMM} and elaborated upon in Section \ref{sec:GVI_prior_misspecification}, but is perhaps best summarized in Figure \ref{Fig:prior_robustness_thm}: Since $c(\alpha) = 1$ for $\alpha \in (0,1)$, the left hand side of the plot corresponds to $P(\ell, \RAD, \mathcal{Q})$ and the right-hand side to $P(w(\alpha)\cdot\ell, \KLD, \mathcal{Q})$.
    %\jeremias{Mention that only minimizing }
    %
    The plot shows the difference between minimizing the \GVI and the \ELBO objectives in eq. \eqref{Equ:LowerBoundMarginalLossLikelihoodRAD}: relative to the \ELBO objective $P(w(\alpha)\cdot\ell, \KLD, \mathcal{Q})$, the belief distributions $P(\ell, \RAD, \mathcal{Q})$ based on the \RAD are much less sensitive to badly specified priors.
    In fact, the \GVI posteriors implicitly determine if it is worth taking the prior into account:
    On the one hand, the uncertainty quantification with $D=\RAD$ is \textbf{less affected} than standard \VI for badly specified priors. On the other hand, \GVI posteriors with $D=\RAD$  are still \textbf{more conservative} than standard \VI for well specified priors. 
\end{remark}

\begin{figure}[t!]
%\vskip -2.0cm
\begin{center}
\includegraphics[trim= {3.2cm 0.5cm 3.7cm 0.35cm}, clip,  
 width=1.00\columnwidth]{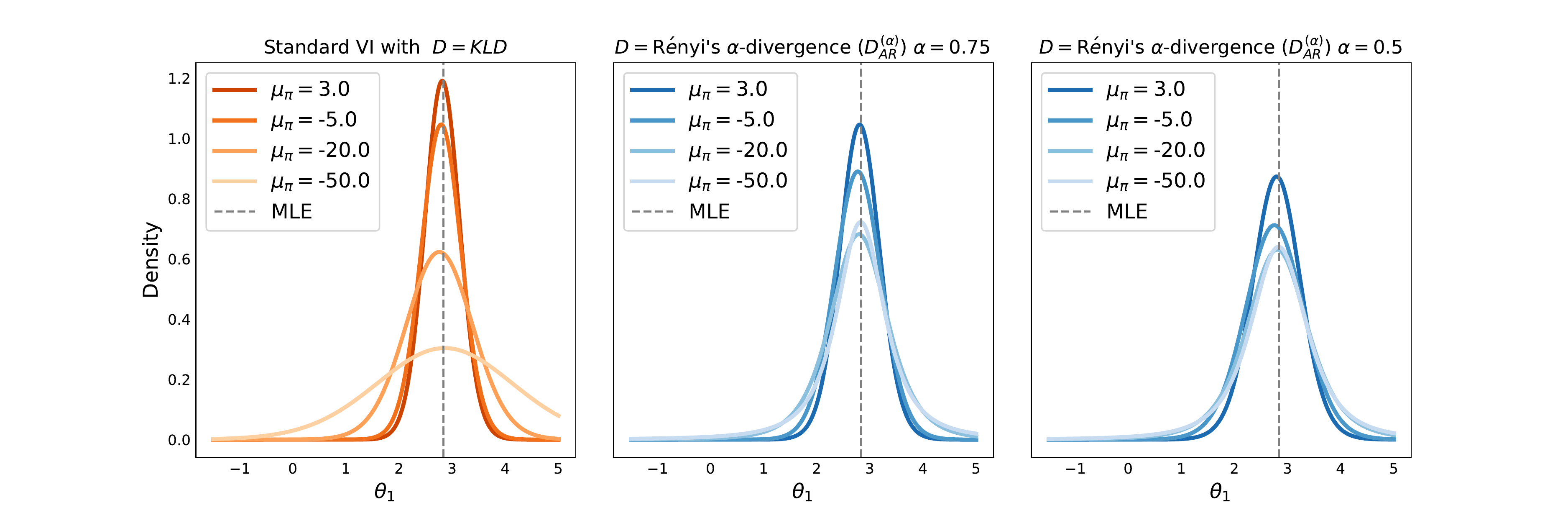}
\caption{
\bv
Marginal \VIColor{\textbf{\VI}} compared to different \GVIColor{\textbf{\GVI}} posteriors for the coefficient $\theta_1$ of data simulated from a $d$-dimensional Bayesian linear model with different priors (see Appendix \ref{Appendix:uncertainty_quantification} for details).
The prior for the coefficients is a Normal Inverse Gamma distribution given by $\*\mu \sim \mathcal{NI}^{-1}(\mu_{\pi}\cdot 1_d, v_{\pi}\cdot I_d, a_{\pi}, b_{\pi})$ with $v_{\pi} = 4 \cdot I_d$, $a_{\pi} = 3$, $b_{\pi} = 5$ and various values for $\mu_{\pi}$.
For all posteriors, the loss $\ell$ is the correctly specified negative log likelihood of the true data generating mechanism. 
Further, all variational posteriors are constrained to lie inside a mean field normal family $\mathcal{Q}$.
Notice that the \VIColor{\textbf{standard \VI}} posterior corresponds to the \ELBO component on the right hand side of the bound in eq. 
\eqref{Equ:LowerBoundMarginalLossLikelihoodRAD}. In contrast, the \GVIColor{\textbf{\GVI}} posteriors are obtained by maximizing the left hand side of the same bound.
}
\label{Fig:prior_robustness_thm}
%\vskip -0.25in
\end{center}
\end{figure}

\subsection{Inference with Generalized Variational Inference (\GVI)}
\label{sec:GVI_inference}

This section outlines two powerful inference strategies for \GVI: Quasi-conjugate and fully black box inference.
Built on earlier findings in \citet{RBOCPD}, we show that a class of \GVI posteriors based on robust likelihood scoring rules admits closed form objectives. Because this closed form objective emerges when the non-robust counterpart of the likelihood is conjugate to the prior, we call the resulting inference procedure quasi-conjugate.
For more complicated models, closed form objectives will not be available. To address this, we also introduce a black box inference procedure for arbitrary choices of $\ell$ and $D$.

\subsubsection{Quasi-conjugate inference}
\label{sec:quasi-conjugate-inference}

%\jeremias{Mention results from BOCPD: If you use \KLD and have conjugate family, you get closed form \ELBO.}

%\jack{Well motivated variational family, assume some kind of smoothness as $\gamma\rightarrow 1$, turns out to also provide closed form objective function. }

%\jack{Can bring in the stuff about the priro and the variational family being in the same family meaning the divergence is almost always available in closed form}

This paper so far has given little attention to specifying the constraining family for a \GVI problem.
One interesting interdependence between loss function and variational family was studied in \citet{RBOCPD}: 
When applying the robust scoring rule $\mathcal{L}^{\beta}$  \citep{BasuDPD} derived from the $\beta$-divergence (\BD) to a likelihood associated with a conjgate prior $\pi(\theta|\kappa_0)$, there is a considerable advantage in taking $\mathcal{Q}$ to be the family of the conjugate prior. 
Since $\mathcal{L}^{\beta}(\*\theta, x_i) \to -\log p(x_i|\*\theta)$ as $\beta \to 1$, the (generalized) Bayesian posterior 
\begin{IEEEeqnarray}{rCl}
    q_{\text{B}}^{\beta}(\*\theta) & \propto & \pi(\*\theta)\prod_{i=1}^n\exp\left\{ - \mathcal{L}^{\beta}(\*\theta, x_i) \right\}
    \nonumber
\end{IEEEeqnarray}
is contained in $\mathcal{Q}$  as $\beta \to 1$.
The results in \citet{RBOCPD} and intuition thus suggest that so long as $|\beta - 1| < \varepsilon$ for some sufficiently small value $\varepsilon > 0$ and $D= \KLD$, constraining the posterior to be in $\mathcal{Q}$ still produces excellent approximations to $q_{\text{B}}^{\beta}(\*\theta)$. 
%A range of results in \citet{RBOCPD} confirm that this is indeed the case 
%\jack{Mention our numerical studies in the RBOCPD paper showig that this is true?}
%\jeremias{Unclear what this adds}
%
%Unsurprisingly, this intuition can be confirmed to hold in practice \citep{RBOCPD}.
%
%This variational family can be seen as well motivated. As $\beta\rightarrow^+1$ the $\ell^{(\beta)}_B(x,\theta)\rightarrow-\log f(x|\theta)$ so for $\beta=1$ the exact posterior $q_B^{\ast}(\theta)$ is contained with $\mathcal{Q}$. As a result, the solution to the constrained optimisation will be $q_B^{\ast}(\theta)$. We then expect that there is some degree of `smoothness' in how the exact posterior $q_B^{\ast}(\theta)$ behaves as we take $\beta$ above 1, and therefore that the variational family will be a reasonable approximation to this exact solution. This assumption of smoothness is investigated in \cite{RBOCPD} where the \GVI posterior are found both visually and numerically to provide outstanding approximations of the exact posterior provided $\beta$ is not too large.  We call such a procedure quasi-conjugate Bayesian updating. 
%
Beyond the approximation quality, choosing the quasi-conjugate variational family offers tangible computational advantages: As Theorem 2 in \citet{RBOCPD} shows, under these circumstances the objective defining $P(\mathcal{L}^{\beta}, \KLD, \mathcal{Q})$ is available in closed form.
Consequently, no sampling or approximation is required and the optimum is usually found within a very small number of iterations. 

Complementing this finding, Proposition \ref{Thm:QuasiConjugacyGD} extends quasi-conjugacy to the robust scoring rule $\mathcal{L}^{\gamma}$ derived from the $\gamma$-divergence (\GD) as provided for in \citet{GammaDivSummable}.
Similarly to  $\mathcal{L}^{\beta}$,  $\mathcal{L}^{\gamma}(\*\theta, x_i) \to -\log p(x_i|\*\theta)$ as $\gamma \to 1$, so that the same intuition that applied to $\mathcal{L}^{\beta}$ also applies here.
Note that the conditions for $\mathcal{L}^{\gamma}$ in Proposition \ref{Thm:QuasiConjugacyGD} are slightly more restrictive than those derived for $\mathcal{L}^{\beta}$.
This is due to the fact that while the same integral term appears in both, it is additive for  $\mathcal{L}^{\beta}$ but multiplicative for  $\mathcal{L}^{\gamma}$.
While the proof is conceptually straightforward, it is notationally cumbersome and thus deferred to Appendix \ref{Appendix:quasi-conjugacy-GD}.

\begin{proposition}[Closed form \GVI objectives with $\mathcal{L}^{\gamma}$] 
\label{Thm:QuasiConjugacyGD}
Let $\mathcal{L}^{\gamma}(\*\theta, \cdot)$ be the $\gamma$-divergence based scoring rule for likelihood $p(\cdot|\*\theta)$. Suppose $p(\cdot|\*\theta)$ admits conjugacy relative to the exponential distributions given by $\mathcal{Q}$ and let the conjugate prior $\pi(\*\theta|\*\kappa_0) \in \mathcal{Q}$. 
Writing 
\begin{IEEEeqnarray}{rCl}
    p(x|\*\theta) & = & 
    h(x)\exp\left\{ g(x)^TT(\*\theta) - B(x) \right\},
    \nonumber \\
    q(\*\theta|\*\kappa) & = &
    h(\*\theta)\exp\left\{ \eta(\*\kappa)^T T(\*\theta) - A(\eta(\*\kappa)) \right\},
    \nonumber \\
    \mathcal{N} & = & \left\lbrace \*\kappa : 
    %\int   h(\*\theta)\exp\left\lbrace\eta(\*\kappa)^TT(\*\theta))\right\rbrace d\theta
    \exp\{A(\eta(\*\kappa))\}
    <\infty\right\rbrace,
    \nonumber
\end{IEEEeqnarray}
the objective of $P(\mathcal{L}^{\gamma}, \KLD, \mathcal{Q})$ has closed form if for observations $x_{1:n}$ and all $q \in \mathcal{Q}$
\begin{IEEEeqnarray}{rCCCCCCCl}
    I^{(\gamma)}(\*\theta) & = & \int_{\mathcal{X}}p(x|\*\theta)^{\gamma}dx,   %\nonumber \\
    \quad F_1(\*\kappa)& = &\int_{\*\Theta} T(\*\theta) q(\*\theta|\*\kappa) d\*\theta,   %\nonumber \\
    \quad F_2(\*\kappa)& = &\int_{\*\Theta} I^{(\gamma)}(\*\theta)^{\frac{1-\gamma}{\gamma}} q(\*\theta|\*\kappa) d\*\theta \nonumber
\end{IEEEeqnarray}
are closed form functions of $\*\theta$ and $\*\kappa$ and if for all $x_i$, 
$\left(\eta(\*\kappa)+(\gamma -1)g(x_i)\right)\in\mathcal{N}$.
\end{proposition}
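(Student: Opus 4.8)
The plan is to expand the \RoT objective $\mathbb{E}_{q(\*\theta)}\left[\sum_{i=1}^n \mathcal{L}^{\gamma}(\*\theta, x_i)\right] + \KLD(q\|\pi)$ for a generic $q = q(\cdot|\*\kappa) \in \mathcal{Q}$ and verify, term by term, that it is a closed-form function of $\*\kappa$. The prior-regularisation term is the easy one: since $q(\cdot|\*\kappa)$ and $\pi(\cdot|\*\kappa_0)$ both lie in the exponential family $\mathcal{Q}$, the standard identity gives
\[
    \KLD(q\|\pi) = \big(\eta(\*\kappa) - \eta(\*\kappa_0)\big)^{T} F_1(\*\kappa) - A(\eta(\*\kappa)) + A(\eta(\*\kappa_0)),
\]
which is closed form as soon as $F_1$ (and the log-partition $A$ of $\mathcal{Q}$) are. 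All the real work is therefore in the expected-loss term.

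For that term, I would first recall the analytic form of $\mathcal{L}^{\gamma}$ from \citet{GammaDivSummable}: as a function of $\*\theta$ it equals $-\frac{1}{\gamma-1}\, p(x|\*\theta)^{\gamma-1}\, I^{(\gamma)}(\*\theta)^{\frac{1-\gamma}{\gamma}}$ up to an additive term that does not depend on $\*\theta$, with $I^{(\gamma)}(\*\theta) = \int_{\mathcal{X}} p(x|\*\theta)^{\gamma}\,dx$. Substituting the exponential-family forms $p(x|\*\theta) = h(x)\exp\{g(x)^{T} T(\*\theta) - B(x)\}$ and $q(\*\theta|\*\kappa) = h(\*\theta)\exp\{\eta(\*\kappa)^{T} T(\*\theta) - A(\eta(\*\kappa))\}$ into $\mathbb{E}_{q(\*\theta)}\big[p(x_i|\*\theta)^{\gamma-1} I^{(\gamma)}(\*\theta)^{\frac{1-\gamma}{\gamma}}\big]$ and collecting exponents, the data-only factors $h(x_i)^{\gamma-1}$, $e^{-(\gamma-1)B(x_i)}$ and $e^{-A(\eta(\*\kappa))}$ pull out, and the $\*\theta$-integrand becomes $h(\*\theta)\exp\{(\eta(\*\kappa) + (\gamma-1)g(x_i))^{T} T(\*\theta)\}$ reweighted by $I^{(\gamma)}(\*\theta)^{\frac{1-\gamma}{\gamma}}$. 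It is here that conjugacy enters: it guarantees that $g(x_i)^{T}T(\*\theta)$ is linear in the \emph{same} sufficient statistic $T(\*\theta)$ that parameterises $\mathcal{Q}$, so the tilted integrand is an unnormalised member of $\mathcal{Q}$ with natural parameter $\eta(\*\kappa) + (\gamma-1)g(x_i)$ times the extra factor $I^{(\gamma)}(\*\theta)^{\frac{1-\gamma}{\gamma}}$. Writing $\widetilde{\*\kappa}_i$ for the parameter with $\eta(\widetilde{\*\kappa}_i) = \eta(\*\kappa) + (\gamma-1)g(x_i)$, the integral is exactly $\exp\{A(\eta(\widetilde{\*\kappa}_i))\}\, F_2(\widetilde{\*\kappa}_i)$. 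Re-attaching the pulled-out prefactors renders each $\mathbb{E}_{q(\*\theta)}[\mathcal{L}^{\gamma}(\*\theta, x_i)]$ in closed form; summing over $i$ and adding the closed-form \KLD completes the objective.

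The one genuine condition — as opposed to mere bookkeeping — is that the tilting step is legitimate, i.e. that $\int h(\*\theta)\exp\{(\eta(\*\kappa) + (\gamma-1)g(x_i))^{T} T(\*\theta)\}\,d\*\theta$ actually converges. This is precisely the hypothesis $(\eta(\*\kappa) + (\gamma-1)g(x_i)) \in \mathcal{N}$, which ensures $\exp\{A(\eta(\widetilde{\*\kappa}_i))\} < \infty$ and turns the formal factorisation into a valid one; the remaining hypotheses are simply that $A$, $F_1$ and $F_2$ are available in closed form. I would also flag, following the remark preceding the proposition, the structural reason the requirements are heavier than for $\mathcal{L}^{\beta}$: for the $\beta$-divergence the normaliser $\int p(\cdot|\*\theta)^{\beta}$ enters the scoring rule \emph{additively}, so its expectation is taken against the base parameter $\*\kappa$ and no tilting is needed; for $\mathcal{L}^{\gamma}$ it enters \emph{multiplicatively} through $p(x|\*\theta)^{\gamma-1}$, which forces the $\*\theta$-integral to be evaluated at the tilted parameter $\widetilde{\*\kappa}_i$, and this is exactly why both the extra closed-form demand on $F_2$ and the membership condition on $\mathcal{N}$ surface. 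The main obstacle, such as it is, is purely notational: carrying the data-only constants and the fractional powers of $I^{(\gamma)}$ through the computation without error — which is why I would defer the fully spelled-out algebra to Appendix \ref{Appendix:quasi-conjugacy-GD}.
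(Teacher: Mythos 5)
Your proposal is correct and follows essentially the same route as the paper's Appendix proof: the objective is split into the exponential-family $\KLD$ term (closed form via $A$ and $F_1$) and the per-observation expected losses, each of which is evaluated by absorbing $p(x_i|\*\theta)^{\gamma-1}$ into an exponentially tilted member of $\mathcal{Q}$ with natural parameter $\eta(\*\kappa)+(\gamma-1)g(x_i)$, yielding the ratio of normalisers together with $F_2$ at the tilted parameter, with the membership condition in $\mathcal{N}$ justifying the tilt exactly as you state. Your closing observation on why the multiplicative appearance of $I^{(\gamma)}$ forces the extra conditions also matches the paper's own remark, so nothing further is needed.
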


\subsubsection{Additional details on Black-Box \GVI (\BBGVI)}
\label{sec:black_box_GVI}

%Another advantage of \GVI's modularity is that it admits black box inference.
%
Standard \VI is scalable using doubly stochastic, model-agnostic optimization techniques \citep[e.g.][]{VISS, SVI, BBVI2, ControlVariateVI, AntitheticBBVI} collectively known as  black box \VI \citep{BBVI}. 
We extend these methods to black box \GVI (\BBGVI),
an inference algorithm directly inheriting the modularity of the posteriors defined by $P(\ell, D, \mathcal{Q})$.
This makes it easy to build \BBGVI into existing software: 
%We note that in contrast, most \FVI algorithms require 
%idiosyncratic inference methods. %\citep[e.g.][]{BBVI, RenyiDiv, AlphaDiv, ChiDiv, SABDiv, GeometricRenyiAlphaVI},
%In contrast, black box \GVI seamlessly integrates into existing black box \VI methods.
%
%\jeremias{Lines of code needed to change BNN/DSDGP}
For example, adapting the Deep Gaussian Process implementation of \citet{DeepGPsVI} required <100 lines of Python code. 
%changing $D$ in the Deep Gaussian Process example of section \ref{sec:Experiments} to R\'enyi's $\alpha$-divergence takes less than 30 lines of Python code. Similarly, implementing $\Lb$ and $\Lg$ takes less than 20 lines.
%

Suppose %that 
$\mathcal{Q} = \{q(\*\theta|\*\kappa): \*\kappa \in K \}$ 
%is a variational family indexed by $\*\kappa$ 
and that for all $(\*\kappa, \*\theta) \in (K, \*\Theta)$, 
    one can sample $\*\theta \sim q(\*\theta|\*\kappa)$. Suppose also that
    the derivatives $\nabla_{\*\kappa} \log(q(\*\theta|\*\kappa))$ and $\nabla_{\*\kappa} D(q||\pi)$ exist ($q$-almost surely).
    %(3) the derivative $\nabla_{\*\kappa} D(q||\pi)$ exists.
For many choices of $D$, $\mathcal{Q}$ and $\pi$, $\nabla_{\*\kappa} D(q||\pi)$ is available in closed form. In this case, \BBGVI is particularly attractive and \GVI posteriors can be computed through an unbiased gradient estimate given as
\begin{IEEEeqnarray}{lCr}
    \nabla_{\*\kappa} \hat{L}(q|\ell, D, \mathcal{Q}) & = & \frac{1}{S}\sum_{s=1}^S\left\{ 
            \sum_{i=1}^n\ell(\*\theta^{(s)}, x_i) \cdot     
        \nabla_{\*\kappa}\log(q(\*\theta^{(s)}|\*\kappa))  \right\}
        + 
        \nabla_{\*\kappa}D(q||\pi)
    \label{eq:GVI_L_estimate_1}
\end{IEEEeqnarray}
and relying on an independent sample $\*\theta^{(1:S)} \overset{i.i.d}{\sim} q(\*\theta|\*\kappa)$.
%
%This closed form exists for most $\alpha\beta\gamma$-divergences (Thm. \ref{Thm:ClosedFormMaster}), which recovers \RAD, \BD and \GD as limiting cases \citep[see][ and 1.1 in the Appendix]{SABDiv, ABCdiv}.
%
%Alternatively, 
If a closed form for $\nabla_{\*\kappa} D(q||\pi)$ is not available but  $D(q||\pi) = \mathbb{E}_{q(\*\theta|\*\kappa)}\left[ \ell^D_{\*\kappa, \pi}(\*\theta)\right]$ for a function $\ell^D_{\*\kappa, \pi}:\*\Theta \to \mathbb{R}$, one can use the alternative unbiased gradient estimate
\begin{IEEEeqnarray}{lCr}
      \nabla_{\*\kappa}\hat{L}(q|\ell, D, \mathcal{Q}) & = & \frac{1}{S}\sum_{s=1}^S\left\{ 
        \left[
            \sum_{i=1}^n\ell(\*\theta^{(s)}, x_i) +     \ell^D_{\*\kappa, \pi}(\*\theta^{(s)})
        \right]\cdot
        \nabla_{\*\kappa}\log(q(\*\theta^{(s)}|\*\kappa)) 
        + 
        \nabla_{\*\kappa}\ell^D_{\*\kappa, \pi}(\*\theta^{(s)})
    \right\}. \quad \:\:
    \label{eq:GVI_L_estimate_2}
\end{IEEEeqnarray}
This can be deployed for most divergences of interest, including the family of $f$-divergences.
In some cases however, divergences will not be linear in the argument $\*\theta$ so that one has $D(q\|\pi) = \tau\left(\mathbb{E}_{q(\*\theta|\*\kappa)}\left[ \ell^D_{\*\kappa, \pi}(\*\theta)\right]\right)$ for some non-linear function $\tau:\mathbb{R}\to\mathbb{R}$. In this case, \BBGVI can be performed based on the biased gradient estimate
\begin{IEEEeqnarray}{lCr}
    \nabla_{\*\kappa} \hat{L}(q|\ell, D, \mathcal{Q}) & = & \frac{1}{S}\sum_{s=1}^S\left\{ 
            \sum_{i=1}^n\ell(\*\theta^{(s)}, x_i) \cdot     
        \nabla_{\*\kappa}\log(q(\*\theta^{(s)}|\*\kappa))  \right\}
        +  \nonumber \\
    &&
        \tau\left(
            \frac{1}{S}\sum_{s=1}^S \ell^D_{\*\kappa, \pi}(\*\theta^{(s)})
        \right) 
        \cdot
        \frac{1}{S}\sum_{s=1}^S \nabla{\*\kappa} \ell^D_{\*\kappa, \pi}(\*\theta^{(s)}).
    \label{eq:GVI_L_estimate_3}
\end{IEEEeqnarray}
%
%$D_G^{(\gamma)}$ with $\gamma = c$ as the limit of $\gamma\to 0$ for $\alpha = 1, \beta = c$.
%
Any gradient form admits black box {variance reduction} through some control variate $h$ \citep[][]{BBVI, AntitheticBBVI}, see Appendix \ref{Appendix:BBGVI} for details. 
%
%We note in passing that while sufficiency of first order optimization depends on the interplay of $\ell_n$, $D$ and $\mathcal{Q}$, \GVI objectives are typically convex with respect to $q$ (see Section 2.5 in the Appendix).
%
Algorithm \ref{algorithm:BBVI} summarizes a generic \BBGVI procedure. 
%To perform inference, iterations of this form are repeated until some stopping criterion is satisfied.
%

\begin{algorithm}[ht!]
%\floatname{algorithm}{Algoritmo}
	\caption{\textbf{Black box \GVI (\BBGVI)}}
   \label{Algorithm_BOCPDMS}
\begin{algorithmic}[0]
   \State{}
   \State {\bfseries Input:} $x_{1:n}$, $\pi$, $D$, $\ell$, $\mathcal{Q}$, $h$, $\text{StoppingCriterion}$, $\*\kappa_0$, $K$, $S$, $t=0$, $\text{LearningRate}$
   \State{}
   \State $\text{done} \leftarrow \text{False}$
   \While{not $\text{done}$} %\\
   %\State %Draw $S$ samples from $q$\theo{smth wrong in latex here}
   \State{} 
    \State // STEP 1: Get a subsample from $x_{1:n}$ of size $K$
    \State $\rho_{1:K}\leftarrow \text{SampleWithoutReplacement}(1:n, K)$ 
    \State $x(t)_{1:K} \leftarrow x_{\rho_{1:K}}$ 
    \State{}
    \State // STEP 2: Sample from $q(\*\theta|\*\kappa_t)$ and compute losses
    \State $\*\theta^{(1:S)} \overset{i.i.d.}{\sim} q(\*\theta|\*\kappa_t)$
    \State $\ell_{i, s} \leftarrow \ell(\*\theta^{(s)}, x(t)_i) \cdot \nabla_{\*\kappa_t}\log q(\*\theta^{(s)}|\*\kappa_t)$ for all $s = 1, 2, \dots S$ and $i=1, 2, \dots, K$
    \State $\ell_s \leftarrow \frac{n}{K}\sum_{i=1}^K \ell_{i,s}$ for all $s = 1, 2, \dots S$
    \State{}
    \State // STEP 3: Compute uncertainty quantifier
    \If{$D(q\|\pi)$ admits closed form}
        \State $\ell_{s}  \leftarrow \ell_{s} + \nabla_{\*\kappa}D(q\|\pi)$ for all $s = 1, 2, \dots S$
    \ElsIf{$D(q\|\pi) = \mathbb{E}_{q}[\ell^D_{\*\kappa, \pi}(\*\theta)]$}
        \State $\ell_{s}  \leftarrow \ell_{s} + \ell^D_{\*\kappa, \pi}(\*\theta^{(s)})\nabla_{\*\kappa_t}\log q(\*\theta^{(s)}|\*\kappa_t) + \nabla_{\*\kappa_t}\ell^D_{\*\kappa_t, \pi}(\*\theta^{(s)})$ for all $s = 1, 2, \dots S$
    \ElsIf{$D(q\|\pi) = \tau\left(\mathbb{E}_{q}[\ell^D_{\*\kappa, \pi}(\*\theta)]\right)$}
        \State $\ell_s \leftarrow \ell_s + 
        \tau\left(
            \frac{1}{S}\sum_{s=1}^S \ell^D_{\*\kappa_t, \pi}(\*\theta^{(s)})
        \right) 
        \cdot
        \nabla{\*\kappa_t} \ell^D_{\*\kappa_t, \pi}(\*\theta^{(s)})
        $ for all $s = 1, 2, \dots S$
    \EndIf
    \State{}
    \State // STEP 4: Apply variance reduction via $h$ if desired
    \If{$h \neq \text{None}$}
        \State $h_s \leftarrow h(\*\theta^{(s)}, \ell_{s})$
        \State $\ell_{s} \leftarrow \ell_{s} - h_s$ \text{ for all } for all $s = 1, 2, \dots S$ 
    \EndIf
    \State{}
    \State // STEP 5: Update $\*\kappa_t$ and stopping criterion
    %\State Compute $\hat{a}^{\ast}$ as per eq. \eqref{eq:a_star_computation} %\\[0.05cm]
    \State $\rho_t \leftarrow \text{LearningRate}(t)$ %\\[0.075cm]
    %\State Estimate $\nabla_{\*\kappa}\widehat{L}(q|\*x, \ell_n, D)$ via eq. \eqref{eq:GVI_L_estimate} \\[0.05cm]
    \State $L \leftarrow  \frac{1}{S}\sum_{s=1}^S\ell_s$ %\\[0.05cm]
    \State $\*\kappa_{t+1} \leftarrow \*\kappa_t + \rho_t \cdot L$
    \State $\text{done} \leftarrow \text{StoppingCriterion}(\*\kappa_{t+1}, \*\kappa_t, t)$
    \State $t \leftarrow t + 1$
   \EndWhile
   \State{}
\end{algorithmic}
\label{algorithm:BBVI}
\end{algorithm}

As gradient estimates based on eq. \eqref{eq:GVI_L_estimate_1} will have strictly lower variance than estimates based on eq. \eqref{eq:GVI_L_estimate_2} or eq. \eqref{eq:GVI_L_estimate_3}, one may wonder under which conditions closed forms for $\nabla_{\*\kappa}D(q\|\pi)$ are available. Proposition \ref{proposition:closed_form_D} clarifies this for robust divergences.
\begin{proposition}[Closed form $D$]
    Let $q, \pi$ with natural parameters $\*\eta_q,\*\eta_{\pi}$ be in the exponential family 
    $\mathcal{Q} = \{ q(\*\theta|\*\eta) = h(\*\theta)\exp\left\lbrace\*\eta'T(\*\theta)-A(\*\eta)\right\rbrace: \eta \in \mathcal{N}\}$
    %$f(\*\theta|\*\eta)= h(\*\theta)\exp\left\lbrace\*\eta'T(\*\theta)-A(\*\eta)\right\rbrace$
    with natural parameter space $\mathcal{N}=\left\lbrace \*\eta:\exp\{A(\*\eta)\}<\infty\right\rbrace$. Then,  %$D_G^{(\alpha, \beta, \gamma)}(q||\pi)$ has closed form if %$D^{\alpha\beta\gamma}(q||\pi)$ then
    %(i) $(\alpha+\beta-1)\*\eta_q \in\mathcal{N}$, 
    %(ii) $(\alpha+\beta-1)\*\eta_{\pi}\in\mathcal{N}$,
    %(iii) $\left(\alpha \*\eta_q+(\beta-1)\*\eta_{\pi}\right)\in\mathcal{N}$,
    %(iv) $\mathbb{E}_{f(\*\theta;\*\eta)}\left[ h(\*\theta)^{\alpha+\beta-2}\right]$ has closed form for all $\*\eta \in\mathcal{N}$.
    %
    \begin{itemize}
        \myitem{(1)}
        \ADqp and \RADqp have a closed form if $\alpha \in (0,1)$ or if $\alpha\*\eta_q + (1-\alpha)\*\eta_{\pi} \in \mathcal{N}$
        \label{item:RAD_AD_closed_forms}
        \myitem{(2)}
        \BDqp has a closed form if $h(\*\theta) = h$ does not depend on $\*\theta$ and additionally, $(\beta-1)\cdot \*\eta_1+\*\eta_2 \in \mathcal{N}$ for any $\*\eta_1,\*\eta_2 \in \mathcal{N}$ (amongst others, this holds for Beta, Gamma, Gaussian, exponential or Laplace distributions)
        \label{item:BD_closed_forms}
        \myitem{(3)}
        \GDqp has closed form if \BDqp does for $\beta = \gamma$.
        \label{item:GD_closed_forms}
    \end{itemize}
    % (1)
    % \ADqp and \RADqp have closed form if $\alpha \in (0,1)$ or if $\alpha\*\eta_q + (1-\alpha)\*\eta_{\pi} \in \mathcal{N}$; \\
    % (2)
    % \BDqp has closed form if $h(\*\theta)$ does not depend on $\*\theta$ and $(\beta-1)\cdot \*\eta_1+\*\eta_2 \in \mathcal{N}$ for any $\*\eta_1,\*\eta_2 \in \mathcal{N}$ (which holds e.g. for Beta, Gamma, Gaussian, exponential, Laplace distributions); \\
    % (3) 
    % \GDqp has closed form if \BDqp does for $\beta = \gamma$.
    \label{proposition:closed_form_D}
\end{proposition}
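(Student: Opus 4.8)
The plan is to reduce each of the four divergences to a short list of ``cross-moment'' integrals $I(a,b) = \int_{\*\Theta} q(\*\theta)^{a}\pi(\*\theta)^{b}\,d\*\theta$ and then to evaluate these integrals in closed form by exploiting the exponential-family structure. The key observation is that
\[
    q(\*\theta)^{a}\pi(\*\theta)^{b} = h(\*\theta)^{a+b}\exp\left\{ (a\*\eta_q + b\*\eta_{\pi})'T(\*\theta) - aA(\*\eta_q) - bA(\*\eta_{\pi}) \right\},
\]
so $I(a,b)$ fails to reduce to the log-partition function for only two reasons: the base measure $h(\*\theta)$ appears to a non-unit power when $a+b\neq 1$, or the combined natural parameter $a\*\eta_q + b\*\eta_{\pi}$ leaves the natural parameter space $\mathcal{N}$. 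Concretely, if $a+b=1$ the $h$-terms collapse to the single factor $h(\*\theta)$ from the definition of $\mathcal{Q}$, and normalising the density $q(\cdot\,|\,a\*\eta_q+b\*\eta_{\pi})$ gives $I(a,b)=\exp\{A(a\*\eta_q+b\*\eta_{\pi})-aA(\*\eta_q)-bA(\*\eta_{\pi})\}$ whenever $a\*\eta_q+b\*\eta_{\pi}\in\mathcal{N}$; if instead $h(\*\theta)=h$ is constant, the same normalisation argument gives $\int_{\*\Theta}\exp\{\*\xi'T(\*\theta)\}\,d\*\theta = h^{-1}\exp\{A(\*\xi)\}$ for every $\*\xi\in\mathcal{N}$, so $I(a,b)=h^{a+b-1}\exp\{A(a\*\eta_q+b\*\eta_{\pi})-aA(\*\eta_q)-bA(\*\eta_{\pi})\}$ provided $a\*\eta_q+b\*\eta_{\pi}\in\mathcal{N}$. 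Everything else is a matter of recording which combinations $a\*\eta_q+b\*\eta_{\pi}$ appear in each divergence.

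For part \ref{item:RAD_AD_closed_forms}, both $D_A^{(\alpha)}(q\|\pi)$ and $D_{AR}^{(\alpha)}(q\|\pi)$ are, by their definitions, simple transformations of the single integral $I(\alpha,1-\alpha)$: eq. \eqref{eq:RAD} shows the R\'enyi case is $\log$ followed by a scaling, while the $\alpha$-divergence case is affine. Since $\alpha+(1-\alpha)=1$ the base measure enters linearly and $I(\alpha,1-\alpha)=\exp\{A(\alpha\*\eta_q+(1-\alpha)\*\eta_{\pi})-\alpha A(\*\eta_q)-(1-\alpha)A(\*\eta_{\pi})\}$ as soon as $\alpha\*\eta_q+(1-\alpha)\*\eta_{\pi}\in\mathcal{N}$. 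It then remains to recall that $\mathcal{N}$ is convex (a standard property of the natural parameter space, immediate from H\"older's inequality): for $\alpha\in(0,1)$ the point $\alpha\*\eta_q+(1-\alpha)\*\eta_{\pi}$ is a convex combination of $\*\eta_q,\*\eta_{\pi}\in\mathcal{N}$ and hence automatically in $\mathcal{N}$, whereas for $\alpha\notin(0,1)$ it is an extrapolation whose membership must be assumed. This is precisely the stated dichotomy.

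For part \ref{item:BD_closed_forms}, unfolding the definition of $D_B^{(\beta)}(q\|\pi)$ writes it as a fixed linear combination of the three integrals $I(\beta,0)=\int q^{\beta}$, $I(1,\beta-1)=\int q\,\pi^{\beta-1}$ and $I(0,\beta)=\int\pi^{\beta}$. Each has $a+b=\beta\neq 1$ in general, which forces the assumption that $h(\*\theta)=h$ is constant; with that, the reduction above makes each integral a closed-form function of $\*\eta_q,\*\eta_{\pi}$, valid provided $\beta\*\eta_q$, $\*\eta_q+(\beta-1)\*\eta_{\pi}$ and $\beta\*\eta_{\pi}$ all lie in $\mathcal{N}$. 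Each of these is of the form $(\beta-1)\*\eta_1+\*\eta_2$ with $\*\eta_1,\*\eta_2\in\{\*\eta_q,\*\eta_{\pi}\}$, so the single hypothesis ``$(\beta-1)\*\eta_1+\*\eta_2\in\mathcal{N}$ for all $\*\eta_1,\*\eta_2\in\mathcal{N}$'' suffices; one then checks directly that the Beta, Gamma, Gaussian, exponential and Laplace families have constant base measure and so fall under this case. Part \ref{item:GD_closed_forms} follows at once: $D_G^{(\gamma)}(q\|\pi)$ is built from exactly the same three integrals $I(\gamma,0)$, $I(1,\gamma-1)$, $I(0,\gamma)$, but combined through logarithms of products and ratios rather than a linear combination, and a logarithm of a product or ratio of closed-form positive quantities is again closed form; hence $D_G^{(\gamma)}(q\|\pi)$ is closed form exactly when $D_B^{(\beta)}(q\|\pi)$ is with $\beta=\gamma$.

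I expect the main obstacle to be bookkeeping rather than any genuine difficulty: one must keep careful track of which linear combinations of $\*\eta_q$ and $\*\eta_{\pi}$ arise, and notice that away from $\alpha\in(0,1)$, and away from $\beta,\gamma=1$, these are true extrapolations rather than convex combinations, so convexity of $\mathcal{N}$ no longer delivers tractability for free and the membership conditions have to be imposed as hypotheses, which is exactly why the statement carries those side conditions. The only other place the hypotheses bite is the non-unit power on $h(\*\theta)$ in the $\beta$- and $\gamma$-divergence integrals, which is what makes the constant-base-measure assumption unavoidable there but unnecessary for the $\alpha$-divergences. Verifying the decompositions of $D_A^{(\alpha)}$, $D_B^{(\beta)}$ and $D_G^{(\gamma)}$ into the integrals $I(a,b)$ is a routine unfolding of their standard definitions.
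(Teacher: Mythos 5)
Your proposal is correct and follows essentially the same route as the paper: both reduce each divergence to integrals of the form $\int q^{a}\pi^{b}\,d\*\theta$, recombine the natural parameters in canonical exponential-family form, use convexity of $\mathcal{N}$ to dispense with the membership condition when $\alpha\in(0,1)$, impose a constant base measure to kill the $h(\*\theta)^{\beta-1}$ factor for the $\beta$-divergence, and obtain the $\gamma$-divergence as a closed-form (logarithmic) function of the same three integrals. The only organizational difference is that the paper funnels everything through a general master result for the $\alpha\beta\gamma$-divergence and then specializes, whereas you treat each divergence directly, but the underlying computations and side conditions coincide.
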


\section{Experiments}
\label{sec:experiments}

Having introduced an inference strategy that is generic enough to work on high-dimensional, black box Bayesian models, the remainder of the paper studies \GVI on two applications of interest in Bayesian Deep Learning. 
Before doing so, notice that as indicated in Table \ref{table:inference_methods}, previous work constitutes various interesting special cases of \GVI with other strong empirical results \citep[e.g.,][]{AISTATSBetaDiv, RBOCPD, bVAE, MMDBayes, MartinJankowiak}
We add to this body of evidence by deploying \GVI on Bayesian Neural Networks (\BNNs) and Deep Gaussian Processes (\DGP{}s) to address the particular ways in which these two models challenge the assumptions underlying the standard Bayesian paradigm.
All code used for generating the experiments is available from \url{https://github.com/JeremiasKnoblauch/GVIPublic}.

\subsection{Bayesian Neural Network Regression}
\label{sec:experiments_BNN}

As alluded to in Example \ref{example:violation_P} and Section \ref{sec:GVI_prior_misspecification}, \BNN models should be expected to suffer from prior misspecification. 
Focusing on the regression case, we wish to alleviate this problem using \GVI's modularity and thus focus on varying $D$.
Accordingly, we fix the loss function to the usual negative log likelihood $\ell(\*\theta, y_i, x_i, \sigma^2) = - \log p_{\mathcal{N}}(y_i|x_i, \sigma^2, F(\*\theta))$ for 
\begin{IEEEeqnarray}{rCl}
    p_{\mathcal{N}}(y_i|x_i, \sigma^2, F(\*\theta)) & = & \mathcal{N}(y_i|F(\*\theta), x_i, \sigma^2),
    \nonumber
\end{IEEEeqnarray}
and choose $\mathcal{Q} = \mathcal{Q}_{\text{MFN}}$ as the normal mean field variational family given in eq. \eqref{eq:Q_MFN}.  
With this in hand, we compare three different constructions of posterior beliefs:
\begin{itemize}
    \myitem{(1)} \VIColor{\textbf{Standard \VI}} as described in Section \ref{sec:VI_opt}; 
    \myitem{(2)} \DVIColor{\textbf{\DVI}} methods introduced as approximations to the standard Bayesian posterior $q^{\ast}_{\text{B}}(\*\theta)$ that find $q^{\ast}_{\text{A}}(\*\theta) = \argmin_{q\in\mathcal{Q}}D(q\|q^{\ast}_{\text{B}}(\*\theta))$ with $D$ being the $\alpha$-divergence \citep{AlphaDiv}\footnote{
		We align the parameterization  of the \AD with the current paper, meaning $1 - \alpha_{\text{current}} = \alpha_{\text{H.-L. et al. (2016)}}$
    } and R\'enyi's $\alpha$-divergence \citep{RenyiDiv};
    \myitem{(3)} \GVIColor{\textbf{\GVI}}  with $D=\RAD$.
\end{itemize}
To make comparisons as fair as possible, our implementation is built on top of that used for the results of \citet{RenyiDiv} and only changes the objective being optimized. 
Similarly, all settings and data sets for which the methods are compared are unchanged and taken directly from \citet{RenyiDiv} and \citet{AlphaDiv}:
We use a single-layer network with 50 ReLU nodes on all experiments. Inference is performed via probabilistic back-propagation \citep{PBP} and the ADAM optimizer \citep{ADAM} with its default settings, 500 epochs and a batch size of 32. Priors and variational posteriors are both fully factorized normal distributions. 
Further, the results are also evaluated on the same selection of UCI data sets \citep{UCI} and in the same way as they were in \citet{RenyiDiv} and \citet{AlphaDiv}: Using 50 random splits of the relevant data into training (90\%) and test (10\%) sets, the inferred models are evaluated predictively on the test sets using the average negative log likelihood (\NLL) as well as the average root mean square error (\RMSE). For each of the 50 splits, predictions are computed based on 100 samples from the variational posterior.

We summarize the two main results of our experiments as follows: First, Figure \ref{Fig:BNN_experiment_1} depicts what appears to be the most typical relationship between \VIColor{\textbf{\VI}}, \DVIColor{\textbf{\DVI}} and \GVIColor{\textbf{\GVI}} on \BNN{}s.
Second, Figure \ref{Fig:posterior_predictives} explores a surprising finding about the typical relationship further and connects it back to the modularity result in Theorem \ref{Thm:GVI_modularity}.
The Appendix contains a small number of further results.
%
%Thirdly, Figure \ref{Fig:BNN_experiment_2} takes a closer look at and explains three data sets on which the techniques exhibit behaviour that deviates from the typical behaviour.
%
%Lastly, Figure \ref{Fig:BNN_experiment_3} briefly explores the interplay between robust losses and robust divergences on \BNN{}s. 

\begin{figure}[h!]
%\vskip -2.0cm
\begin{center}
\includegraphics[trim= {0.25cm 0.0cm 0.25cm 0.25cm}, clip,  
 width=1.00\columnwidth]{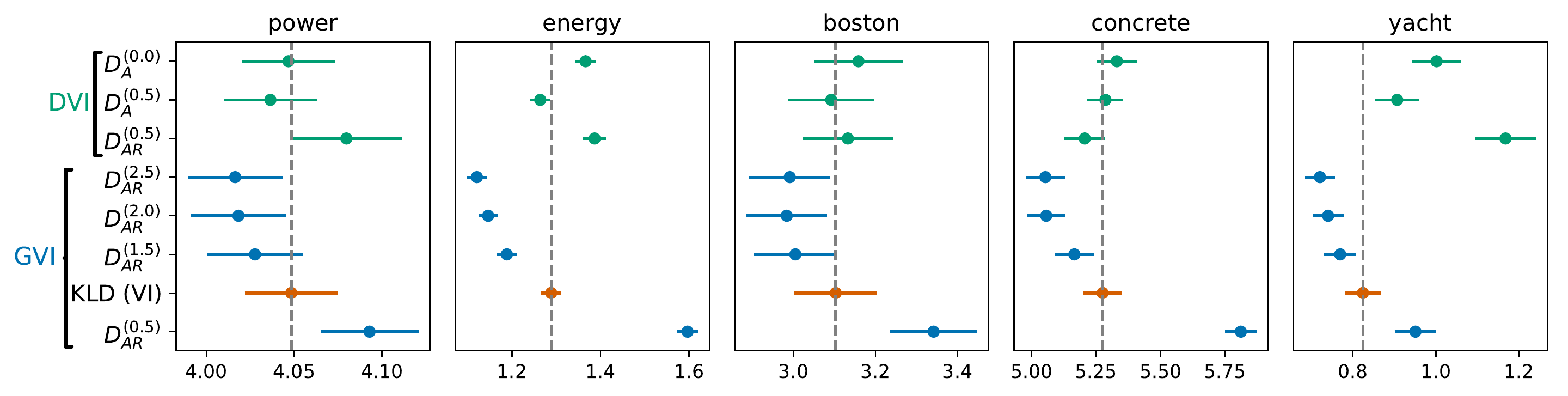}
 \includegraphics[trim={0.25cm 0.25cm 0.25cm 0.675cm}, clip,  
 width=1.00\columnwidth]{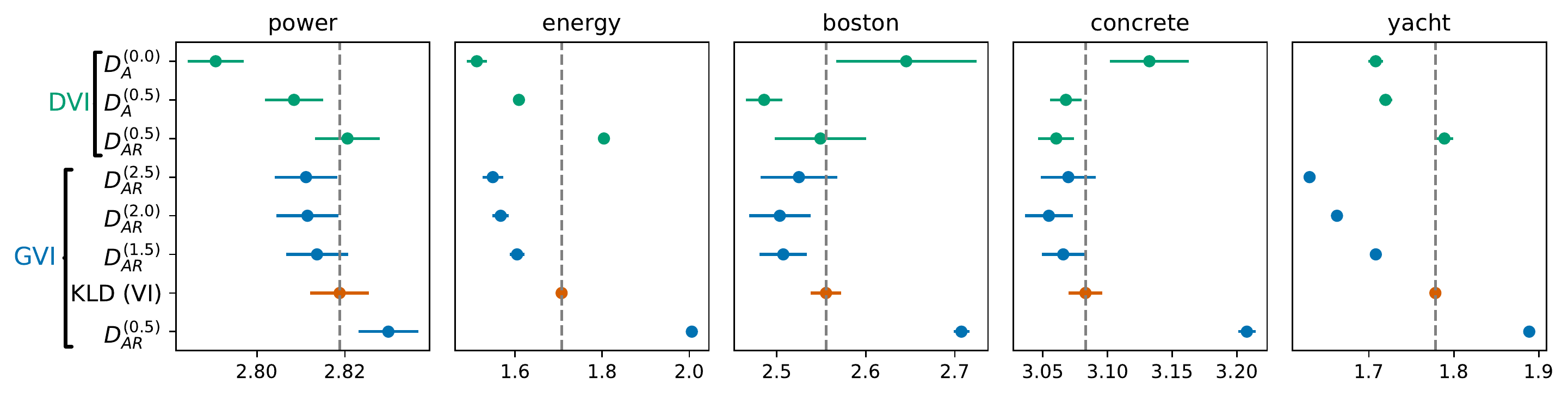}
\caption{
    \bv
	Top row depicts \RMSE, bottom row the \NLL across a range of data sets using \BNNs. Dots correspond to means, whiskers to standard errors. The further to the left, the better the predictive performance. 
	For the depicted selection of data sets, a clear common pattern exists for the performance differences between \VIColor{\textbf{standard \VI}}, \DVIColor{\textbf{\DVI}} and \GVIColor{\textbf{\GVI}}.
}
\label{Fig:BNN_experiment_1}
%\vskip -0.25in
\end{center}
\end{figure}

\subsubsection{Typical patterns (Figure \ref{Fig:BNN_experiment_1})}

%\jack{Not sure stable sounds right, could just delete `are stable and', maybe consistent instead of common}\theo{yeap}
As Figure \ref{Fig:BNN_experiment_1} demonstrates, several findings form a consistent pattern across a range of data sets. Three findings are most poignant.
\begin{itemize}
	\myitem{(A)} \DVIColor{\textbf{\DVI}} can often achieve a performance gain for the \NLL relative to \VIColor{\textbf{standard \VI}}, but much less so for \RMSE. On both metrics, there is no clear pattern of improvement.
	\label{item:BNN_1_DVI}
	\myitem{(B)}  Relative to \VIColor{\textbf{standard \VI}}, \GVIColor{\textbf{\GVI}} significantly improves performance for both \NLL and \RMSE if $\alpha > 1$. Conversely,
	%\theo{why starting with "negative" result first?}
	\GVIColor{\textbf{\GVI}} worsens performance if $\alpha \in (0,1)$. In other words, \textit{larger} posterior variances adversely affect predictive quality.
	\label{item:BNN_1_GVI}
	\myitem{(C)} \GVIColor{\textbf{\GVI}} performance is a clear banana-shaped function of $\alpha$ across all data sets: While predictive performance benefits as $\alpha$ gets larger than one, the improvement flattens out and bends back in a banana shape as $\alpha$ grows too large.
	\label{item:BNN_1_banana}
\end{itemize}
Finding \ref{item:BNN_1_GVI} has a straightforward interpretation: Since it %\jack{appears: we haven't been able to prove this so holds might be a bit strong}\jeremias{Actually, it is easy to prove this because we know it holds for the standard parameterization of the \RAD and all we do is make it smaller (by multiplying by $\frac{1}{\alpha}$ for some $\alpha > 1$.} 
holds that $\RAD \leq \KLD$ for $\alpha > 1$ (see \citet{RenyiKLD}\footnote{
    Note that their result holds for a different parameterization of the \RAD, but it is easy to show that our parameterization is strictly smaller than theirs for $\alpha > 1$.
} and Figure \ref{Fig:div_magnitudes}), the \GVI posteriors associated with \RAD for $\alpha > 1$ are \textit{more} concentrated than the standard \VI posteriors, a phenomenon also depicted on toy models in Figure \ref{Fig:marginal_variances}. 
In other words: Ignoring more of the poorly specified prior and consequently being closer to a point mass at the empirical risk minimizer is beneficial for predictive performance. 
As alluded to in Example \ref{example:violation_P}, this is perhaps to be expected: Not only is the likelihood function of a \BNN extremely flexible so that even a point estimate is likely to produce decent predictions, but it is also doubtful if a literal interpretation of the prior as in \ref{assumption:prior} is appropriate for \BNN{}s.
% 
%The Bayesian need not despair however\theo{hahahaha.. a bit sarcastic? - the Bayesian might be interested in other metrics that capture UQ - NLL/NLPL and RMSE dont do this}: 
As finding \ref{item:BNN_1_banana} shows however, this does not mean that point estimates are preferable to posterior beliefs: Increasing the value of $\alpha$ shrinks the variances too much, eventually impeding predictive performance. 
%In other words, one loses from constructing posterior beliefs which do not quantify uncertainty appropriately and are too close to a point mass around the empirical risk minimizer.
%

\begin{figure}[ph!]
\vskip -1.0cm
\begin{center}
\includegraphics[trim= {1.5cm 1cm 2.25cm 0cm}, clip,  
width=1\columnwidth]{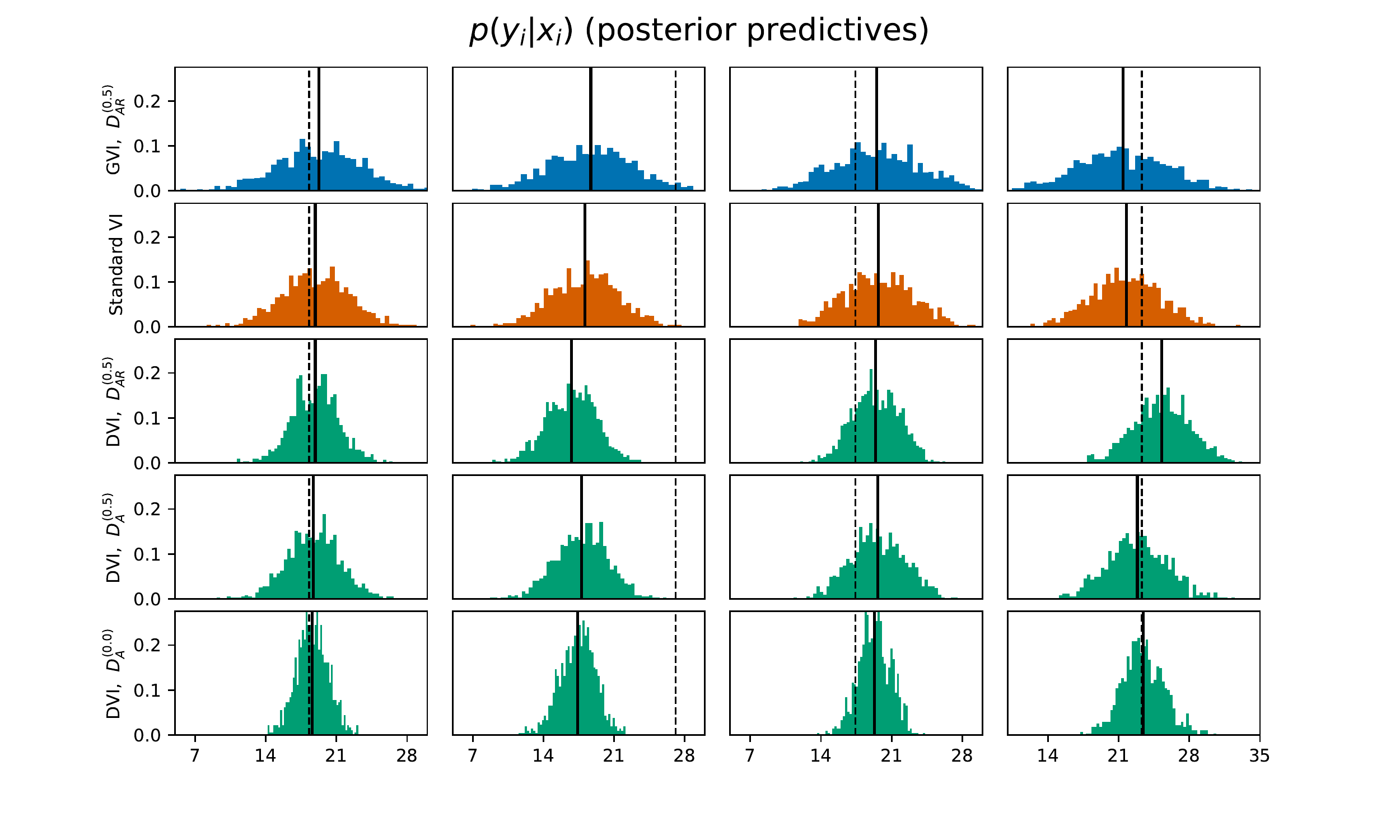}
\includegraphics[trim= {1.5cm 1.25cm 2.25cm 0cm}, clip,  
width=1\columnwidth]{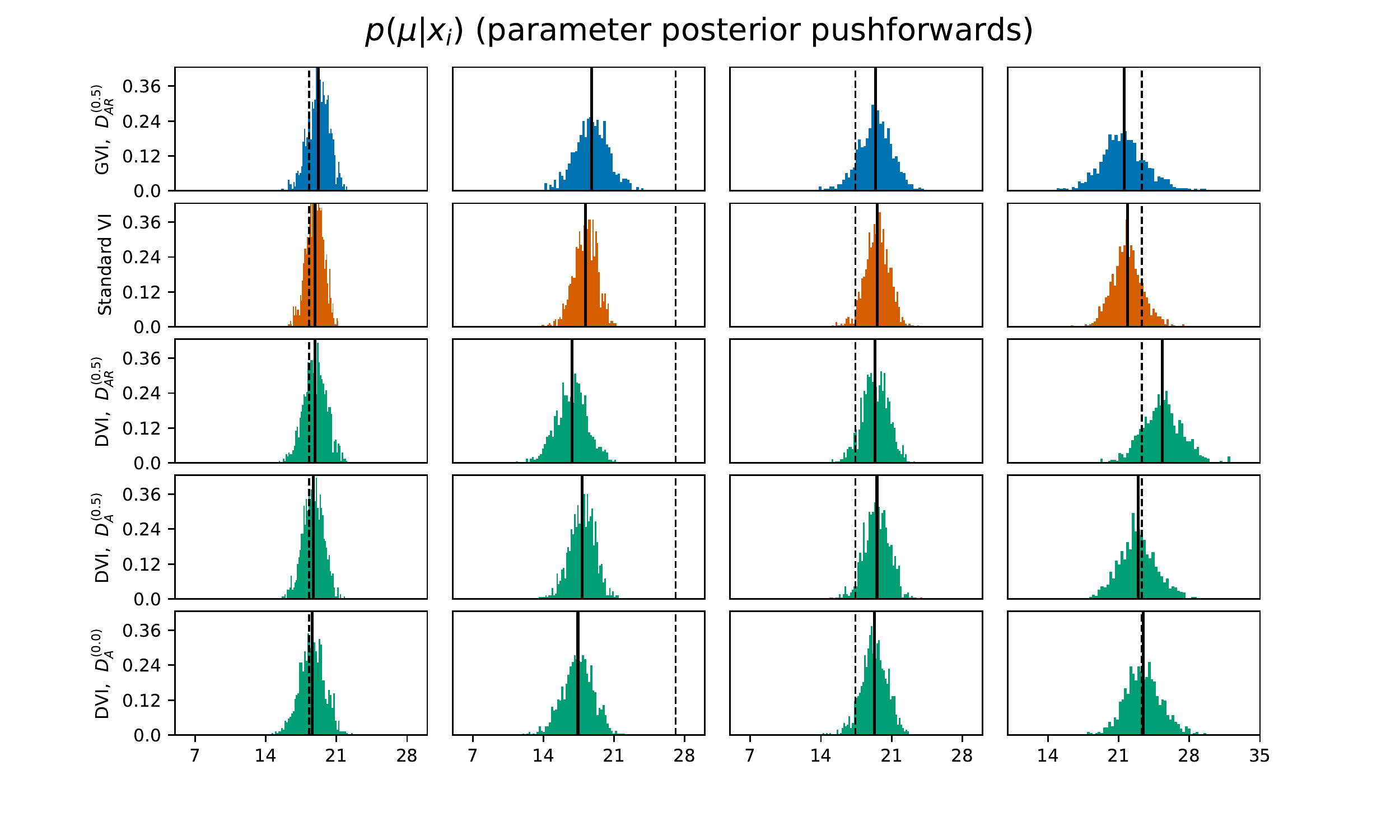}
\end{center}
\caption{
  \bv
  Depicted are test set predictions based on posterior predictives (\textbf{top panel}) and parameter posterior pushforwards (\textbf{bottom panel}) with four observations in the boston data set. Each column shows one observation (dashed line). 
  The predictive distributions (histogram) and their means (solid line) for each row correspond to \VIColor{\textbf{standard \VI}}, \DVIColor{\textbf{\DVI}} and \GVIColor{\textbf{\GVI}}. 
}
\label{Fig:posterior_predictives}
\vskip -0.25in
\end{figure}

\subsubsection{The surprising benefits of modularity (Figure \ref{Fig:posterior_predictives})}

While findings \ref{item:BNN_1_GVI} and \ref{item:BNN_1_banana} should not come as a surprise by themselves, they do raise an interesting question: In particular, \GVI for \RAD with $\alpha  = 0.5$ is the \textit{worst-performing} setting across the board. 
This is remarkable because this setting also constructs the only \GVI posteriors in our experiments with \textit{wider variances} than standard \VI. 
At the same time, producing wider variances and more conservative uncertainty quantification is one of the main motivations for Expectation Propagation (\EP) the presented \DVI methods, see for example Figure 1(a) in \citet{RenyiDiv} or Figure 8 in \citet{AlphaDiv}. 
%
%Naively then we would conclude that somehow, producing wider variances for $\*\theta$ produces worse predictive performance for \GVI, but better ones for \DVI. 
This is puzzling: Are wider variances for $\*\theta$ somehow beneficial for \DVI posteriors' predictive performance while damaging that of \GVI posteriors?
As it turns out, this is not the case. 
Rather, while both \GVI with $\alpha = 0.5$ and all \DVI produce parameter posteriors with larger variances, in the case of \DVI this does not translate into predictive uncertainty. 
%
%for \DVI posteriors does not translate into predictive uncertainty

%As it turns out, this contradiction is an illusion because one of the underlying assumptions of this statement is severely violated \jack{not sure about this sentence}: Unlike for the \GVI case, increased \textbf{parameter uncertainty for \DVI posteriors does not translate into predictive uncertainty}.
%

This phenomenon is depicted in Figure \ref{Fig:posterior_predictives}, which clearly shows that the additional uncertainty in the \DVI parameter posteriors $q^{\ast}_{\DVI}(\*\theta|\*\kappa^{\ast})$ is completely overshadowed by an extreme degree of variance shrinkage in the corresponding posterior predictives. 
In other words, the increased uncertainty in $\*\theta$ is outweighed by extremely small values for $\sigma^2$.
The plot demonstrates this by comparing the push-forward  $F \# q^{\ast}_{\DVI}(\cdot|\*\kappa^{\ast})$ with the posterior predictives. 
Formally, the push-forward is given by
\begin{IEEEeqnarray}{rCl}
    p(\mu|x_i) & = & \left( F \# q^{\ast}_{\DVI}(\cdot|\*\kappa^{\ast})\right)(\mu),
    %\label{eq:pushforward_BNN}
    \nonumber
\end{IEEEeqnarray}
where the operation $\#$ is simply a formalization of the following two operations: (i) sample $\*\theta \sim q^{\ast}_{\DVI}(\*\theta|\*\kappa^{\ast})$, (ii) compute $\mu = F(\*\theta)$.
The posterior predictive then integrates the push-forward measure $p(\mu|x_i)$ over the likelihood function as
\begin{IEEEeqnarray}{rCl}
    p(y_i|x_i) 
    & = & 
    \int_{\*\Theta}p_{\mathcal{N}}(y_i|x_i, \sigma^2, F(\*\theta)) q^{\ast}_{\DVI}(\*\theta|\*\kappa^{\ast}) d\*\theta
    = \int_{\mathbb{R}}p_{\mathcal{N}}(y_i|x_i, \sigma^2, \mu)p(\mu|x_i)d\mu.
    %\label{eq:preditive_BNN}
    \nonumber
\end{IEEEeqnarray}
%
%The results of Figure \ref{Fig:posterior_predictives} raise  questions about the merit of determining the value of increased parameter uncertainty in \DVI methods by assessing predictive uncertainty.
%
As Figure \ref{Fig:posterior_predictives} shows, the push-forward behaves as expected for both \GVI and \DVI. For \DVI, the same cannot be said for the posterior predictive: Interestingly, they generally have much \textit{less} variance than for standard \VI.

This surprising phenomenon is due to hyperparameter optimization for $\sigma^2$ and has an intimate link with the modularity result of Theorem \ref{Thm:GVI_modularity}. %Understanding it requires us to first re-examine how $\sigma^2$ is chosen. 
Since variational inference on $\sigma^2$ complicates the \DVI objectives,
%\jack{is it less complicated for GVI? worth mentioning maybe at the end}, 
both \citet{AlphaDiv} and \citet{RenyiDiv} do not infer $\sigma^2$ probabilistically. Instead, it is optimized over their objectives. This approach poses an optimization problem which for  $D = \AD$ and $D= \RAD$ is given by
%
%In other words, they jointly optimize double optimization problem which for $D = \AD$ and $D= \RAD$ is given by
%
\begin{IEEEeqnarray}{rCl}
    \widehat{\sigma}^2, q^{\ast}_{\DVI}(\*\theta|\*\kappa^{\ast}) &=& \argmin_{\sigma^2}\left\{ \argmin_{q \in \mathcal{Q}}D(q(\*\theta| \*\kappa)||q^{\ast}_{\text{B}}(\*\theta|\sigma^2,  x_{1:n}, y_{1:n})) \right\}.
    \label{eq:double_opt_DVI}
\end{IEEEeqnarray}
%
%As Figure \ref{Fig:posterior_predictives} shows, this objective produces posterior predictives that are \textit{more} contracted than even the standard \VI posterior predictives.
%
%This is due to the role that $\sigma^2$ plays in eq. \eqref{eq:double_opt_DVI}: 
%
Crucially, the inner part of this objective conditions on the exact Bayesian posterior for a \textit{fixed} value of $\sigma^2$ and then seeks to approximate the posterior belief given by
\begin{IEEEeqnarray}{rCl}
    q^{\ast}_{\text{B}}(\*\theta|\sigma^2, x_{1:n}, y_{1:n})
    & \propto &
    \pi(\*\theta) \prod_{i=1}^n p_{\mathcal{N}}(y_i|x_i, \sigma^2, F(\*\theta)).
    \quad
    \nonumber
\end{IEEEeqnarray}
At the same time however, the outer part of the objective seeks to find a value for $\sigma^2$ which makes the posterior $q^{\ast}_{\text{B}}(\*\theta|\sigma^2, x_{1:n}, y_{1:n})$ as easily approximable as possible. %by the chosen divergence $D$ as possible. 
 In other words, an objective which is explicitly motivated as a projection of $q^{\ast}_{\text{B}}(\*\theta|\sigma^2, x_{1:n}, y_{1:n})$ into $\mathcal{Q}$ also changes the very point from which to project into $\mathcal{Q}$. 
%
%For the \DVI posteriors studied here, this has the effect of choosing $\sigma^2$ smaller and smaller the larger the posterior variance becomes for $\*\theta$ (and thus for $F(\*\theta)$).

Though it would be computationally easy to perform probabilistic inference on $\sigma^2$ within \GVI, we also optimize $\sigma^2$ as a hyperparameter for comparability. Thus, we pose the alternative optimization problem
\begin{IEEEeqnarray}{rCl}
    \widehat{\sigma}^2, q^{\ast}_{\GVI}(\*\theta|\*\kappa^{\ast}) &=& \argmin_{\sigma^2}\left\{ \argmin_{q \in \mathcal{Q}}
    \left\{
    \mathbb{E}_{q}\left[ \sum_{i=1}^n -\log p_{\mathcal{N}}(y_i| x_i, \sigma^2, F(\*\theta))\right] + 
    \RAD(q||\pi)
    \right\}
    \right\}.
    \quad
    \label{eq:double_opt_GVI}
\end{IEEEeqnarray}
As Figure \ref{Fig:posterior_predictives} shows, the outcomes are drastically different: Unlike in the \DVI case, the predictive uncertainty for the \GVI posteriors move in the same direction as parameter uncertainty as $\alpha$ varies.
%
%While it is hard to understand why exactly the \DVI objective of eq. \eqref{eq:double_opt_DVI} behaves the way it does\theo{is it?}, 
The modularity of \GVI makes it obvious what the optimization over $\sigma^2$ corresponds to in eq. \eqref{eq:double_opt_GVI}: Rather than choosing the best posterior  $q^{\ast}_{\text{B}}(\*\theta|\sigma^2, x_{1:n}, y_{1:n})$ from which to project into $\mathcal{Q}$, the optimization problem simply seeks to find the best possible loss $\ell_{\sigma^2}(y_i|x_i,F(\*\theta)) = -\log p(y_i|x_i, \sigma^2, F(\*\theta))$ over all $\sigma^2 \in \mathbb{R}_{+}$. 
%\jack{should this be best possible EXPECTED loss? as the value of $\sigma^2$ does still depend on the value of $\alpha$?}
%

\subsection{Deep Gaussian Processes}
\label{sec:experiments_GPs}

Deep Gaussian Processes (\DGPs) were introduced by \citet{DGPs} and extend the logic of deep learning to the nonparametric Bayesian setting. 
The principal idea is to construct a hierarchy of Gaussian Process (\GP) priors over latent spaces. 
Unlike with the \BNNs presented in the last section, the priors in \DGPs are usually refined at run-time by using various hyperparameter optimization schemes.
This is in fact crucial for \DGPs to provide good inferences: 
Indeed, it ensures that the inputs $\*X$ are mapped into latent spaces which are informative for the outputs $\*Y$.
As a consequence, unlike with \BNNs we expect there to be comparatively little merit in varying the uncertainty quantifier $D$ for \DGPs---a suspicion we experimentally confirm in Appendix \ref{Appendix:GP_additional_experiments}.
%\theo{why? I think there is an argument for the contrary in terms of improved UQ and the collapsing UQ behaviour happening at some DGPs. No evidence for this but not sure if you have enough evidence for the contrary too?}
%
Accordingly, we instead focus on experiments that vary the loss $\ell$.
More specifically, we consider replacing the negative log score with a robust  scoring rule for the likelihood which is derived from the $\gamma$-divergence \citep{GammaDivSummable},
which drastically improves predictive performance. 
%\jack{it worth explaining why we choose the gamma over the beta here? or is this done later}
%
%.
%
%For reasons explained in Section \ref{sec:DGP_experiment_results}, this produces far more desirable inferences than the standard log score. 
%

In the remainder, we first introduce \DGPs (Section \ref{sec:DGP_prelims}). Next, we provide a brief overview of the doubly stochastic inference procedure in \citet{DeepGPsVI} (Section \ref{sec:DGP_SVI}) and show how to adapt \DGPs to \GVI (Section \ref{sec:DGP_adaption}). Lastly, we present numerical experiments and their results (Section \ref{sec:DGP_experiment_results}).
These findings are also summarized with a higher level of detail in a separate technical report \citep{RDGP}.

%complementing the experiments of the last section.

%

\subsubsection{Preliminaries for \DGPs}
\label{sec:DGP_prelims}

Given observations $(\*X, \*Y)$ where $\*X \in \mathbb{R}^{n\times D_x}$ and $\*Y \in \mathbb{R}^{n \times p}$, a \DGP of $L$ layers introduces $L$ latent functions $\{\*F^l\}_{l=1}^L$. Here, $\*F^l$ is matrix-valued and of dimension $D^l\times D^{l+1}$.
%
%Here, $\*F^l$ is matrix-valued and of dimension $D^l\times D^{l+1}$.
%
Setting $\*F^0 = \*X$, $D^0 = D_x$ and $D^{l+1} = p$, one can write the \DGP construction as
\begin{IEEEeqnarray}{rClCl}
    \*Y|\*F^L &&& \sim & %f_L(\*F^L) & \sim & \GP\left(\mu^L(\*F^L), \K^L(\*F^L, \*F^L) \right) 
    p\left( \*Y \middle|\; \*F^L \right)
    \nonumber \\
    \*F^L|\*F^{L-1} & = & f^{L}(\*F^{L-1}) & \sim & \GP\left(\mu^{L}(\*F^{L-1}), \K^{L}(\*F^{L-1}, \*F^{L-1}) \right) \nonumber \\
    % \*F^{L-1}|\*F^{L-2} & = & f^{L-1}(\*F^{L-2})& \sim & \GP\left(\mu^{L-1}(\*F^{L-2}), \K^{L-1}(\*F^{L-2}, \*F^{L-2}) \right) \nonumber \\
         &&& \quad \dots \nonumber \\
    \*F^1|\*F^0 & = & f^{1}(\*F^{0})& \sim & \GP\left(\mu^1(\*F^0), \K^1(\*F^0, \*F^0) \right), \nonumber
\end{IEEEeqnarray}
where the mean and covariance functions are  $\mu^l:\mathbb{R}^{D^l}\to\mathbb{R}^{D^{l+1}}$ and $\K^l:\mathbb{R}^{D^l\times D^l}\to\mathbb{R}^{D^{l+1}\times D^{l+1}}$.
Scalable inference strategies for this model generally rely on \VI \citep{DGPs, VAEDGPs, DeepGPsVI, NestedVariationalDGPs}, Monte Carlo methods  \citep{SamplingDGPs, SMCDGPs} or more specialized approaches \citep{RandomFourierDGP}.
%
%In the remainder, we will focus on \VI strategies for \DGP inference. 
%
In the remainder, we discuss the implications of Generalized Variational Inference (\GVI) in relation to the arguably most promising \VI approach of \citet{DeepGPsVI}.
Unlike previous \VI methods, it encodes conditional dependence into the variational family $\mathcal{Q}$ and comprehensively outperformed Expectation Propagation (\EP) based alternatives \citep{DGPEP}.

\subsubsection{Doubly stochastic \VI in \DGPs}
\label{sec:DGP_SVI}

%Using the inducing point framework \citep[see e.g.][]{GPApprox2, GPApprox4, InducingBonilla, InducingProcesses}, next we introduce the exact Bayesian posterior of the \DGP construction.\theo{the Bonilla reference seems too recent - I think he has much older work on inducing points etc}
%
First, define $m$ inducing points $\*Z^l = (\*z^{l}_1, \*z^{l}_2, \dots, \*z^{l}_m)^T$ and their function values $\*U^l = (f^l(\*z^{l}_1), f^l(\*z^{l}_2), \dots, f^l(\*z^{l}_m))^T$ \citep[for details on inducing points, see ][]{GPApprox2, GPApprox4, InducingBonilla, InducingProcesses}. 
For improved readability, we drop $\*X$ and $\*Z^l$ from the conditioning sets and denote the $i$-th row of $\*F^l$ as $\*f^{L}_i$.
With this, the joint distribution of the \DGP is
\begin{IEEEeqnarray}{rCl}
    p\left(\*Y, 
    \{\*F^l\}_{l=1}^L, \{\*U^l\}_{l=1}^L
    %\middle|\; \{ \*Z^{l} \}_{l=1}^L, \*X 
    \right)
    & = &
    \underbrace{\prod_{i=1}^n p(\*y_i|\*f^{L}_i)}_{\text{likelihood}} \times
    \underbrace{\prod_{l=1}^L 
        %\prod_{d=1}^D
        p\left(\*F^l \middle|\; \*U^l, \*F^{l-1}, \*Z^{l-1} \right)p\left( \*U^l \middle|\; \*Z^{l-1}  \right)}_{\text{ (\DGP) prior}}. \quad\quad \nonumber
\end{IEEEeqnarray}
The posteriors $p\left( \{\*F^l\}_{l=1}^L, \{\*U^l\}_{l=1}^L% \middle|\; \{ \*Z^{l} \}_{l=1}^L, \*X 
\right)$ and $p\left( \{\*F^l\}_{l=1}^L \right)$ 
%\middle|\; \{ \*Z^{l} \}_{l=1}^L, \*X 
% \begin{IEEEeqnarray}{rCl}
%     p\left( \{\*F^l\}_{l=1}^L, \{\*U^l\}_{l=1}^L \middle|\; \{ \*Z^{l} \}_{l=1}^L, \*X \right) & \propto &
%     p\left(\*Y, 
%     \{\*F^l\}_{l=1}^L, \{\*U^l\}_{l=1}^L
%     \middle|\; \{ \*Z^{l} \}_{l=1}^L, \*X \right)
% \end{IEEEeqnarray}
%
are intractable. 
%
%To overcome this, different \VI schemes have been proposed.
%
The \VI method proposed in \citet{DeepGPsVI} overcomes this with the variational family given by
\begin{IEEEeqnarray}{rCl}
    q\left( \{\*F^l\}_{l=1}^L, \{\*U^l\}_{l=1}^L %\middle|\; \{ \*Z^{l} \}_{l=1}^L, \*X 
    \right) 
    & = & 
    \prod_{l=1}^L 
        %\prod_{d=1}^D
        p\left(\*F^l \middle|\; \*U^l, \*F^{l-1}, \*Z^{l-1} \right)
        q\left( \*U^l \right); 
        \quad
    q\left( \*U^l \right)
     = 
        \mathcal{N}\left(\*U^l\middle|\; \*m^l, \*S_l \right).
    %\label{eq:variationalFam2}
    \nonumber
\end{IEEEeqnarray}
%
%Here, the collection of variational parameters for this posterior is given by $\*\kappa = \left\{ \{\*m^l\}_{l=1}^L, \{\*S_l\}_{l=1}^L  \right\}$. 
%
This allows for exact integration over the inducing points $\{\*U^l\}_{l=1}^L$, yielding 
%the closed form variational posterior  
%
\begin{IEEEeqnarray}{rCl}
    q\left(\{\*F^l\}_{l=1}^L
    %\middle|\; \{ \*Z^{l} \}_{l=1}^L, \*X 
    \right)
    & = &
    \prod_{l=1}^L
    \mathcal{N}\left(\*F^l \middle|\; \*\mu^l, \*\Sigma_l \right).
    \nonumber
\end{IEEEeqnarray}
As shown in \citet{DeepGPsVI}, this enables a doubly stochastic minimization of the negative Evidence Lower Bound (\ELBO) given by 
\begin{IEEEeqnarray}{rCl}
    %\mathcal{L}(q|\*Y, \*X) 
    %& = &
    &&
    \mathbb{E}_{q(\*F^L)}\left[ \sum_{i=1}^n -\log p(\*y_i|\*F^L) \right] 
    +
    \KLD\left(  q(\{\*F^l\}_{l=1}^L, \{\*U^l\}_{l=1}^L) 
                \bigg\| 
                p(\{\*F^l\}_{l=1}^L, \{\*U^l\}_{l=1}^L) 
    \right) 
    \nonumber \\
    & = & -\sum_{i=1}^n\mathbb{E}_{q(\*f^{L}_i)}
    \left[ 
        \log p(\*y_i|\*f^{L}_i)
    \right]
    +
    \sum_{l=1}^L\KLD(q(\*U^l)||p(\*U^l)).
    \label{eq:DGP_elbo}
\end{IEEEeqnarray}
For optimization, the samples for $\*F^l$ are drawn using the variational posteriors from the previous layers so that approximating the expectations over $q(\*f_i^L)$ induces the first layer of stochasticity.
The second layer is due to drawing mini-batches from $\*X = \*F^0$ and $\*Y$.
Because of this large degree of stochasticity, it is appealing if $\mathbb{E}_{q(\*f^{L}_i)}\left[\log p(\*y_i|\*f^{L}_i)\right]$ is available in closed form, which is for instance the case if $p = p_{\mathcal{N}}$ is a normal likelihood.   

\subsubsection{Adaption to \GVI}
\label{sec:DGP_adaption}

The objective in eq. \eqref{eq:DGP_elbo} suggests itself naturally to a \GVI variant.
This raises two questions: 
\begin{itemize}
    \myitem{($\*D$)} Is it theoretically coherent with the meaning and function of $D$ in the axiomatic development of Section \ref{sec:axioms} to simply replace the \KLD-terms layer-wise?
    \label{item:DGP_D_mod}
    \myitem{($\*\ell$)} Can one derive closed forms for the expectations when the log scoring rule is replaced by robust alternatives $\mathcal{L}^{\beta}$ or $\mathcal{L}^{\gamma}$ derived from the $\beta$- and $\gamma$-divergence?
    \label{item:DGP_loss_mod}
\end{itemize}
As shown next, we can give a positive answer to both these questions.
%
%While the meaning of \ref{item:DGP_loss_mod} is immediately clear, \ref{item:DGP_D_mod} warrants further explanation:
%

\subparagraph{\ref{item:DGP_D_mod}}
Conveniently and as shown in  \citet{DeepGPsVI}, 
%the prior regularizer in eq. \eqref{eq:DGP_elbo} is a simplification of
%
\begin{IEEEeqnarray}{rCl}
    \KLD\left(  q(\{\*F^l\}_{l=1}^L, \{\*U^l\}_{l=1}^L) 
                \bigg\| 
                p(\{\*F^l\}_{l=1}^L, \{\*U^l\}_{l=1}^L) 
    \right) 
    & = &
    \sum_{l=1}^L\KLD(q(\*U^l)||p(\*U^l)).
    \label{eq:DGP_regularizer}
\end{IEEEeqnarray}
%
%so that as in Section \ref{sec:axioms}, the regularizer depends on a prior $p(\{\*F^l\}_{l=1}^L, \{\*U^l\}_{l=1}^L) $ and a posterior $q(\{\*F^l\}_{l=1}^L, \{\*U^l\}_{l=1}^L)$.
%
A natural question is whether one can reverse-engineer this finding: If we simply pick a collection of other divergences $D^l(q(\*U^l)||p(\*U^l))$ for each layer $l$ and combine them additively, does the result define a valid divergence between $q(\{\*F^l\}_{l=1}^L, \{\*U^l\}_{l=1}^L)$ and $p(\{\*F^l\}_{l=1}^L, \{\*U^l\}_{l=1}^L)$?
As the next Corollary shows, one can prove that reverse-engineering prior regularizers inspired by eq. \eqref{eq:DGP_regularizer} is feasible so long as the layer-specific divergences $D^l$ are $f$-divergences or monotonic transformations of $f$-divergences.
The proof relies on a technical Lemma and is given in Appendix \ref{Appendix:GP_divergence_change}

\begin{corollary}
    In the \DGP construction of eq. \eqref{eq:DGP_elbo}, replacing the sum of \KLD-terms by 
    \begin{IEEEeqnarray}{rCl}
        \sum_{l=1}^LD^l(q(\*U^l)||p(\*U^l))
        \nonumber
    \end{IEEEeqnarray}
    defines a valid divergence between $q(\{\*F^l\}_{l=1}^L, \{\*U^l\}_{l=1}^L)$ and $p(\{\*F^l\}_{l=1}^L, \{\*U^l\}_{l=1}^L)$  so long as $D^l$ is an $f$-divergence or a divergence obtained as a monotonic transform $g$ of an $f$-divergence for all $l=1,2,\dots L$.
    \label{corollary:GP_divergence_change}
\end{corollary}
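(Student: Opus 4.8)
The plan is to reduce the multi-layer claim to a one-layer statement about $f$-divergences, then stitch the layers together using the factorized structure shared by $q$ and $p$. The key structural fact, already exploited in eq.~\eqref{eq:DGP_regularizer}, is that both $q(\{\*F^l\}_{l=1}^L, \{\*U^l\}_{l=1}^L)$ and $p(\{\*F^l\}_{l=1}^L, \{\*U^l\}_{l=1}^L)$ factor as $\prod_{l=1}^L p(\*F^l\mid\*U^l,\*F^{l-1},\*Z^{l-1})\, \nu^l(\*U^l)$ with $\nu^l = q(\*U^l)$ or $\nu^l = p(\*U^l)$ respectively; the conditionals $p(\*F^l\mid\*U^l,\*F^{l-1},\*Z^{l-1})$ are \emph{identical} in numerator and denominator. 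First I would state and prove the technical Lemma referenced in the statement: if $D$ is an $f$-divergence (or $D = g\circ D_f$ for a monotone $g$ with $g(0)=0$) on a product space where the two measures share a common conditional kernel, then $D$ of the joints equals the corresponding quantity built only from the marginals that actually differ. For a pure $f$-divergence this is the standard ``data-processing / chain rule'' computation: writing the Radon–Nikodym derivative of the joints, the shared conditional cancels inside the ratio, so $D_f(q\|p) = \mathbb{E}_{p}\big[f(dq/dp)\big]$ collapses to an expectation involving only the $\*U$-marginals and, by the factorization across $l$, further to a function of the per-layer ratios.

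Second, I would handle the additivity across layers. Here the crucial observation is that $q(\*U^1),\dots,q(\*U^L)$ are mutually independent under the variational family (each $q(\*U^l)=\mathcal{N}(\*U^l\mid\*m^l,\*S_l)$ is specified independently), and likewise for $p$; so the joint $\*U$-ratio $\prod_l dq(\*U^l)/dp(\*U^l)$ is a product of per-layer ratios. For an $f$-divergence this does \emph{not} in general turn a product-of-ratios into a sum-of-divergences — that is the genuine content of the Corollary, and it is why the statement asserts that $\sum_l D^l(q(\*U^l)\|p(\*U^l))$ ``defines a valid divergence'' rather than ``equals $D$ applied to the joints.'' So the right reading, which I would make explicit, is: define $\mathcal{D}\big(q(\{\*F^l\},\{\*U^l\})\,\|\,p(\{\*F^l\},\{\*U^l\})\big) := \sum_{l=1}^L D^l(q(\*U^l)\|p(\*U^l))$ and show this $\mathcal{D}$ satisfies the definition of a statistical divergence (Section~\ref{sec:axioms}): it is non-negative, since each $D^l\geq 0$ by the definition of a divergence and — for the $g\circ D_f$ case — since $g$ is monotone with $g(0)=0$ hence $g(D_f)\geq g(0)=0$; and it is zero if and only if every summand is zero, i.e.\ $q(\*U^l)=p(\*U^l)$ a.e.\ for all $l$, which by the shared-conditional factorization is equivalent to $q(\{\*F^l\},\{\*U^l\}) = p(\{\*F^l\},\{\*U^l\})$ a.e.

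Third, the monotone-transform case requires one small extra step: I would verify that $g\circ D_f$ is itself a bona fide divergence whenever $g:\mathbb{R}_{\geq 0}\to\mathbb{R}_{\geq 0}$ is non-decreasing with $g(t)=0\iff t=0$ — this is immediate from the corresponding property of $D_f$ — so that the summands $D^l$ in the definition of $\mathcal{D}$ are legitimate, and the non-negativity/identity-of-indiscernibles argument of the previous paragraph goes through verbatim. Examples such as R\'enyi's $\alpha$-divergence (a monotone transform, via $\log$, of an $f$-divergence) and the $\gamma$-divergence then fall under the Corollary.

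The main obstacle is purely expository rather than mathematical: one must be careful that the claim is about $\sum_l D^l(q(\*U^l)\|p(\*U^l))$ \emph{being} a valid divergence between the \emph{joints} (via the exact-integration identity $q(\{\*F^l\}) = \prod_l \mathcal{N}(\*F^l\mid\*\mu^l,\*\Sigma_l)$ that lets the $\*F$-marginals be recovered from the $\*U$-marginals), and \emph{not} about it equalling $D_f$ evaluated on the joints — the latter is false for $f$-divergences beyond the \KLD\ because of the product-versus-sum mismatch noted above. Getting the measure-theoretic bookkeeping of the shared conditional kernels right (absolute continuity of $q(\*F^l\mid\*U^l,\cdots)$ with respect to $p(\*F^l\mid\*U^l,\cdots)$ — here trivial since they are equal) is the only place where care is needed, and it is handled once and for all in the technical Lemma.
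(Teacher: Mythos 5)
Your proposal is correct, and its central computation---cancelling the shared conditional kernel $p(\*F^l\mid \*U^l,\*F^{l-1})$ inside the density ratio so that the per-layer divergence of the $(\*F^l,\*U^l)$-blocks collapses to $D^l(q(\*U^l)\|p(\*U^l))$---is exactly the step the paper performs for $f$-divergences and their monotone transforms. Where you take a genuinely different route is in certifying that the resulting sum is a divergence between the \emph{joints}: the paper funnels this through a ``divergence recombination'' Lemma, which needs (i) invariance of the sum of conditional divergences to the conditioning set (this is what the $f$-divergence cancellation delivers) and (ii) a Hammersley--Clifford argument so that equality of all conditionals forces equality of the joints. You instead verify the two divergence axioms directly: non-negativity is immediate once $g$ is non-decreasing with $g(0)=0$, and identity of indiscernibles follows because the joints share their conditional kernels, so they agree a.e.\ if and only if the $\*U^l$-marginals agree for every $l$ (the forward direction by integrating out the $\*F^l$ layer by layer, then marginalizing the product of the $\*U$-factors). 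Your argument is more elementary, avoids Hammersley--Clifford, and makes explicit that for the literal statement the $f$-divergence hypothesis is only used to ensure each $D^l$ is a bona fide divergence and to identify the summands with divergences of conditional blocks of the joints; the paper's Lemma buys generality (it works when no explicit shared-kernel factorization is at hand) and makes visible where the $f$-divergence structure is load-bearing, namely in the conditioning-set independence. Your reading of ``defines a valid divergence'' as an axiom check on the structured class of joints, rather than an identity with $D$ applied to the joints, is the intended one.
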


\subparagraph{\ref{item:DGP_loss_mod}}
Next, we turn attention to modifying the loss terms in eq. \eqref{eq:DGP_elbo}. 
First, note that
\begin{IEEEeqnarray}{rCl}
    \mathbb{E}_{q(\*F^L)}\left[ \sum_{i=1}^n -\log p(\*y_i|\*F^L) \right] 
    & = &
    -\sum_{i=1}^n\mathbb{E}_{q(\*f^{L}_i)}
    \left[ 
        \log p(\*y_i|\*f^{L}_i)
    \right].
    \nonumber
\end{IEEEeqnarray}
%
%a simplification that is shown in \citet{DeepGPsVI} and holds by virtue of their smartly constructed variational family $\mathcal{Q}$.
%
This identity still holds if one replaces the negative log with other scoring rules. 
As the next Proposition shows, we even retain closed forms for the regression case and the scoring rules
\begin{IEEEeqnarray}{rCl}
    \Lb(\*f_i^L, \*y_i) & = &
        -
			\frac{1}{\beta-1}
			p(\*y_i|\*f_i^L)^{\beta-1} 
			+ \frac{I_{p, \beta}(\*f_i^L)}{\beta}
		%\label{eq:BD^loss}
		\nonumber    \\
	\Lg(\*f_i^L, \*y_i) & = &
        -
			\frac{1}{\gamma-1}p(\*y_i|\*f_i^L)^{\gamma-1} \cdot
			  \frac{\gamma}{I_{p, \gamma}(\*f_i^L)^{-\frac{\gamma-1}{\gamma}}}
			. \; 
			\nonumber
			%\label{eq:div_losses} %\\
\end{IEEEeqnarray}
Crucially, the integral term $I_{p, c}(\*f_i^L) =  \int p(\*y|\*f_i^L)^{c}d\*y$ is generally available in closed form for exponential families. 
%intractable, but available in closed form for most exponential families. \jack{Do we need to say it's `generally intractable', can't we just say it's ``generally available in closed form for exponential families''}
%
As the notation suggests, $\Lb$ is linked to the $\beta$-divergence in the same way we linked the $\log$ score to the \KLD in Section \ref{sec:GVI_model_misspecification}, see also \citet{BasuDPD}.  
%\citep[introduced as density power divergence by][]{BasuDPD}\jack{, following the procedure introduces in Section \ref{sec:GVI_model_misspecification}}. 
Similarly, $\Lg$ is derived from the $\gamma$-divergence as explained in \citet[][]{GammaDivSummable}.
%
%\jack{When $\beta=\gamma=0$ they recover the log-score while, }
As also alluded to in Section \ref{sec:quasi-conjugate-inference}, $\Lg$ ($\Lb$) recovers the log score as $\gamma \to 1$  ($\beta \to 1$) and produces robust inferences for $\gamma > 1$ ($\beta > 1$).
%If $\beta > 1$ or $\gamma > 1$, these scoring rules produce robust inferences. 
%
Figure \ref{Fig:Influence_fct_pic} depicts this for $\Lb$, and the behaviour is very similar for $\Lg$. 
\begin{proposition}[Closed forms for robust \DGP regression]
    If it holds that $\*y_i \in \mathbb{R}^d$,
    \begin{IEEEeqnarray}{rClCrCl}
        p(\*y_i|\*f^{L}_i) & = & \mathcal{N}\left(\*y_i; \*f^{L}_i, \sigma^2I_d\right); & \quad &
        q(\*f^{L}_i) & = & \mathcal{N}(\*f^{L}_i; \*\mu, \*\Sigma), 
        \nonumber
    \end{IEEEeqnarray} 
    then for the quantities given by
    \begin{IEEEeqnarray}{rClCrClCrCl}
        \widetilde{\*\Sigma}^{-1} & = & \left(\frac{c}{\sigma^s}\*I_d + \*\Sigma^{-1}\right);
        & \quad &
        \widetilde{\*\mu} & = & \left( \frac{c}{\sigma^2}\*y_i + \*\Sigma^{-1}\*\mu \right);
        & \quad &
        I(c) & = & (2\pi\sigma^2)^{-0.5dc}c^{-0.5d}
        \nonumber
    \end{IEEEeqnarray}
    and for 
    \begin{IEEEeqnarray}{rCl}
        E(c) & = & \frac{1}{c}
    \left({2\pi}\sigma^2\right)^{-0.5dc}
    \frac{|\widetilde{\*\Sigma}|^{0.5}}{ |\*\Sigma|^{0.5} }
    \exp\left\{
 -\frac{1}{2}\left(
     \frac{c}{\sigma^2}\*y_i^T\*y_i + \*\mu^T\*\Sigma^{-1}\*\mu -
     \widetilde{\*\mu}^T\widetilde{\*\Sigma}\widetilde{\*\mu}
 \right)
    \right\} \nonumber
    \end{IEEEeqnarray}
    the following expectations are available in closed form:
    \begin{IEEEeqnarray}{rCl}
        \mathbb{E}_{q(\*f^{L}_i)}\left[\Lb(\*f^{L}_i, \*y_i)\right] & = &
        -E(\beta-1) + \frac{I(\beta)}{\beta}
        \nonumber\\
        \mathbb{E}_{q(\*f^{L}_i)}\left[\Lb(\*f^{L}_i, \*y_i)\right] & = &
        -E(\gamma-1)\cdot\frac{\gamma}{I(\gamma)^{\frac{\gamma}{\gamma-1}}}
        \nonumber
    \end{IEEEeqnarray}
    \label{Proposition:closed_form_robust_DGP_regression}
\end{proposition}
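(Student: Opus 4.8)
The plan is to reduce everything to a single Gaussian integral identity and then apply it twice with different exponents $c$. The key observation is that the robust scoring rules $\Lb$ and $\Lg$ are built out of two ingredients: a power of the likelihood density, $p(\*y_i|\*f_i^L)^{c-1}$, and the normalizing integral $I_{p,c}(\*f_i^L) = \int p(\*y|\*f_i^L)^c d\*y$. Under the stated Gaussian assumptions both ingredients are explicit: since $p(\*y_i|\*f_i^L) = \mathcal{N}(\*y_i;\*f_i^L,\sigma^2 I_d)$, we have $p(\*y_i|\*f_i^L)^{c} = (2\pi\sigma^2)^{-dc/2}\exp\{-\tfrac{c}{2\sigma^2}\|\*y_i-\*f_i^L\|^2\}$, and integrating over $\*y$ gives $I_{p,c}(\*f_i^L) = (2\pi\sigma^2)^{-dc/2} (2\pi\sigma^2/c)^{d/2} = (2\pi\sigma^2)^{-d(c-1)/2} c^{-d/2}$, which is exactly $I(c)$ in the statement (note it does not depend on $\*f_i^L$). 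So the only genuinely new computation is $\mathbb{E}_{q(\*f_i^L)}[\,p(\*y_i|\*f_i^L)^{c-1}\,]$ where $q(\*f_i^L) = \mathcal{N}(\*f_i^L;\*\mu,\*\Sigma)$.

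The core lemma I would isolate and prove first is: for $c>0$,
\begin{IEEEeqnarray}{rCl}
\mathbb{E}_{q(\*f_i^L)}\left[ p(\*y_i|\*f_i^L)^{c} \right]
& = &
(2\pi\sigma^2)^{-dc/2}
\frac{|\widetilde{\*\Sigma}|^{1/2}}{|\*\Sigma|^{1/2}}
\exp\left\{ -\tfrac12\left( \tfrac{c}{\sigma^2}\*y_i^T\*y_i + \*\mu^T\*\Sigma^{-1}\*\mu - \widetilde{\*\mu}^T\widetilde{\*\Sigma}\widetilde{\*\mu}\right)\right\},
\nonumber
\end{IEEEeqnarray}
with $\widetilde{\*\Sigma}^{-1} = \tfrac{c}{\sigma^2}I_d + \*\Sigma^{-1}$ and $\widetilde{\*\mu} = \tfrac{c}{\sigma^2}\*y_i + \*\Sigma^{-1}\*\mu$ (so that the object multiplied by $\widetilde{\*\Sigma}$ in the exponent is $\widetilde{\*\Sigma}\widetilde{\*\mu}$ playing the role of the completed-square mean). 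The proof is the standard "product of two Gaussians" / completion-of-the-square argument: write the integrand as $(2\pi\sigma^2)^{-dc/2}|2\pi\*\Sigma|^{-1/2}\exp\{-\tfrac{c}{2\sigma^2}\|\*y_i-\*f\|^2 - \tfrac12(\*f-\*\mu)^T\*\Sigma^{-1}(\*f-\*\mu)\}$, collect the quadratic, linear and constant terms in $\*f$, complete the square to get a Gaussian kernel in $\*f$ with precision $\widetilde{\*\Sigma}^{-1}$, integrate it out (contributing $|2\pi\widetilde{\*\Sigma}|^{1/2}$), and simplify the leftover constant. Comparing with the definition of $E(c)$ in the statement, one reads off $E(c) = \tfrac{1}{c}\,\mathbb{E}_{q(\*f_i^L)}[p(\*y_i|\*f_i^L)^{c}]$ — i.e. $E(c)$ is this expectation rescaled by $1/c$; I would double-check this $1/c$ bookkeeping carefully against the paper's $E(c)$ formula since that is the easiest place to drop a factor.

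Once the lemma is in hand, the two claimed identities are immediate substitutions. For $\Lb(\*f_i^L,\*y_i) = -\tfrac{1}{\beta-1}p(\*y_i|\*f_i^L)^{\beta-1} + \tfrac{1}{\beta}I_{p,\beta}(\*f_i^L)$, linearity of expectation gives $\mathbb{E}_q[\Lb] = -\tfrac{1}{\beta-1}\mathbb{E}_q[p(\*y_i|\*f_i^L)^{\beta-1}] + \tfrac{1}{\beta}I(\beta)$; applying the lemma with $c = \beta-1$ and recognizing $\tfrac{1}{\beta-1}\mathbb{E}_q[p^{\beta-1}] = E(\beta-1)$ yields $-E(\beta-1) + I(\beta)/\beta$. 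For $\Lg$, the likelihood-power term is multiplied by $\tfrac{\gamma}{I_{p,\gamma}(\*f_i^L)^{-(\gamma-1)/\gamma}} = \gamma\, I_{p,\gamma}(\*f_i^L)^{(\gamma-1)/\gamma}$, and since $I_{p,\gamma}(\*f_i^L) = I(\gamma)$ is \emph{constant} in $\*f_i^L$ it pulls straight out of the expectation, giving $\mathbb{E}_q[\Lg] = -\tfrac{1}{\gamma-1}\,\gamma\, I(\gamma)^{(\gamma-1)/\gamma}\,\mathbb{E}_q[p(\*y_i|\*f_i^L)^{\gamma-1}] = -E(\gamma-1)\cdot \gamma \cdot I(\gamma)^{(\gamma-1)/\gamma - 1} = -E(\gamma-1)\cdot \gamma / I(\gamma)^{1/\gamma}$; I note the paper writes the last factor as $\gamma/I(\gamma)^{\gamma/(\gamma-1)}$, so reconciling the exact exponent on $I(\gamma)$ with the paper's stated $\Lg$ normalization is the one point of arithmetic I would want to nail down, but it is purely a rewriting of exponents, not a real obstacle. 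The only substantive step — and hence the main (mild) obstacle — is the multivariate Gaussian completion-of-the-square in the lemma, which requires care with the determinant factors and with the fact that the integral $I_{p,c}$ over $\*y$ is taken at fixed $\*f_i^L$ whereas the expectation is over $\*f_i^L$ at fixed $\*y_i$; keeping these two integrations conceptually separate is what makes the closed form possible, and is worth stating explicitly.
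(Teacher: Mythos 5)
Your proposal is correct and follows essentially the same route as the paper's own proof: it evaluates $I_{p,c}$ explicitly (noting it is constant in $\*f_i^L$) and computes $\mathbb{E}_{q(\*f_i^L)}\left[p(\*y_i|\*f_i^L)^{c}\right]$ by a multivariate completion of the square with precision $\widetilde{\*\Sigma}^{-1}$, identifying $E(c)$ as $\tfrac{1}{c}$ times that expectation. The bookkeeping discrepancies you flag (the exact power of $(2\pi\sigma^2)$ in $I(c)$ and the exponent on $I(\gamma)$ in the $\Lg$ identity) are artefacts of the proposition's statement rather than gaps in your argument, and the paper's appendix handles them exactly as you suggest.
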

As shown in Appendix \ref{Appendix:GP_loss_change}, it is easy but tedious to derive this result.
While the results of using $\Lb$ and $\Lg$ are often virtually identical (see for instance Figure \ref{Fig:BNN_experiment_3} in Appendix \ref{Appendix:BNN}), our experiments on \DGPs will exclusively use the $\Lg$.
This is done because unlike for $\Lb$, computations with $\Lg$ can be performed in its numerically more stable $\log$ form.

\begin{figure}[h!]
%\vskip -2.0cm
\begin{center}
\includegraphics[trim= {0.25cm 0.0cm 0.3cm 0.0cm}, clip,  
 width=1.00\columnwidth]{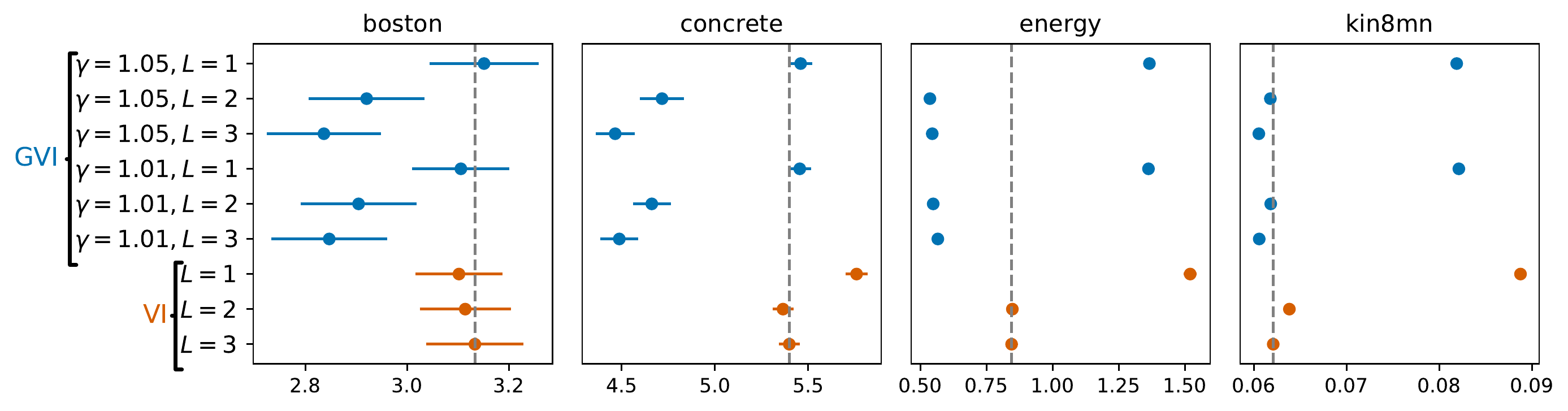}
 \includegraphics[trim={0.25cm 0.0cm 0.3cm 0.75cm}, clip,  
 width=1.00\columnwidth]{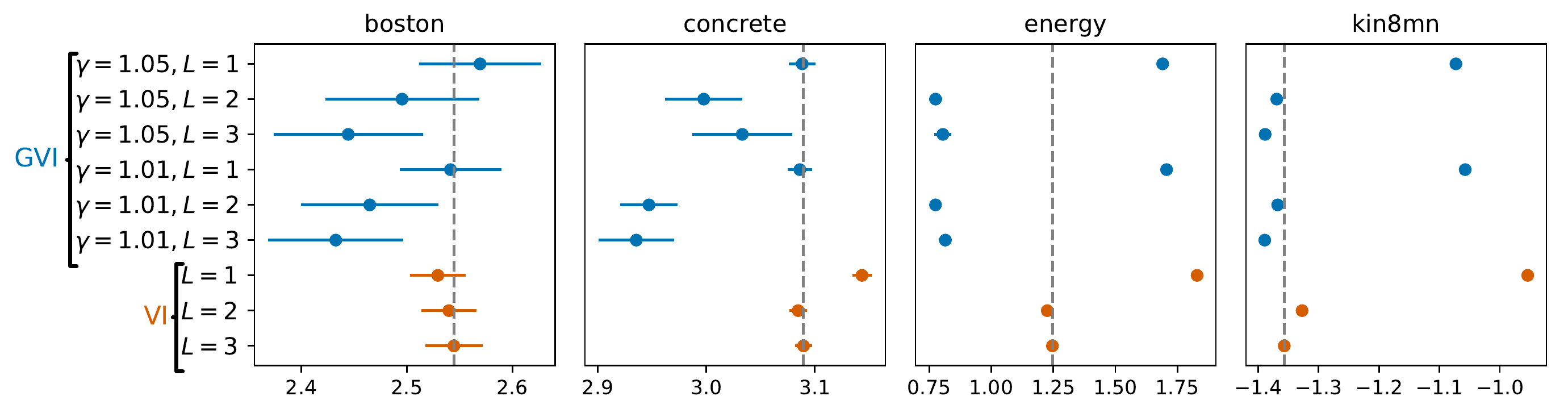}
\vspace*{0.25cm}
\includegraphics[trim= {2.3cm 0.3cm 3.25cm 0.0cm}, clip,  
 width=1.00\columnwidth]{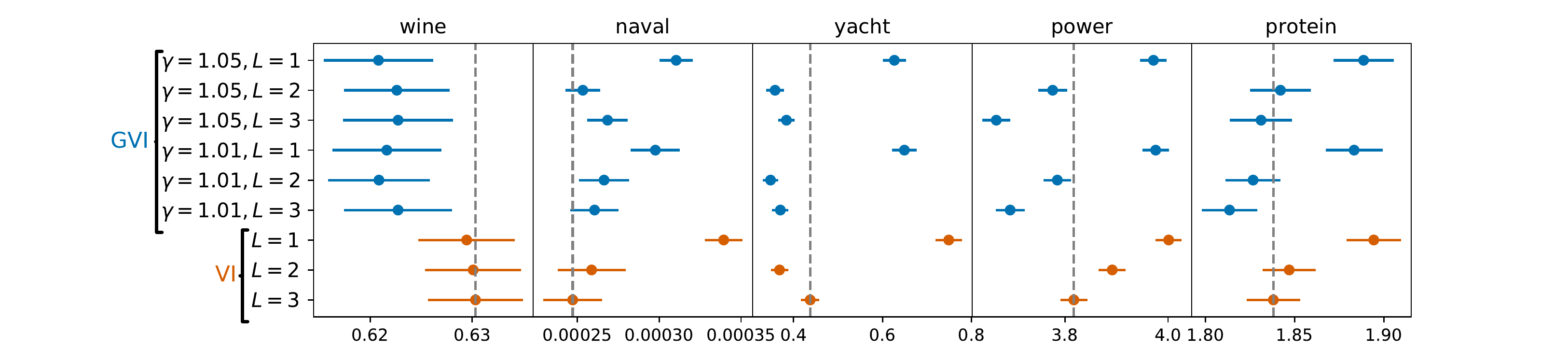}
 \includegraphics[trim={2.3cm 0.0cm 3.25cm 0.90cm}, clip,  
 width=1.00\columnwidth]{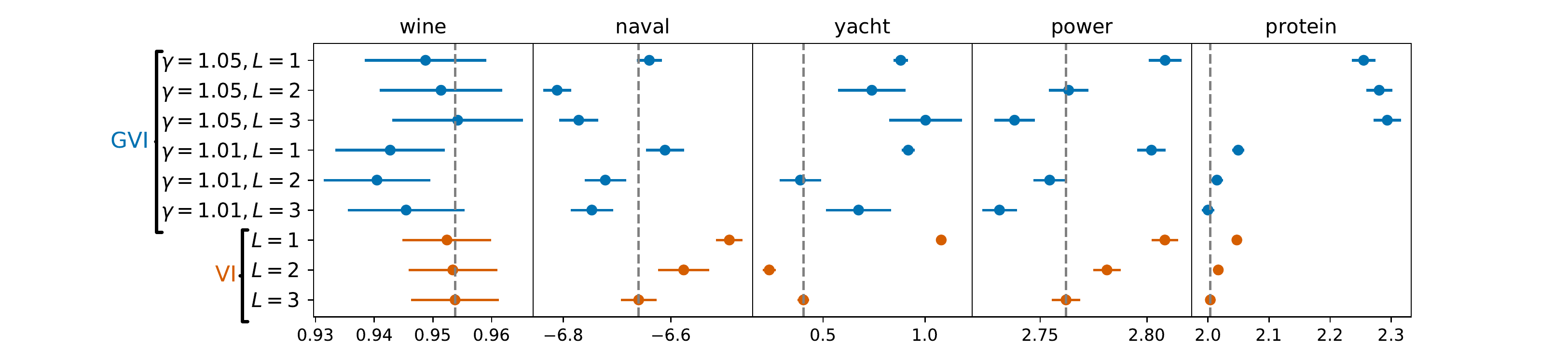}
\caption{
    \bv
	Top rows depict \RMSE, bottom rows the \NLL across a range of data sets using \DGPs. Dots correspond to means, whiskers to standard errors. The further to the left, the better the predictive performance. 
	For the depicted selection of data sets, \GVIColor{\textbf{\GVI}} comprehensively outperforms \VIColor{\textbf{standard \VI}}.
}
\label{Fig:DGP_experiment}
%\vskip -0.25in
\end{center}
\end{figure}

\subsubsection{Results}
\label{sec:DGP_experiment_results}

As with the experiments on \BNNs in the previous section, we make comparisons as fair as possible by using the \texttt{gpflow} \citep{gpflow} implementation of \citet{DeepGPsVI}.
Further, we use the same settings, meaning that all experiments use 20,000 iterations of the ADAM optimizer \citep{ADAM} with a learning rate of 0.01 and default settings for all other hyperparameters.
We perform inference for each of the UCI data sets \citep{UCI} after normalization using the RBF kernel with dimension-wise lengthscales, %\theo{ARD?}, 
100 inducing points, with batch sizes of $\min(1000, n)$ and $D^l = \min(D_x, 30)$.
As before, we use 50 random splits with 90\% training and 10\% test data to assess predictive performance in terms of negative log likelihood (\NLL) and root mean square error (\RMSE). 
With this, we compare two inference schemes:
\begin{itemize}
    \item[(1)] The state of the art \VIColor{\textbf{standard \VI}} techniques of \citet{DeepGPsVI};
    \item[(2)] A \GVIColor{\textbf{\GVI}} variant of the same inference method which replaces the log score with the robust $\gamma$-divergence based scoring rule $\Lg$.
\end{itemize}
For choosing $\gamma$, we note that inferences are robust for $\gamma > 1$ and that $\Lg$ recovers the log score as $\gamma \to 1$. 
At the same time, the scoring rule will grow increasingly happy to ignore virtually all of the data as $\gamma \to \infty$. 
Accordingly, one will typically want to pick
\begin{IEEEeqnarray}{rCl}
    \gamma = 1 + \varepsilon
    \nonumber
\end{IEEEeqnarray} 
for a small $\varepsilon > 0$. 
Choosing $\gamma$ in this way encodes the intuition that a good scoring rule will behave like the log score for all but the most extreme outliers.
We thus pick $\varepsilon \in \{0.01, 0.05\}$, a range of values also successfully used in \citep{MartinJankowiak}. 
We note that hyperparameter optimization might appear to be the natural choice for picking $\gamma$, but will not perform well in practice: Rather than producing robust inferences, this will select for a value of $\gamma$ generally producing the smallest \GVI objective values across $\mathcal{Q}$\footnote{
    In practice, this means that hyperparameter optimization pushes $\gamma \to 1$ or $\gamma \to \infty$, depending on the magnitudes of $\{p(\*y_i|\*f_i^L)\}_{i=1}^n$.
}.

The results are depicted in Figure \ref{Fig:DGP_experiment} and confirm our two main intuitions about robustness: 
Firstly, the robust scoring rule provides a significant performance improvement. 
Secondly, the smaller value of $\gamma$ (which will be closer to the log score) generally outperforms the larger value of $\gamma$, though both choices are equally good in many data sets\footnote{
    We expect this second finding about $\gamma$ to generalize to new settings so long as the inputs are normalized and the outputs are not high-dimensional (see also Figure \ref{Fig:BNN_experiment_3} for some empirical evidence of this on \BNNs), which would make $\gamma = 1.01$ a decent default choice in such scenarios.
    }.
We believe that the performance gains of the robust scoring rule is due to large parts of the latent spaces being non-informative. 
This implies that it is beneficial to  implicitly down-weight the influence of these non-informative parts of the latent space. 
It is clear that robust scoring rules do exactly that (see for instance Figure \ref{Fig:Influence_fct_pic}), which explains their superior performance in the \DGP experiments.
This intuition is further bolstered by the following observation: Generally, performance \textit{improves} with a larger number of layers $L$ under the robust score $\Lg$, but \textit{worsens} under the log score.
In other words: The more dispersed the  prior over the latent space (i.e., the \DGP) becomes, the more inferential outcomes benefit from implicitly ignoring its non-informative regions.
In Appendix \ref{Appendix:DGPs} we provide a small batch of additional results showing that as expected, modifying the uncertainty quantifier $D$ is less beneficial for \DGPs than it is for \BNNs. 
Most likely, this is due to hyperparameter optimization for the kernels of the \DGP. Together with the fact that Gaussian Processes are far more informative priors than fully factorized normals, careful selection of the hyperparameters ensures that unlike in the \BNN case, the prior is well-specified.

%\theo{not sure i buy this but ok - I think there are more directions/questions in here to explore but yeah not in here}
%\jack{Is it worth pointing out here that we are able to do better than a method minimising the log-score, based on log-score! I guess based on the robusts methods ability to provide a better test set generalisations I think that is quite remarkable, and also further explains why we want $\gamma$ so close to 0}

\section{Discussion \& Conclusion}

In this work, we re-examined the working assumptions that have proven powerful and useful in spreading Bayesian inference into virtually all domains of scientific endeavour.
Studying the challenges of contemporary inference, we concluded that the traditional assumptions underlying Bayesian statistics are misaligned with the realities of modern large-scale problems. 
At the same time, we adopted an optimization-centric view on Bayesian inference that allows us to prove a novel optimality result for standard Vatiational Inference (\VI). 
% %
In spite of this theoretical result, we pointed out that belief distributions computed as alternative approximations to the Bayesian posterior often perform better in practice. 
We explained that this is because standard \VI is optimal \textit{only} relative to  a particular objective function---specifically, an objective function whose origins are the very assumptions that are misaligned with reality.
Inspired by this insight, we proceed to derive a new class of posterior belief distributions that do not rely on these assumptions.
%, this inspires us to investigate an 
%\jack{contempory twice in this sentence}\theo{use modern for last one}
%
To do so, we first set out a new axiomatic foundation for Bayesian inference culminating in the Rule of Three (\RoT). The \RoT is an optimization problem with three arguments, each of which addresses one of the shortcomings of the standard Bayesian assumptions. Yet, while it defines a much larger family of posteriors, the \RoT also recovers the standard Bayesian update rule as a special case.
Building on this novel generalized class of posteriors, we introduce Generalized Variational Inference (\GVI). 
In essence, \GVI restricts attention to the tractable subset of \RoT posterior beliefs contained within a variational family.
Next, we show that \GVI satisfies a number of desirable theoretical properties: It is modular (in the sense of Theorem \ref{Thm:GVI_modularity}) and consistent in the frequentist sense. Moreover, the objectives for a sub-class of \GVI posteriors  form an approximate evidence lower bound on a generalized Bayesian posterior.
On the practical side, we show three generic applications of \GVI in the broad context of customized uncertainty quantification and robustness. Specifically, we demonstrate how \GVI can be used to adjust posterior variances and produce inferences that are robust to model and prior misspecification. %Further, we show how it can produce more desirable uncertainty quantification.
Lastly, we demonstrate \GVI's power, usefulness and versatility on two model classes that encapsulate the misalignment between the assumptions underlying the traditional Bayesian paradigm and the realities of modern large-scale Bayesian inference: Bayesian Neural Networks (\BNNs) and Deep Gaussian Processes (\DGPs). 
%
%\jack{Something in here about \DVI? it doesn't satisfy our axioms and we show two cases where failing to satisfy our axioms produces undesirable or in transparent inference. or this could go later where you talk about the optimality of \VI}

The current work makes two major contributions. The first of these is conceptual: We propose a generalization of Bayesian inference through the Rule of Three (\RoT). This aspect of our work stands in the tradition of previous generalizations of Bayesian inference such as the one in \citet{Bissiri} and \citet{Jewson}. Unlike previous work however, we take the first step in the development of Bayesian inference procedures that generalize beyond multiplicative belief updates. 
Indeed, this step is natural once one observes the intimate connection between Bayesian inference and infinite-dimensional optimization as set out in Observation \ref{observation:isomorphism}.
As explained in Sections \ref{sec:VI_and_Bayes_posteriors} and \ref{sec:RoT}, an immediate consequence of this generalization is a meaningful taxonomy of various variational inference procedures: Unlike most other variational approximations to the Bayesian posterior, standard Variational Inference (\VI) is a special case of the \RoT. This special standing of standard \VI also expresses itself in Theorem \ref{thm:VI_optimality}, which endows standard \VI with an (objective-dependent) quality guarantee that is absent from alternative approximation procedures.

The second contribution is methodological and consists in making the \RoT useful for real world inference problems via Generalized Variational Inference (\GVI). 
We show that \GVI modularly addresses the three shortcomings associated with traditional Bayesian inference. This is done by linking it to the literature on generalized Bayesian inference and robust scoring rules as well as to the literature on robust divergences.
As Section \ref{sec:experiments} shows with two applications on large-scale inference problems, \GVI posteriors of this form can yield significant predictive performance improvements in modern statistical machine learning models.
%\theo{isnt this a third contribution in application to ML? think you underplay this a bit due to theoretical beauty} %

With the provision of a new optimization-centric generalization on Bayesian inference, the current paper is only the first step on a long road to designing posteriors that conform with the demands of contemporary models and inferential problems. 
In the wake of this, several important questions have been left unanswered. For example, it is unclear how to choose hyperparameters occurring in the loss or uncertainty quantifier beyond simplistic (albeit well-working) rules of thumb.
Further, we have not characterized the class of posteriors satisfying the axioms in Section \ref{sec:axioms} uniquely. Though the \RoT is unique when restricting attention to elementary functions and is arguable the most desirable form due to its relationship to the standard Bayesian and variational posteriors, we are convinced that the uniqueness result of Theorem \ref{Thm:additivityD} can be derived under much stronger conditions.
\GVI also has an obvious intimate connections with PAC-Bayesian approaches that we will be exploring in the near future.
Moreover, the flexibility in choosing different uncertainty quantifiers $D$ brings about another interesting question: Given that frequentist consistency holds, what impact does $D$ have on the contraction rate? And do certain special cases of $D$ endow \GVI with compelling geometric interpretations?

In summary, the current work is but the start of an investigation into the theoretical, methodological and applied consequences of the \RoT and \GVI.
It is clear that the ideas introduced in the current paper---while barely scratching the surface of the possible---have produced valuable insights and shown much promise in all three of these regards.
Consequently, it is with much excitement that we look forward to future contributions on questions of theory, methodology and practice surrounding the \RoT and \GVI. 
%Amongst many others, we plan on answering these questions in future work.\theo{could finish with a better phrasing/line/excitimenet for future dirs.}

% Acknowledgements should go at the end, before appendices and references
\acks{
    We would like to cordially thank Edwin Fong, Benjamin Guedj, Chris Holmes, David Dunson,  Mark van der Wilk, Giles Hooker and Alex Alemi for fruitful discussions, insights, comments and pointers that were invaluable for improving the paper.    
    JK and JJ are funded by \EPSRC grant EP/L016710/1 as part of the Oxford-Warwick  Statistics Programme (\OxWaSP). JK is additionally funded by the Facebook Fellowship Programme and the London Air Quality project at the Alan Turing Institute for Data Science and AI. TD acknowledges funding from \EPSRC grant EP/T004134/1, the Lloyd's Register Foundation programme on Data Centric Engineering, and the London Air Quality project at the Alan Turing Institute for Data Science and AI. This work was furthermore supported by The Alan Turing Institute for Data Science and AI under \EPSRC grant EP/N510129/1 in collaboration with the Greater London Authority.
}

% Manual newpage inserted to improve layout of sample file - not
% needed in general before appendices/bibliography.

\newpage

\appendix

\section{Definitions for robust divergences}
\label{Appendix:definitions}

%\jack{Do we want references before or after the appendix, I feel like the style file had the references before}

The following is an overview of definitions for the most important divergences that are used throughout the paper.
\begin{definition}[The $\alpha\beta\gamma$-divergence \ABGD \citep{ABCdiv}]
The $\alpha\beta\gamma$-divergence \ABGD \cite{ABCdiv} takes the form 
\begin{IEEEeqnarray}{rCl}
\ABGD(q(\*\theta)||\pi(\*\theta)) %\nonumber\\
&=&\frac{1}{\alpha(\beta-1)(\alpha+\beta-1)r}\left[\left(\tilde{D}_{G}^{(\alpha,\beta)}(q(\*\theta)||\pi(\*\theta))+1\right)^{r}-1\right]
%\label{Equ:GeneralDivergence}
\nonumber
\end{IEEEeqnarray}
where $r>0$, $\alpha\neq 0$, $\beta\neq 1$ and
\begin{IEEEeqnarray}{rCl}
\tilde{D}_{G}^{(\alpha,\beta)}(q(\*\theta)||\pi(\*\theta)) %\nonumber\\
&=&\int \left(\alpha q(\*\theta)^{\alpha+\beta-1}+(\beta-1)\pi(\*\theta)^{\alpha+\beta-1}-(\alpha+\beta-1)q(\*\theta)^{\alpha}\pi(\*\theta)^{\beta-1}\right)d\*\theta
%\label{Equ:SubGeneralDivergence}
\nonumber
\end{IEEEeqnarray}
\label{Def:ABGdiv}
\end{definition}
Below we list some well-known special cases of the  family of divergences defined by \ABGD that we use in the main paper. This exposition is a summary of the review conducted in \cite{ABCdiv}. 
%We note that the parametrizations of these divergences may vary throughout the literature.
%
\begin{definition}[The $\alpha$-divergence (\AD) \citep{chernoff1952measure,AmariBook}]
The $\alpha$-divergence is defined as
\begin{IEEEeqnarray}{rCl}
\AD(q(\*\theta)||\pi(\*\theta))&=&\frac{1}{\alpha(1-\alpha)}\left\lbrace 1-\int q(\*\theta)^{\alpha}\pi(\*\theta)^{1-\alpha}d\*\theta \right\rbrace,
\nonumber
\end{IEEEeqnarray}
where $\alpha\in\mathbb{R}\setminus \left\lbrace0,1\right\rbrace$. Note that \AD is recovered from \ABGD when $r=1$ and $\beta=2-\alpha$. \AD is also a member of the $f$-divergence family.
\label{Def:alphaD}
\end{definition}

\begin{definition}[R\'enyi's $\alpha$-divergence (\RAD) \citep{renyi1961measures}]
R\'enyi's $\alpha$-divergence is defined as
\begin{IEEEeqnarray}{rCl}
\RAD(q(\*\theta)||\pi(\*\theta))&=&\frac{1}{\alpha(\alpha-1)}\log\left(\int q(\*\theta)^{\alpha}\pi(\*\theta)^{1-\alpha}d\*\theta\right),
\nonumber
\end{IEEEeqnarray}
where $\alpha\in\mathbb{R}\setminus \left\lbrace0,1\right\rbrace$.  \RAD is recovered from \ABGD in the limit as $r\rightarrow0$ and $\beta=2-\alpha$. 
Note that we use the rescaled version proposed by \citet{ConvexStatisticalDistances, ABCdiv} rather than the original parameterization of \citet{renyi1961measures} because it links the divergence more closely to other robust alternatives of the \KLD.
%
% The \renyi $\alpha$-divergence can be recovered from the $\alpha$-divergence by applying the transformation 
% \begin{IEEEeqnarray}{rCl}
% \RAD(q(\*\theta)||\pi(\*\theta))&=&\frac{1}{\alpha(\alpha-1)}\log\left(1-\alpha(1-\alpha)\AD(q(\*\theta)||\pi(\*\theta))\right).\nonumber
% \end{IEEEeqnarray}
\label{Def:renyiAlphaD}
\end{definition}

\begin{definition}[The $\beta$-divergence (\BD)  \citep{BasuDPD,mihoko2002robust}]
The $\beta$-divergence \citep{mihoko2002robust} was originally introduced under the name ''density power divergence‘‘ and is defined as
\begin{IEEEeqnarray}{rCl}
\BD(q(\*\theta)||\pi(\*\theta))
&=&\frac{1}{\beta(\beta-1)}\int q(\*\theta)^{\beta}d\*\theta+\frac{1}{\beta}\int\pi(\*\theta)^{\beta}d\*\theta -\frac{1}{\beta-1}\int q(\*\theta)\pi(\*\theta)^{\beta-1}d\*\theta,\nonumber
\end{IEEEeqnarray}
where $\beta\in\mathbb{R}\setminus \left\lbrace0,1\right\rbrace$. \BD is recovered from \ABGD when $r=\alpha=1$. \BD is a member of the Bregman-divergence family. 
%\BD has often been referred to as the Density-Power Divergence (DPD) in the statistics literature \cite{BasuDPD}.
\end{definition}

\begin{definition}[The $\gamma$-divergence (\GD) \citep{GammaDivNotSummable}]
The $\gamma$-divergence \citep{GammaDivNotSummable} is defined as
\begin{IEEEeqnarray}{rCl}
\GD(q(\*\theta)||\pi(\*\theta))&=&\frac{1}{\gamma(\gamma-1)}\log\frac{\left(\int q(\*\theta)^{\gamma}d\*\theta\right)\left(\int\pi(\*\theta)^{\gamma}d\*\theta\right)^{\gamma-1}}{\left(\int q(\*\theta)\pi(\*\theta)^{\gamma}d\*\theta\right)^{\gamma}},
\nonumber
\end{IEEEeqnarray}
where $\gamma\in\mathbb{R}\setminus \left\lbrace0,1\right\rbrace$. \GD is recovered from \ABGD in the limit as  $r\rightarrow0$, $\alpha=1$ and $\beta=\gamma$.
The \GD can be shown to be generated from the \BD applying the following transformation
\begin{IEEEeqnarray}{rCl}
    c_0\int g(x)^{c_1}f(x)^{c_2} dx&\rightarrow& c_0\log \int g(x)^{c_1}f(x)^{c_2} dx \nonumber
\end{IEEEeqnarray}
to all three of the \BD terms. This is of interest because the \RAD is generated by the \AD by applying the same transformation of its two terms.
\label{Def:GammaD}
\end{definition}

\begin{remark}[Recovering the \KLD]
The \AD, \RAD, \BD and \GD all recover the \KLD in the limit as $\alpha=\beta=\gamma\rightarrow 1$. This can be shown using the \textit{replica trick}:
%\theo{cite smth - havent heard of it called like that myself so maybe others too} \jeremias{While I've never heard of it either, a quick google search immediately yields the formula} that 
%
\begin{IEEEeqnarray}{rCl}
   \lim_{x\rightarrow0}\frac{Z^x-1}{x}=\log(Z).\nonumber 
\end{IEEEeqnarray}
\end{remark}

\section{Comparing robust divergences as uncertainty quantifiers}
\label{Appendix:uncertainty_quantification}

In order to understand the impact the choice of divergence used for regularization and its hyperparameter have on the inference, this section studies variations in the argument $D$. This investigation is conducted on a simple Bayesian linear regression example with two highly correlated predictors given by
\begin{IEEEeqnarray}{rCl}
    \sigma^2&\sim&\mathcal{I}\mathcal{G}(a_0,b_0)\nonumber\\
    \*\theta|\sigma^2&\sim&\mathcal{N}_2\left(\*\mu_0,\sigma^2V_0\right)\\
    y_i|\*\theta,\sigma^2&\sim&\mathcal{N}\left(X_i\*\theta,\sigma^2\right).
    \nonumber\label{Equ:BLRresponse}
\end{IEEEeqnarray}
We choose this example because it provides a closed form exact Bayesian posteriors and closed form objectives for the variational  objectives of \VI and \GVI. 
%
%In other words, both the uncertainty quantifier term as well as the expected loss term are avaliable in closed form for $\ell(\*\theta,x)=-\log(p(x|\*\theta))$. 
%
Consequently, no sampling is required---neither for calculating the exact posterior nor for the optimization of the \GVI and \VI posteriors---so that numerical errors and uncertainties are kept to a minimum.

Studying the exact closed form Bayesian (normal) posterior for $\*\theta = (\theta_1, \theta_2)^T$, one observes that if the two predictors are correlated, then the posterior covariance of $\*\theta$ will inherit this correlation. 
As we wish to investigate the underestimation of marginal variances for standard \VI as well as the way in which \GVI can address this, our numerical studies leverage this finding. 
In particular, we simulate the highly correlated predictors 
 $$(x_1, x_2)^T \sim\mathcal{N}_2\left(\begin{pmatrix} 0\\ 0\\ \end{pmatrix},\begin{pmatrix} 1 & 0.9 \\ 0.9 & 1\\ \end{pmatrix}\right)$$ 
 and compare the performance of the different \GVIColor{\textbf{\GVI}} and \VIColor{\textbf{\VI}} posteriors on the resulting Bayesian linear regression. 
 All posteriors are based on the the mean field normal variational family
\begin{IEEEeqnarray}{rCl}
\mathcal{Q} & = & \{%q(\theta_1, \theta_2, \sigma^2)|q(\theta_1, \theta_2, \sigma^2) =
    q(\theta_1|\sigma^2,\*\kappa_n)q(\theta_2|\sigma^2,\*\kappa_n)q(\sigma^2|\*\kappa_n)\} \text{ so that } \*\kappa_n = (a_n, b_n, \mu_{1,n}, \mu_{2,n}, v_{1,n}, v_{2,n})^T
    \nonumber \\
    && \quad  \text{ with } a_n, b_n, v_{1,n}, v_{2,n} > 0 \text{ and } \mu_{1,n}, \mu_{2,n} \in \mathbb{R} \nonumber \\
q(\sigma^2|\*\kappa_n)&=&\mathcal{I}\mathcal{G}(\sigma^2|a_n,b_n)
\nonumber\\
q(\theta_1|\sigma^2,\*\kappa_n)&=&\mathcal{N}\left(\theta_1|\mu_{1,n},\sigma^2v_{1,n}\right)
\nonumber \\
q(\theta_2|\sigma^2,\*\kappa_n)&=&\mathcal{N}\left(\theta_2|\mu_{2,n},\sigma^2v_{2,n}\right).
\nonumber
\end{IEEEeqnarray}
For all experiments, $n=25$ observations are simulated from eq. \eqref{Equ:BLRresponse} with $\*\theta=(2,3)$ and $\sigma^2=4$.
We use the negative log-likelihood of the correctly specified model as given in eq. \eqref{Equ:BLRresponse} as loss function. 
To investigate \GVI's behaviour across different uncertainty quantifiers, we vary its choice as $D\in\left\lbrace \AD,\BD,\RAD,\GD\right\rbrace$. 
The results are depicted in Figs. \ref{Fig:GVI_fail_AD} and \ref{Fig:GVI_uncertainty_comp_RADBDGD}-\ref{Fig:GD_robust_prior}.
We summarize the most interesting results from these plots in the following three subsections.

%For GVI we aim to solve $P(D,\ell,\mathcal{Q})$ with $\ell(y_i,\*\theta)=-\log\left(p\left(y_i;X_i,\boldsymbol{\beta},\sigma^2\right)\right)$ and $D\in\left\lbrace \AD,\BD,\RAD,\GD\right\rbrace$.

%\subsection{Key findings}

\begin{figure}[t!]
%\vskip -2.0cm
\begin{center}
\includegraphics[trim= {1.8cm 0.5cm 2.3cm 0.8cm}, clip,  
 width=1\columnwidth]{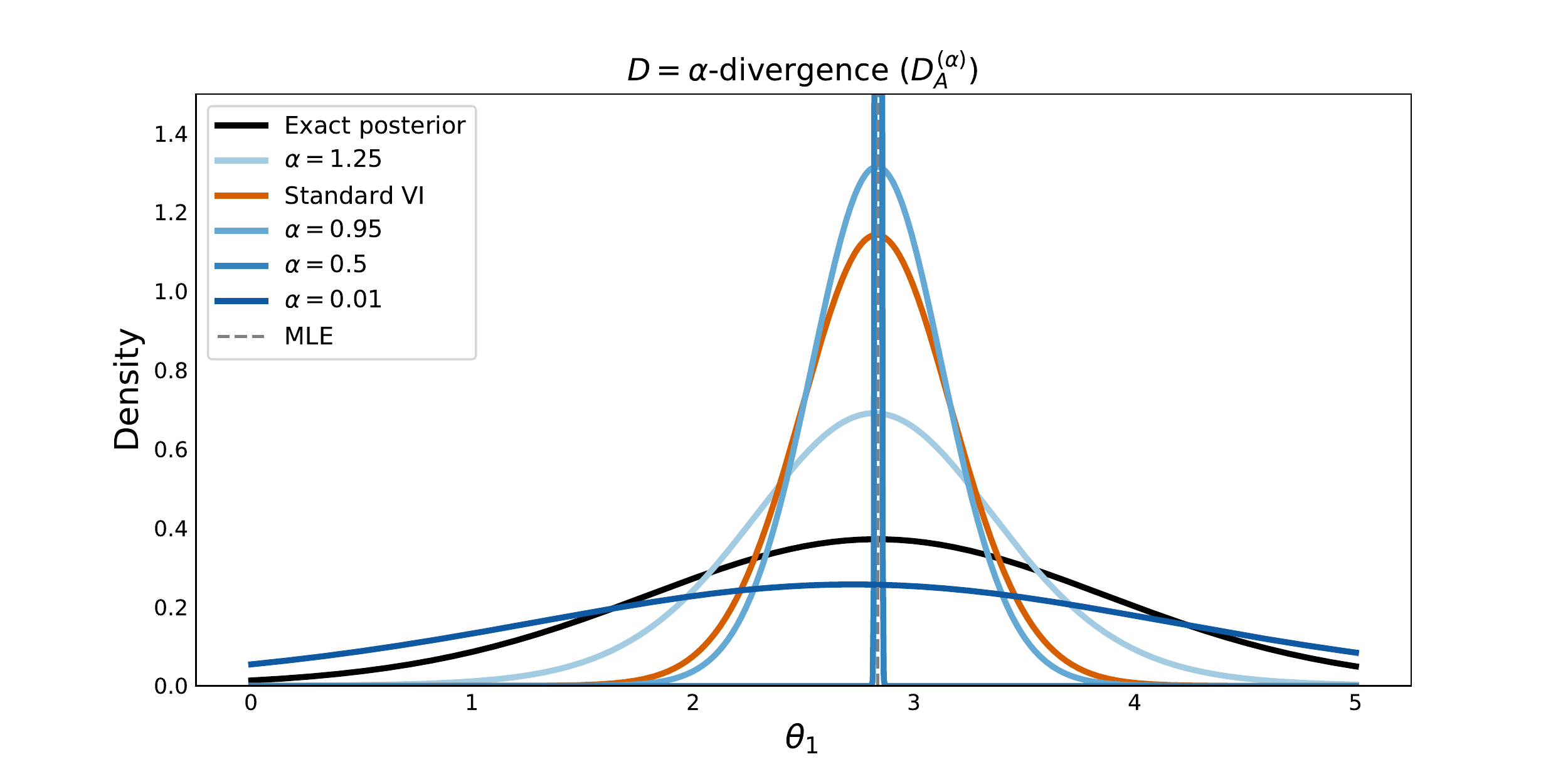}
\caption{
\bv
Marginal {\color{VIColor}{\textbf{\VI}}} and {\color{GVIColor}{\textbf{\GVI}}} posterior for the $\*\theta_1$ coefficient of a Bayesian linear model under the \AD prior regularizer for different values of $\alpha$. 
The boundedness of the \AD causes {\color{GVIColor}{\textbf{\GVI}}} posteriors to severely over-concentrate if $\alpha$ is not carefully specified. Prior Specification: $\sigma^2\sim\mathcal{IG}(20,50)$, $\theta_1|\sigma^2\sim\mathcal{N}(0,25\sigma^2)$ and $\theta_2|\sigma^2\sim\mathcal{N}(0,25\sigma^2)$.}
\label{Fig:GVI_fail_AD}
%\vskip -0.25in
\end{center}
\end{figure}

\subsection{A cautionary tale: The boundedness of the $\alpha$-divergence (\AD)}

Of the alternative divergences to the \KLD contained within the \ABGD family defined in Appendix \ref{Appendix:definitions}, \AD is arguably the most well known. 
Our results in Figure \ref{Fig:GVI_fail_AD} show that in spite of its popularity in other contexts, the \AD is not a reliable uncertainty quantifier within \GVI, at least for $\alpha \in (0,1)$.
In particular, the plot shows that the solutions to $P(\ell,\AD, \mathcal{Q})$ can produce essentially degenerate posteriors if $\alpha \in (0,1)$. Note also that this happens in spite of the relatively small sample size of $n=25$. 
For example, when $\alpha=0.5$, $P(\ell,\AD, \mathcal{Q})$ is visually indistinguishable from a point mass at the maximum likelihood estimate.
%When using the \AD to regularise the prior and decreasing $\alpha$ away from 1, we observe very little uncertainty quantification in the \GVI posteriors. 
This is a consequence of the boundedness of \AD for $\alpha\in(0,1)$: Specifically, it holds that $\AD \leq \left(\alpha(1-\alpha)\right)^{-1}$ for $\alpha\in(0,1)$. 
As $\alpha$ decreases from 1, this upper-bound initially also decreases until reaching its minimum for $\alpha = 0.5$. As a result, decreasing $\alpha$ from unity to $0.5$ significantly decreases the maximal penalty for posterior beliefs far from the prior. 
In turn, this forces the posterior to focus mostly on minimising the in-sample loss. %However, we note for a finite data set we will never exactly recover a point mass at the MLE. % as the $\AD$ between tha point mass an a continuous distribution is undefined.

\begin{figure}[t!]
%\vskip -2.0cm
\begin{center}
\includegraphics[trim= {1.75cm 0.0cm 2.2cm 0.5cm}, clip,  
 width=1\columnwidth]{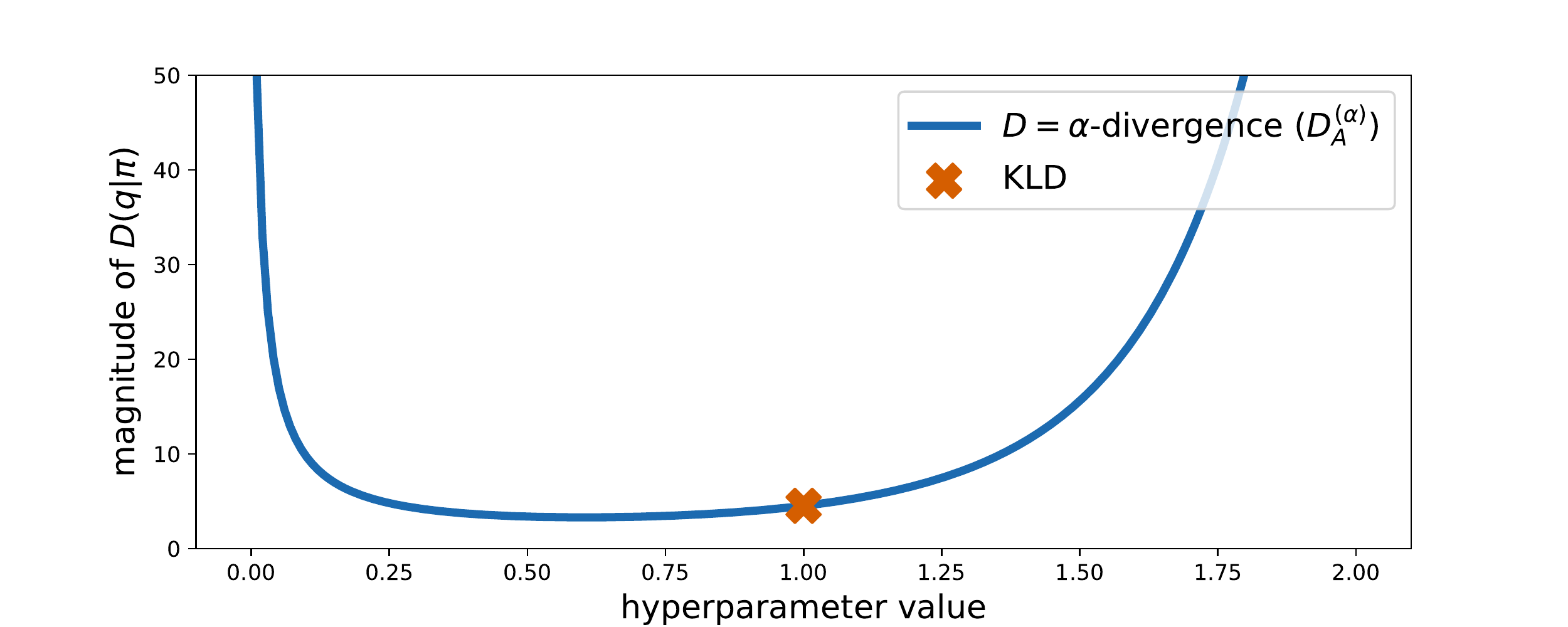}
\caption{A comparison of the size of \textbf{\AD} for various values of $\alpha$ between two bivariate Normal Inverse Gamma distributions with $a_n=512$, $b_n=543$, $\boldsymbol{\mu}_n=(2.5,2.5)$, $\mathbf{V}_n=\text{diag}(0.3,2)$ and $a_0=500$, $b_0=500$, $\mu_0=(0,0)$, $V_0=\text{diag}(25,2)$. % We take $\tilde{\beta}=\beta+1$ and $\tilde{\gamma}=\gamma+1$ in order to center all methods at the \KLD for $\alpha=\tilde{\beta}=\tilde{\gamma}=1$
}
\label{Fig:AlphaDivMagnitude}
%\vskip -0.25in
\end{center}
\end{figure}

%\jeremias{NEED TO REPLOT THIS PIC, but only for alpha div (rest already in main paper)}
This phenomenon is depicted in Figure \ref{Fig:AlphaDivMagnitude}, which also shows that the divergence magnitude increases again as $\alpha$ approaches zero or if $\alpha>1$. %
Comparing the plot with that in Figure \ref{Fig:div_magnitudes}, it is clear why hyperparameter selection for the other members of the \ABGD family of divergences is a less complicated endeavour than for the $\alpha$-divergence.
This does not mean that the \AD cannot be used for producing \GVI posteriors: For example, in Figure \ref{Fig:GVI_fail_AD}, the \AD is able to achieve marginal variances that more closely correspond to the exact posterior for $\alpha=1.25$ and $\alpha=0.01$.  
Generally speaking, for values of $\alpha$ close to zero or above unity, it is possible to achieve more conservative uncertainty quantification. 
Yet, the \AD also functions primarily as a cautionary tale: Without understanding the properties of the uncertainty quantifier $D$ sufficiently well, \GVI may well yield unsatisfactory posteriors.

\begin{figure}[b!]
%\vskip -2.0cm
\begin{center}
\includegraphics[trim= {1.6cm 1.75cm 2.2cm 1.75cm}, clip,  
  width=1\columnwidth]{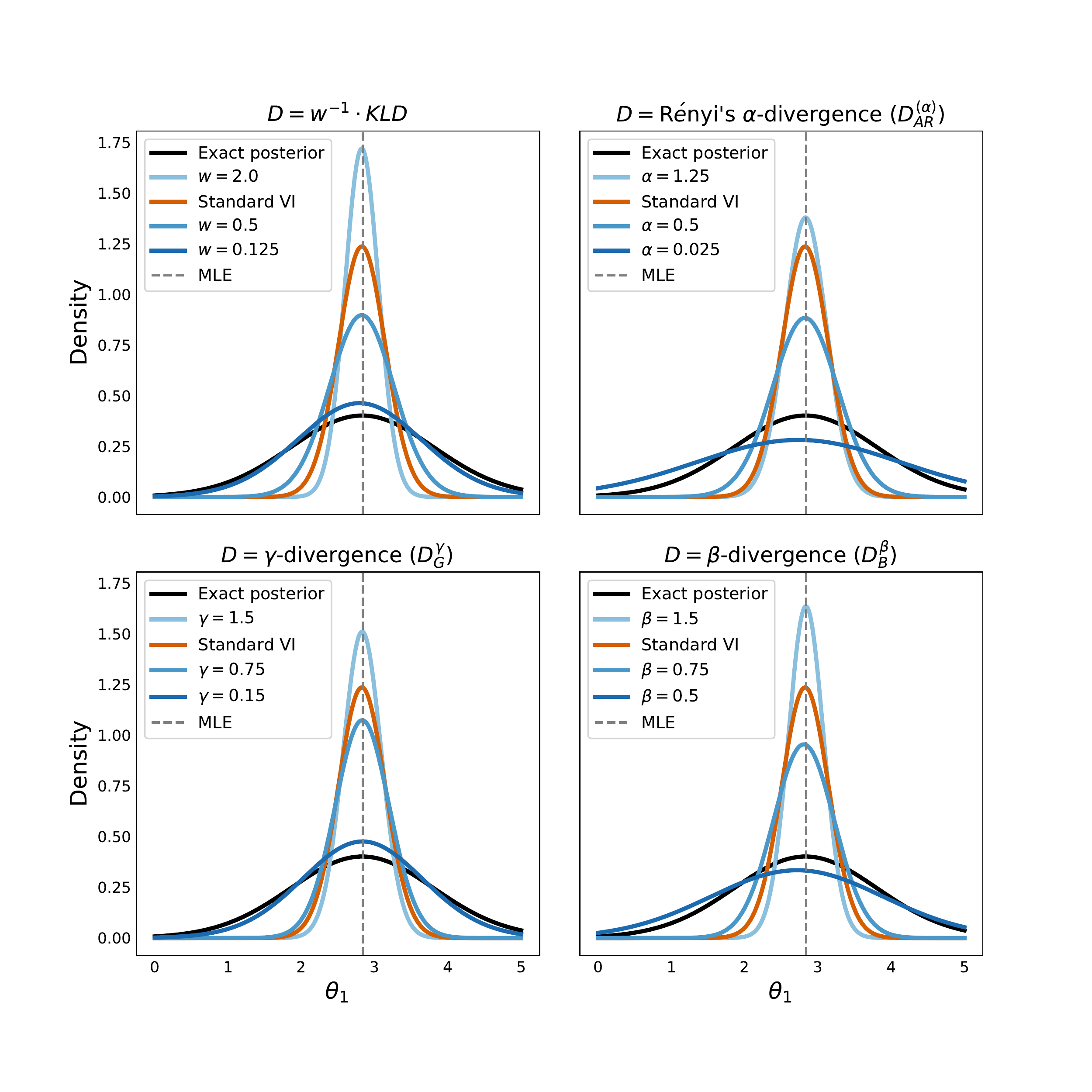}
\caption{
\bv
Marginal \VIColor{\textbf{\VI}} and \GVIColor{\textbf{\GVI}}  posterior for the first coefficient of a Bayesian linear model under the \RAD, \BD, \GD and \wKLD prior regularizers for different uncertainty quantifiers. 
Correlated covariates cause dependency in the \textbf{exact Bayesian posterior} of the coefficients and as a result \VIColor{\textbf{\VI}} underestimates marginal variances. 
\GVIColor{\textbf{\GVI}} has the flexibility to produce wider marginal variances. Prior Specification: $\sigma^2\sim\mathcal{IG}(20,50)$, $\theta_1\sim\mathcal{N}(0,5^2)$ and $\theta_2\sim\mathcal{N}(0,5^2)$.
}
\label{Fig:GVI_uncertainty_comp_RADBDGD}
%\vskip -0.25in
\end{center}
\end{figure}

\subsection{Larger divergences produce larger marginal variances}

In this section, we summarize the impact that a selection of robust divergences have on the marginal variances of the solution to $P(\ell,D,\mathcal{Q})$, again using the Bayesian Linear regression model from before.
%
%Fig. \ref{Fig:DivMagnitudeComp} provides some idea of how the magnitude of the uncertainty quantifier changes with the hyperparameter.  
For a range of robust divergences, Figure \ref{Fig:GVI_uncertainty_comp_RADBDGD} illustrates the impact that changes in $D$ have on the marginal variances of the resulting posteriors. 
As one should expect from re-examining Figure \ref{Fig:div_magnitudes}, the plot shows that \BD, \RAD and \GD are able to produce larger posterior variances for $\beta,\alpha,\gamma<1$ and smaller posterior variances for $\beta,\alpha,\gamma>1$. 
This is a manifestation of the posterior being penalized more heavily ($\beta,\alpha,\gamma<1$) or less heavily ($\beta,\alpha,\gamma>1$) for deviating from the prior than under the traditional \VI. 
It follows that by choosing the divergence appropriately, \GVI can allow greater control over the uncertainty quantification characteristics of the resulting posterior than what is possible under standard \VI. 
Note that Figure \ref{Fig:GVI_uncertainty_comp_RADBDGD} also compares the robust divergences against the re-weighted \KLD. 
While the re-weighted \KLD can prove a successful alternative for producing desirable variational posteriors with larger variances robust divergences if the prior is well-specified, this is no longer the case if the information contained in the prior cannot be relied upon.
We study this and related findings surrounding robustness to the prior in the next section.

\begin{figure}[hp!]
%\vskip -2.0cm
\begin{center}
\includegraphics[trim= {1.7cm 1.85cm 2.3cm 1.20cm}, clip,  
  width=1\columnwidth]{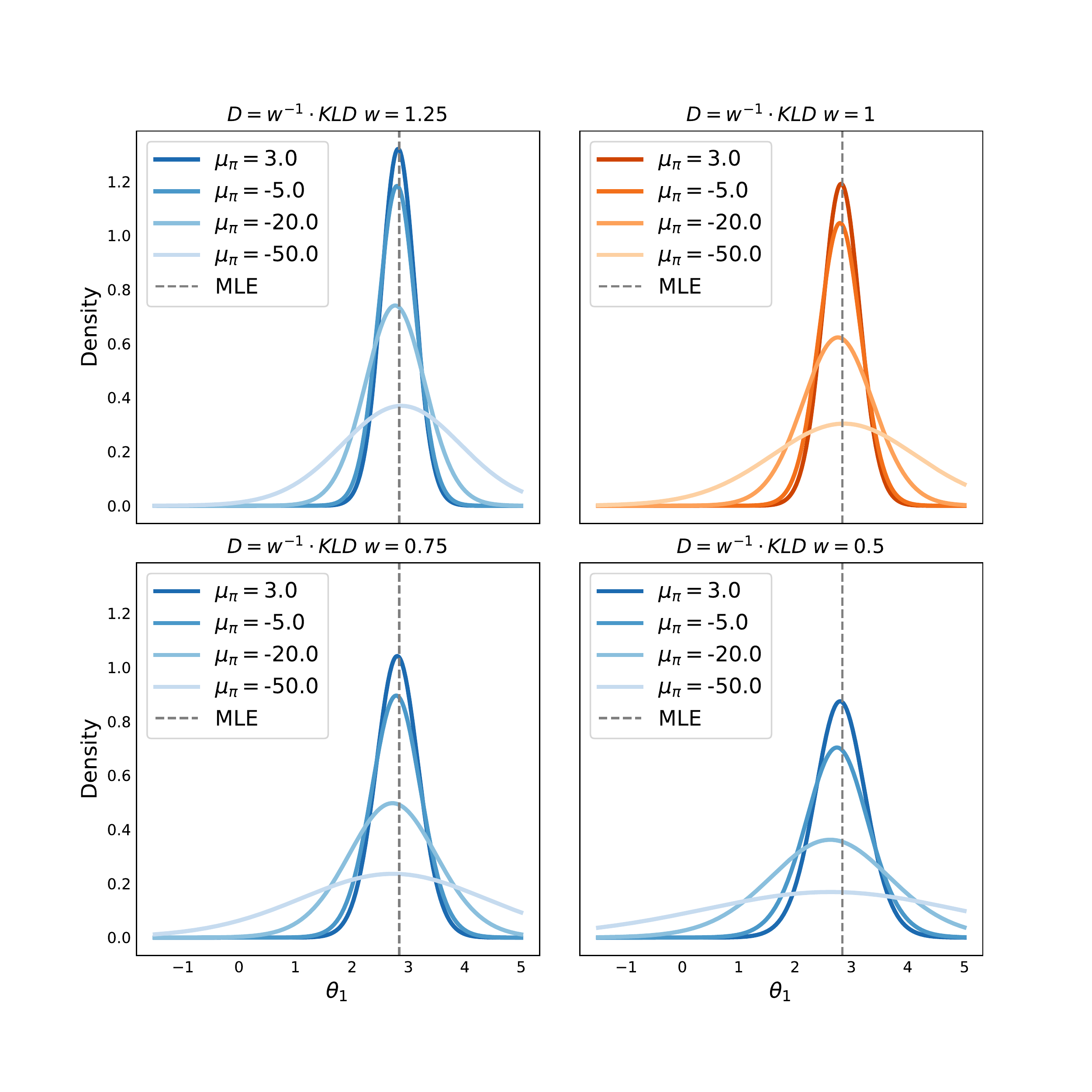}
\caption{
\bv
Marginal \VIColor{\textbf{\VI}} and \GVIColor{\textbf{\GVI}} posterior for the coefficient of a Bayesian linear model under different priors using  $D=\wKLD$ as uncertainty quantifier (\wKLD recovers \KLD for $w = 1$). 
The prior specification is given by $\theta_1|\sigma^2 \sim \mathcal{N}(\mu_{\pi}, \sigma^2)$ with $\sigma^2\sim\mathcal{IG}(3,5)$.}
% prior = Normal(mu_pi, 2 * sigma^2 for sigma^2 sampled from IG(3,5)
\label{Fig:wKLD_robust_prior}
%\vskip -0.25in
\end{center}
\end{figure}

\subsection{Robustness to the prior}

Next, we compare the impact of changing the uncertainty quantifier on the posterior's sensitivity to appropriate specification of the prior. Specifically, we consider and compare \BD, \RAD, \GD and \wKLD. 
When comparing \wKLD with \RAD and \GD, we fixed $\alpha=\gamma=w$. 
Setting the values of these various hyperparameters to be the same is intuitively appealing for comparison due to \GVI's interpretation as approximate Evidence Lower Bound (\ELBO), see Theorems \ref{Thm:LowerBoundMarginalLossLikelihoodRAD} and \ref{Thm:LowerBoundMarginalLossLikelihoodGD}.
For the \BD, different values of $\beta$ had to be selected to ensure its availability in a closed form.

\subsubsection{weighted \KLD (\wKLD)}

Figure \ref{Fig:wKLD_robust_prior} examines how changing the weight $w$ affects the posteriors $P(\ell,\wKLD,\mathcal{Q})$. Notice  that this is equivalent to changing the negative log likelihood to a power likelihood with power $w$.
Further, it should be clear that choosing $w<1$ leads to posteriors that encourage larger variances, making them amenable to conservative uncertainty quantification. 
Unfortunately and again unsurprisingly, this comes at the price of making posteriors \textit{more} sensitive to the prior: After all, one up-weights the term penalizing deviations from the prior.
Conversely, $w>1$ will result in posteriors that are less sensitive to the prior than standard \VI. At the same time, they will  also be more concentrated around the Maximum Likelihood Estimator.
This makes  $D=\wKLD$ less attractive than it could be: Setting $w$ to smaller values will yield larger posteriors variances (at the expense of not being robust to the prior), while setting $w$ to larger values will make the posterior more robust to misspecified priors (but at the expense of far more concentrated posteriors).
As we shall see, this undesirable trade-off is \textit{not} shared by the other (robust) divergences considered in this section. Unlike the \wKLD, they often provide a way to have your cake and eat it, too.

\begin{figure}[hp!]
%\vskip -2.0cm
\begin{center}
\includegraphics[trim= {1.7cm 1.85cm 2.3cm 1.20cm}, clip,  
  width=1\columnwidth]{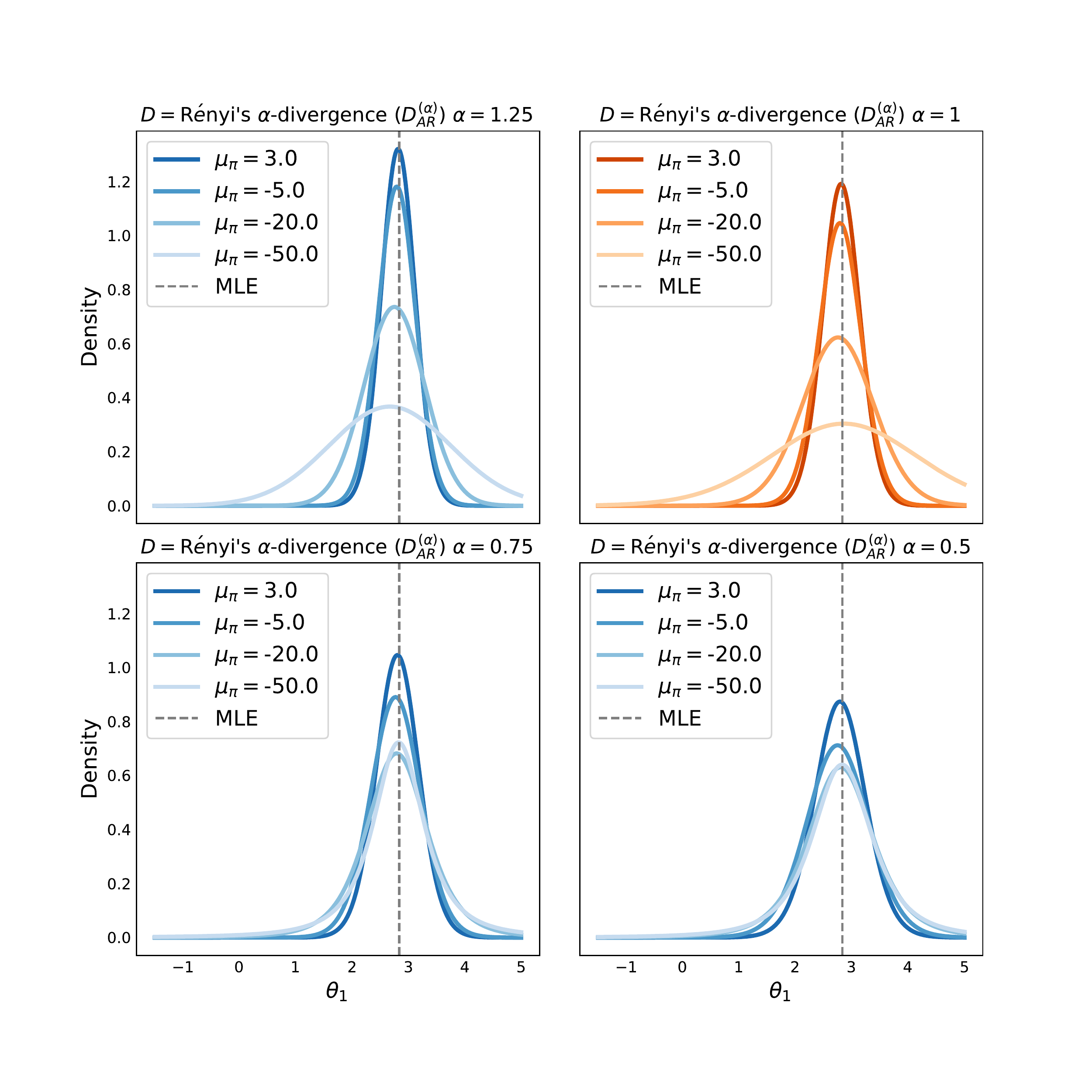}
\caption{
    \bv
    Marginal \VIColor{\textbf{\VI}} and \GVIColor{\textbf{\GVI}} posterior for the coefficient of a Bayesian linear model under different priors using  $D=\RAD$ as uncertainty quantifier (\RAD recovers \KLD as $\alpha \to 1$). 
    The prior specification is given by $\theta_1|\sigma^2 \sim \mathcal{N}(\mu_{\pi}, \sigma^2)$ with $\sigma^2\sim\mathcal{IG}(3,5)$.}
\label{Fig:RAD_robust_prior}
%\vskip -0.25in
\end{center}
\end{figure}

\subsubsection{R\'enyi's $\alpha$-divergence (\RAD)} 
Figure \ref{Fig:RAD_robust_prior} demonstrates the sensitivity of $P(\ell,\RAD,\mathcal{Q})$ to  prior specification. For $0<\alpha<1$, the posterior exhibits the kind of behaviour that is difficult to attain with standard \VI: It both produces larger marginal variances \textit{and} is robust to badly specified priors.
This is no longer true if $\alpha>1$: For $\alpha > 1$, $\RAD\leq \KLD$, so that it is more sensitive to the prior than the \KLD. 
This flip in robustness as $\alpha$ crosses from $(0,1)$ into values larger than one may seem strange, but can be understood by investigating the form of the \RAD:
\begin{IEEEeqnarray}{rCl}
\RAD(q(\*\theta)||\pi(\*\theta))=\frac{1}{\alpha(\alpha-1)}\log\int q(\*\theta)^{\alpha}\pi(\*\theta)^{1-\alpha}d\*\theta %\nonumber \\
&=&\frac{1}{\alpha(\alpha-1)}\log\int \frac{q(\*\theta)^{\alpha}}{\pi(\*\theta)^{\alpha-1}}d\*\theta.
\nonumber 
\end{IEEEeqnarray}
It is clear that the magnitude of the divergence is determined by a ratio of two densities. Glancing closer, for $\alpha>1$ this means that if $q(\*\theta)$ is large in an area where $\pi(\*\theta)$ is not, then a severe penalty is incurred. This limits how far the $q(\*\theta)$ can move from the prior and thus results in lack of prior robustness. 
Conversely, if $\alpha \in (0,1)$, then $\pi(\*\theta)^{\alpha -1} > \pi(\*\theta)$ for regions where $\pi(\*\theta)<1$, which allows the posterior to spread its mass in a less concentrated way than for $\alpha > 1$.
In fact, this very finding is also implicitly stated in  Theorem \ref{Thm:LowerBoundMarginalLossLikelihoodRAD}.

\begin{figure}[hp!]
%\vskip -2.0cm
\begin{center}
\includegraphics[trim= {1.7cm 1.85cm 2.3cm 1.20cm}, clip,  
  width=1\columnwidth]{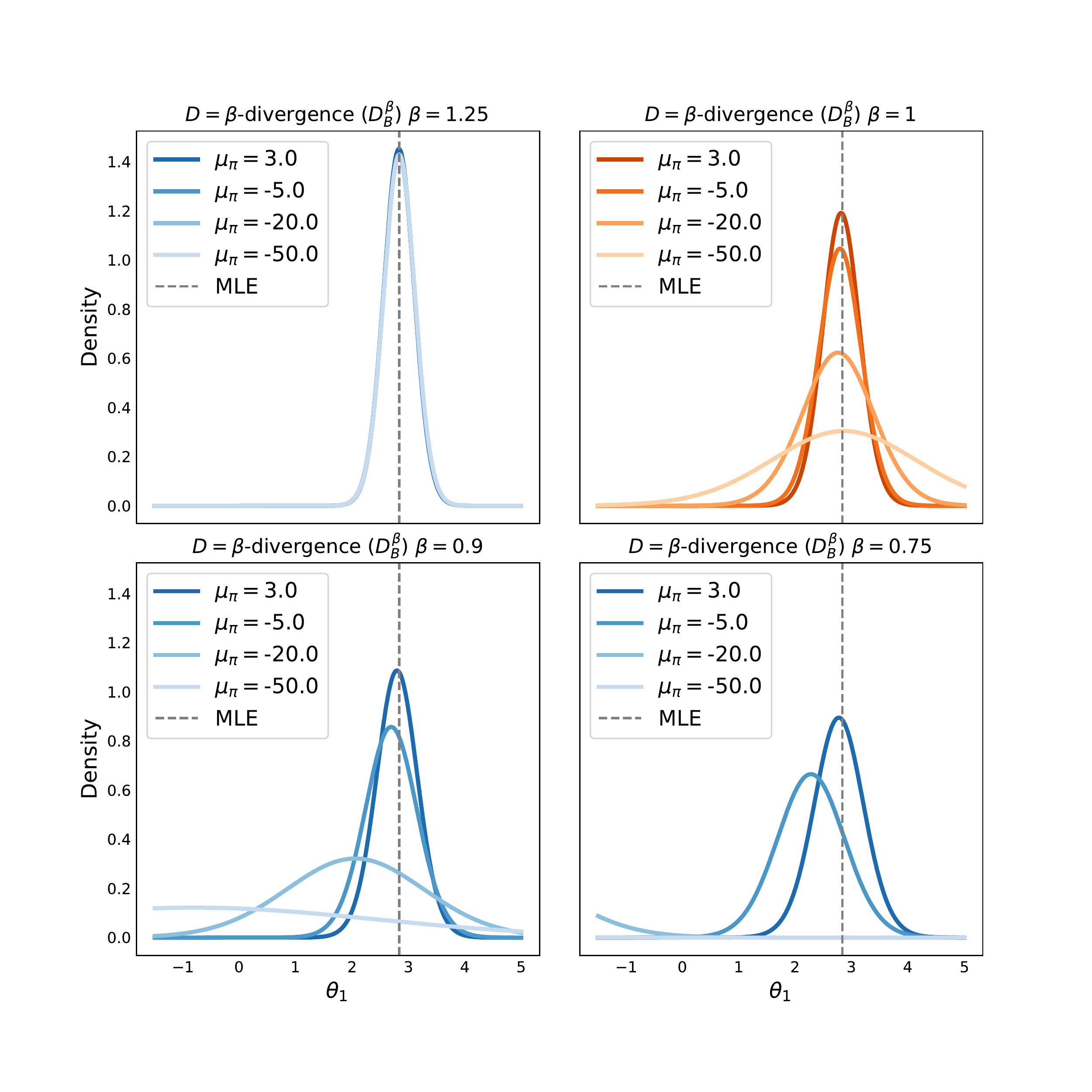}
\caption{
    \bv
    Marginal \VIColor{\textbf{\VI}} and \GVIColor{\textbf{\GVI}} posterior for the coefficient of a Bayesian linear model under different priors using  $D=\BD$ as uncertainty quantifier (\BD recovers \KLD as $\beta \to 1$). 
    The prior specification is given by $\theta_1|\sigma^2 \sim \mathcal{N}(\mu_{\pi}, \sigma^2)$ with $\sigma^2\sim\mathcal{IG}(3,5)$.}
\label{Fig:BD_robust_prior}
%\vskip -0.25in
\end{center}
\end{figure}

\subsubsection{$\beta$-divergence (\BD)}
Figure \ref{Fig:BD_robust_prior} demonstrates the sensitivity of $P(\ell,\BD,\mathcal{Q})$ to prior specification. The plot shows that $\beta>1$ is able to achieve extreme robustness to the prior, while $\beta<1$ causes extreme sensitivity to the prior. This phenomenon is a result of the fact that the \BD decomposes into three integrals, one containing just the prior, one containing just $q(\*\theta)$ and one containing an interaction between them. 
\begin{IEEEeqnarray}{rCl}
\BD(q(\*\theta)||\pi(\*\theta))&=&\frac{1}{\beta}\int \pi(\*\theta)^{\beta}d\*\theta-\frac{1}{\beta-1}\int \pi(\*\theta)^{\beta-1}q(\*\theta)d\*\theta+\frac{1}{\beta(\beta-1)}\int q(\*\theta)^{\beta}d\*\theta
\end{IEEEeqnarray}
The integral depending only on the prior does not depend $q(\*\theta)$, so we can ignore it (since the prior is fixed across the different values of $\beta$). 
If $\beta$ increases substantially above 1, the second term which expresses an interaction between $\pi(\*\theta)$ and $q(\*\theta)$ will have a relatively smaller weight in the optimisation than the term only involving $q(\*\theta)$. 
As a result, the optimisation will focus on decreasing $\int q^{\beta}(\*\theta)d\*\theta$ rather than increasing $\int \pi^{\beta-1}(\*\theta)q(\*\theta)d\*\theta$. We note that this is closely linked to the so-called \textit{ignorance to the data} phenomenon as discussed in \cite{Jewson}.
The uncertainty quantification for large values of $\beta$ is therefore largely controlled by the third term, which only depends on $q(\*\theta)$. 
This third integral will become very large if the variance of $q(\*\theta)$ gets very small, which prevents it from quickly converging to a point mass at the maximuml likelihood estimate.
%as the \AD did in Fig. \ref{Fig:GVI_fail_AD}.
%
As a consequence, the \BD is able to provide virtually prior-invariant uncertainty quantification for $beta > 1$. 
For $\beta\in (0,1)$, the opposite effect is observed: Here, the third integral term depending only on $q(\*\theta)$  has smaller weight relative to the interaction between $\pi(\*\theta)$ and $q(\*\theta)$ given by $\int \pi^{\beta-1}(\*\theta)q(\*\theta)d\*\theta$. As a result, the corresponding posterior will be very close to the prior. (In fact, notice that that two of the four posteriors for $\beta = 0.75$ favour the prior so much that the density around the maximum likelihood estimate is virtually zero.)

\begin{figure}[hp!]
%\vskip -2.0cm
\begin{center}
\includegraphics[trim= {1.7cm 1.85cm 2.3cm 1.20cm}, clip,  
  width=1\columnwidth]{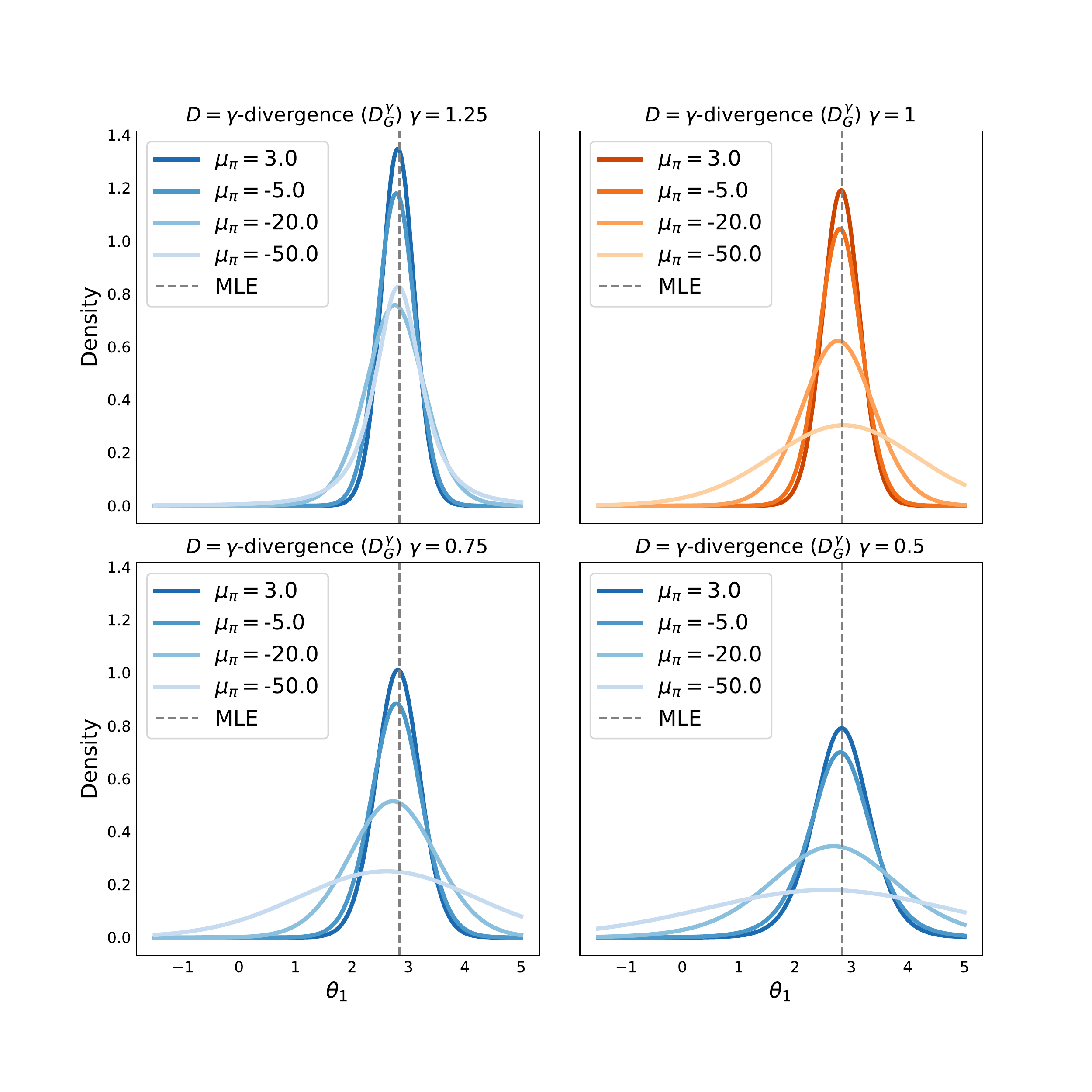}
\caption{
    \bv
    Marginal \VIColor{\textbf{\VI}} and \GVIColor{\textbf{\GVI}} posterior for the coefficient of a Bayesian linear model under different priors using  $D=\GD$ as uncertainty quantifier (\GD recovers \KLD as $\gamma \to 1$). 
    The prior specification is given by $\theta_1|\sigma^2 \sim \mathcal{N}(\mu_{\pi}, \sigma^2)$ with $\sigma^2\sim\mathcal{IG}(3,5)$.}
\label{Fig:GD_robust_prior}
%\vskip -0.25in
\end{center}
\end{figure}

\subsubsection{$\gamma$-divergence (\GD)}

Lastly, Fig. \ref{Fig:GD_robust_prior} demonstrates the sensitivity of $P(\ell,\GD,\mathcal{Q})$ to  prior specification. 
For $\gamma<1$ it appears as though the \GD reacts similarly to the \wKLD for $w<1$.
The \GD with $\gamma>1$ produces greater robustness to the prior than the $\wKLD$ uncertainty quantifier with $w>1$, but this robustness is less extreme as it was for $D =\BD$. 
The reason for this is that although the \GD consists of the same three integral terms as the \BD, these terms are now transformed into the logarithmic scale. 
This means that the three integrals are combined multiplicatively (in the \GD) rather than additively (in the \BD), which makes the variation across $\gamma$ much smoother than across $\beta$: Unlike for the \BD, minimising the \GD no longer disregards any one term in order to minimise the others.

\section{Proof of Theorem \ref{Thm:GVI_modularity}}
\label{Appendix:modularity_proof}

Before we can prove Theorem \ref{Thm:GVI_modularity}, we first formally define the notion of robustness to model misspecification.
%
%As robustness to model misspecification is used to describe a variety of different phenomena, we define its use in the current paper formally to avoid confusion.
%
Our understanding of robustness to model misspecification is aligned with \citet{RobustStatsBook} and \citet{Tukeey1960}. In the words of the latter, robustness stands for
\begin{itemize}
    \item[] \textit{a tacit hope in ignoring deviations from
ideal models was that they would not matter; that statistical procedures which are optimal
under the strict model would still be approximately optimal under the approximate
model. Unfortunately, it turned out that this hope was often drastically wrong; even mild
deviations often have much larger effects than were anticipated by most statisticians.}
\end{itemize}
%
%Robust statistics is the attempt at resolving this undesirable risk balance.
%
Formalizing this, we arrive at the following definition.
%for robustness to model misspecification for our representation of Bayesian inference.
% \theo{can we lean on something more well defined here? see some old papers: \url{ftp://ftp.sam.math.ethz.ch/sfs/Research-Reports/93.pdf} and specifically by Newcomb that looks spot on: \url{https://www.jstor.org/stable/2369392}}
%
\begin{definition}[Robustness]
    Let $M_j=P(D_j, \ell_{j}, \Pi)$ with
    %$\ell_{j}(\*\theta, \*x) = \sum_{i=1}^n\ell_j(\*\theta, x_i)$ and
    $\*\theta^{\ast}_{j} = \argmin_{\*\theta}\left\{ \mathbb{E}_{\*X}\left[ \ell_{j}(\*\theta, \*X) \right] \right\}$ for $j=1,2$. %$\*\theta^{\ast}_M = \argmin_{\*\theta}\left\{ \mathbb{E}_{\*X}\left[ \ell(\*\theta, \*X) \right] \right\}$ 
    Then, $M_1$ is more robust for $\*\theta$ than $M_2$ relative to the (implicit) assumptions $A$ on the data generating mechanism of $\*X$ if
    (i) $\*\theta^{\ast}_{1}$  is a better result than  $\*\theta^{\ast}_{2}$ if $A$ is untrue and (ii)
    $\*\theta^{\ast}_{1} = \*\theta^{\ast}_{2}$ if $A$ is true.
    \label{Def:robustness}
\end{definition}
\begin{remark}
It is hard to say what \textit{a better result} means, but we note that regardless of its precise meaning, this definition requires that robust inference directly affects $\*\theta^{\ast}$, i.e. that $\*\theta^{\ast}_{1} \neq \*\theta^{\ast}_{2}$ unless $A$ is true. 
While one could substantially strengthen this definition by formalizing what exactly \textit{a better result} means, this would necessarily be context-dependent, complicate matters substantially and obfuscate the point of robustness.
\end{remark}

\begin{proof}
First, we prove claim (i) about \textbf{robustness} to model misspecification:
    %
    %Further, 
    By Definition \ref{Def:robustness}, robustness implies a change in $\*\theta^{\ast} = \argmin_{\*\theta}\left\{\mathbb{E}_{\*X}\left[\ell(\*\theta, \*X)\right]\right\}$ if distributional assumptions about $\*X$ are incorrect. %\theo{see Newcomb's law}.
    Notice that $\*\theta^{\ast}$ is not affected by $D$ or $\Pi$, but is affected by $\ell$. 
    %
    %This completes the proof.
Next, we turn to the claims (ii) and (iii) about \textbf{uncertainty quantification} and \textbf{prior robustness}:
    %
    %Claim (ii) holds trivially: As $\pi$ and $\Pi$ are fixed and as changing $\ell$ amounts to changing the parameter of interest
    %
    First, note that $\Pi$ and $\pi$ are not allowed to change by assumption and %\theo{how about techniques that optimize the prior? or empirical Bayes}
    so cannot affect uncertainty quantification.
    Next, while $\ell$ is allowed to change, the parameter of interest it not allowed to change. In other words, $\ell$ may only be changed in a ways that leave $\hat{\*\theta}_n$ and $\*\theta^{\ast}$ unaffected.
    Notice that changing $\ell$ to $\ell'$ will affect $\hat{\*\theta}_n = \argmin_{\*\theta}\left\{\frac{1}{n}\sum_{i=1}^n\ell(\*\theta, x_i)\right\}$ and $\*\theta^{\ast} = \mathbb{E}_{\*X}\left[ \ell(\*\theta, \*x) \right]$ unless $\ell' = C + w\cdot\ell$ for some constants $C$ and $w > 0$.
    Since $P(\ell, D, \Pi) = P(\ell + C, D, \Pi)$ for any $C$ by Axiom \ref{Axiom:translationInvariance}, we can disregard $C$ and turn to $w$.
    Indeed, the uncertainty quantification of $P(\ell, D, \Pi)$ will be different from that of $P(w\cdot\ell, D, \Pi)$ for any constant $w \neq 1$. 
    However, dividing by $w$ in eq. \eqref{eq:RoT_thm} shows that $P(w\cdot\ell, D, \Pi) = P(\ell, \frac{1}{w}D, \Pi)$. %(Where we define $w^{-1} = \infty$ for $w=0$.)
    Hence, any change in the loss that does not affect $\hat{\*\theta}_n$ and $\*\theta^{\ast}$ can be rewritten as a change in $D$.
    It follows that changing the uncertainty quantification or making the \RoT robust to the prior belief must amount to changing $D$.
    %\theo{Since,Indeed,However,Hence,Thus - too many of these guys, rephrase.}
\end{proof}

\section{Proof of Theorem \ref{Thm:LowerBoundMarginalLossLikelihoodRAD} and additional lower bounds}
\label{Appendix:lower_bound_proofs}

% Note: in this sample, the section number is hard-coded in. Following
% proper LaTeX conventions, it should properly be coded as a reference:

%In this appendix we prove the following theorem from
%Section~\ref{sec:textree-generalization}:

This section of the Appendix provides proofs for the lower bound interpretation of certain \GVI objectives.
First, we prove the result stated in the main paper. Next, we show equivalent results for the case of the $\beta$-divergence (\BD) and $\gamma$-divergence (\GD). While the following results and corresponding proofs are somewhat tedious to read, they are conceptually simple: In fact, all that is needed to derive the results is some basic algebra, Jensen's inequality and a further inequality involving the logarithm, see Lemma \ref{Lem:LogBound}.

\subsection{Proof for \RAD (Theorem \ref{Thm:LowerBoundMarginalLossLikelihoodRAD})}

Firstly, we provide explicit forms for the function quoted in Theorem \ref{Thm:LowerBoundMarginalLossLikelihoodRAD}
\begin{IEEEeqnarray}{rCl}
%c(\alpha) &=& \begin{cases}
%1 &\textrm { if } 0 < \alpha < 1\\
%\frac{1}{\alpha} &\textrm{ if } \alpha>1\\
%\end{cases}, \quad w(\alpha) = \begin{cases}
%1 &\textrm { if } 0 < \alpha < 1\\
%\alpha &\textrm{ if } \alpha>1\\
%\end{cases}\\
S_1(\alpha,q,\pi) &=& \begin{cases}
\RAD(q(\*\theta)||\pi(\*\theta) - \KLD(q(\*\theta)||\pi(\*\theta)) &\textrm { if } 0 < \alpha < 1\\
0 &\textrm{ if } \alpha >1.\\
\end{cases}\label{Equ:LowerBoundMarginalLossLikelihoodRAD_terms}
\end{IEEEeqnarray}
Next we provide a proof of the Theorem

\begin{proof}
For this proof we have to consider two cases for $\alpha$ as the positivity and negativity of $\alpha-1$ affect the results that can be used.

\noindent \textbf{Case 1)} $\alpha>1$:
Jensen's inequality and the concavity of the natural logarithm give us that 
%
%\begin{IEEEeqnarray}{rCl}
%\log \mathbb{E}_{q(\*\theta)}\left[\left(\frac{q(\*\theta)}{\pi(\*\theta)}\right)^{\alpha-1}\right]\geq  \mathbb{E}_{q(\*\theta)}\left[(\alpha-1)\log\left(\frac{q(\*\theta)}{\pi(\*\theta)}\right)\right]
%\end{IEEEeqnarray}

%As a result we can write the objective function associated with the Bayesian decision problem in question as as
\begin{IEEEeqnarray}{rCl}
&&\mathbb{E}_{q(\*\theta)}\left[\sum_{i=1}^n\ell(\*\theta,\*x_i)\right]+\frac{1}{\alpha(\alpha-1)}\log \mathbb{E}_{q(\*\theta)}\left[\left(\frac{q(\*\theta)}{\pi(\*\theta)}\right)^{\alpha-1}\right]\nonumber\\
%&\geq&\mathbb{E}_{q(\*\theta)}\left[\ell_n(\*\theta,\*x)\right]+\frac{1}{\alpha(\alpha-1)}\mathbb{E}_{q(\*\theta)}\left[(\alpha-1)\log\left(\frac{q(\*\theta)}{\pi(\*\theta)}\right)\right]\nonumber\\
%&=&
&\geq&\mathbb{E}_{q(\*\theta)}\left[\sum_{i=1}^n\ell(\*\theta,\*x_i)\right]+\frac{1}{\alpha}\mathbb{E}_{q(\*\theta)}\left[\log\left(\frac{q(\*\theta)}{\pi(\*\theta)}\right)\right]\nonumber\\
%&=&\frac{1}{\alpha}\mathbb{E}_{q(\*\theta)}\left[\log\left(\frac{q(\*\theta)}{\pi(\*\theta)\exp\left(-\alpha\ell_n(\*\theta,\*x)\right)}\right)\right]\nonumber\\
%&=&\frac{1}{\alpha}\mathbb{E}_{q(\*\theta)}\left[\log\left(\frac{q(\*\theta)}{\frac{\pi(\*\theta)\exp\left(-\alpha\sum_{i=1}^n\ell(\*\theta,\*x_i)\right)}{\int\pi(\*\theta)\exp\left(-\alpha\sum_{i=1}^n\ell(\*\theta,\*x_i)\right)d\*\theta }}\right)\right]-\frac{1}{\alpha}\log \int\pi(\*\theta)\exp\left(-\alpha\sum_{i=1}^n\ell(\*\theta,\*x_i)\right)d\*\theta\nonumber\\
&=&\frac{1}{\alpha}\KLD(q(\*\theta)||\pi^{\alpha\ell}(\*\theta|\mathbf{\*x}))-\frac{1}{\alpha}\log \int\pi(\*\theta)\exp\left(-\alpha\sum_{i=1}^n\ell(\*\theta,\*x_i)\right)d\*\theta.\nonumber
\end{IEEEeqnarray}

%Which provides the ter to give Eq. \eqref{Equ:LowerBoundMarginalLossLikelihoodRADposA}.

%\setlength{\leftskip}{0cm}

\noindent \textbf{Case 2)} $0<\alpha<1$: 
Here the negativity of $\frac{1}{\alpha(\alpha-1)}$ means we cannot apply Jensen's inequality in the above way. Instead, we can write 
\begin{IEEEeqnarray}{rCl}
&&\mathbb{E}_{q(\*\theta)}\left[\sum_{i=1}^n\ell(\*\theta,\*x_i)\right]+\RAD(q(\*\theta)||\pi(\*\theta))\nonumber \\
%&=&\mathbb{E}_{q(\*\theta)}\left[\sum_{i=1}^n\ell(\*\theta,\*x_i)\right]+\mathbb{E}_{q(\*\theta)}\left[\log(\pi(\*\theta))\right]-\mathbb{E}_{q(\*\theta)}\left[\log(\pi(\*\theta))\right]+\RAD(q(\*\theta)||\pi(\*\theta))\nonumber \\
%&=&\mathbb{E}_{q(\*\theta)}\left[\log(\pi(\*\theta))\right]-\mathbb{E}_{q(\*\theta)}\left[\log(\pi(\*\theta)\exp(-\ell(\*\theta,x)))\right]+\RAD(q(\*\theta)||\pi(\*\theta))\nonumber \\
&=&\mathbb{E}_{q(\*\theta)}\left[\log(\pi(\*\theta))\right]-\mathbb{E}_{q(\*\theta)}\left[\log\left(\frac{\pi(\*\theta)\exp(-\sum_{i=1}^n\ell(\*\theta,\*x_i))}{\int \pi(\*\theta)\exp(-\sum_{i=1}^n\ell(\*\theta,\*x_i))d\*\theta}\right)\right]\nonumber\\
&&-\log\int \pi(\*\theta)\exp(-\sum_{i=1}^n\ell(\*\theta,\*x_i))d\*\theta+\RAD(q(\*\theta)||\pi(\*\theta)) \nonumber\\
%&=&\mathbb{E}_{q(\*\theta)}\left[\log(\pi(\*\theta))\right]-\mathbb{E}_{q(\*\theta)}\left[\log\left(\frac{\pi(\*\theta)\exp(-\ell(\*\theta,x))}{\int \pi(\*\theta)\exp(-\ell(\*\theta,x))d\*\theta}\right)\right]-\mathbb{E}_{q(\*\theta)}\left[\log(q(\*\theta))\right]\nonumber\\
%&&+\mathbb{E}_{q(\*\theta)}\left[\log(q(\*\theta))\right]-\log\int \pi(\*\theta)\exp(-\ell(\*\theta,x))d\*\theta+\RAD(q(\*\theta)||\pi(\*\theta))\nonumber \\
&=&\KLD(q(\*\theta)||\pi^{\ell}(\*\theta|x))
-\log\int \pi(\*\theta)\exp(-\sum_{i=1}^n\ell(\*\theta,\*x_i))d\*\theta\nonumber\\
&&+\RAD(q(\*\theta)||\pi(\*\theta))-\KLD(q(\*\theta)||\pi(\*\theta)).\nonumber
\end{IEEEeqnarray}
%
%Which can be rearranged to give Eq. \eqref{Equ:LowerBoundMarginalLossLikelihoodRADnegA}.
%
Combined these two cases provides the term in Eq. \eqref{Equ:LowerBoundMarginalLossLikelihoodRAD} and \eqref{Equ:LowerBoundMarginalLossLikelihoodRAD_terms}
\end{proof}
Next we state, prove and interpret equivalent results for the \BD and \GD prior regularisers. But before we do so we need the following lemma
\begin{lemma}[A Taylor series bound for the natural logarithm]
The natural logarithm of a positive real number $Z$ can be bounded as follows
\begin{IEEEeqnarray}{rCl}
&&\begin{cases}
\log(Z)\leq \frac{Z^{x}-1}{x} &\textrm{ if } x>0\\
\log(Z)\geq \frac{Z^{x}-1}{x} &\textrm{ if } x<0.\\
\end{cases}\nonumber
\end{IEEEeqnarray}
\label{Lem:LogBound}
\end{lemma}
\begin{proof}
Using the series expansion of $\exp(x)$ and the Lagrange form of the remainder we see that
\begin{IEEEeqnarray}{rCl}
    \frac{Z^{x}-1}{x}&=&\frac{\exp\left(x\log Z\right)-1}{x} %\nonumber\\
    =\frac{\left(x\log Z\right)+\frac{1}{2!}\left(x\log Z\right)^2+\frac{1}{3!}\left(x\log Z\right)^3+\ldots}{x}\nonumber\\
&=&\frac{\left(x\log Z\right)+\frac{1}{2}\exp(c)\left(x\log Z\right)^2}{x} %\nonumber\\
=\log Z+\frac{\frac{1}{2!}\exp(c)\left(x\log Z\right)^2}{x}\nonumber
\end{IEEEeqnarray}
where $c\in[0,x\log(Z)]$. Now the numerator of the remainder term $\frac{\frac{1}{2!}\exp(c)\left(x\log Z\right)^2}{x}$ is always positive and therefore the sign of $x$ determines whether this remainder term forms an upper or lower bound for $\log(Z)$.
\end{proof}

\subsection{The \BD prior regulariser}

\begin{theorem}[\GVI as approximate Evidence Lower bound with $D=\BD$]%[Lower bounding the marginal loss-likelihood using the \BD uncertainty quantifier]
%The objective function associated with the \RoT $P(\BD, \ell_n, Q)$ can be used to lower bound the marginal loss-likelihood (normalising constant) of the \GBI~ posterior  in the following way:
The objective of a \GVI posterior based on $P(\ell, \BD, \mathcal{Q})$ has an interpretation as lower bound on the $c(\beta)$-scaled (generalized) evidence lower bound of $P(w(\beta)\cdot \ell, \KLD, \mathcal{P}(\*\Theta))$:
\begin{IEEEeqnarray}{rCl}
    \mathbb{E}_{q(\*\theta)}\left[\sum_{i=1}^n\ell(\*\theta,\*x_i)\right]+\BD(q(\*\theta)||\pi(\*\theta))&\geq
    &-c(\beta)\ELBO^{w(\beta)\ell}(q)+S_1(\beta, q, \pi) \label{Equ:LowerBoundMarginalLossLikelihoodBD}
\end{IEEEeqnarray}
where $\ELBO^{w(\beta)\ell}$ denotes the Evidence Lower Bound associated with standard \VI relative to the generalized Bayesian posterior given by
\begin{IEEEeqnarray}{rCl}
    q^{w(\beta)\ell}_B(\*\theta)& \propto &
    \pi(\*\theta)\exp\left(-w(\beta)\sum_{i=1}^n\ell(\*\theta,\*x_i)\right),
    \nonumber
\end{IEEEeqnarray}
where $c(\beta) = \min\{1, \beta^{-1}\}$, $w(\beta) = \max\{1, \beta\}$ and where $S_1(\beta, q, \pi)$ is a closed form slack term with
\begin{IEEEeqnarray}{rCl}
%c(\beta) &=& \begin{cases}
%1 &\textrm { if } 0 < \beta < 1\\
%\frac{1}{\beta} &\textrm{ if } \beta>1\\
%\end{cases}, \quad w(\beta) = \begin{cases}
%1 &\textrm { if } 0 < \beta < 1\\
%\beta &\textrm{ if } \beta>1\\
%\end{cases}\\
S_1(\beta,q,\pi) &=& \begin{cases}
\frac{1}{\beta(\beta-1)}\mathbb{E}_{q(\*\theta)}\left[q(\*\theta)^{\beta-1}\right]-\mathbb{E}_{q(\*\theta)}\left[\log q(\*\theta)\right]-\frac{1}{\beta-1} &\textrm { if } 0 < \beta < 1\\
\frac{1}{\beta}\mathbb{E}_{q(\*\theta)}\left[\log \pi(\*\theta)\right]-\frac{1}{\beta-1}\mathbb{E}_{q(\*\theta)}\left[\pi(\*\theta)^{\beta-1}\right]-\frac{1}{\beta(\beta-1)} &\textrm{ if } \beta >1.\\
\end{cases}\label{Equ:LowerBoundMarginalLossLikelihoodBD_terms}
\end{IEEEeqnarray}
%and $S_2(\beta, q\pi)\geq 0$
\label{Thm:LowerBoundMarginalLossLikelihoodBD}
\end{theorem}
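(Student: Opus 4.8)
The plan is to mirror the proof of Theorem~\ref{Thm:LowerBoundMarginalLossLikelihoodRAD}, splitting into the two regimes $\beta>1$ and $0<\beta<1$, which behave differently precisely because the sign of $\beta-1$ selects which branch of Lemma~\ref{Lem:LogBound} is available. In both regimes I would first rewrite the right-hand side in a loss-plus-divergence form: since $-\ELBO^{w\ell}(q)=w\,\mathbb{E}_{q(\*\theta)}\!\big[\sum_i\ell(\*\theta,\*x_i)\big]+\KLD(q\|\pi)$ for the normalized generalized posterior $q_B^{w\ell}$, and since $c(\beta)w(\beta)=1$ when $\beta>1$ while $c(\beta)=w(\beta)=1$ when $0<\beta<1$, the expected-loss term on the right cancels identically against $\mathbb{E}_{q(\*\theta)}\!\big[\sum_i\ell(\*\theta,\*x_i)\big]$ on the left in either case. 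What remains to establish is therefore a purely divergence-level inequality: $\BD(q\|\pi)\ge \tfrac{1}{\beta}\KLD(q\|\pi)+S_1(\beta,q,\pi)$ when $\beta>1$, and $\BD(q\|\pi)\ge \KLD(q\|\pi)+S_1(\beta,q,\pi)$ when $0<\beta<1$.

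Second, I would expand both divergences into integral form: $\BD(q\|\pi)$ into its three constituent integrals $\tfrac{1}{\beta(\beta-1)}\!\int q^\beta$, $\tfrac{1}{\beta}\!\int\pi^\beta$, $-\tfrac{1}{\beta-1}\!\int q\pi^{\beta-1}$ (the form given in Appendix~\ref{Appendix:definitions}), and $\KLD(q\|\pi)=\int q\log q-\int q\log\pi$. Substituting the stated $S_1$ from eq.~\eqref{Equ:LowerBoundMarginalLossLikelihoodBD_terms} and cancelling the terms common to both sides, the claim reduces, for $\beta>1$, to $\tfrac{1}{\beta(\beta-1)}\!\int q^\beta+\tfrac{1}{\beta(\beta-1)}+\tfrac{1}{\beta}\!\int\pi^\beta\ge \tfrac{1}{\beta}\!\int q\log q$, and, for $0<\beta<1$, to $\tfrac{1}{\beta}\!\int\pi^\beta+\int q\log\pi+\tfrac{1}{\beta-1}-\tfrac{1}{\beta-1}\!\int q\pi^{\beta-1}\ge 0$.

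Third, I would discharge the two reduced inequalities using Lemma~\ref{Lem:LogBound}. For $\beta>1$, applying the bound pointwise with $x=\beta-1>0$ and $Z=q(\*\theta)$ and integrating against $q$ gives $\int q\log q\le \tfrac{1}{\beta-1}\!\int q^\beta-\tfrac{1}{\beta-1}$; dividing by $\beta$ and then discarding the nonnegative term $\tfrac{1}{\beta}\!\int\pi^\beta$ and using $-\tfrac{1}{\beta(\beta-1)}\le\tfrac{1}{\beta(\beta-1)}$ (valid since $\tfrac{1}{\beta(\beta-1)}>0$) closes this case. For $0<\beta<1$, applying the bound pointwise with $x=\beta-1<0$ and $Z=\pi(\*\theta)$ and integrating against the nonnegative measure $q(\*\theta)\,d\*\theta$ gives $\int q\log\pi\ge \tfrac{1}{\beta-1}\!\int q\pi^{\beta-1}-\tfrac{1}{\beta-1}$, which together with $\tfrac{1}{\beta}\!\int\pi^\beta\ge 0$ yields the claim. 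As a by-product one reads off that the slack in the bound is exactly $\tfrac{1}{\beta}\!\int\pi^\beta$ in the $0<\beta<1$ branch and $\tfrac{1}{\beta}\!\int\pi^\beta+\tfrac{2}{\beta(\beta-1)}$ in the $\beta>1$ branch, which can be flagged for interpretability.

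The main obstacle I expect is organizational rather than conceptual: keeping the $\tfrac{1}{\beta}$, $\tfrac{1}{\beta-1}$ and $\tfrac{1}{\beta(\beta-1)}$ prefactors aligned through the cancellations so that the residual inequality matches the precise $S_1$ in eq.~\eqref{Equ:LowerBoundMarginalLossLikelihoodBD_terms}, and ensuring that the direction of Lemma~\ref{Lem:LogBound} flips with the sign of $\beta-1$ exactly at the case split. I would also state the standing assumption that $\int\pi^\beta$, $\int q^\beta$ and $\int q\pi^{\beta-1}$ are finite, so that $\BD(q\|\pi)<\infty$ and all the rearrangements are legitimate.
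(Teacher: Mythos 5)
Your proposal is correct and takes essentially the same route as the paper's own proof: the identical case split on the sign of $\beta-1$ and the identical use of Lemma \ref{Lem:LogBound} (applied to $Z=q(\*\theta)$ when $\beta>1$ and to $Z=\pi(\*\theta)$ when $0<\beta<1$), with the only difference being bookkeeping—you cancel the expected-loss terms up front and work at the divergence level, whereas the paper drops the $q$-independent term $\frac{1}{\beta}\int\pi(\*\theta)^{\beta}d\*\theta$ first and reassembles the bound as $c(\beta)\KLD(q\|q_B^{w(\beta)\ell})$ minus the log normalizer at the end. Your side remark that the slack can be sharpened (retaining $\frac{1}{\beta}\int\pi(\*\theta)^{\beta}d\*\theta$ and a $+\frac{1}{\beta(\beta-1)}$ constant in the $\beta>1$ branch) is consistent with the paper's derivation, which indeed yields that tighter constant even though the theorem statement records the weaker one.
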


\begin{proof}
Firstly we note that the objective function associated with the \RoT $P(\BD, \ell_n, Q)$ can be simplified by removing the terms in the $\BD$ that don't depend on $q(\*\theta)$
\begin{IEEEeqnarray}{rCl}
&&\arg\min_{q\in\mathcal{Q}} \left\lbrace\mathbb{E}_{q(\*\theta)}\left[\sum_{i=1}^n\ell(\*\theta,\*x_i)\right]+\BD(q(\*\theta)||\pi(\*\theta))\right\rbrace\nonumber \\
=&&\arg\min_{q\in\mathcal{Q}} \left\lbrace\mathbb{E}_{q(\*\theta)}\left[\sum_{i=1}^n\ell(\*\theta,\*x_i)\right]+\frac{1}{\beta(\beta-1)} \mathbb{E}_{q(\*\theta)}\left[q(\*\theta)^{\beta-1}\right]-\frac{1}{(\beta-1)}\mathbb{E}_{q(\*\theta)}\left[\pi(\*\theta)^{\beta-1}\right]\right\rbrace.\nonumber
\end{IEEEeqnarray}
We have to consider two cases for $\beta$ as the positivity and negativity of $\beta-1$ affect which part of Lemma \ref{Lem:LogBound} we use.

\noindent\textbf{Case 1) $0<\beta<1$:}
Lemma \ref{Lem:LogBound} gives us that for $\beta-1<0$, $\frac{Z^{\beta-1}}{\beta-1}\leq \log(Z)+\frac{1}{\beta-1}$ therefore
\begin{IEEEeqnarray}{rCl}
   &=&\mathbb{E}_{q(\*\theta)}\left[\sum_{i=1}^n\ell(\*\theta,\*x_i)\right]+\frac{1}{\beta(\beta-1)} \mathbb{E}_{q(\*\theta)}\left[q(\*\theta)^{\beta-1}\right]-\frac{1}{(\beta-1)}\mathbb{E}_{q(\*\theta)}\left[\pi(\*\theta)^{\beta-1}\right]\nonumber\\
    &\geq &\mathbb{E}_{q(\*\theta)}\left[\sum_{i=1}^n\ell(\*\theta,\*x_i)\right]+\frac{1}{\beta(\beta-1)} \mathbb{E}_{q(\*\theta)}\left[q(\*\theta)^{\beta-1}\right]-\mathbb{E}_{q(\*\theta)}\left[ \log(\pi(\*\theta))\right]-\frac{1}{\beta-1}\nonumber\\
    %&=&-\int \log(\exp(-\sum_{i=1}^n\ell(\*\theta,\*x_i))\pi(\*\theta))q(\*\theta)d\*\theta+\frac{1}{\beta(\beta-1)} \mathbb{E}_{q(\*\theta)}\left[q(\*\theta)^{\beta-1}\right]-\frac{1}{\beta-1}\nonumber\\
    %&=&-\int \log\left(\frac{\exp(-\sum_{i=1}^n\ell(\*\theta,\*x_i))\pi(\*\theta)}{\int \exp(-\sum_{i=1}^n\ell(\*\theta,\*x_i))\pi(\*\theta) d\*\theta}\right)q(\*\theta)d\*\theta-\log\int \exp(-\sum_{i=1}^n\ell(\*\theta,\*x_i))\pi(\*\theta) d\*\theta\nonumber\\
    %&&+\frac{1}{\beta(\beta-1)} \mathbb{E}_{q(\*\theta)}\left[q(\*\theta)^{\beta-1}\right]-\frac{1}{\beta-1}\nonumber\\
    &=&\KLD(q(\*\theta)||\pi^{\ell}(\*\theta|x))-\log\int \exp(-\sum_{i=1}^n\ell(\*\theta,\*x_i))\pi(\*\theta) d\*\theta\nonumber\\
    &&+\frac{1}{\beta(\beta-1)} \mathbb{E}_{q(\*\theta)}\left[q(\*\theta)^{\beta-1}\right]-\mathbb{E}_{q(\*\theta)}\left[ \log(q(\*\theta))\right]-\frac{1}{\beta-1}.\nonumber
\end{IEEEeqnarray}
\textbf{Case 2) $\beta>1$:}
Lemma \ref{Lem:LogBound} gives us that for $\beta-1>0$, $\frac{Z^{\beta-1}}{\beta-1}\geq \log(Z)+\frac{1}{\beta-1}$ therefore
\begin{IEEEeqnarray}{rCl}
&=&\mathbb{E}_{q(\*\theta)}\left[\sum_{i=1}^n\ell(\*\theta,\*x_i)\right]+\frac{1}{\beta(\beta-1)} \mathbb{E}_{q(\*\theta)}\left[q(\*\theta)^{\beta-1}\right]-\frac{1}{(\beta-1)}\mathbb{E}_{q(\*\theta)}\left[\pi(\*\theta)^{\beta-1}\right]\nonumber\\
%&=&\mathbb{E}_{q(\*\theta)}\left[\sum_{i=1}^n\ell(\*\theta,\*x_i)\right]+\frac{1}{\beta(\beta-1)}\mathbb{E}_{q(\*\theta)}\left[\left(q(\*\theta)\frac{\pi(\*\theta)}{\pi(\*\theta)}\right)^{\beta-1}\right]-\frac{1}{(\beta-1)}\mathbb{E}_{q(\*\theta)}\left[\pi(\*\theta)^{\beta-1}\right]\nonumber\\
&\geq&\mathbb{E}_{q(\*\theta)}\left[\sum_{i=1}^n\ell(\*\theta,\*x_i)\right]+\frac{1}{\beta}\left(\mathbb{E}_{q(\*\theta)}\left[\log\left(q(\*\theta)\frac{\pi(\*\theta)}{\pi(\*\theta)}\right)\right]+\frac{1}{\beta-1}\right)-\frac{1}{(\beta-1)}\mathbb{E}_{q(\*\theta)}\left[\pi(\*\theta)^{\beta-1}\right]\nonumber\\
%&=&\frac{1}{\beta}\int q(\*\theta) \log\left(\frac{q(\*\theta)}{\pi(\*\theta)\exp(-\beta\sum_{i=1}^n\ell(\*\theta,\*x_i))}\right)d\*\theta\nonumber\\
%&+&\frac{1}{\beta}\mathbb{E}_{q(\*\theta)}\left[\log(\pi(\*\theta))\right]-\frac{1}{(\beta-1)}\mathbb{E}_{q(\*\theta)}\left[\pi(\*\theta)^{\beta-1}\right]+\frac{1}{\beta(\beta-1)}\nonumber\\
&=&\frac{1}{\beta}\KLD(q(\*\theta)||\pi^{\beta\ell}(\*\theta|\mathbf{x})) -\frac{1}{\beta}\log\int\pi(\*\theta)\exp(-\beta\sum_{i=1}^n\ell(\*\theta,\*x_i))d\*\theta\nonumber\\
&+&\frac{1}{\beta}\mathbb{E}_{q(\*\theta)}\left[\log(\pi(\*\theta))\right]-\frac{1}{(\beta-1)}\mathbb{E}_{q(\*\theta)}\left[\pi(\*\theta)^{\beta-1}\right]+\frac{1}{\beta(\beta-1)}.\nonumber
\end{IEEEeqnarray}
Combined these two cases provides the term in Eq. \eqref{Equ:LowerBoundMarginalLossLikelihoodBD} and \eqref{Equ:LowerBoundMarginalLossLikelihoodBD_terms}
\end{proof}

%\jack{So I guess in all of these proofs we always add and subtract the marginal likelihood so maybe we can simplify that bit. In the \GD proof I have removed the line where I explicitly ad and subtract the marginal losss-likelihood.}

\subsection{The \GD prior regulariser}

\begin{theorem}[\GVI as approximate Evidence Lower bound with $D=\GD$]%[Lower bounding the marginal loss-likelihood using the \BD uncertainty quantifier]
%The objective function associated with the \RoT $P(\GD, \ell_n, Q)$ can be used to lower bound the marginal loss-likelihood (normalising constant) of the \GBI posterior  in the following way:
The objective of a \GVI posterior based on $P(\ell, \GD, \mathcal{Q})$ has an interpretation as lower bound on the $c(\gamma)$-scaled (generalized) evidence lower bound of $P(w(\gamma)\cdot \ell, \KLD, \mathcal{P}(\*\Theta))$:
\begin{IEEEeqnarray}{rCl}
    \mathbb{E}_{q(\*\theta)}\left[\sum_{i=1}^n\ell(\*\theta,\*x_i)\right]+\GD(q(\*\theta)||\pi(\*\theta))&=
    &-c(\gamma)\ELBO^{w(\gamma)\ell}(q)+S(\gamma, q, \pi) \label{Equ:LowerBoundMarginalLossLikelihoodGD}
\end{IEEEeqnarray}
where $\ELBO^{w(\gamma)\ell}$ denotes the Evidence Lower Bound associated with standard \VI relative to the generalized Bayesian posterior given by
\begin{IEEEeqnarray}{rCl}
    q^{w(\gamma)\ell}_B(\*\theta)& \propto &
    \pi(\*\theta)\exp\left(-w(\gamma)\sum_{i=1}^n\ell(\*\theta,\*x_i)\right),
    \nonumber
\end{IEEEeqnarray}
where $c(\gamma) = \min\{1, \gamma^{-1}\}$, $w(\gamma) = \max\{1, \gamma\}$ and where $S_1(\gamma, q, \pi)$ is a closed form slack term with
\begin{IEEEeqnarray}{rCl}
%c(\gamma) &=& \begin{cases}
%1 &\textrm { if } 0 < \gamma < 1\\
%\frac{1}{\gamma} &\textrm{ if } \gamma>1\\
%\end{cases}, \quad w(\gamma) = \begin{cases}
%1 &\textrm { if } 0 < \gamma < 1\\
%\gamma &\textrm{ if } \gamma>1\\
%\end{cases}\\
S_1(\gamma,q,\pi) &=& \begin{cases}
\frac{1}{\gamma(\gamma-1)}\log\mathbb{E}_{q(\*\theta)}\left[q(\*\theta)^{\gamma-1}\right]-\mathbb{E}_{q(\*\theta)}\left[\log q(\*\theta)\right] &\textrm { if } 0 < \gamma < 1\\
\frac{1}{\gamma}\mathbb{E}_{q(\*\theta)}\left[\log \pi(\*\theta)\right]-\frac{1}{\gamma-1}\log\mathbb{E}_{q(\*\theta)}\left[\pi(\*\theta)^{\gamma-1}\right] &\textrm{ if } \gamma >1.\\
\end{cases}\label{Equ:LowerBoundMarginalLossLikelihoodGD_terms}
\end{IEEEeqnarray}
%and $S_2(\gamma, q, \pi)\geq 0$.
\label{Thm:LowerBoundMarginalLossLikelihoodGD}
\end{theorem}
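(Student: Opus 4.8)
The plan is to prove Theorem~\ref{Thm:LowerBoundMarginalLossLikelihoodGD} by imitating the proofs of Theorems~\ref{Thm:LowerBoundMarginalLossLikelihoodRAD} and \ref{Thm:LowerBoundMarginalLossLikelihoodBD} essentially line for line, exploiting the fact (Definition~\ref{Def:GammaD}) that \GD\ is obtained from \BD\ by applying the transform $c_0\int g^{c_1}\pi^{c_2}\,\mapsto\, c_0\log\int g^{c_1}\pi^{c_2}$ to each of its three constituent integrals. First I would write out the \GVI\ objective $\mathbb{E}_{q(\*\theta)}\bigl[\sum_{i=1}^n\ell(\*\theta,\*x_i)\bigr]+\GD(q\|\pi)$, noting that the prior-only log-integral $\frac{1}{\gamma}\log\int\pi(\*\theta)^{\gamma}\,d\*\theta$ is constant in $q$ and can therefore either be dropped when only the minimiser is of interest (as in the \BD\ proof) or carried along and absorbed into the slack for the displayed identity. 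Then I would split into the two cases $\gamma>1$ and $0<\gamma<1$; the split is forced, exactly as for \RAD, because the sign of $\frac{1}{\gamma(\gamma-1)}$ flips at $\gamma=1$ and hence governs the direction in which Jensen's inequality (concavity of $\log$) may legitimately be invoked.

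In the case $\gamma>1$ I would apply Jensen to the $q$-dependent log-integral terms, passing from $\log\mathbb{E}_q[\cdot]$ to $\mathbb{E}_q[\log(\cdot)]$; this surfaces a factor $\frac{1}{\gamma}\KLD(q\|\pi)$, and completing the log-normalizer turns it into $\frac{1}{\gamma}\KLD\bigl(q\big\|\,q_B^{\gamma\ell}\bigr)-\frac{1}{\gamma}\log\int\pi(\*\theta)\exp\bigl(-\gamma\sum_{i=1}^n\ell(\*\theta,\*x_i)\bigr)\,d\*\theta$, which is precisely $-c(\gamma)\,\ELBO^{w(\gamma)\ell}(q)$ with $c(\gamma)=\gamma^{-1}$, $w(\gamma)=\gamma$. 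Whatever remains is collected as $S_1(\gamma,q,\pi)$; since the prior-only log-integral is constant, the residual reduces to the stated $\frac{1}{\gamma}\mathbb{E}_q[\log\pi(\*\theta)]-\frac{1}{\gamma-1}\log\mathbb{E}_q[\pi(\*\theta)^{\gamma-1}]$. In the case $0<\gamma<1$ Jensen is unavailable with the correct sign, so — mirroring Case~2 of the \RAD\ proof — I would instead rearrange exactly: express $\sum_{i=1}^n\ell(\*\theta,\*x_i)$ in terms of $\log q_B^{\ell}(\*\theta\mid x)$ and the evidence, producing $\KLD(q\|q_B^{\ell})$ minus the log-evidence (i.e.\ $-\ELBO^{\ell}(q)$, since here $c(\gamma)=w(\gamma)=1$) plus the residual $\GD(q\|\pi)-\KLD(q\|\pi)$, whose $q$-dependent part is the stated $\frac{1}{\gamma(\gamma-1)}\log\mathbb{E}_q[q(\*\theta)^{\gamma-1}]-\mathbb{E}_q[\log q(\*\theta)]$. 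Lemma~\ref{Lem:LogBound} enters exactly where it does in the \BD\ proof, namely whenever a bare log-integral of a $q$-only or $\pi$-only term must be traded for a power of that integral to line up with the \ELBO/\KLD\ bookkeeping. Combining the two cases yields eqs.~\eqref{Equ:LowerBoundMarginalLossLikelihoodGD} and \eqref{Equ:LowerBoundMarginalLossLikelihoodGD_terms}.

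The main obstacle I anticipate is purely bookkeeping: unlike \RAD\ (a single log-integral) or \BD\ (three integrals with no outer logarithm), \GD\ carries three separate log-integral terms, and for each of them in each of the two cases one must track which tool — Jensen, Lemma~\ref{Lem:LogBound}, or plain algebra — has the correct sign so that the assembled statement is genuinely a lower bound (an exact identity in the \GD\ case) and the leftover terms coincide with the closed form claimed for $S_1(\gamma,q,\pi)$. A secondary point of care is the constant term $\frac{1}{\gamma}\log\int\pi(\*\theta)^{\gamma}\,d\*\theta$: irrelevant to the $\argmin$, it must nonetheless be accounted for if one wants the displayed relation rather than merely its minimiser, and I would simply fold it, together with the remaining non-\ELBO\ terms, into $S_1$.
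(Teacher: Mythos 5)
Your $\gamma>1$ case is essentially the paper's argument: Jensen applied to the $q$-only term $\frac{1}{\gamma(\gamma-1)}\log\mathbb{E}_{q}\left[q(\*\theta)^{\gamma-1}\right]$, insertion of $\pi/\pi$, and completion of the log-normalizer to produce $\frac{1}{\gamma}\KLD\left(q\|q_B^{\gamma\ell}\right)$, with the cross-term and $\frac{1}{\gamma}\mathbb{E}_q\left[\log\pi(\*\theta)\right]$ left as the slack. However, your $0<\gamma<1$ case contains a genuine error. You claim Jensen is ``unavailable with the correct sign'' and that a pure rearrangement (as in Case 2 of the \RAD{} proof) yields the stated slack. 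This misreads the structure of \GD{}: unlike \RAD{}, it has \emph{two} $q$-dependent log-integrals with coefficients of opposite sign. For $0<\gamma<1$ the $q$-only term indeed has a negative coefficient (so Jensen fails there), but the cross-term $-\frac{1}{\gamma-1}\log\mathbb{E}_{q}\left[\pi(\*\theta)^{\gamma-1}\right]$ has a \emph{positive} coefficient, and the paper's proof applies Jensen precisely to it, giving $-\frac{1}{\gamma-1}\log\mathbb{E}_{q}\left[\pi(\*\theta)^{\gamma-1}\right]\geq-\mathbb{E}_{q}\left[\log\pi(\*\theta)\right]$; the surviving terms then assemble into $-\ELBO^{\ell}(q)$ plus exactly the stated $S_1$.

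Your exact rearrangement instead gives, up to the prior-only constant, $\mathbb{E}_q\left[\sum_i\ell\right]+\GD(q\|\pi)=-\ELBO^{\ell}(q)+S_1(\gamma,q,\pi)+\left(\mathbb{E}_q\left[\log\pi(\*\theta)\right]-\frac{1}{\gamma-1}\log\mathbb{E}_q\left[\pi(\*\theta)^{\gamma-1}\right]\right)$, and the bracketed leftover is $q$-dependent and is \emph{not} part of the claimed $S_1$; to discard it you must show it is nonnegative, which is exactly the Jensen step you disclaimed. Relatedly, the theorem cannot be an exact identity with the stated slack (the ``$=$'' in the display is a typo for ``$\geq$'', consistent with the \RAD{} and \BD{} versions and with your own use of Jensen in the $\gamma>1$ case), and Lemma \ref{Lem:LogBound} is in fact not needed anywhere for \GD{}---it is a substitute for Jensen only in the \BD{} proof, where no outer logarithm is present; for \GD{} plain Jensen suffices in both cases. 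Finally, folding the constant $\frac{1}{\gamma}\log\int\pi(\*\theta)^{\gamma}d\*\theta$ into $S_1$ would change the claimed closed form; the paper instead drops it at the outset, so the bound is understood for the objective with prior-only terms removed.
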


\begin{proof}
Firstly we note that the objective function associated with $P(\GD, \ell_n, Q)$ can be simplified by removing the terms in the $\GD$ that don't depend on $q(\*\theta)$
\begin{IEEEeqnarray}{rCl}
&&\arg\min_{q\in\mathcal{Q}} \left\lbrace\mathbb{E}_{q(\*\theta)}\left[\sum_{i=1}^n\ell(\*\theta,\*x_i)\right]+\GD(q(\*\theta)||\pi(\*\theta))\right\rbrace= \nonumber\\
&&\arg\min_{q\in\mathcal{Q}} \left\lbrace\mathbb{E}_{q(\*\theta)}\left[\sum_{i=1}^n\ell(\*\theta,\*x_i)\right]+\frac{1}{\gamma(\gamma-1)}\log \mathbb{E}_{q(\*\theta)}\left[q(\*\theta)^{\gamma-1}\right]-\frac{1}{(\gamma-1)}\log\mathbb{E}_{q(\*\theta)}\left[\pi(\*\theta)^{\gamma-1}\right]\right\rbrace.\nonumber
\end{IEEEeqnarray}
We have to consider two cases for $\gamma$ as the positivity and negativity of $\gamma-1$ affect the results we can use.

\noindent \textbf{Case 1)} $0<\gamma<1$:
Jensen's inequality and the concavity of the natural logarithm applied to $\mathbb{E}_{q(\*\theta)}\left[\pi(\*\theta)^{\gamma-1}\right]$ provides
\begin{IEEEeqnarray}{rCl}
&=&\mathbb{E}_{q(\*\theta)}\left[\sum_{i=1}^n\ell(\*\theta,\*x_i)\right]+\frac{1}{\gamma(\gamma-1)}\log \mathbb{E}_{q(\*\theta)}\left[q(\*\theta)^{\gamma-1}\right]-\frac{1}{(\gamma-1)}\log\mathbb{E}_{q(\*\theta)}\left[\pi(\*\theta)^{\gamma-1}\right] \nonumber\\
%&=&\mathbb{E}_{q(\*\theta)}\left[\sum_{i=1}^n\ell(\*\theta,\*x_i)\right]+\frac{1}{\gamma(\gamma-1)}\log \mathbb{E}_{q(\*\theta)}\left[q(\*\theta)^{\gamma-1}\right]+\frac{1}{(1-\gamma)}\log\mathbb{E}_{q(\*\theta)}\left[\pi(\*\theta)^{\gamma-1}\right]\nonumber \\
%&\geq& \mathbb{E}_{q(\*\theta)}\left[\sum_{i=1}^n\ell(\*\theta,\*x_i)\right]+ \frac{1}{\gamma(\gamma-1)}\log \mathbb{E}_{q(\*\theta)}\left[q(\*\theta)^{\gamma-1}\right]+\frac{1}{(1-\gamma)}\mathbb{E}_{q(\*\theta)}\left[(\gamma-1)\log\pi(\*\theta)\right]\nonumber  \\
%&=& 
&\geq&\mathbb{E}_{q(\*\theta)}\left[\sum_{i=1}^n\ell(\*\theta,\*x_i)\right]+ \frac{1}{\gamma(\gamma-1)}\log \mathbb{E}_{q(\*\theta)}\left[q(\*\theta)^{\gamma-1}\right]-\mathbb{E}_{q(\*\theta)}\left[\log\pi(\*\theta)\right]\nonumber  \\
%&=&  \frac{1}{\gamma(\gamma-1)}\log \mathbb{E}_{q(\*\theta)}\left[q(\*\theta)^{\gamma-1}\right]-\mathbb{E}_{q(\*\theta)}\left[\log\left(\pi(\*\theta)\exp\left(-\sum_{i=1}^n\ell(\*\theta,\*x_i)\right)\right)\right]\nonumber  \\
%&=&  \frac{1}{\gamma(\gamma-1)}\log \mathbb{E}_{q(\*\theta)}\left[q(\*\theta)^{\gamma-1}\right]-\mathbb{E}_{q(\*\theta)}\left[\log\frac{\pi(\*\theta)\exp\left(-\sum_{i=1}^n\ell(\*\theta,\*x_i)\right)}{\int\pi(\*\theta)\exp\left(-\sum_{i=1}^n\ell(\*\theta,\*x_i)\right)d\*\theta}\right] -\log \int\pi(\*\theta)\exp\left(-\sum_{i=1}^n\ell(\*\theta,\*x_i)\right)d\*\theta\nonumber  \\
&=&  \KLD(q(\*\theta)||\pi^{\ell}(\*\theta|x))
-\log \int\pi(\*\theta)\exp\left(-\sum_{i=1}^n\ell(\*\theta,\*x_i)\right)d\*\theta\nonumber\\
&&+\frac{1}{\gamma(\gamma-1)}\log \mathbb{E}_{q(\*\theta)}\left[q(\*\theta)^{\gamma-1}\right]+-\mathbb{E}_{q(\*\theta)}\left[\log q(\*\theta)\right].\nonumber
\end{IEEEeqnarray}
%Unfortunately we cannot perform the same trick when $\gamma>1$ as $\frac{1}{1-\gamma}$ is no longer positive and the inequality would be reversed.
%
\textbf{Case 2)} $\gamma>1$:
Jensen's inequality and the concavity of the  natural logarithm applied to $\mathbb{E}_{q(\*\theta)}\left[q(\*\theta)^{\gamma-1}\frac{\pi(\*\theta)^{\gamma-1}}{\pi(\*\theta)^{\gamma-1}}\right]$ provides

\begin{IEEEeqnarray}{rCl}
&=& \mathbb{E}_{q(\*\theta)}\left[\sum_{i=1}^n\ell(\*\theta,\*x_i)\right]+\frac{1}{\gamma(\gamma-1)}\log \mathbb{E}_{q(\*\theta)}\left[q(\*\theta)^{\gamma-1}\right]-\frac{1}{(\gamma-1)}\log\mathbb{E}_{q(\*\theta)}\left[\pi(\*\theta)^{\gamma-1}\right]\nonumber \\
%&=& \mathbb{E}_{q(\*\theta)}\left[\sum_{i=1}^n\ell(\*\theta,\*x_i)\right]+\frac{1}{\gamma(\gamma-1)}\log \mathbb{E}_{q(\*\theta)}\left[q(\*\theta)^{\gamma-1}\frac{\pi(\*\theta)^{\gamma-1}}{\pi(\*\theta)^{\gamma-1}}\right]-\frac{1}{(\gamma-1)}\log\mathbb{E}_{q(\*\theta)}\left[\pi(\*\theta)^{\gamma-1}\right]\nonumber \\
&\geq& \mathbb{E}_{q(\*\theta)}\left[\sum_{i=1}^n\ell(\*\theta,\*x_i)\right]+\frac{1}{\gamma} \mathbb{E}_{q(\*\theta)}\left[\log\frac{q(\*\theta)\pi(\*\theta)}{\pi(\*\theta)}\right]-\frac{1}{(\gamma-1)}\log\mathbb{E}_{q(\*\theta)}\left[\pi(\*\theta)^{\gamma-1}\right]\nonumber \\
%&=& \frac{1}{\gamma} \mathbb{E}_{q(\*\theta)}\left[\log\frac{q(\*\theta)}{\pi(\*\theta)\exp\left(-\gamma\sum_{i=1}^n\ell(\*\theta,\*x_i)\right)}\right]+\frac{1}{\gamma} \mathbb{E}_{q(\*\theta)}\left[\log\pi(\*\theta)\right]-\frac{1}{(\gamma-1)}\log\mathbb{E}_{q(\*\theta)}\left[\pi(\*\theta)^{\gamma-1}\right]\nonumber \\
%&=& \frac{1}{\gamma} \mathbb{E}_{q(\*\theta)}\left[\log\frac{q(\*\theta)}{\frac{\pi(\*\theta)\exp\left(-\gamma\sum_{i=1}^n\ell(\*\theta,\*x_i)\right)}{\int \pi(\*\theta)\exp\left(-\gamma\sum_{i=1}^n\ell(\*\theta,\*x_i))\right)d\*\theta}}\right]-\frac{1}{\gamma}\int \pi(\*\theta)\exp\left(-\gamma\sum_{i=1}^n\ell(\*\theta,\*x_i)\right)d\*\theta\nonumber\\
%&&+\frac{1}{\gamma} \mathbb{E}_{q(\*\theta)}\left[\log\pi(\*\theta)\right]-\frac{1}{(\gamma-1)}\log\mathbb{E}_{q(\*\theta)}\left[\pi(\*\theta)^{\gamma-1}\right]\nonumber \\
&=& \frac{1}{\gamma} \KLD(q(\*\theta)||\pi^{\gamma\ell}(\*\theta|x))-\frac{1}{\gamma}\int \pi(\*\theta)\exp\left(-\gamma\sum_{i=1}^n\ell(\*\theta,\*x_i)\right)d\*\theta\nonumber\\
&&+\frac{1}{\gamma} \mathbb{E}_{q(\*\theta)}\left[\log\pi(\*\theta)\right]-\frac{1}{(\gamma-1)}\log\mathbb{E}_{q(\*\theta)}\left[\pi(\*\theta)^{\gamma-1}\right].\nonumber
\end{IEEEeqnarray}
Combined these two cases provides the term in Eq. \eqref{Equ:LowerBoundMarginalLossLikelihoodGD} and \eqref{Equ:LowerBoundMarginalLossLikelihoodGD_terms}
\end{proof}

\subsubsection{Interpretation} Theorems \ref{Thm:LowerBoundMarginalLossLikelihoodBD} and \ref{Thm:LowerBoundMarginalLossLikelihoodGD} provide a lower bound on an objective function that is to be minimised so that interpreting this lower bound provides some insight into the behaviour of the \GVI posterior. 
First, we investigate the case where the hyperparameters $\beta$ and $\gamma$ are in $(0,1)$. As expected, the form of the \GVI objective leads us to conclude that the variance will be larger than that for standard \VI within this range of values. Next, we investigate the case where the hyperparameters $\beta$ and $\gamma$ are $>1$. Again unsurprisingly, this leads to a shrinkage of the posterior variance relative to standard \VI.

\subparagraph{Case 1: $0<\beta=\gamma<1$.}
For $0<\beta=\gamma<1$ the terms $c(\beta)=c(\gamma)$ and $w(\beta)=w(\gamma)$ produce an objective equivalent to standard \VI. This suggests that \GVI continues to minimise the \KLD between the variational and standard Bayesian posterior. 
Unlike standard \VI however, \GVI with $D=\BD$ or $D=\GD$ additionally minimises the slack terms $S_1(\beta,q,\pi)$ or $S_1(\gamma,q,\pi)$. It is easy to show that the these adjustment terms encourage the solution to $P(\BD, \ell, Q)$ with $0<\beta<1$ and $P(\GD, \ell, Q)$ with $0<\gamma<1$ to have greater variance than the standard \VI posterior given by $P(\KLD, \ell_n, Q)$. 
For the \BD, we can see this by rewriting
\begin{IEEEeqnarray}{rCl}
  S^{(0,1)}(\beta,q,\pi)&=& -\frac{1}{\beta}h_{T}^{(\beta)}(q(\*\theta))+h_{\KLD}(q(\*\theta))+\frac{1-\beta}{\beta}.  \nonumber
\end{IEEEeqnarray}
Here, $h_{\KLD}(q(\*\theta))$ is the Shannon entropy of $q(\*\theta)$ and $h_{T}^{(\beta)}(q(\*\theta))$ is the Tsallis entropy of $q(\*\theta)$ with parameter $\beta$. 
Again applying Lemma \ref{Lem:LogBound}, we find that for $0<\beta<1$, $h_{T}^{(\beta)}(q(\*\theta))>h_{\KLD}(q(\*\theta))$. 
It immediately follows that minimising $-\frac{1}{\beta}h_{T}^{(\beta)}(q(\*\theta))+h_{\KLD}(q(\*\theta))$ for $0<\beta<1$ will make $h_{T}^{(\beta)}(q(\*\theta))$ large---an effect that is achieved by increasing the variance of $q(\*\theta)$. 

Applying the same type of logic to the \GD, one can rewrite
\begin{IEEEeqnarray}{rCl}
  S^{(0,1)}(\gamma,q,\pi)&=& -\frac{1}{\gamma}h_{R}^{(\gamma)}(q(\*\theta))+h_{\KLD}(q(\*\theta)). \nonumber 
\end{IEEEeqnarray}
As before, $h_{\KLD}(q(\*\theta))$ is the Shannon entropy of $q(\*\theta)$, but unlike before $h_{R}^{(\gamma)}(q(\*\theta))$ is now the \renyi entropy of $q(\*\theta)$ with parameter $\gamma$. 
With this, one can extend Theorem 3 in \citet{RenyiKLD} to show that $h_R^{(\gamma)}(q(\*\theta))$ is decreasing in $\gamma$. 
Since it is also well-known that $\lim_{\gamma\rightarrow 1}h_R^{(\gamma)}(q(\*\theta))=h_{\KLD}(q(\*\theta))$, it follows that minimising $-\frac{1}{\gamma}h_{R}^{(\gamma)}(q(\*\theta))+h_{\KLD}(q(\*\theta))$ for $0<\gamma<1$ will make $h_{R}^{(\gamma)}(q(\*\theta))$ large---an effect that is again achieved by increasing the variance of $q(\*\theta)$.

\subparagraph{Case 2: $\beta=\gamma=k>1$.}
For $k=\gamma=\beta>1$,  $c(k)=\frac{1}{k}$ and $w(k)= k$. Minimising $\KLD(q||q^{\ast}_{k})$ for $k>1$ will encourage $P(\BD, \ell, Q)$ or $P(\GD, \ell, Q)$ to be more concentrated around the empirical risk minimizer $\hat{\*\theta}_n$ of $\ell$ than the standard \VI posterior given by $P(\KLD, \ell, Q)$. 
Additionally, one can show that minimising the adjustment term also favours shrinking the variance of $q(\*\theta)$. To see this for the case of \BD, rewrite
\begin{IEEEeqnarray}{rCl}
    S^{(1,\infty)}(\beta,q,\pi)&=&\frac{1}{\beta}\mathbb{E}_{q(\*\theta)}\left[\log(\pi(\*\theta))\right]-\frac{1}{\beta-1}\mathbb{E}_{q(\*\theta)}\left[\pi(\*\theta)^{\beta-1}-1\right]-\frac{1}{\beta}.\label{Equ:UpperBoundKLDBD_posB_AddTerm}
\end{IEEEeqnarray}
Applying Lemma \ref{Lem:LogBound} then shows  that for $\beta>1$,
\begin{IEEEeqnarray}{rCl}
    \frac{1}{\beta-1}\mathbb{E}_{q(\*\theta)}\left[\pi(\*\theta)^{\beta-1}-1\right]\geq\mathbb{E}_{q(\*\theta)}\left[\log(\pi(\*\theta))\right]\geq \frac{1}{\beta}\mathbb{E}_{q(\*\theta)}\left[\log(\pi(\*\theta))\right].\nonumber
\end{IEEEeqnarray}
From this, it follows that minimising Eq. \eqref{Equ:UpperBoundKLDBD_posB_AddTerm} will  make $\frac{1}{\beta-1}\mathbb{E}_{q(\*\theta)}\left[\pi(\*\theta)^{\beta-1}\right]$ large. 
Fixing $\pi(\*\theta)$, maximising $\frac{1}{\beta-1}\mathbb{E}_{q(\*\theta)}\left[\pi(\*\theta)^{\beta-1}\right]$ plus $\frac{1}{\beta} \times$ the Tsallis entropy of $q(\*\theta)$ is equivalent to minimising $\BD(q(\*\theta)||\pi(\*\theta))$.
Because \BD is a divergence, this maximization would naturally seek to choose $q(\*\theta)$ close to $\pi(\*\theta)$.
The Tsallis entropy term in this formulation would have acted to increase the variance of $q(\*\theta)$. But since we maximize only $\frac{1}{\beta-1}\mathbb{E}_{q(\*\theta)}\left[\pi(\*\theta)^{\beta-1}\right]$---i.e. without adding the Tsallis entropy of $q(\*\theta)$---choices of $\beta > 1$ will lead to shrinking the variance of $q(\*\theta)$ relative to standard \VI.

For the \GD, Jensen's inequality shows that for $\gamma>1$,
\begin{IEEEeqnarray}{rCl}
    \frac{1}{\gamma-1}\log\mathbb{E}_{q(\*\theta)}\left[\pi(\*\theta)^{\gamma-1}\right]\geq\mathbb{E}_{q(\*\theta)}\left[\log(\pi(\*\theta))\right]\geq \frac{1}{\gamma}\mathbb{E}_{q(\*\theta)}\left[\log(\pi(\*\theta))\right].\nonumber
\end{IEEEeqnarray}
As a result, minimising $S^{(1,\infty)}(\gamma,q,\pi)$ will seek to make $\frac{1}{\gamma-1}\log\mathbb{E}_{q(\*\theta)}\left[\pi(\*\theta)^{\gamma-1}\right]$ large. Fixing again $\pi(\*\theta)$, maximising $\frac{1}{\gamma-1}\log\mathbb{E}_{q(\*\theta)}\left[\pi(\*\theta)^{\beta-1}\right]$ plus $\frac{1}{\gamma} \times$ the \renyi entropy of $q(\*\theta)$ is equivalent to minimising $\GD(q(\*\theta)||\pi(\*\theta))$, and thus seeks $q(\*\theta)$ close to $\pi(\*\theta)$. The \renyi entropy term would have acted to increase the variance of $q(\*\theta)$. Therefore and similarly to the case of \BD, maximising $\frac{1}{\gamma-1}\log\mathbb{E}_{q(\*\theta)}\left[\pi(\*\theta)^{\gamma-1}\right]$ without adding the \renyi entropy will lead to shrinkage of the variance of $q(\*\theta)$.

%\jack{Can I say anything about prior robustness here}!

\section{Proof of Proposition \ref{Thm:QuasiConjugacyGD}}
\label{Appendix:quasi-conjugacy-GD}

\begin{proof}
Proposition \ref{Thm:QuasiConjugacyGD} considers the following forms of the prior and likelihood 
\begin{align}
\pi(\*\theta|\*\kappa_0)&= h(\*\theta)\exp\left\lbrace\eta(\*\kappa_0)^TT(\*\theta)-A(\eta(\*\kappa_0))\right\rbrace\nonumber\\
q(\*\theta|\*\kappa)&= h(\*\theta)\exp\left\lbrace\eta(\*\kappa)^TT(\*\theta)-A(\eta(\*\kappa))\right\rbrace\nonumber\\
p(\*x|\*\theta) &= h(\*\theta) \exp(g(\*x)^T T(\*\theta)- B(\*x)),\nonumber
\end{align}
where $A(\eta(\*\kappa))=\log \int h(\*\theta)\exp\left\lbrace\eta(\*\kappa)^TT(\*\theta))\right\rbrace d\*\theta$ and $h(\*\theta) = \frac{1}{\int \exp(g(\*x)^T T(\*\theta)- B(\*x))d\*x}$. 

The \GVI objective function in this scenario, which we term an \ELBO as we use the \KLD prior regulariser is 
\begin{align}
\ELBO(\*\kappa) &= \mathbb{E}_{q(\*\theta|\*\kappa)}\left[\sum_{i=1}^n\ell^{(\gamma)}_G(\*\theta,\*x_i)\right] + \KLD(q(\*\theta|\*\kappa)||q(\*\theta|\*\kappa_0))\nonumber\\
&= \sum_{i=1}^n\underbrace{\int \underbrace{\ell^{(\gamma)}_G(\*\theta,\*x_i)}_{C_1(\*\kappa,\*\theta, x_i)}q(\*\theta|\*\kappa)d\theta}_{C_2(\*\kappa,\*x_i)} + \underbrace{\KLD(q(\*\theta|\*\kappa)||\pi(\*\theta|\*\kappa_0))}_{C_3(\*\kappa,\*\kappa_0)}.\nonumber
\end{align}
We have decomposed this into three terms that we need to check are closed forms of $\*\kappa$. Firstly 
\begin{equation}
C_1(\*\kappa,\*\theta, \*x_i) = \ell^{(\gamma)}_G(\*x_i,\*\theta) = -\frac{1}{\gamma-1}p(\*x_i;\*\theta)^{\gamma-1} \frac{\gamma}{\left[\int p(\*z;\*\theta)^{\gamma}d\*z\right]^{\frac{\gamma-1}{\gamma}}},\nonumber
\end{equation}
and in order for this to be a closed form function of $\*\kappa$, $\*\theta$, and $\*x_i$ requires that 
\begin{align}
I^{(\gamma)}(\*\theta) = \int p(\*z|\*\theta)^{\gamma}dz &= \int h(\*\theta)^{\gamma} \exp(\gamma g(\*z)^T T(\*\theta)- \gamma B(\*z))dz, \nonumber
%&= h(\*\theta)^{\gamma} \int \exp(\gamma g(\*z)^T T(\theta)- \gamma B(\*z))d\*z\nonumber\\
%&= h(\*\theta)^{\gamma} I^{(\gamma)}(\*\theta)\nonumber
\end{align}
where the theorem statement ensures that $I^{(\gamma)}(\*\theta)$ is a closed form function of $\*\theta$. 
%
%so for the proof of the \textBD i was able to use Fubinis's thm to switch thsi with the expectation under the posterior, but i can no longer do hat here. See the top fo page 22 in the old appendix.  
%
Next
\begin{align}
&C_2(\*\kappa,\*x_i)\nonumber\\
%=&-\frac{\gamma}{\gamma-1}\int  p(\*x_i|\*\theta)^{\gamma-1} \frac{1}{\left[\int p(\*z|\*\theta)^{\gamma}d\*z\right]^{\frac{\gamma-1}{\gamma}}}q(\*\theta|\*\kappa)d\*\theta\nonumber\\
=&-\frac{\gamma}{\gamma-1}\int  h(\*\theta)^{\gamma-1} \exp((\gamma-1) g(\*x_i)^T T(\*\theta)- (\gamma-1) B(\*x_i)) \frac{1}{\left[h(\*\theta)^{\gamma} I^{(\gamma)}(\*\theta)\right]^{\frac{\gamma-1}{\gamma}}}q(\*\theta|\*\kappa)d\*\theta\nonumber\\
%=&-\frac{\gamma}{\gamma-1}\int  h(\*\theta)^{\gamma-1} \exp((\gamma-1) g(\*x_i)^T T(\*\theta)- (\gamma-1) B(\*x_i))h(\*\theta)^{1-\gamma} I^{(\gamma)}(\*\theta)^{\frac{1-\gamma}{\gamma}}q(\*\theta|\*\kappa)d\*\theta\\
%=&-\frac{\gamma}{\gamma-1}\int I^{(\gamma)}(\*\theta)^{\frac{1-\gamma}{\gamma}}h(\*\theta)\exp\left\lbrace\left(\eta(\*\kappa)+(\gamma -1)g(\*x_i)\right)^TT(\*\theta)-A(\eta(\*\kappa))- (\gamma-1) B(\*x_i)\right\rbrace d\*\theta\nonumber\\
%=&-\frac{\gamma}{\gamma-1}\frac{\exp\left((1-\gamma)B(\*x_i)+A\left(\eta(\*\kappa)+(\gamma -1)g(\*x_i)\right)\right)}{\exp\left(A(\eta(\*\kappa))\right)}\nonumber\\
%&\cdot\int I^{(\gamma)}(\*\theta)^{\frac{1-\gamma}{\gamma}}h(\*\theta)\exp\left\lbrace\left(\eta(\*\kappa)+(\gamma -1)g(\*x_i)\right)^TT(\*\theta)-A(\left(\eta(\*\kappa)+(\gamma -1)g(\*x_i)\right))\right\rbrace d\*\theta\nonumber\\
=&-\frac{\gamma}{\gamma-1}\frac{\exp\left((1-\gamma)B(\*x_i)+A\left(\eta(\*\kappa)+(\gamma -1)g(\*x_i)\right)\right)}{\exp\left(A(\eta(\*\kappa))\right)}\mathbb{E}_{q(\*\theta|\left(\eta(\*\kappa)+(\gamma -1)g(\*x_i)\right))}\left[ I^{(\gamma)}(\*\theta)^{\frac{1-\gamma}{\gamma}}\right],\nonumber
\end{align}
where the theorem statement  ensures that $\left(\eta(\*\kappa_n)+(\gamma -1)g(\*x_i)\right)\in\mathcal{N}$ for all $x_i$ and that $F_2(\*\kappa^{\ast}) = \mathbb{E}_{q(\*\theta|\*\kappa^{\ast})}\left[ I^{(\gamma)}(\*\theta)^{\frac{1-\gamma}{\gamma}}\right]$ is closed form function of $\*\kappa^{\ast}$ for all $\*\kappa^{\ast}\in\mathcal{N}$. Lastly
\begin{align}
C_3(\*\kappa, \kappa_0)%=&\KLD(q(\*\theta|\*\kappa)||\pi(\*\theta|\*\kappa_0))\nonumber\\
%=& \int q(\*\theta|\*\kappa) \log \frac{q(\*\theta|\*\kappa)}{\pi(\*\theta|\*\kappa_0)} d\*\theta\nonumber\\
=& \int h(\*\theta)\exp\left\lbrace\eta(\*\kappa)^TT(\*\theta)-A(\eta(\*\kappa))\right\rbrace \log \frac{h(\*\theta)\exp\left\lbrace\eta(\*\kappa)^TT(\*\theta)-A(\eta(\*\kappa))\right\rbrace}{h(\*\theta)\exp\left\lbrace\eta(\*\kappa_0)^TT(\*\theta)-A(\eta(\*\kappa_0))\right\rbrace} d\*\theta\nonumber\\
=&A(\eta(\*\kappa_0))-A(\eta(\*\kappa)) + \left(\eta(\*\kappa)-\eta(\*\kappa_0)\right)^T\mathbb{E}_{q(\*\theta|\*\kappa)}\left[T(\*\theta)\right],\nonumber
\end{align}
where the theorem statement  ensures that $F_1(\kappa^{\ast}) = \mathbb{E}_{q(\*\theta|\*\kappa^{\ast})}\left[T(\*\theta)\right]$ is a closed form function of $\*\kappa^{\ast}$ for all $\kappa^{\ast}\in\mathcal{N}$ .
\end{proof}

\section{Black Box \GVI (\BBGVI)}
\label{Appendix:BBGVI}

The following sections first recall the (implicit and explicit) assumptions one typically makes for black box \VI.   They are then compared to assumptions that are reasonable for black box \GVI (\BBGVI). The corresponding methods, their special cases and the relevant black box variance reduction techniques are then derived and elaborated upon. 
While there are many black box \VI strategies, we center attention on the framework provided for by \citet{BBVI}.
Throughout, we denote $q(\*\theta) = q(\*\theta|\*\kappa)$ as a posterior distribution in a set of variational families $\mathcal{Q}$ and parameterized by some parameter $\*\kappa \in \*K$.
%methods for different sets of assumptions.\theo{I would delete the last 5 words to finish on a high.}

\subsection{Preliminaries and assumptions}

The variance reduction techniques of \citet{BBVI} crucially rely on three implicit assumptions that are reasonable for many applications of standard \VI. %, but that we cannot always make for \GVI: 
\begin{itemize}
    \myitem{(A1)} Structured mean-field variational inference is used, which means that we can factorize the variational family as $\mathcal{Q} = \{q(\*\theta|\*\kappa) = \prod_{j=1}^kq_j(\*\theta_j|\*\kappa_j): \*\kappa_j \in K_j \text{ for all } j \}$.
    \label{BBGVI:AS:MeanFieldInference}
    \myitem{(A2)} For all factors $\*\theta_j$, we have a Markov blanket $\*\theta_{(j)}$ for which we can additively decompose $\ell(\*\theta, x_i) = \ell^{(j)}(\*\theta_j, \*\theta_{(j)}, x_i) + \ell^{(-j)}(\*\theta_{-j}, x_i)$. Here, $\ell^{(j)}$ is an additive component of the loss $\ell$ that only depends on the $j$-th factor and its Markov blanket, while $\ell^{(-j)}$ is an additive component of the loss that may depend on all of $\*\theta$ except for its $j$-th factor. 
    %Additionally, we require that the priors for the factors $\*\theta_j$ factorize similarly.
    %\jeremias{Need to elaborate on the Markov blanket -- the factors could be overlapping! Also, we should write $\ell = \sum_{j=1}^J\ell^j$ s.t. $\cup \*\theta_j = \*\theta$, but $\cap\*\theta_j$ need not be non-empty. }
    %
    Note that such additivity holds for standard \VI for which the likelihood and the prior
    are such that the components $\*\theta_j$ are conditionally independent. In this case, the conditioning set is the Markov blanket.
    \label{BBGVI:AS:MarkovBlanket}
    \myitem{(A3)} $D=\frac{1}{w}\cdot\KLD$ (with $w = 1$ for standard \VI).
    \label{BBGVI:AS:wKLD}
\end{itemize}
Note that \ref{BBGVI:AS:MeanFieldInference} is always satisfied for both standard \VI and \GVI, because any variational family factorizes into at least a single factor.
In contrast, note that \ref{BBGVI:AS:MarkovBlanket} does not even necessarily hold for standard \VI unless one imposes some conditional independence structure on the $\*\theta_j$.
For \GVI, both \ref{BBGVI:AS:MarkovBlanket}  and \ref{BBGVI:AS:wKLD} do not necessarily hold. 
If they do however, they can greatly simplify \BBGVI or improve its numerical performance.
In the remainder of this section, we discuss different constellations of assumptions and their consequences for \BBGVI.

\subsection{Standard black box \VI with \ref{BBGVI:AS:MarkovBlanket} and \ref{BBGVI:AS:wKLD}}

If the regularizer used is still a rescaled version of the \KLD, one recovers an internally rescaled version of the objective in \citep{BBVI}. Namely, the gradient is given by
\begin{IEEEeqnarray}{rCl}
    \mathbb{E}_{q(\*\theta|\*\kappa)}\bigg[
 \nabla_{\*\kappa}
     \log(q(\*\theta|\*\kappa))
     \bigg(
  -\sum_{i=1}^n\ell(\*\theta, x_i)
   - w\pi(\*\theta)
     - w\log(q(\*\theta|\*\kappa))
     \bigg)
    \bigg]. 
    \nonumber
    %\label{eq:BBVI_scaling2}
\end{IEEEeqnarray}
and can be approximated in a smart way by sampling from $q(\*\theta|\*\kappa)$, see for instance \citet{BBVI} for details and the viable strategies for variance reduction.
Next, we turn attention to the cases that are more interesting: If \ref{BBGVI:AS:wKLD} does not hold (so that $D\neq \KLD$) and when the losses are not necessarily negative log likelihoods, meaning that \ref{BBGVI:AS:MarkovBlanket} requires more careful consideration.

\subsection{\BBGVI under \ref{BBGVI:AS:MarkovBlanket}}

If the losses are decomposable along the factors, two cases need to be distinguished:
\begin{itemize}
    \myitem{(D1)} $\nabla_{\*\kappa}D(q\|\pi)$ has closed form for all $q \in \mathcal{Q}$;
    \label{BBGVI:D:closedForm}
    \myitem{(D2)} $D(q\|\pi) = \mathbb{E}_{q(\*\theta|\*\kappa)}\left[\ell^D_{\*\kappa, \pi}(\*\theta)\right]$ for some function $\ell^D_{\*\kappa, \pi}:\*\Theta \to \mathbb{R}$.
    \label{BBGVI:D:expectationForm}
\end{itemize}
Under each condition, we find a different solution using as much of the available information as possible to improve inference outcomes.
For simplicity, we first explain how the derivation works without using the additional information that \ref{BBGVI:AS:MarkovBlanket}. In a second step, we shall see how this additional information can be used for variance reductions in the Rao-Blackwellization spirit also used by \citet{BBVI}.

\subsubsection{Gradients if \ref{BBGVI:D:closedForm} holds, not using \ref{BBGVI:AS:MarkovBlanket}}

In this case, we can obtain the objective given in the main paper. 
Define $L(q)$ to be the \GVI objective function of $q(\*\theta|\*\kappa)$. It holds that
\begin{IEEEeqnarray}{rCl}
 \nabla_{\*\kappa}L(q) 
    % \nonumber \\
    & = &
 \nabla_{\*\kappa}\left[\int_{\*\theta}
    \sum_{i=1}^n\ell(\*\theta, x_i)
     q(\*\theta|\*\kappa)d\*\theta + D(q||\pi) \right]
     \nonumber \\
    & = &
 \int_{\*\theta}
     \ell_n(\*\theta, \*x)
 \nabla_{\*\kappa}q(\*\theta|\*\kappa)d\*\theta + \nabla_{\*\kappa}D(q||\pi)
 \nonumber \\
    & = & \mathbb{E}_{q(\*\theta|\*\kappa)}\left[
  \sum_{i=1}^n\ell(\*\theta, x_i)
     \nabla_{\*\kappa}\log(q(\*\theta|\*\kappa))
    \right] + \nabla_{\*\kappa}D(q||\pi).
    \nonumber
    %\label{eq:derivation_GVI_L}
\end{IEEEeqnarray}
Correspondingly, the gradient can then be estimated without bias and computing the corresponding sample average $\frac{1}{S}\sum_{s=1}^SG(\*\theta^{(s)})$, where the individual terms are given by
\begin{IEEEeqnarray}{rCl}
     G(\*\theta^{(s)})& = & 
     \sum_{i=1}^n\ell(\*\theta^{(s)}, x_i)
 \nabla_{\*\kappa}\log(q(\*\theta^{(s)}))
     + \nabla_{\*\kappa}D(q||\pi)
     \nonumber
    %\label{eq:GVI_L_estimate2}
\end{IEEEeqnarray}

\subsubsection{Gradients if \ref{BBGVI:D:expectationForm} holds, not using \ref{BBGVI:AS:MarkovBlanket}}

If the uncertainty quantifier is not available in closed form, one instead can rely on
\begin{IEEEeqnarray}{rCl}
 \nabla_{\*\kappa}L(q) 
    % \nonumber \\
    & = &
 \nabla_{\*\kappa}\left[
 \int_{\*\theta}\left[ 
     \sum_{i=1}^n\ell(\*\theta, x_i) + 
     \ell^D_{\*\kappa, \pi}(\*\theta)\right]q(\*\theta|\*\kappa)d\*\theta\right]
     \nonumber \\
    & = &
 \int_{\*\theta}\left[ 
     \sum_{i=1}^n\ell(\*\theta, x_i) + 
     \ell^D_{\*\kappa, \pi}(\*\theta)
 \right]\nabla_{\*\kappa}q(\*\theta|\*\kappa)d\*\theta + 
 \int_{\*\theta}\left[
     \nabla_{\*\kappa}\ell^D_{\*\kappa, \pi}(\*\theta)
 \right]q(\*\theta|\*\kappa)d\*\theta
 \nonumber \\
    & = & \mathbb{E}_{q(\*\theta|\*\kappa)}\left[
  \left(\sum_{i=1}^n\ell(\*\theta, x_i) + 
     \ell^D_{\*\kappa, \pi}(\*\theta)\right)
     \nabla_{\*\kappa}\log(q(\*\theta|\*\kappa))
    \right] +  \mathbb{E}_{q(\*\theta|\*\kappa)}\left[
 \nabla_{\*\kappa}\ell^D_{\*\kappa, \pi}(\*\theta)\right]. \quad \quad 
 \nonumber
    %\label{eq:derivation_GVI_L}
\end{IEEEeqnarray}
This derivation is a more general case of the one given in \citet{BBVI}, but further simplifies to the one therein if $D = \KLD$. The gradient is estimated without bias by sampling $\*\theta^{(1:S)}$ from $q(\*\theta|\*\kappa)$ and again computing $\frac{1}{S}\sum_{s=1}^SG(\*\theta^{(s)})$ for the slightly different
\begin{IEEEeqnarray}{rCl}
     %&& 
     %\nabla_{\*\kappa}L(q(\*\theta^{(s)}|\*x, \ell_n, D) \nonumber \\
     G(\*\theta^{(s)})& = & \left[
     \sum_{i=1}^n\ell(\*\theta^{(s)}, x_i) +     \ell^D_{\*\kappa, \pi}(\*\theta^{(s)})
 \right]
 \nabla_{\*\kappa}\log(q(\*\theta^{(s)}|\*\kappa)) + 
 \nabla_{\*\kappa}\ell^D_{\*\kappa, \pi}(\*\theta^{(s)}).  
     %&&\frac{1}{S}\sum_{s=1}^S\left\{ 
     \nonumber
    %\label{eq:GVI_L_estimate2}
\end{IEEEeqnarray}

\subsubsection{Rao-Blackwellization for variance reduction, using \ref{BBGVI:AS:MarkovBlanket}}

If the losses define a markov blanket over the factors $\*\theta_j$, one can employ  Rao-Blackwellization for variance reduction. This is done by rewriting for $q_{-j}(\*\theta_{-j}|\*\kappa_{-j}) = \prod_{l=1, l \neq j}^kq_l(\*\theta_l|\*\kappa_l)$ the partial derivatives as
\begin{IEEEeqnarray}{rCl}
    \nabla_{\*\kappa_j}L(q) & = & \nabla_{\*\kappa_j}\mathbb{E}_{q_j(\*\theta_j|\*\kappa_j)}\left[
 \mathbb{E}_{q_{-j}(\*\theta_{-j}|\*\kappa_{-j})}\left[
     L(q)|\*\theta_{j}
  \right]
    \right]. \nonumber
\end{IEEEeqnarray}
The hope is then to get around computing as many of the inner expectations over $q_{-j}(\*\theta_{-j}|\*\kappa_{-j})$ as possible.
Assume for the moment that at least \ref{BBGVI:D:expectationForm} holds. 
Further, denote 
$q_{-j}(\*\theta_{-j}|\*\kappa_{-j}) = q_{-j}$,
$q_{j}(\*\theta_{j}|\*\kappa_{j}) = q_{j}$, 
and in similar fashion the distributions
$q_{(j)}$, $q_{-(j)}$, $q$. Moreover, denote 
$\ell_i = \ell(\*\theta, x_i)$, $\ell^D = \ell^D_{\*\kappa, \pi}(\*\theta)$ and in a similar fashion $\ell_i^{(j)}$, $\ell_i^{-(j)}$. 
Now, assuming that \ref{BBGVI:AS:MarkovBlanket} holds relative to the factors $\*\theta_j$ of the variational family $\mathcal{Q}$, one finds
\begin{IEEEeqnarray}{rCl}
    \nabla_{\*\kappa_j}L(q) 
    & = &
    \mathbb{E}_{q_j}\left[
 \nabla_{\*\kappa_j}\log(q_j)\left(
     \mathbb{E}_{q_{-j}}\left[\sum_{i=1}^n\ell_i^{(j)}\right]  + \mathbb{E}_{q_{-j}}[\ell_n^{-j}] +
     \mathbb{E}_{q_{-j}}[\ell^D]
 \right)\right] + \mathbb{E}_{q_{-j}}[\nabla_{\*\kappa_j}\ell^D].
    \nonumber
\end{IEEEeqnarray}
Observing that $\mathbb{E}_{q_j}[\nabla_{\*\kappa_j}\log(q_j)] = 0$ and that $\mathbb{E}_{q_{-j}}[\ell^{-(j)}]$ is constant in $\*\theta_j$ by \ref{BBGVI:AS:MarkovBlanket}, this drastically simplifies to
\begin{IEEEeqnarray}{rCl}
    \nabla_{\*\kappa_j}L(q) 
    & = &
    \mathbb{E}_{q_j}\left[
 \nabla_{\*\kappa_j}\log(q_j)
     \mathbb{E}_{q_{-j}}\left[\sum_{i=1}^n\ell_i^{(j)}\right]  + 
     \mathbb{E}_{q_{-j}}\left[\ell^D + \nabla_{\*\kappa_j}\ell^D\right]
    \right]. \nonumber 
\end{IEEEeqnarray}    
Next, observe that by virtue of how $\ell^{(j)}$ was constructed, it holds that we can also simplify
\begin{IEEEeqnarray}{rCl}
    \mathbb{E}_{q_j}\left[
 \nabla_{\*\kappa_j}\log(q_j)
     \mathbb{E}_{q_{-j}}\left[\sum_{i=1}^n\ell_i^{(j)}\right]
     \right]
     & = &
     \mathbb{E}_{q_{(j)}}\left[\sum_{i=1}^n\ell_i^{(j)}\right].
     \nonumber
\end{IEEEeqnarray}
Putting the above together, we finally arrive at
\begin{IEEEeqnarray}{rCl}
    \nabla_{\*\kappa_j}L(q) 
    & = &
    \mathbb{E}_{q_j}\left[
 \nabla_{\*\kappa_j}\log(q_j)
        \left(
     \mathbb{E}_{q_{-j}}\left[\sum_{i=1}^n\ell_i^{(j)}\right]  + 
     \mathbb{E}_{q_{-j}}[\ell^D]
     \right) 
     + 
     \mathbb{E}_{q_{-j}}[\nabla_{\*\kappa_j}\ell^D]
    \right]
    \nonumber \\
    & = &
    \mathbb{E}_{q_{(j)}}\left[
 \nabla_{\*\kappa_j}\log(q_j)
     \sum_{i=1}^n\ell_i^{(j)}
     \right]  
     + 
         \mathbb{E}_{q}
         \left[
            \nabla_{\*\kappa_j}\log(q_j)\ell^D + \nabla_{\*\kappa_j}\ell^D
         \right]. \nonumber 
\end{IEEEeqnarray}    
which is the final form under \ref{BBGVI:D:closedForm}. Should \ref{BBGVI:D:closedForm} to hold, one can instead use the  lower variance estimate
\begin{IEEEeqnarray}{rCl}
    \nabla_{\*\kappa_j}L(q) 
    & = &
    \mathbb{E}_{q_{(j)}}\left[
 \nabla_{\*\kappa_j}\log(q_j)
     \sum_{i=1}^n\ell_i^{(j)}
     \right]   
         + 
         \nabla_{\*\kappa_j}D(q\|\pi). \nonumber 
\end{IEEEeqnarray}    
These derivations are very similar to the ones in the supplement of \citet{BBVI}, but importantly the former are restricted to negative log likelihood losses. The more general version presented here holds for arbitrary decomposable losses.
The $J$ terms $ \nabla_{\*\kappa_j}L(q)$ can be combined into a global gradient estimate simply by setting
\begin{IEEEeqnarray}{rCl}
     \nabla_{\*\kappa}L(q) & = & \left( \nabla_{\*\kappa_1}L(q) ,  \nabla_{\*\kappa_2}L(q) , \dots  \nabla_{\*\kappa_J}L(q)  \right)^T
     \nonumber.
\end{IEEEeqnarray}
To make the meaning of \ref{BBGVI:AS:MarkovBlanket} more tangible for the case of general losses, we next provide a short example in the context of multivariate regression.

\begin{example}[Markov blankets without conditional independence]
    Suppose each $x_i = (x_{i,1}, x_{i,2}, x_{i,3})'$ consists of three measurements that we wish to relate to some other observables $y_{i}$  through
    \begin{IEEEeqnarray}{rCl}
        x_{i,1} & = & a + y_{i}b + \xi_1 \nonumber \\
        x_{i,2} & = & b + y_{i}c + \xi_2
        \nonumber\\
        x_{i,3} & = & d + \xi_3\nonumber
    \end{IEEEeqnarray}
    where $\xi_j$ are unknown slack variables (or errors), the parameters of interest are $\*\theta = (a,b,c,d,e)$ and we wish to produce a belief distribution over $\*\theta$ that is informative about good values of $\*\theta$ relative to some prediction loss
    \begin{IEEEeqnarray}{rCl}
        \ell(\*\theta, x_i) = 
        \|f^1_1(\*\theta_1, \*\theta_{(1)}, y_i) - x_{i,1}\|_p^p + 
        \|f^2_2(\*\theta_1, \*\theta_{(1)}, y_i) - x_{i,2}\|_p^p + 
        \|f^3_2(\*\theta_2, \*\theta_{(2)}, y_i) - x_{i,3}\|_p^p 
        ,
        \nonumber
    %     \ell_n(\*\theta, \*x) = 
    %     \sum_{i=1}^n
    %  \sum_{l=1}^3
    %      \tilde{\ell}(\widehat{x_{i,l}}(\*\theta) - x_{i,l}).
    \end{IEEEeqnarray}
    where $\|\cdot\|_p^p$ denotes some $p$-norm for $p\geq 1$ and $f_l^j$ seeks to predict only the $l$-th dimension of $x_i$ by means of the $l$-th factor of $\*\theta$ and its blanket. Suppose that $f_l^j$ will correspond to the $l$-th row  written down in the above model for $x_i$ (excluding of course the error term), which means that
    \begin{IEEEeqnarray}{rCl}
        f^1_1(\*\theta_1, \*\theta_{(1)}) 
        & = & a + y_{i}b \nonumber \\
        f^2_2(\*\theta_1, \*\theta_{(1)}, y_i) 
        & = & b + y_{i}c \nonumber \\
        f^3_2(\*\theta_2, \*\theta_{(2)}, y_i) 
        & = & d \nonumber 
    \end{IEEEeqnarray}
    In this case, the two factors of $\*\theta$ will clearly be given by 
    \begin{IEEEeqnarray}{rCl}
        \*\theta_1 & = & (a,b,c)^T, \quad \quad \*\theta_2  = (d). \nonumber
    \end{IEEEeqnarray}
\end{example}
As before, one will in practice need to approximate the gradients with a sample $\*\theta^{(1:S)}$ drawn from $q(\*\theta|\*\kappa)$. For one of the fixed samples $\*\theta^{(s)}$, the relevant terms are computed as
\begin{IEEEeqnarray}{rCl}
    G_j(\*\theta^{(s)}) & = &
 \nabla_{\*\kappa_j}\log(q_j(\*\theta^{(s)}_j|\*\kappa_j))
         \sum_{i=1}^n\ell^{(j)}(\*\theta^{(s)}_j, \*\theta^{(s)}_{(j)}, x_i)
     + 
     \widetilde{D}(s, j) \nonumber 
\end{IEEEeqnarray}    
for some function $\widetilde{D}(s,j)$.
If \ref{BBGVI:D:expectationForm} holds and there is no closed form for the uncertainty quantifier, this function is given by
\begin{IEEEeqnarray}{rCl}
    \widetilde{D}(s,j) & = &
    \nabla_{\*\kappa_j}\log(q_j(\*\theta^{(s)}_j|\*\kappa_j))
            \ell^D_{\pi, \*\kappa}(\*\theta^{(s)}) + 
            \nabla_{\*\kappa_j}\ell^D_{\pi, \*\kappa}(\*\theta^{(s)})
    \nonumber
\end{IEEEeqnarray}
and in case the stricter requirement \ref{BBGVI:D:closedForm} holds, it is simply given by the closed form
\begin{IEEEeqnarray}{rCl}
   \widetilde{D}(s,j) & = & 
         \nabla_{\*\kappa_j}D(q\|\pi). \nonumber 
\end{IEEEeqnarray}  

\subsection{\BBGVI if neither \ref{BBGVI:AS:MarkovBlanket} nor \ref{BBGVI:AS:wKLD} hold}

It is of course possible that neither \ref{BBGVI:AS:MarkovBlanket} nor \ref{BBGVI:AS:wKLD} hold. Alternatively, it may simply be convenient to build an implementation that can work reliably without imposing any assumptions.
In this case, one will have to use the naive version of \BBGVI that is given in the main paper and only depends on the distinction between \ref{BBGVI:D:expectationForm} and \ref{BBGVI:D:closedForm}.
However---even though we do not do so in our experiments--there still are valid black box variance reduction techniques for this case. 
The next section presents these techniques, again by adapting notation and logic from \citet{BBVI}.

\subsection{Generically applicable variance reduction}

While the Rao-Blackwellization variance reduction will generally be more effective, some variance reduction techniques can work in circumstances where Rao-Blackwellization does not.
Conversely, this means that if the Rao-Blackwellization is applicable,  one can actually deploy two variance reduction schemes at once to substantially speed up convergence.
The control variate we use is simply
\begin{IEEEeqnarray}{rCl}
    h(\*\theta) & = & \nabla_{\*\kappa} \log q(\*\theta|\*\kappa) \nonumber
\end{IEEEeqnarray}
with an optimal scaling parameter that can be estimated as
\begin{IEEEeqnarray}{rCl}
    \hat{a}^{\ast} & = & \dfrac{
 \sum_{s=1}^S\widehat{\text{Cov}}(L(\*\theta^{(s)}), h(\*\theta^{(s)})) 
    }{
 \sum_{s=1}^S \widehat{\text{Var}}(h(\*\theta^{(s)}))
    }. \nonumber
\end{IEEEeqnarray}
Based on this, one may now compute the variance reduced term $G_{\text{VR}}(\*\theta^{(s)})$ from $G(\*\theta^{(s)})$ as
\begin{IEEEeqnarray}{rCl}
    G_{\text{VR}}(\*\theta^{(s)}) & = &
    G(\*\theta^{(s)}) - \hat{a}^{\ast} \cdot h(\*\theta^{(s)}).
    \nonumber
\end{IEEEeqnarray}
Of course, the exact same logic can be applied to the Rao-Blackwellized terms $G_j(\*\theta^{(s)})$ to reduce the variance a second time.

\section{Closed forms for divergences \& proof of Proposition \ref{proposition:closed_form_D}}

This section proves various closed forms for the uncertainty quantifiers in the \GVI problem.
We do so by proving conditions for closed forms of the $\alpha\beta\gamma$-divergence (\ABGD) introduced in Appendix \ref{Appendix:definitions}.
Note that the special case of these results for the \RAD has been derived before \citep[see][]{closedFormsPaper, closedFormsThesis, ConvexStatisticalDistances}. Unlike previous work, our results apply to a range of other divergences, too.
This is convenient because all other robust divergences we discuss throughout the paper are special cases of \ABGD. 

\subsection{High-level overview of results and preliminaries}

%The Appendix \ref{sec:section1} states the general divergence, \ABGD, introduced in \cite{ABCdiv}. The %\ABGD family contains many well known families as special cases. %and a  result provides\theo{rephrase} a wide range of divergences that could be used to quantify uncertainty within \GVI. 
%The theorems and corollaries in the following section determine the conditions under which the \ABGD -- and by implication the well known families of divergences contained within the \ABGD family -- have a closed form for two members of the same exponential family. 
%
%Throughout this section we will assume that the prior and variational family are exponential families that can be written down in closed form. In particular, this implies that the log-normalising constant is a closed form function of the natural parameters. 
%
Summarizing some of the most important findings of this section, we find that if both $q(\*\theta)$ and $\pi(\*\theta)$ are in the same exponential variational family $\mathcal{Q}$,
\begin{itemize}
    \item $\RAD(q||\pi)$ and $\AD(q|\pi)$ are always available in closed form if $\alpha \in (0,1)$ (see Corollary \ref{Thm:ClosedFormAD})
    \item $\RAD(q||\pi)$ and $\AD(q|\pi)$ are  available in closed form if $\alpha>1$ for most exponential families (see again Corollary \ref{Thm:ClosedFormAD})
    \item $\BD(q||\pi)$ and $\GD(q||\pi)$ are available in closed form for $\beta > 1$ and $\gamma > 1$ for most exponential families (See Corollary \ref{Thm:ClosedFormBD2}).
\end{itemize}
We note that these findings are interesting because closed forms for the divergence term drastically reduce the variance of black box \GVI, see also Appendix \ref{Appendix:BBGVI}.
The remainder of this section is devoted to tedious but rigorous derivations of these findings.
Before stating any results, it is useful to state the definition of an exponential family and its natural parameter space upon which the proofs rely.
\begin{definition}[Exponential families]
Object $\theta\in\Theta\subset\mathbb{R}^d$, $d\geq1$ has an exponential family distribution with parameters $\kappa\in\*K\subset\mathbb{R}^{p^{\prime}}$, $p^{\prime}\geq 1$  if there exist functions $\eta:\*K\rightarrow\mathcal{N}\subset \mathbb{R}^p$, $p\geq 1$, $T:\Theta\rightarrow\mathcal{T}\subset\mathbb{R}^p$, $h:\Theta\rightarrow\mathbb{R}_{\geq 0}$ and $A:\mathcal{N}\rightarrow\mathbb{R}$ such that 
\begin{IEEEeqnarray}{rCl}
p(\*\theta|\eta(\*\kappa)) &=& h(\*\theta)\exp\left\lbrace\eta(\*\kappa)^TT(\*\theta)-A(\eta(\*\kappa))\right\rbrace,\nonumber
\end{IEEEeqnarray}
where $A(\eta(\*\kappa)) = -\log \left(\int h(\*\theta)\exp\left\lbrace\eta(\*\kappa)^TT(\*\theta)\right\rbrace d\theta\right)$. 
The set $\mathcal{N}$ is called the natural parameter space and is defined to ensure $p(\*\theta|\eta(\*\kappa))$ is a normalised probability density, $\mathcal{N}=\left\lbrace \eta(\*\kappa):A(\eta(\*\kappa))<\infty\right\rbrace$.
\label{Def:ExponentialFamily}
\end{definition}
Throughout the rest of this section, we assume that the following condition holds for both the prior and the variational family $\mathcal{Q}$.
\begin{condition}[The prior and variational families] It holds that 
\begin{enumerate}[label=\roman*), ref=\roman*]
\item the variational family $\mathcal{Q}=\left\lbrace q(\*\theta|\eta(\*\kappa))\right\rbrace$ is an exponential family of the form given by Definition \ref{Def:ExponentialFamily}
\item the prior $\pi(\*\theta|\eta(\*\kappa_0))$ is a member of that variational family.
\end{enumerate}
Amongst other things, this implies that the log-normalising constant is a closed form function of the natural parameters and that we can derive generic conditions for closed forms by using the canonical representation of exponential families. 
\end{condition}
To showcase the implications of the derived results, we use the Mulitvariate Gaussian (\MVN) to provide examples along the way.
\begin{definition}[The \MVN exponential family]
The density of the \MVN exponential family for vector $\*\theta$ of dimension $d$ is $p(\*\theta|\eta(\*\kappa))= h(\*\theta)\exp\left\lbrace\eta(\*\kappa)^TT(\*\theta)-A(\eta(\*\kappa))\right\rbrace$ where
\begin{IEEEeqnarray}{RCl}
\eta(\*\kappa) &= \begin{pmatrix} \*V^{-1}\*\mu\\
-\frac{1}{2}\*V^{-1}
\end{pmatrix}\qquad
T(\*\theta) &= \begin{pmatrix} \*\theta\\
\*\theta\*\theta^T
\end{pmatrix}\nonumber\\
h(\*\theta) &= \left(2\pi\right)^{-d/2}\qquad
A(\eta(\*\kappa)) &= \left[\frac{1}{2}\log \left|\*V\right|+\frac{1}{2}\*\mu \*V^{-1}\*\mu\right]\nonumber
\end{IEEEeqnarray}
and the natural parameter space requires that $\*\mu$ is a real valued vector of the same dimension as $\*\theta$ and $\*V$ is a $d\times d$ symmetric semi-positive definite matrix.
\label{Def:MVN}
\end{definition}

\subsection{Results, proofs \& examples}

The remainder of this section is structued as follows: First, we give the main result for the $\alpha\beta\gamma$-divergence (\ABGD)  in Proposition \ref{Proposition:ClosedFormABGDMaster}.
This ``master result'' is then applied to various special cases for \ABGD that are of practical interest, namely the $\alpha$-divergence (\AD), R\'enyi's $\alpha$-divergence (\RAD), the $\beta$-divergence (\BD) as well the $\gamma$-divergence (\GD).

\subsubsection{Master result for \ABGD}

While the following result and corresponding proof are somewhat tedious to read, they are conceptually simple: In fact, all that is needed to derive the results is some basic algebra and the canonical form of the exponential family.
%
%\jeremias{Why do we need to redefine $\tilde{D}_{G}^{(\alpha,\beta)}(q(\*\theta|\*\kappa_n)||\pi(\*\theta|\*\kappa_0))$? That's already done in Appendix A, so just refer to the form we write down there. You also need to shorten the math in the result  (the stuff that is currently spanning 5 lines) and also the proof (see there). 
%
%One more point: Please rewrite all propositions/thms/corolls so that they don't protrude into the margins. Somehow happens in multiple results of this section.
%}
%
\begin{proposition}[Closed form \ABGD between exponential families] The \ABGD between a variational posterior $q(\*\theta|\*\kappa_n)$ and prior $\pi(\*\theta|\*\kappa_0)$ is available in closed form under the following conditions
\begin{enumerate}[label=\roman*), ref=\roman*]
\item \label{item:ABGC3} $\eta(\*\kappa_0),\eta(\*\kappa_n)\in\mathcal{N}\Rightarrow \left(\alpha \eta(\*\kappa_0)+(\beta-1)\eta(\*\kappa_n)\right)\in\mathcal{N}$;
\item \label{item:ABGC4} $\mathbb{E}_{p(\*\theta|\eta(\*\kappa))}\left[ h(\*\theta)^{\alpha+\beta-2}\right]$ is  a closed form function of $\eta(\*\kappa)\in\mathcal{N}$.
\end{enumerate}
If these conditions hold the \ABGD can be written as
%\jeremias{This needs to be rewritten. Whenever an expression takes three lines, you need to simplify it. (Notice e.g. that the exp-terms for the first two terms are exactly the same, the only difference is the $\*\kappa$-term, so just define $B(\*\kappa)$ to be that exp-thing and $C(\*\kappa_1, \*\kappa_2)$ to be the third term (which is slightly different from the others). 
%
%Similarly, the densities in the expectations need to be redefined to be e.g. $\widetilde{\eta}_n = (\alpha + \beta-1)\eta(\*\kappa_n)$ etc to make this more readable.
%
%In fact, you can actually go one step further with this: Because the $h$-part inside the expectation is always the same, you can define $E(\widetilde{\eta}_n)$ to be the first term, $E(\widetilde{\eta}_0)$ as the second and $E(\widetilde{\eta}_{0,n})$ as the third. I think you get what I mean
%}
%
\begin{IEEEeqnarray}{rCl}
&&\tilde{D}_{G}^{(\alpha,\beta)}(q(\*\theta|\*\kappa_n)||\pi(\*\theta|\*\kappa_0))\nonumber\\
&=& \alpha B(\*\kappa_n, (\alpha+\beta-1))E(\*\kappa_n,(\alpha+\beta-1)) {}+{} (\beta-1)B(\*\kappa_0, (\alpha+\beta-1))E(\*\kappa_0,(\alpha+\beta-1))\nonumber\\
&&-\> (\alpha+\beta-1)C(\*\kappa_n,\*\kappa_0,\alpha,(\beta-1))\tilde{E}(\*\kappa_n,\*\kappa_0,\alpha,(\beta-1))\nonumber
\label{Equ:SubGeneralDivergenceClosedForm}
\end{IEEEeqnarray}
where 
%
%\jack{these don't align properly, can you please tell me what i'm doing wrong with IEEEeqnarray}\jeremias{How do you want to align them exactly? When you open the curly brackets you define columns, presumably you want 7 columns here (3 for each of the equation parts, 2 per row + 1 white space one in-between?) and then use the ampersand to declare the end of a column. Within each column, you can decide whether you want to align to the center (C), the right (R) or left (L).}
\begin{IEEEeqnarray}{rClrCl}
B(\*\kappa,\delta) &=& \frac{\exp\left\lbrace A(\delta\eta(\*\kappa))\right\rbrace}{\exp\left\lbrace A(\eta(\*\kappa))\right\rbrace^{\delta}}, \qquad& 
 C(\*\kappa_1,\*\kappa_2,\delta_1,\delta_2) &=& \frac{\exp\left\lbrace A\left(\delta_1 \eta(\*\kappa_1)+\delta_2)\eta(\*\kappa_2)\right)\right\rbrace}{\exp\left\lbrace A(\eta(\*\kappa_1))\right\rbrace ^{\delta_1}\exp\left\lbrace A(\eta(\*\kappa_2))\right\rbrace^{\delta_2}}\nonumber\\
E(\*\kappa,\delta) &=& \mathbb{E}_{p(\theta|\delta\eta(\*\kappa))}\left[ h(\*\theta)^{\delta-1}\right],\qquad&
\tilde{E}(\*\kappa_1,\*\kappa_2,\delta_1,\delta_2) &=& \mathbb{E}_{p(\theta|\delta_1\eta(\*\kappa_1)+\delta_2\eta(\*\kappa_2))}\left[ h(\*\theta)^{\delta_1+\delta_2-1}\right]\nonumber
\end{IEEEeqnarray}
we suppress the dependence of these functions on $A(\cdot)$ and $h(\cdot)$ as these derive form the definition of the exponential family (Definition \ref{Def:ExponentialFamily}).
\label{Proposition:ClosedFormABGDMaster}
\end{proposition}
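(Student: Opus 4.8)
The plan is to compute the generalized divergence term $\tilde{D}_G^{(\alpha,\beta)}$ directly from its integral definition in Definition \ref{Def:ABGdiv}, substituting the canonical exponential-family forms for $q(\*\theta|\*\kappa_n)$ and $\pi(\*\theta|\*\kappa_0)$, and then recognizing each of the three resulting integrals as (a constant multiple of) an expectation against another exponential-family density with a shifted natural parameter. First I would write out
\begin{IEEEeqnarray}{rCl}
\tilde{D}_{G}^{(\alpha,\beta)}(q\|\pi) &=& \int\Bigl(\alpha\, q(\*\theta|\*\kappa_n)^{\alpha+\beta-1} + (\beta-1)\,\pi(\*\theta|\*\kappa_0)^{\alpha+\beta-1} \nonumber\\
&&\qquad - (\alpha+\beta-1)\, q(\*\theta|\*\kappa_n)^{\alpha}\pi(\*\theta|\*\kappa_0)^{\beta-1}\Bigr)d\*\theta, \nonumber
\end{IEEEeqnarray}
so that it suffices to give closed forms for the three integrals $\int q^{\delta}\,d\*\theta$, $\int \pi^{\delta}\,d\*\theta$ (with $\delta = \alpha+\beta-1$), and $\int q^{\alpha}\pi^{\beta-1}\,d\*\theta$.

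For the first kind, substituting $q(\*\theta|\*\kappa_n) = h(\*\theta)\exp\{\eta(\*\kappa_n)^T T(\*\theta) - A(\eta(\*\kappa_n))\}$ gives
\begin{IEEEeqnarray}{rCl}
\int q(\*\theta|\*\kappa_n)^{\delta}\,d\*\theta &=& \exp\{-\delta A(\eta(\*\kappa_n))\}\int h(\*\theta)^{\delta}\exp\{\delta\eta(\*\kappa_n)^T T(\*\theta)\}\,d\*\theta \nonumber\\
&=& \frac{\exp\{A(\delta\eta(\*\kappa_n))\}}{\exp\{A(\eta(\*\kappa_n))\}^{\delta}}\,\mathbb{E}_{p(\*\theta|\delta\eta(\*\kappa_n))}\bigl[h(\*\theta)^{\delta-1}\bigr], \nonumber
\end{IEEEeqnarray}
where the last step writes $h(\*\theta)^{\delta} = h(\*\theta)\cdot h(\*\theta)^{\delta-1}$, multiplies and divides by $\exp\{A(\delta\eta(\*\kappa_n))\}$, and recognizes $h(\*\theta)\exp\{\delta\eta(\*\kappa_n)^T T(\*\theta) - A(\delta\eta(\*\kappa_n))\}$ as the density $p(\*\theta|\delta\eta(\*\kappa_n))$. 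This is precisely $B(\*\kappa_n,\delta)E(\*\kappa_n,\delta)$, and it is well-defined and closed form exactly when $\delta\eta(\*\kappa_n)\in\mathcal{N}$ and $E(\*\kappa_n,\delta)$ is closed form — the latter being condition \ref{item:ABGC4} (with $\delta = \alpha+\beta-1$, so $h(\*\theta)^{\delta-1} = h(\*\theta)^{\alpha+\beta-2}$). The same manipulation applied to $\pi$ yields $B(\*\kappa_0,\delta)E(\*\kappa_0,\delta)$. For the cross term, I would substitute both canonical forms, collect the exponentials to get $h(\*\theta)^{\alpha+\beta-1}\exp\{(\alpha\eta(\*\kappa_n)+(\beta-1)\eta(\*\kappa_0))^T T(\*\theta)\}$ up to the factor $\exp\{-\alpha A(\eta(\*\kappa_n)) - (\beta-1)A(\eta(\*\kappa_0))\}$, and then multiply and divide by $\exp\{A(\alpha\eta(\*\kappa_n)+(\beta-1)\eta(\*\kappa_0))\}$ to identify the density $p(\*\theta|\alpha\eta(\*\kappa_n)+(\beta-1)\eta(\*\kappa_0))$ and the leftover moment $\mathbb{E}[h(\*\theta)^{\alpha+\beta-2}]$ against it. This gives $C(\*\kappa_n,\*\kappa_0,\alpha,\beta-1)\tilde{E}(\*\kappa_n,\*\kappa_0,\alpha,\beta-1)$, and its well-definedness requires precisely condition \ref{item:ABGC3}, namely $\alpha\eta(\*\kappa_n)+(\beta-1)\eta(\*\kappa_0)\in\mathcal{N}$ (the statement of \ref{item:ABGC3} as written swaps the roles of $\*\kappa_0,\*\kappa_n$, but this is immaterial to the argument and I would match the convention used in the proposition's conclusion). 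Assembling the three pieces with the coefficients $\alpha$, $\beta-1$, $-(\alpha+\beta-1)$ from Definition \ref{Def:ABGdiv} produces exactly the claimed formula.

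The main obstacle is largely bookkeeping rather than conceptual: one must be careful that each shifted parameter appearing as an argument of $A(\cdot)$, $B(\cdot)$, $C(\cdot)$, $E(\cdot)$ and $\tilde E(\cdot)$ genuinely lies in $\mathcal{N}$ so that the log-partition function is finite and the "completed" density integrates to one — this is where conditions \ref{item:ABGC3} and \ref{item:ABGC4} are used and must be invoked explicitly, and I would note that when $\alpha,\beta-1 \geq 0$ with $\alpha+\beta-1\leq 1$ (or more generally whenever the relevant convex combinations stay in $\mathcal{N}$, which for many families such as the \MVN holds automatically) the conditions are satisfied. A secondary subtlety is that $\mathbb{E}_{p(\*\theta|\delta\eta(\*\kappa))}[h(\*\theta)^{\delta-1}]$ collapses to $1$ whenever $h$ is constant (as for the \MVN, where $h(\*\theta) = (2\pi)^{-d/2}$ raised to a power is just a constant that can be absorbed), so in the important special cases the $E$ and $\tilde E$ terms are trivial; but since the proposition is stated for general $h$, I would keep them as abstract closed-form functions guaranteed by condition \ref{item:ABGC4}. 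No nontrivial inequality or limiting argument is needed here — the full divergence $\ABGD$ then follows from $\tilde{D}_G^{(\alpha,\beta)}$ by the outer algebraic transformation $x\mapsto \frac{1}{\alpha(\beta-1)(\alpha+\beta-1)r}[(x+1)^r - 1]$ in Definition \ref{Def:ABGdiv}, which preserves closed-formness, so it is enough to treat $\tilde D_G^{(\alpha,\beta)}$ as the proposition does.
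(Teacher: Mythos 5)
Your proposal is correct and follows essentially the same route as the paper's own proof: decompose $\tilde{D}_{G}^{(\alpha,\beta)}$ into the three integrals, complete each integrand to an exponential-family density with shifted natural parameter, and invoke the two conditions to guarantee finiteness of the new log-normalizer and a closed-form moment of $h(\*\theta)^{\alpha+\beta-2}$. Your side remark about the ordering of $\eta(\*\kappa_0)$ and $\eta(\*\kappa_n)$ in condition (i) is a fair observation (the paper implicitly reads the condition as holding for arbitrary natural parameters, covering both the equal-parameter case needed for the first two integrals and the cross term), but it does not change the argument.
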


\begin{proof}
The \ABGD is a closed form function of \ABGDReparam given in Definition \ref{Def:ABGdiv}. Hence if \ABGDReparam is available in closed form, then so is \ABGD. In order to ensure that $\tilde{D}_{G}^{(\alpha,\beta)}(q(\*\theta|\*\kappa_n)||\pi(\*\theta|\*\kappa_0))$ has closed form, we need to make sure the three integrals below are available in closed form for the exponential family. 
\begin{IEEEeqnarray}{rCl}
G_1:=\int q(\*\theta|\*\kappa_n)^{\alpha+\beta-1}d\*\theta,&& \quad G_2:=\int \pi(\*\theta|\*\kappa_0)^{\alpha+\beta-1}d\*\theta,\nonumber\\
G_3:=\int q(\*\theta|\*\kappa_n)^{\alpha}&&\pi(\*\theta|\*\kappa_0)^{\beta-1}d\*\theta.\nonumber
\end{IEEEeqnarray}
%
%\jeremias{
%I think it's enough to just keep the first, second and last line of the simplification for G1, nothing happens here beyond re-arranging terms! It just looks needlessly complicated when you add more lines that repeat things. (If people want to really understand what's going on, they need to re-do the proofs anyways -- so we don't need to give every extra little step an extra line here)
%}
%
First we tackle $G_1$.
\begin{IEEEeqnarray}{rCl}
G_1%&=&\int q(\*\theta|\*\kappa_n)^{\alpha+\beta-1}d\*\theta\nonumber\\
&=&\int h(\*\theta)^{\alpha+\beta-1}\exp\left\lbrace(\alpha+\beta-1)\eta(\*\kappa_n)^TT(\*\theta)-(\alpha+\beta-1)A(\eta(\*\kappa_n))\right\rbrace d\*\theta\nonumber\\
%&=&\exp\left\lbrace -(\alpha+\beta-1)A(\eta(\*\kappa_n))\right\rbrace \int h(\*\theta)^{\alpha+\beta-1}\exp\left\lbrace(\alpha+\beta-1)\eta(\*\kappa_n)^TT(\*\theta)\right\rbrace d\*\theta\nonumber\\
%&=&\exp\left\lbrace A((\alpha+\beta-1)\eta(\*\kappa_n)) -(\alpha+\beta-1)A(\eta(\*\kappa_n))\right\rbrace\nonumber \\
%&&\cdot\int h(\*\theta)^{\alpha+\beta-1}\exp\left\lbrace(\alpha+\beta-1)\eta(\*\kappa_n)^TT(\*\theta)-A((\alpha+\beta-1)\eta(\*\kappa_n)\right\rbrace d\*\theta\nonumber\\
&=&\exp\left\lbrace A((\alpha+\beta-1)\eta(\*\kappa_n)) -(\alpha+\beta-1)A(\eta(\*\kappa_n))\right\rbrace \mathbb{E}_{p(\*\theta|(\alpha+\beta-1)\eta(\*\kappa_n)}\left[h(\*\theta)^{\alpha+\beta-2}\right],\nonumber
\end{IEEEeqnarray}
where condition (\ref{item:ABGC3}) with $\eta(\*\kappa_0)=\eta(\*\kappa_n)$ ensures that 
\begin{IEEEeqnarray}{rCl}
 A((\alpha+\beta-1)\eta(\*\kappa_n))&=&\int h(\*\theta)\exp\left\lbrace(\alpha+\beta-1)\eta(\*\kappa_n)^TT(\*\theta)\right\rbrace d\*\theta<\infty,\nonumber
\end{IEEEeqnarray}
%
%\jeremias{I don't understand what you mean by 'valid expectation' -- can you clarify?}
%
which in turn ensures that $p(\*\theta|(\alpha+\beta-1)\eta(\*\kappa_n)$ is a normalised probability density and that \newline 
$\mathbb{E}_{p(\*\theta|(\alpha+\beta-1)\eta(\*\kappa_n)}\left[h(\*\theta)^{\alpha+\beta-2}\right]$ is a valid expectation. Now, condition (\ref{item:ABGC4}) guarantees this is a closed form function of $\eta(\*\kappa_n)$. Similarly for $G_2$,
\begin{IEEEeqnarray}{rCl}
G_2%&=&\int \pi(\*\theta|\*\kappa_0)^{\alpha+\beta-1}d\*\theta\nonumber\\
&=&\int h(\*\theta)^{\alpha+\beta-1}\exp\left\lbrace(\alpha+\beta-1)\eta(\*\kappa_0)^TT(\*\theta)-(\alpha+\beta-1)A(\eta(\*\kappa_0))\right\rbrace d\*\theta\nonumber\\
%&=&\exp\left\lbrace -(\alpha+\beta-1)A(\eta(\*\kappa_0))\right\rbrace \int h(\*\theta)^{\alpha+\beta-1}\exp\left\lbrace(\alpha+\beta-1)\eta(\*\kappa_0)^TT(\*\theta)\right\rbrace d\*\theta\nonumber\\
%&=&\exp\left\lbrace A((\alpha+\beta-1)\eta(\*\kappa_0)) -(\alpha+\beta-1)A(\eta(\*\kappa_0))\right\rbrace \nonumber\\
%&&\cdot\int h(\*\theta)^{\alpha+\beta-1}\exp\left\lbrace(\alpha+\beta-1)\eta(\*\kappa_0)^TT(\*\theta)-A((\alpha+\beta-1)\eta(\*\kappa_0)\right\rbrace d\*\theta\nonumber\\
&=&\exp\left\lbrace A((\alpha+\beta-1)\eta(\*\kappa_0)) -(\alpha+\beta-1)A(\eta(\*\kappa_0))\right\rbrace \mathbb{E}_{p(\*\theta|(\alpha+\beta-1)\eta(\*\kappa_0)}\left[h(\*\theta)^{\alpha+\beta-2}\right],\nonumber
\end{IEEEeqnarray}
where in analogy to $G_1$, conditions (\ref{item:ABGC3}) and (\ref{item:ABGC4}) with $\eta(\*\kappa_k)=\eta(\*\kappa_0)$ ensure this has a closed form.  Lastly for $G_3$,
\begin{IEEEeqnarray}{rCl}
G_3%&=&\int q(\*\theta|\*\kappa_n)^{\alpha}\pi(\*\theta|\*\kappa_0)^{\beta-1}d\*\theta\nonumber\\
&=&\int h(\*\theta)^{\alpha}\exp\left\lbrace \alpha \eta(\*\kappa_n)^TT(\*\theta)-\alpha A(\eta(\*\kappa_n))\right\rbrace\nonumber\\
&& \cdot h(\*\theta)^{\beta-1}\exp\left\lbrace (\beta-1) \eta(\*\kappa_0)^TT(\*\theta)-(\beta-1) A(\eta(\*\kappa_0))\right\rbrace d\*\theta\nonumber\\
%&=&\exp\left\lbrace -\alpha A(\eta(\*\kappa_n))-(\beta-1) A(\eta(\*\kappa_0))\right\rbrace\nonumber \\
%&&\cdot\int h(\*\theta)^{\alpha+\beta-1}\exp\left\lbrace \left(\alpha \eta(\*\kappa_n)+(\beta-1)\eta(\*\kappa_0)\right)^TT(\*\theta)\right\rbrace d\*\theta\nonumber\\
&=&\exp\left\lbrace A\left(\alpha \eta(\*\kappa_n)+(\beta-1)\eta(\*\kappa_0)\right) -\alpha A(\eta(\*\kappa_n))-(\beta-1) A(\eta(\*\kappa_0))\right\rbrace\nonumber \\
&&\cdot\mathbb{E}_{p(\*\theta|\left(\alpha \eta(\*\kappa_n)+(\beta-1)\eta(\*\kappa_0)\right)}\left[h(\*\theta)^{\alpha+\beta-2}\right],\nonumber
\end{IEEEeqnarray}
where once again in analogy to $G_1$ and $G_2$, conditions (\ref{item:ABGC3}) and (\ref{item:ABGC4})  ensure this is a closed form function of $\eta(\*\kappa_n)$ and $\eta(\*\kappa_0)$.
%
%where Condition (\ref{item:ABGC3}) ensures that 
%
%\begin{IEEEeqnarray}{rCl}
%A\left(\alpha \eta(\*\kappa_n)+(\beta-1)\eta(\*\kappa_0)\right)&=&\int h(\*\theta)\exp\left\lbrace\left(\alpha \eta(\*\kappa_n)+(\beta-1)\eta(\*\kappa_0)\right)\eta(\*\kappa_n)^TT(\*\theta)\right\rbrace d\*\theta<\infty\nonumber
%\end{IEEEeqnarray}
%
%which in turn ensures that $p(\*\theta|\left(\alpha \eta(\*\kappa_n)+(\beta-1)\eta(\*\kappa_0)\right))$ is a normalsied probability density and $\mathbb{E}_{p(\*\theta|\left(\alpha \eta(\*\kappa_n)+(\beta-1)\eta(\*\kappa_0)\right))}\left[h(\*\theta)^{\alpha+\beta-2}\right]$ is a valid expectation and Condition (\ref{item:ABGC4}) guarantees this is a closed form function of $\eta(\*\kappa_n)$ and $\eta(\*\kappa_0)$.

Therefore, provided conditions (\ref{item:ABGC3}) and(\ref{item:ABGC4}) hold, the integrals $G_1, G_2$ and $G_3$ are available in closed form, implying that the same holds for $\ABGD(q(\*\theta|\*\kappa_n)||\pi(\*\theta|\*\kappa_0))$.

\end{proof}

\begin{remark}[Conditions of Proposition \ref{Proposition:ClosedFormABGDMaster} for the \MVN exponential family]\label{Rem:GeneralConditionsMVN}
In order to illuminate the meaning and generality of the conditions of Theorem \ref{Proposition:ClosedFormABGDMaster}, we apply them to the \MVN exponential family described in Definition \ref{Def:MVN}. In this case the two conditions become:
\begin{enumerate}[label=\roman*), ref=\roman*]
\item  For $\*\mu^{\ast}:=\left\lbrace\*\mu_1+\*\mu_2-\left(\left(\frac{1}{\alpha}\*V_1\right)^{-1}+\left(\frac{1}{\beta-1}\*V_2\right)^{-1}\right)^{-1}\left(\left(\frac{1}{\alpha}\*V_1\right)^{-1}\*\mu_2+\left(\frac{1}{\beta-1}\*V_2\right)^{-1}\*\mu_1\right)\right\rbrace$ we require that
\begin{IEEEeqnarray}{rCl}
\alpha\begin{pmatrix} \*V_1^{-1}\*\mu_1\\
-\frac{1}{2}\*V_1^{-1}\end{pmatrix} + (\beta-1)\begin{pmatrix} \*V_2^{-1}\*\mu_2\\
-\frac{1}{2}\*V_2^{-1}\end{pmatrix}&=&\begin{pmatrix}\left(\frac{1}{\alpha}\*V_1\right)^{-1}\*\mu_1+\left(\frac{1}{\beta-1}\*V_2\right)^{-1}\*\mu_2\\
-\frac{1}{2}\left\lbrace\left(\frac{1}{\alpha}\*V_1\right)^{-1}+\left(\frac{1}{\beta-1}\*V_2\right)^{-1}\right\rbrace\end{pmatrix}\nonumber\\
&=&\begin{pmatrix} \left\lbrace\left(\frac{1}{\alpha}\*V_1\right)^{-1}+\left(\frac{1}{\beta-1}\*V_2\right)^{-1}\right\rbrace\*\mu^{\ast}\\
-\frac{1}{2}\left\lbrace\left(\frac{1}{\alpha}\*V_1\right)^{-1}+\left(\frac{1}{\beta-1}\*V_2\right)^{-1}\right\rbrace\end{pmatrix}\in\mathcal{N}\nonumber
\end{IEEEeqnarray}
\item $\mathbb{E}_{p(\*\theta|\eta(\*\kappa))}\left[\left(2\pi\right)^{-d/2(\alpha+\beta+2)}\right]=\left(2\pi\right)^{-d/2(\alpha+\beta+2)}=f(\eta(\*\kappa))$ where $f$ is a closed form function. 
\end{enumerate}
Part ii) shows that the second condition is trivially satisfied for the \MVN exponential family. Part i) shows that for the \MVN exponential family, the first condition is satisfied provided $\left(V^{\ast}\right)^{-1}=\left\lbrace\left(\frac{1}{\alpha}\*V_1\right)^{-1}+\left(\frac{1}{\beta-1}\*V_2\right)^{-1}\right\rbrace$ is a positive definite matrix. This condition is enough to ensure that $V^{\ast}$ is invertible and thus that $\mu^{\ast}$ is well-defined. We elaborate further on what this means for certain parametrisations below.
\end{remark}

\subsubsection{Corollary: The special cases of \AD, \RAD}

%\jeremias{For all corollaries, you restate the same conditions as for the master theorem. It's bad style to repeat things that need not be repeated. Either define a condition (with a separate environment) in the preliminaries or say in each Corollary 'under the conditions of Proposition X, we can write the closed form for \AD/\RAD/\BD/\GD as ... '.}

%\jeremias{Improve this subsubsection by stating the relationship between \AD and \RAD right at the start. That way, you can re-use it in the Thm itself (so that the Thm can be only about the \AD + an immediate Corollary that says that \RAD has closed forms if and only if \AD has + reference to equation). Makes your proof nicer \& the section cleaner. Also explains why we lump those two together into one section.}
Next, we consider the \AD and \RAD special cases of the \ABGD family. Definitions \ref{Def:alphaD} and \ref{Def:renyiAlphaD} can be used to show that the \RAD is available as the following closed form function of the \AD. In particular, it holds that
\begin{IEEEeqnarray}{rCl}
\RAD(q(\*\theta)||\pi(\*\theta))&=&\frac{1}{\alpha(\alpha-1)}\log\left\lbrace 1+\alpha(1-\alpha)\AD(q(\*\theta)||\pi(\*\theta))\right\rbrace.
\label{Equ:RADfromAD}
\end{IEEEeqnarray} 
Thus, as demonstrated in Corollary \ref{Thm:ClosedFormRAD} below, the \AD being available in closed form immediately provides the \RAD in closed form.
Before stating these results, we note that \citet{closedFormsPaper, closedFormsThesis, ConvexStatisticalDistances} have shown our closed form results for the \RAD (and thus implicitly the \AD) before. 
We nevertheless think there is merit in stating them, since our results refer to the \ABGD and thus are more general, recovering both the \AD and \RAD only as a special case.

\begin{corollary}[Closed form \AD for exponential families]
The \AD between a variational posterior $q(\*\theta|\*\kappa_n)$ and prior $\pi(\*\theta|\*\kappa_0)$ is available in closed form under the following conditions
\begin{enumerate}[label=\roman*), ref=\roman*]
\item \label{item:ADC1} $\left(\alpha \eta(\*\kappa_n)+(1-\alpha)\eta(\*\kappa_0)\right)\in\mathcal{N}$
  \end{enumerate}
and in this case the \AD can be written as
\begin{IEEEeqnarray}{rCl}
\AD(q(\*\theta|\*\kappa_n)||\pi(\*\theta|\*\kappa_0)&=&\frac{1}{\alpha(1-\alpha)}\left[1-C(\*\kappa_n,\*\kappa_0,\alpha,(1-\alpha))\right],\nonumber
%\RAD(q(\*\theta|\*\kappa_n)||\pi(\*\theta|\*\kappa_0)&=&\frac{1}{\alpha(\alpha-1)}\log \left(C(\*\kappa_n,\*\kappa_0,\alpha,(1-\alpha))\right)\nonumber
 \end{IEEEeqnarray}
 where $C(\*\kappa_1,\*\kappa_2,\delta_1,\delta_2)$ was defined in Proposition \ref{Proposition:ClosedFormABGDMaster} .
\label{Thm:ClosedFormAD}
\end{corollary}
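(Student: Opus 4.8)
The plan is to specialize the master result in Proposition \ref{Proposition:ClosedFormABGDMaster} to the parameter choice that recovers the $\alpha$-divergence \AD from the $\alpha\beta\gamma$-divergence \ABGD. By Definition \ref{Def:alphaD}, \AD is recovered from \ABGD when $r=1$ and $\beta = 2-\alpha$. First I would substitute $\beta = 2-\alpha$ into the two conditions \ref{item:ABGC3} and \ref{item:ABGC4} of Proposition \ref{Proposition:ClosedFormABGDMaster}. Condition \ref{item:ABGC3} becomes $\eta(\*\kappa_0),\eta(\*\kappa_n)\in\mathcal{N} \Rightarrow (\alpha\eta(\*\kappa_0) + (1-\alpha)\eta(\*\kappa_n))\in\mathcal{N}$, which (after swapping the roles of $\*\kappa_0$ and $\*\kappa_n$, since the convex-combination weights are symmetric up to relabelling) is exactly condition \ref{item:ADC1}. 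Condition \ref{item:ABGC4} becomes the requirement that $\mathbb{E}_{p(\*\theta|\eta(\*\kappa))}[h(\*\theta)^{\alpha+\beta-2}] = \mathbb{E}_{p(\*\theta|\eta(\*\kappa))}[h(\*\theta)^{0}] = 1$, which holds trivially; so this condition drops out entirely and only \ref{item:ADC1} survives, matching the statement of the Corollary.

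Next I would plug $\beta = 2-\alpha$ (so $\alpha+\beta-1 = 1$ and $\beta-1 = 1-\alpha$) into the closed-form expression for $\tilde D_G^{(\alpha,\beta)}$ given in Proposition \ref{Proposition:ClosedFormABGDMaster}. The first two terms involve $B(\*\kappa,\alpha+\beta-1) = B(\*\kappa,1)$ and $E(\*\kappa,\alpha+\beta-1) = E(\*\kappa,1)$; from their definitions, $B(\*\kappa,1) = \exp\{A(\eta(\*\kappa))\}/\exp\{A(\eta(\*\kappa))\} = 1$ and $E(\*\kappa,1) = \mathbb{E}_{p(\*\theta|\eta(\*\kappa))}[h(\*\theta)^{0}] = 1$. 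The third term involves $C(\*\kappa_n,\*\kappa_0,\alpha,\beta-1) = C(\*\kappa_n,\*\kappa_0,\alpha,1-\alpha)$ and $\tilde E(\*\kappa_n,\*\kappa_0,\alpha,1-\alpha) = \mathbb{E}[h(\*\theta)^{\alpha+(1-\alpha)-1}] = \mathbb{E}[h(\*\theta)^{0}] = 1$. Therefore $\tilde D_G^{(\alpha,\beta)}(q\|\pi) = \alpha\cdot 1\cdot 1 + (1-\alpha)\cdot 1\cdot 1 - 1\cdot C(\*\kappa_n,\*\kappa_0,\alpha,1-\alpha)\cdot 1 = 1 - C(\*\kappa_n,\*\kappa_0,\alpha,1-\alpha)$. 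Finally, since $r=1$ and $\beta=2-\alpha$, Definition \ref{Def:ABGdiv} gives $\ABGD = \frac{1}{\alpha(\beta-1)(\alpha+\beta-1)r}[(\tilde D_G^{(\alpha,\beta)}+1)^r - 1] = \frac{1}{\alpha(1-\alpha)}\tilde D_G^{(\alpha,\beta)}$, hence $\AD(q\|\pi) = \frac{1}{\alpha(1-\alpha)}[1 - C(\*\kappa_n,\*\kappa_0,\alpha,1-\alpha)]$, which is precisely the claimed formula.

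I would structure the write-up as a short proof that (i) invokes Proposition \ref{Proposition:ClosedFormABGDMaster} with $r=1$, $\beta=2-\alpha$, (ii) checks that condition \ref{item:ABGC4} is automatic and condition \ref{item:ABGC3} reduces to \ref{item:ADC1}, and (iii) simplifies the resulting expression using $B(\cdot,1)=E(\cdot,1)=1$ and $\tilde E(\*\kappa_n,\*\kappa_0,\alpha,1-\alpha)=1$, together with the algebraic relation between \AD and \ABGD. There is essentially no genuine obstacle here — the result is a direct corollary — but the one point requiring a little care is the bookkeeping of which of $\*\kappa_n$, $\*\kappa_0$ plays which role: Proposition \ref{Proposition:ClosedFormABGDMaster}'s condition \ref{item:ABGC3} is stated as $\alpha\eta(\*\kappa_0)+(\beta-1)\eta(\*\kappa_n)\in\mathcal{N}$, whereas the Corollary writes $\alpha\eta(\*\kappa_n)+(1-\alpha)\eta(\*\kappa_0)\in\mathcal{N}$, so I would note that the variational posterior is the first argument of $\AD(q\|\pi)$ and that the definition of \AD in Definition \ref{Def:alphaD} places $q(\*\theta)^{\alpha}\pi(\*\theta)^{1-\alpha}$ in the integrand, which fixes the correspondence $q \leftrightarrow \*\kappa_n$, $\pi\leftrightarrow\*\kappa_0$ and thus the exponent $\alpha$ on $\eta(\*\kappa_n)$; the symmetry of the natural parameter space under this relabelling is what makes the two forms of the condition equivalent. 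I would also mention, for completeness, that Remark \ref{Rem:GeneralConditionsMVN} shows condition \ref{item:ADC1} is satisfied for all $\alpha\in(0,1)$ in the \MVN case (where the relevant precision matrix is a positive combination of positive-definite matrices), which is the assertion made in the high-level overview preceding the Corollary.
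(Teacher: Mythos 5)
Your proposal is correct and follows essentially the same route as the paper's own proof: specialize Proposition \ref{Proposition:ClosedFormABGDMaster} with $r=1$, $\beta=2-\alpha$, note that condition \ref{item:ABGC4} holds automatically (since $\alpha+\beta-2=0$) and that condition \ref{item:ABGC3} reduces to \ref{item:ADC1}. Your explicit simplification of the closed form via $B(\cdot,1)=E(\cdot,1)=\tilde{E}(\*\kappa_n,\*\kappa_0,\alpha,1-\alpha)=1$ and the $r=1$ relation between \ABGD and $\tilde{D}_G^{(\alpha,\beta)}$ is accurate and merely spells out algebra the paper leaves implicit.
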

\begin{proof}
Following \cite{ABCdiv} the single-parameter \AD is recovered as a member of the \ABGD family when $r=1$ and $\beta=2-\alpha$. In this situation, Condition %(\ref{item:ABGC1}), (\ref{item:ABGC2}) and
 (\ref{item:ABGC4}) of Theorem \ref{Proposition:ClosedFormABGDMaster} holds automatically and we are left with Condition (\ref{item:ABGC3}). Substituting $\beta=2-\alpha$ provides Condition (\ref{item:ADC1}) of the Theorem above. 
 
 If $\alpha\in (0,1)$ then the convexity of the natural parameter space ensures that providing $\eta(\*\kappa_n)\in\mathcal{N}$ and $\eta(\*\kappa_0)\in\mathcal{N}$ then $\alpha \eta(\*\kappa_n)+(1-\alpha)\eta(\*\kappa_0)\in\mathcal{N}$. If $\alpha<0$ or $\alpha>1$, then this can no longer be guaranteed.
\end{proof}
Corollary \ref{Thm:ClosedFormRAD} is then an immediate consequence of Corollary \ref{Thm:ClosedFormAD}.
\begin{corollary}[Closed form \RAD for exponential families]
The \RAD between a variational posterior $q(\*\theta|\*\kappa_n)$ and prior $\pi(\*\theta|\*\kappa_0)$ will have closed form providing the \AD  between the same two densities for the same value of $\alpha$ has closed form.
\label{Thm:ClosedFormRAD}
\end{corollary}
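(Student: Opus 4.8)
The plan is to read off the claim directly from the closed-form expression for $\AD$ established in Corollary~\ref{Thm:ClosedFormAD} together with the algebraic identity relating $\RAD$ and $\AD$. Recall from Definitions~\ref{Def:alphaD} and~\ref{Def:renyiAlphaD} that
\begin{IEEEeqnarray}{rCl}
    \AD(q\|\pi) & = & \frac{1}{\alpha(1-\alpha)}\left\{1 - \int q(\*\theta)^{\alpha}\pi(\*\theta)^{1-\alpha}d\*\theta\right\},
    \nonumber \\
    \RAD(q\|\pi) & = & \frac{1}{\alpha(\alpha-1)}\log\left(\int q(\*\theta)^{\alpha}\pi(\*\theta)^{1-\alpha}d\*\theta\right),
    \nonumber
\end{IEEEeqnarray}
so that both divergences are functions of the single integral $J := \int q(\*\theta)^{\alpha}\pi(\*\theta)^{1-\alpha}d\*\theta$. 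First I would make this dependence explicit: from the first line $J = 1 - \alpha(1-\alpha)\AD(q\|\pi)$, and substituting into the second line yields eq.~\eqref{Equ:RADfromAD}, namely $\RAD(q\|\pi) = \frac{1}{\alpha(\alpha-1)}\log\{1 + \alpha(1-\alpha)\AD(q\|\pi)\}$.

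The key observation is then that the map $x \mapsto \frac{1}{\alpha(\alpha-1)}\log\{1 + \alpha(1-\alpha)x\}$ is, for fixed $\alpha \in \mathbb{R}\setminus\{0,1\}$, a fixed elementary (closed-form) function of its argument $x$ on the domain where $1 + \alpha(1-\alpha)x > 0$. Hence if $\AD(q(\*\theta|\*\kappa_n)\|\pi(\*\theta|\*\kappa_0))$ is available in closed form as a function of the natural parameters $\eta(\*\kappa_n), \eta(\*\kappa_0)$ — which by Corollary~\ref{Thm:ClosedFormAD} holds precisely when $(\alpha\eta(\*\kappa_n) + (1-\alpha)\eta(\*\kappa_0)) \in \mathcal{N}$, in which case $\AD = \frac{1}{\alpha(1-\alpha)}[1 - C(\*\kappa_n,\*\kappa_0,\alpha,1-\alpha)]$ — then composing with this elementary function gives a closed form for $\RAD$ as well. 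Concretely, one obtains $\RAD(q(\*\theta|\*\kappa_n)\|\pi(\*\theta|\*\kappa_0)) = \frac{1}{\alpha(\alpha-1)}\log C(\*\kappa_n,\*\kappa_0,\alpha,1-\alpha)$, which is manifestly closed form whenever $C$ is. The steps in order: (1) express $J$ in terms of $\AD$; (2) substitute into the definition of $\RAD$ to get eq.~\eqref{Equ:RADfromAD}; (3) invoke Corollary~\ref{Thm:ClosedFormAD} for the closed form of $\AD$ under condition~\ref{item:ADC1}; (4) note the composition of a closed-form function with a closed-form function is closed form; conclude.

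The only real subtlety — and the main thing to be careful about rather than a genuine obstacle — is the positivity of the logarithm's argument: one must check $1 + \alpha(1-\alpha)\AD(q\|\pi) = J > 0$, i.e. that $J = C(\*\kappa_n,\*\kappa_0,\alpha,1-\alpha) > 0$, so that $\RAD$ is well-defined wherever $\AD$ is. But $J$ is an integral of $q(\*\theta)^{\alpha}\pi(\*\theta)^{1-\alpha}$, a product of (almost surely) positive densities raised to real powers, hence $J > 0$ automatically on the region where the integral converges — and convergence of that integral is exactly what condition~\ref{item:ADC1} guarantees via the natural-parameter argument in the proof of Corollary~\ref{Thm:ClosedFormAD}. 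So no separate hypotheses beyond those of Corollary~\ref{Thm:ClosedFormAD} are needed, and the proof is a two-line composition argument once eq.~\eqref{Equ:RADfromAD} is in hand.
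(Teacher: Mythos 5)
Your proposal is correct and follows essentially the same route as the paper, which proves the corollary by citing the identity in eq.~\eqref{Equ:RADfromAD} expressing \RAD as a closed-form function of \AD and then invoking Corollary~\ref{Thm:ClosedFormAD}. Your additional check that the logarithm's argument $\int q(\*\theta)^{\alpha}\pi(\*\theta)^{1-\alpha}d\*\theta$ is strictly positive (and finite under condition~\ref{item:ADC1}) is a harmless extra detail that the paper leaves implicit.
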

\begin{proof}
The proof of this follows immediately from the fact that the \RAD can be recovered using the closed form function of the \AD shown in eq. \eqref{Equ:RADfromAD}
\end{proof}

%\jeremias{Not strictly speaking correct! I imagine you can construct exponential families with intractable normalizing constants} \jack{But why on earth would you use these as a variational family}
%\jeremias{I prefer the more standard way of denoting exponential family members, see here: \url{https://en.wikipedia.org/wiki/Exponential_family} - Then, the normalizing constant is the A-term, and is already implicitly written as the log of the normalizing constant. (And your A becomes the h)}
 
%\jeremias{Example for the normal (i.e., $\pi$ and $q$ are normal.
%Natural parameterization $\*\kappa$ is $(\Sigma^{-1}\mu, -\frac{1}{2}\Sigma^{-1})^T$ and the log normalizing constant is $\frac{1}{2}\left[\mu^T\Sigma^{-1}\mu + \ln|\Sigma| \right]$. The natural parameter space is convex, so the log normalizing constant of $\alpha\cdot q + (1-\alpha)\cdot\pi$ is given by $\frac{1}{2}\left\{ \alpha \cdot \left[\mu_q^T\Sigma_q^{-1}\mu_q +  (1-\alpha)\cdot \left[\mu_{\pi}^T\Sigma_{\pi}^{-1}\mu_{\pi}\right] + 
 %\ln(|\frac{1}{\alpha}\Sigma_q|) + \ln(|\frac{1}{1-\alpha}\Sigma_{\pi}|)\right\}$}

\begin{remark}[Conditions for Corollary \ref{Thm:ClosedFormAD} for the \MVN exponential family]\label{Rem:GeneralConditionsMVNAD} The condition that $\alpha \eta(\*\kappa_n)+(1-\alpha)\eta(\*\kappa_0)\in\mathcal{N}$ can only be guaranteed for $\alpha\in(0,1)$. However we can see from Remark \ref{Rem:GeneralConditionsMVN} that provided $\*V^{\ast}=\left(\left(\frac{1}{\alpha}\*V_1\right)^{-1}+\left(\frac{1}{\beta-1}\*V_2\right)^{-1}\right)^{-1}$ is a symmetric semi-positive definite (\SPD) matrix for $\beta=2-\alpha$ then this condition will be satisfied. For $\alpha>1$ or $\alpha<0$ we cannot guarantee that $\*V^{\ast}$ is \SPD. However, we implement the \RAD to quantify uncertainty for $\alpha>1$ in the main paper. Corollary \ref{Thm:ClosedFormAD} demonstrates that these parameters will still produce a closed form divergence provided the prior has sufficiently large variance, which can always be guaranteed to hold in practice. 
\end{remark}

\subsubsection{Corollary: The special cases of \BD, \GD}

Next, we turn attention to the $\beta$- and $\gamma$-divergence families. 
Definition \ref{Def:GammaD} shows that the \GD can be recovered as  a closed form function of the terms of the \BD and thus, as demonstrated in Corollary \ref{Thm:ClosedFormGD} below, the \BD being available in closed form immediately provides that the \GD is available in closed form
While the conditions for these are slightly more restrictive than they were for the \AD and \RAD, one can still obtain closed form uncertainty quantifiers for a large range of settings.

%\jeremias{See my remark for the master theorem earlier. I am sure you can make this more legible by defining intermediate terms.}
 \begin{corollary}[Closed form \BD for exponential families]
 The \BD between a variational posterior $q(\*\theta|\*\kappa_n)$ and prior $\pi(\*\theta|\*\kappa_0)$ is available in closed form under the following conditions 
\begin{enumerate}[label=\roman*), ref=\roman*]
 %\item \label{item:BC1} $\beta\eta(\*\kappa_n)\in\mathcal{N}$
%\item \label{item:BC2} $\beta\eta(\*\kappa_0)\in\mathcal{N}$
\item \label{item:BC3} $\eta(\*\kappa_1),\eta(\*\kappa_2)\in\mathcal{N}\Rightarrow \left((\beta-1)\eta(\*\kappa_1)+\eta(\*\kappa_2)\right)\in\mathcal{N}$
\item \label{item:BC4} $\mathbb{E}_{p(\*\theta|\eta(\*\kappa))}\left[ h(\*\theta)^{\beta-1}\right]$ is a closed form function of $\eta(\*\kappa)\in\mathcal{N}$.
 \end{enumerate}
 and in this case the \BD can be written as
 \begin{IEEEeqnarray}{rCl}
 &&\BD(q(\*\theta|\*\kappa_n)||\pi(\*\theta|\*\kappa_0))=\frac{1}{\beta(\beta-1)}B(\*\kappa_n,\beta)E(\*\kappa_n,\beta)+\frac{1}{\beta}B(\*\kappa_0,\beta)E(\*\kappa_0,\beta)\nonumber\\
 &&-\frac{1}{(\beta-1)}C(\*\kappa_n,\*\kappa_0, 1, (\beta-1))\tilde{E}(\*\kappa_n,\*\kappa_0,1,(\beta-1)),\nonumber
 %% I think we can simplify the gamma-divergence beyond this
%&&D_{G}^{(\gamma)}(q(\*\theta|\*\kappa_n)||\pi(\*\theta|\*\kappa_0)=\frac{1}{\gamma(\gamma-1)}\left(A(\gamma\eta(\*\kappa_n))-\gamma A(\eta(\*\kappa_n))+\log E(\gamma,\*\kappa_n)\right)\nonumber\\
% &&+\frac{1}{\gamma}\left(A(\gamma\eta(\*\kappa_0))-\gamma A(\eta(\*\kappa_0))+\log E(\gamma,\*\kappa_0)\right)\\
% &&-\frac{1}{(\gamma-1)}\left(A\left(\eta(\*\kappa_n)+(\gamma-1)\eta(\*\kappa_0)\right)-A(\eta(\*\kappa_n))-(\gamma-1)A(\eta(\*\kappa_0))+\log E(\gamma,\left(\eta(\*\kappa_n)+(\gamma-1)\eta(\*\kappa_0)\right))\right)\nonumber\\
% &&\GD(q(\*\theta|\*\kappa_n)||\pi(\*\theta|\*\kappa_0))=\frac{1}{\gamma(\gamma-1)}\left(A(\gamma\eta(\*\kappa_n))+\log E(\gamma,\*\kappa_n)\right)+\frac{1}{\gamma}\left(A(\gamma\eta(\*\kappa_0))+\log E(\gamma,\*\kappa_0)\right)\nonumber\\
%&&-\frac{1}{(\gamma-1)}\left(A\left(\eta(\*\kappa_n)+(\gamma-1)\eta(\*\kappa_0)\right)+\log E(\gamma,\left(\eta(\*\kappa_n)+(\gamma-1)\eta(\*\kappa_0)\right))\right)
\end{IEEEeqnarray}
where the functions $B(\*\kappa,\delta)$, $C(\*\kappa_1,\*\kappa_2,\delta_1,\delta_2)$, $E(\*\kappa,\delta)$ and $\tilde{E}(\*\kappa_1,\*\kappa_2,\delta_1,\delta_2)$ are defined in Proposition \ref{Proposition:ClosedFormABGDMaster}.
\label{Thm:ClosedFormBD}
\end{corollary}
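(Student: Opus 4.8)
The plan is to obtain Corollary~\ref{Thm:ClosedFormBD} as a direct specialization of the master result Proposition~\ref{Proposition:ClosedFormABGDMaster}, exploiting that the $\beta$-divergence is the member of the $\alpha\beta\gamma$-family $\ABGD$ obtained by setting $r=\alpha=1$ (Definition~\ref{Def:ABGdiv} and Appendix~\ref{Appendix:definitions}). No genuinely new computation is needed: the work is purely matching hypotheses and substituting $\alpha=1$ into the formula already established in Proposition~\ref{Proposition:ClosedFormABGDMaster}.

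First I would record the algebraic reduction. Putting $r=\alpha=1$ in Definition~\ref{Def:ABGdiv} collapses the wrapper $\frac{1}{\alpha(\beta-1)(\alpha+\beta-1)r}\left[(\cdot+1)^{r}-1\right]$ to $\frac{1}{\beta(\beta-1)}(\cdot)$, so that $\ABGD(q\|\pi)=\frac{1}{\beta(\beta-1)}\,\tilde D_G^{(1,\beta)}(q\|\pi)$, where $\tilde D_G^{(1,\beta)}(q\|\pi)=\int \left(q(\*\theta)^{\beta}+(\beta-1)\pi(\*\theta)^{\beta}-\beta\,q(\*\theta)\pi(\*\theta)^{\beta-1}\right) d\*\theta$. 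Dividing by $\beta(\beta-1)$ reproduces exactly the three-integral expression for $\BD(q\|\pi)$ given in Appendix~\ref{Appendix:definitions}. Hence $\BD$ is an affine, and therefore trivially closed-form, function of $\tilde D_G^{(1,\beta)}$, and it suffices to exhibit $\tilde D_G^{(1,\beta)}(q(\*\theta|\*\kappa_n)\|\pi(\*\theta|\*\kappa_0))$ in closed form.

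Next I would verify that the two hypotheses of Proposition~\ref{Proposition:ClosedFormABGDMaster}, evaluated at $\alpha=1$, are precisely conditions~(\ref{item:BC3}) and~(\ref{item:BC4}) of the Corollary. Condition~(\ref{item:ABGC3}) requires $\alpha\,\eta(\*\kappa_1)+(\beta-1)\eta(\*\kappa_2)\in\mathcal{N}$ whenever $\eta(\*\kappa_1),\eta(\*\kappa_2)\in\mathcal{N}$; at $\alpha=1$ this reads $\eta(\*\kappa_1)+(\beta-1)\eta(\*\kappa_2)\in\mathcal{N}$, which is condition~(\ref{item:BC3}) (the two dummy parameters are interchangeable since the implication is demanded for every pair). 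Condition~(\ref{item:ABGC4}) asks that $\mathbb{E}_{p(\*\theta|\eta(\*\kappa))}[h(\*\theta)^{\alpha+\beta-2}]$ be a closed-form function of $\eta(\*\kappa)$; at $\alpha=1$ the exponent is $\beta-1$, giving condition~(\ref{item:BC4}). Therefore Proposition~\ref{Proposition:ClosedFormABGDMaster} applies, $\tilde D_G^{(1,\beta)}$ is closed form, and substituting $\alpha=1$ into its displayed expression (the exponent $\alpha+\beta-1$ becoming $\beta$, the prefactors becoming $1$, $\beta-1$ and $\beta$, and $C,\tilde E$ carrying arguments $(\*\kappa_n,\*\kappa_0,1,\beta-1)$) and multiplying by $\frac{1}{\beta(\beta-1)}$ yields exactly the stated closed form for $\BD$ in terms of $B(\cdot,\beta)$, $E(\cdot,\beta)$, $C(\*\kappa_n,\*\kappa_0,1,\beta-1)$ and $\tilde E(\*\kappa_n,\*\kappa_0,1,\beta-1)$. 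As a self-contained alternative, I would instead compute the three integrals $\int q^{\beta}\,d\*\theta$, $\int \pi^{\beta}\,d\*\theta$ and $\int q\,\pi^{\beta-1}\,d\*\theta$ directly from the canonical form $p(\*\theta|\eta)=h(\*\theta)\exp\{\eta^{T}T(\*\theta)-A(\eta)\}$, exactly as $G_1,G_2,G_3$ are handled in the proof of Proposition~\ref{Proposition:ClosedFormABGDMaster}; there, conditions~(\ref{item:BC3}) and~(\ref{item:BC4}) are precisely what make the resulting log-normalizers finite and the residual $h$-moments closed form.

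The proof presents no real obstacle; the only point to treat carefully is that, unlike in the $\AD$/$\RAD$ case where the analogous parameter constraint follows automatically from convexity of $\mathcal{N}$, condition~(\ref{item:BC3}) must genuinely be assumed, because for $\beta>1$ the combination $(\beta-1)\eta(\*\kappa_1)+\eta(\*\kappa_2)$ is not a convex combination and need not lie in $\mathcal{N}$. I would note, as in item~(\ref{item:BD_closed_forms}) of Proposition~\ref{proposition:closed_form_D}, that it nevertheless holds whenever $h$ does not depend on $\*\theta$ and $\mathcal{N}$ is closed under such combinations --- for instance for Beta, Gamma, Gaussian, exponential and Laplace families --- so that the closed form is available in all the cases used elsewhere in the paper.
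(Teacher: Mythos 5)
Your proposal is correct and follows essentially the same route as the paper's own proof: specializing Proposition~\ref{Proposition:ClosedFormABGDMaster} with $r=\alpha=1$, under which conditions~(\ref{item:ABGC3})--(\ref{item:ABGC4}) become conditions~(\ref{item:BC3})--(\ref{item:BC4}) and the displayed formula follows by substitution. Your added remarks (the explicit reduction of the \ABGD{} wrapper and the observation that condition~(\ref{item:BC3}) is not automatic for $\beta>1$) are accurate elaborations of the same argument.
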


\begin{proof}
Following \cite{ABCdiv}, the single-parameter \BD is recovered as a member of the \ABGD family when $r=1$ and $\alpha=1$. In this situation, Condition (\ref{item:ABGC3})-(\ref{item:ABGC4}) of Theorem \ref{Proposition:ClosedFormABGDMaster} become (\ref{item:BC3})-(\ref{item:BC4}) above. 
\end{proof}
Corollary \ref{Thm:ClosedFormGD} is then an immediate consequence of Corollary \ref{Thm:ClosedFormBD}.
\begin{corollary}[Closed form \GD for exponential families]
The \GD between a variational posterior $q(\*\theta|\*\kappa_n)$ and prior $\pi(\*\theta|\*\kappa_0)$ will have closed form providing the \BD  between the same two densities with $\beta=\gamma$ has closed form.
\label{Thm:ClosedFormGD}
\end{corollary}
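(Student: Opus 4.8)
\textbf{Proof plan for Corollary \ref{Thm:ClosedFormGD}.}
The plan is to leverage the representation of the \GD as a closed-form function of the integral terms constituting the \BD, exactly as set out in Definition \ref{Def:GammaD}. Recall from that definition that the \GD is given by
\begin{IEEEeqnarray}{rCl}
\GD(q(\*\theta)||\pi(\*\theta)) & = & \frac{1}{\gamma(\gamma-1)}\log\frac{\left(\int q(\*\theta)^{\gamma}d\*\theta\right)\left(\int\pi(\*\theta)^{\gamma}d\*\theta\right)^{\gamma-1}}{\left(\int q(\*\theta)\pi(\*\theta)^{\gamma}d\*\theta\right)^{\gamma}}.
\nonumber
\end{IEEEeqnarray}
The three integrals appearing here are, up to relabelling the exponents, exactly the three integrals $G_1$, $G_2$ and $G_3$ that were shown to be available in closed form in the proof of Proposition \ref{Proposition:ClosedFormABGDMaster} (specialised to $\alpha = 1$, $\beta = \gamma$). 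Thus the first step is to observe that the statement "$\BD$ has closed form for $\beta = \gamma$" is, by Corollary \ref{Thm:ClosedFormBD}, equivalent to conditions \ref{item:BC3} and \ref{item:BC4} of that Corollary holding with $\beta = \gamma$; and that these are precisely the conditions guaranteeing $G_1$, $G_2$, $G_3$ are closed form functions of the natural parameters.

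The second step is to note that the map taking these three integrals to $\GD$ is manifestly a closed form operation: it is a ratio, a power, and a logarithm of quantities that are already closed form functions of $\eta(\*\kappa_n)$ and $\eta(\*\kappa_0)$, multiplied by the constant $\frac{1}{\gamma(\gamma-1)}$. Concretely, writing the three integrals in the notation of Proposition \ref{Proposition:ClosedFormABGDMaster} as products of the $B$, $C$, $E$ and $\tilde{E}$ functions (all of which are closed form by hypothesis), one simply substitutes these expressions into the displayed formula for $\GD$ and reads off that the result is closed form. This transformation is the same device already used to pass from the \AD to the \RAD in Corollary \ref{Thm:ClosedFormRAD}, and the proof can be phrased in one or two sentences pointing to that analogy, exactly as the \RAD corollary's proof does.

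There is essentially no obstacle here: the entire content has been front-loaded into Proposition \ref{Proposition:ClosedFormABGDMaster} and Corollary \ref{Thm:ClosedFormBD}, and the only remaining task is the purely formal observation that a logarithm of a ratio of powers of closed-form quantities is itself closed form. If anything, the mild subtlety worth a remark is checking that the denominator $\int q(\*\theta)\pi(\*\theta)^{\gamma}d\*\theta$ is strictly positive so that the logarithm is well-defined—but this is automatic since $q$ and $\pi$ are probability densities, so the integrand is nonnegative and not identically zero. Hence the proof reduces to: \emph{the \GD is recovered from the terms of the \BD via the monotone closed-form transformation in Definition \ref{Def:GammaD}; since closedness of the \BD for $\beta = \gamma$ guarantees each of these terms is a closed form function of the natural parameters, the same holds for the \GD.}
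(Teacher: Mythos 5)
Your proposal is correct and matches the paper's own argument: the paper's proof is a one-line appeal to Definition \ref{Def:GammaD}, noting that the \GD is obtained from the three \BD integral terms by a closed-form (logarithmic) transformation, which is exactly the substance of your argument via Corollary \ref{Thm:ClosedFormBD} and Proposition \ref{Proposition:ClosedFormABGDMaster}. Your added remarks on positivity of the cross integral and the analogy with the \AD-to-\RAD step (Corollary \ref{Thm:ClosedFormRAD}) are harmless elaborations of the same route.
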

\begin{proof}
The proof of this follows immediately from the fact that the \GD can be recovered from the \BD using closed form function as outlined in Definition \ref{Def:GammaD}.
\end{proof}

\begin{remark}[Conditions for Corollary \ref{Thm:ClosedFormBD} under the \MVN exponential family]\hspace{0cm}
\hspace{0cm}
Following Remark \ref{Rem:GeneralConditionsMVN},
%\begin{enumerate}[label=\roman*), ref=\roman*]
    %\item \textbf{i) and ii)} of Corollary \ref{Thm:ClosedFormBD} are trivially satisfied as multiplying a symmetric \SPD matrix by a positive scalar leaves a symmetric \SPD  matrix.
    %\item 
    Corollary \ref{Thm:ClosedFormBD} is satisfied providing $\*V^{\ast}=\left(\left(\*V_n\right)^{-1}+\left(\frac{1}{\beta-1}\*V_0\right)^{-1}\right)^{-1}$ is a symmetric \SPD matrix. The sum of two symmetric \SPD matrices is symmetric \SPD and additionally the inverse of a symmetric \SPD matrix is also \SPD. Therefore provided $\beta>1$ we can be sure that Condition iii) will be satisfied. Similarly to Remark \ref{Rem:GeneralConditionsMVNAD}, when $\beta<1$ closed forms will require that the prior has a sufficiently large variance. 
%    \item is trivially satisfied as for the \MVN family $h(\*\theta)$ doesn't depend on $\*\theta$ or $\eta(\*\kappa)$.
%\end{enumerate}
\label{Rem:GeneralConditionsMVNBD}
\end{remark}
In fact Remark \ref{Rem:GeneralConditionsMVNBD} can be extended to many other exponential families if we constrain $\beta=\gamma>1$, this is formalised in Corollary \ref{Thm:ClosedFormBD2}.
\begin{corollary}[Closed form \BD and \GD for exponential families when $\beta=\gamma>1$]
When $\beta=\gamma>1$, the conditions for Corollary \ref{Thm:ClosedFormBD} are satisfied by any exponential family whose $h(\*\theta)$ is a constant function of $\*\theta$ and whose natural parameter space is closed under addition and scalar multiplication. This includes the Beta, Gamma, Gaussian, exponential and Laplace families.
\label{Thm:ClosedFormBD2}
\end{corollary}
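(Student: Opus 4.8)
The plan is to specialize Corollary \ref{Thm:ClosedFormBD} (and hence, via Corollary \ref{Thm:ClosedFormGD}, handle the $\gamma$-divergence simultaneously by setting $\beta = \gamma$). Corollary \ref{Thm:ClosedFormBD} has exactly two hypotheses to verify, namely item (\ref{item:BC3}), that $\eta(\*\kappa_1), \eta(\*\kappa_2) \in \mathcal{N}$ implies $(\beta-1)\eta(\*\kappa_1) + \eta(\*\kappa_2) \in \mathcal{N}$, and item (\ref{item:BC4}), that $\mathbb{E}_{p(\*\theta|\eta(\*\kappa))}\left[h(\*\theta)^{\beta-1}\right]$ is a closed form function of $\eta(\*\kappa) \in \mathcal{N}$. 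So the entire proof reduces to checking these two conditions under the stated assumptions: $\beta = \gamma > 1$, $h(\*\theta) \equiv h$ constant in $\*\theta$, and $\mathcal{N}$ closed under addition and scalar multiplication.

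First I would dispatch condition (\ref{item:BC4}). Since $h(\*\theta) = h$ does not depend on $\*\theta$, we have $h(\*\theta)^{\beta-1} = h^{\beta-1}$, a constant, so $\mathbb{E}_{p(\*\theta|\eta(\*\kappa))}[h(\*\theta)^{\beta-1}] = h^{\beta-1}$ — trivially a closed form function of $\eta(\*\kappa)$ (indeed constant in it). Next, condition (\ref{item:BC3}): because $\beta > 1$, the scalar $\beta - 1$ is positive, and closure of $\mathcal{N}$ under scalar multiplication gives $(\beta-1)\eta(\*\kappa_1) \in \mathcal{N}$ whenever $\eta(\*\kappa_1) \in \mathcal{N}$; closure under addition then yields $(\beta-1)\eta(\*\kappa_1) + \eta(\*\kappa_2) \in \mathcal{N}$. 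This establishes both hypotheses of Corollary \ref{Thm:ClosedFormBD}, so the $\beta$-divergence has a closed form; applying Corollary \ref{Thm:ClosedFormGD} with $\gamma = \beta$ gives the same for the $\gamma$-divergence. Finally, I would verify the concrete list of families claimed — Beta, Gamma, Gaussian, exponential, Laplace — by recalling, for each, that in the canonical parameterization $h(\*\theta)$ is constant (e.g. $(2\pi)^{-d/2}$ for the Gaussian as in Definition \ref{Def:MVN}, $1$ for the exponential and Laplace, and constant base measures for Beta and Gamma in their natural parameterizations) and that the natural parameter space is a convex cone or half-space closed under the relevant operations.

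The main (minor) obstacle is really just the bookkeeping in the last step: one must be slightly careful that closure "under scalar multiplication" is being invoked only for the positive scalar $\beta - 1$, since for several of these families $\mathcal{N}$ is not closed under multiplication by negative scalars (e.g. the Gamma family requires a positive shape parameter), and that "closed under addition" is the relevant convexity-type property — for the Gaussian this is precisely the $\SPD$-cone argument already sketched in Remark \ref{Rem:GeneralConditionsMVNBD}, which I would cite rather than repeat. No genuine difficulty arises because the hypotheses have been tailored exactly to make conditions (\ref{item:BC3}) and (\ref{item:BC4}) automatic; the corollary is essentially a clean packaging of Corollary \ref{Thm:ClosedFormBD}.
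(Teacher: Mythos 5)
Your proposal is correct and follows essentially the same route as the paper, whose proof simply states that the result follows directly from Corollary \ref{Thm:ClosedFormBD}: you verify the two conditions (\ref{item:BC3}) and (\ref{item:BC4}) exactly as intended, with constancy of $h(\*\theta)$ making (\ref{item:BC4}) trivial and positivity of $\beta-1$ together with closure of $\mathcal{N}$ handling (\ref{item:BC3}), then pass to the \GD via Corollary \ref{Thm:ClosedFormGD}. Your added caveat about only needing closure under positive scalar multiplication is a sensible clarification consistent with the paper's intent (cf.\ Remark \ref{Rem:GeneralConditionsMVNBD}).
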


\begin{proof}
The proof of Corollary \ref{Thm:ClosedFormBD2} follows straight from that of Corollary \ref{Thm:ClosedFormBD}.%, noticing that the natural parameter space being closed under addition implies that $\beta\eta(\*\kappa)\in\mathcal{N}$ if $\eta(\*\kappa)\in\mathcal{N}$ for $\beta>1$
\end{proof}

\section{Experiments}

While the most interesting findings of our numerical studies can be found in the main paper, here we give a brief overview over additional results. 
More importantly, we state the proofs for the theoretical groundwork necessary to deploy \GVI on \DGPs.

\subsection{Bayesian Neural Networks (\BNNs)}
\label{Appendix:BNN}

We provide two more sets of experiments for further insights into \BNNs. 
The first set consists in three more data sets with the same settings as used in the main paper. While these findings do not change the overall picture, they do require more careful analysis and dissemination. 
The second set of results investigates the interaction between robustifying inference relative to the loss with robustifying it relative to the prior. The results suggest a clear relationship for predictive performance as measured by the root mean square error: If robust losses are used, the \KLD generally performs better. Moreover, the combination of robust loss and $D=\KLD$ outperforms \VI and the investigated \DVI methods on all data sets studied. 
The relationship is less clear for the predictive negative log likelihood, both between loss and uncertainty quantifier as well as between the performance to be expected under \GVI, \VI and \DVI.

\begin{figure}[t!]
%\vskip -2.0cm
\begin{center}
\includegraphics[trim= {0.75cm 0.25cm 3.25cm 0.25cm}, clip,  
 width=1.00\columnwidth]{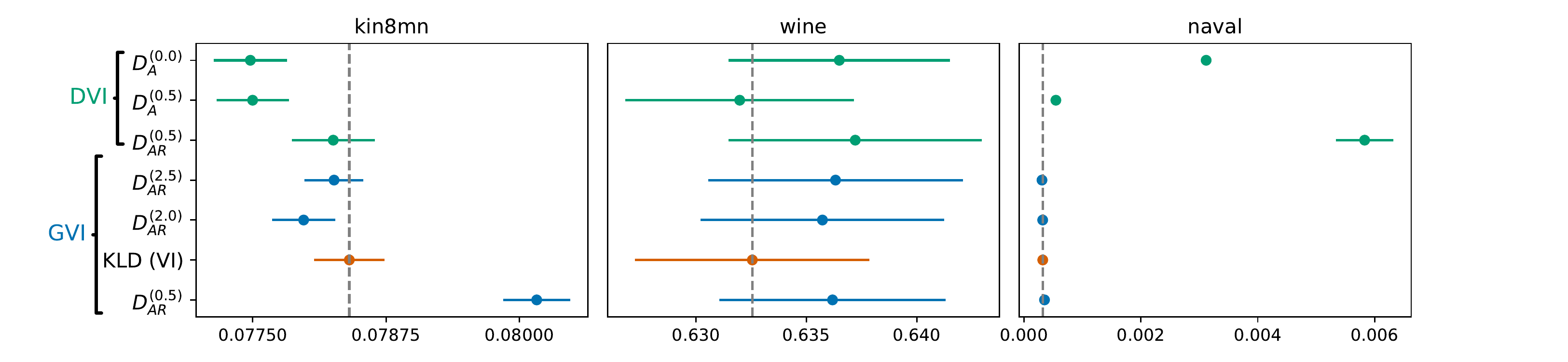}
 \includegraphics[trim= {0.75cm 0.25cm 3.25cm 0.75cm}, clip,  
 width=1.00\columnwidth]{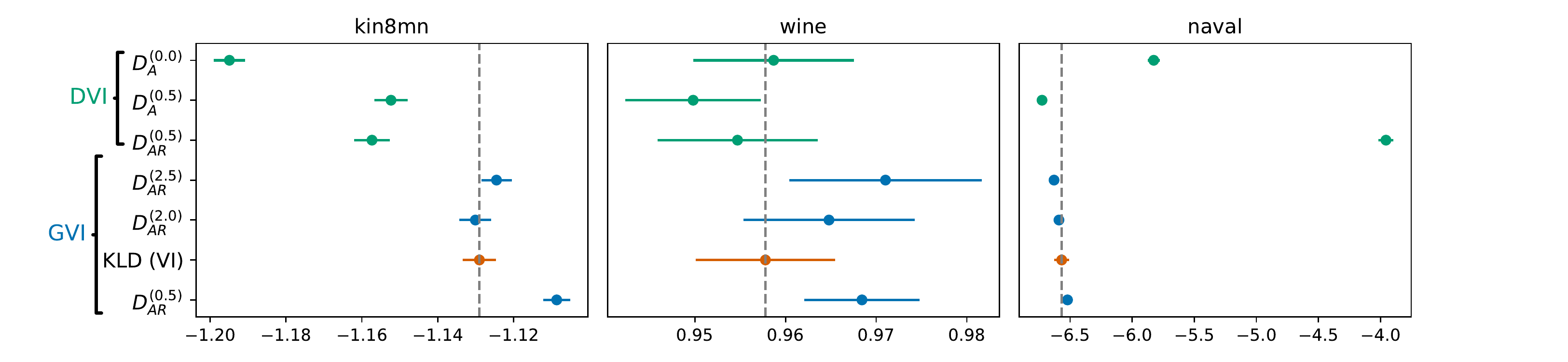}
\caption{
	Top row depicts \RMSE, bottom row the \NLL across a range of data sets using \BNNs. Dots correspond to means, whiskers the standard errors. The further to the left, the better the predictive performance. 
	For the depicted selection of data sets, no common pattern exists for the performance differences between \VIColor{\textbf{standard \VI}}, \DVIColor{\textbf{\DVI}} and \GVIColor{\textbf{\GVI}}.
}
\label{Fig:BNN_experiment_2}
%\vskip -0.25in
\end{center}
\end{figure}

\subsubsection{First set of additional experiments (Figure \ref{Fig:BNN_experiment_2})}

Figure \ref{Fig:BNN_experiment_2} provides the predictive outcomes on three more data sets using the exact same settings and experimental setup as described in the main paper.
The findings generally reinforce the findings of the main paper.
First, while the \GVI methods with $\alpha > 1$ still perform as good as or better than standard \VI on the \texttt{kin8mn} data set,  \DVI methods show a clear performance gain relative to either of the two.
Crucially, it is not clear what leads to this improvement gain, though the fact that the best-performing \DVI method is the one recovering \EP (\AD for $\alpha = 0$) suggests that there is tangible merit in producing mass-covering approximations to the posterior of $\*\theta$ on this data set.
While the deployment of \DVI methods looks tempting on the \texttt{kin8mn} data set, the results on the \texttt{naval} data set are a reminder that the behaviour of these methods is in many ways unpredictable. 
Moreover, it shows that the risks we identified in Example \ref{example:label_switching} readily translate into real world applications: By using \DVI methods, we may accidentally conflate the role of the loss and the role of uncertainty quantification. If the loss is well-suited for the data at hand---as the \RMSE panel suggests it is in the \texttt{naval} case---the mass-covering behaviour of \DVI methods can be extremely detrimental.
Lastly, the \texttt{wine} data set provides a very similar picture to the results in Figure \ref{Fig:BNN_experiment_1}: Varying $\alpha$ introduces a banana-shaped curve for the \GVI methods. As it so happens, the ideal choice of $\alpha$ on the \texttt{wine} data set appears to be around $\alpha = 1$ (i.e., standard \VI). 
Taking into account the predictive uncertainty in form of the whiskers, it is doubtful if any of the methods is dominating another one on \texttt{wine}.
Presumably, the reason for this is that the true posterior is relatively well approximated with the mean field normal family, yielding very similar results across all settings.

\begin{figure}[t!]
%\vskip -2.0cm
\begin{center}
\includegraphics[trim=  {2.25cm 0.3cm 3.15cm 0.25cm}, clip,  
 width=1.00\columnwidth]{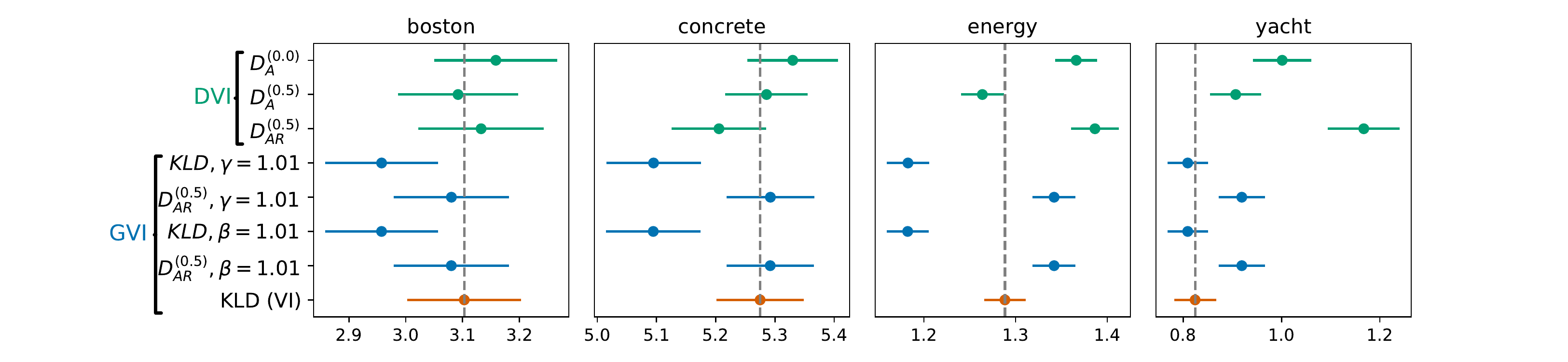}
 \includegraphics[trim=  {2.25cm 0.3cm 3.15cm 0.80cm}, clip,  
 width=1.00\columnwidth]{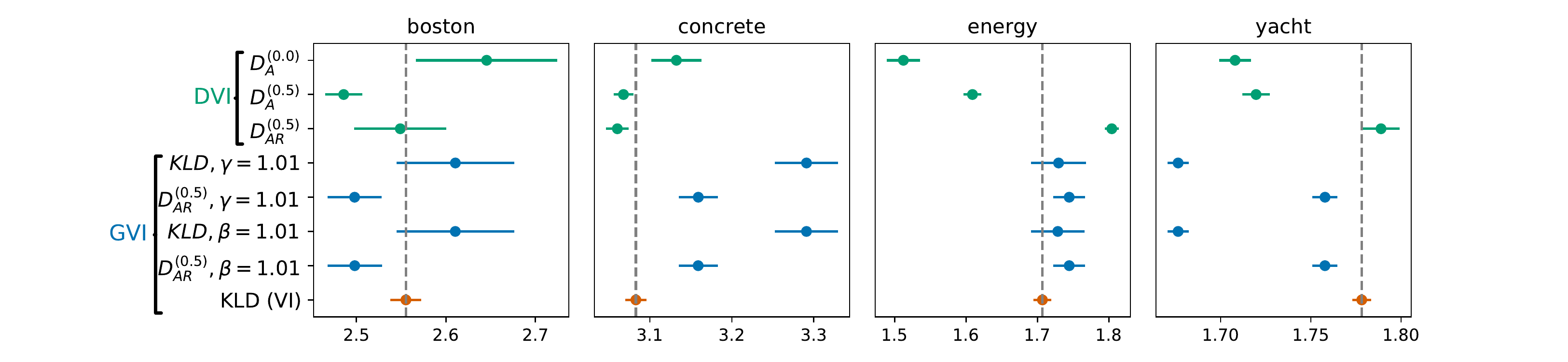}
\caption{
	Top row depicts \RMSE, bottom row the \NLL across a range of data sets using \BNNs. Dots correspond to means, whiskers the standard errors. The further to the left, the better the predictive performance. 
	For the depicted selection of data sets, patterns exists for the interplay between the loss and uncertainty quantifier for \GVIColor{\textbf{\GVI}}.
}
\label{Fig:BNN_experiment_3}
%\vskip -0.25in
\end{center}
\end{figure}

\subsubsection{Second set of additional experiments (Figure \ref{Fig:BNN_experiment_3})}

In a second set of additional experiments, we varied the loss function to be a robust scoring rule. Specifically, we used scoring rules based on the $\beta$-divergence and the $\gamma$-divergence. 
See Section \ref{sec:DGP_adaption} for the definition and more detail on these robust scoring rules.
As for the \DGP examples, we choose values of the scoring rule that are close to the $\log$ score, but sufficiently far to induce robust behaviour.
All settings for optimization, initialization as well as the code are the same as for the results provided in the main paper.
Figure \ref{Fig:BNN_experiment_3} shows the results: For the \RMSE, the results are unambiguous: Combining a robust scoring rule with the standard uncertainty quantifier $D=\KLD$ appears to be the winning combination across all four data sets.
The picture is less clear for the \NLL: Relative to both \VI and \DVI, the performance gains depend on the data set. Even within the class of \GVI posteriors, it is data-set dependent which uncertainty quantifier should be chosen: For example, it is clearly beneficial to choose the \RAD as uncertainty quantifier in the \texttt{boston} and \texttt{concrete} data sets, but the opposite is true on the \texttt{yacht} data set. 
Above all other things, this highlights the need for a good selection strategy of \GVI hyperparameters: Oftentimes, intuitions about the correct uncertainty quantifier or the appropriate loss may be incorrect.

\subsection{Deep Gaussian Processes (\DGPs)}
\label{Appendix:DGPs}

Unlike \BNNs, \DGPs require some theoretical groundwork before they are amenable to changes in the loss and uncertainty quantifier.
Specifically, we need to show that it is valid to define new divergences layer-wise. 
Moreover, while not required it is beneficial if one can obtain closed forms for the robustified likelihood terms.
The following sections proceed to do both. Thereafter, we also show an additional short example to illustrate the effect of changing the uncertainty quantifier in \DGPs.

\subsubsection{Proof of Corollary \ref{corollary:GP_divergence_change}}
\label{Appendix:GP_divergence_change}

We first prove a Lemma that plays a key role in the proof of Corollary \ref{corollary:GP_divergence_change}.

\begin{lemma}[Divergence recombination]
    Let $D_l$ be divergences and $c_l>0$ scalars for $l=1,2,\dots, L$. Further, denote $\*\theta_{-l} = \*\theta_{1:l-1, l+t:L}$ and let $q_l(\*\theta_l|\*\theta'_{-l})$ and $\pi_l(\*\theta_l|\*\theta'_{-l})$ be the conditional distributions of $\*\theta_l$ for $q(\*\theta)$ and $\pi(\*\theta)$ conditioned on $\*\theta_{-l} = \*\theta'_{-l}$. 
    %The following are divergences between probability densities $q(\*\theta)$, $\pi(\*\theta)$: 
    %\begin{itemize}
    %\myitem{(i)} $\sum_{l=1}^Lc_lD_l(q||\pi)$; also 
                % $\sum_{l=1}^Lc_lD_l(q_l||\pi_l)$ if $q(\*\theta) = \prod_{l=1}^Lq_l(\*\theta_l)$ and $\pi(\*\theta)=\prod_{l=1}^L\pi(\*\theta_l)$; 
    %\myitem{(ii)} $f(D_l(q||\pi))$, if $f(x)\geq 0$ for any $x \in \mathbb{R}$ with $f(x)=0$ if and only if $x=0$;
    %\myitem{(iii)} 
    Then, 
    ${D}^{\*\theta'}(q||\pi) = \sum_{l=1}^Lc_lD_l\left(q_l(\*\theta_l|\*\theta'_{-l})||\pi_l(\*\theta_l|\*\theta'_{-l})\right)$ 
    %for the $\*\theta'$-conditioning-set specific conditionals $q_l(\*\theta_l|\*\theta'_{-l})$ and  
    %, $\pi_l(\*\theta_l|\*\theta'_{-l})$ $q(\*\theta | \*\theta'_{(1:L)}) = \prod_{l=1}^Lq_l(\*\theta_l|\*\theta'_{-l})$ and for $\pi(\*\theta| \*\theta'_{-(1:L)}) = \prod_{l=1}^L\pi_l(\*\theta_l|\*\theta'_{-l})$ 
    is a divergence between $q(\*\theta)$ and $\pi(\*\theta)$ if 
        %\begin{itemize}
        %    \item
        (i) ${D}^{\*\theta^{\circ}}(q||\pi) = {D}^{\*\theta'}(q||\pi)$ for all conditioning sets $\*\theta^{\circ}$, $\*\theta'$ and
            %\item 
        (ii) a Hammersley-Clifford Theorem holds for the collection of conditionals $\pi_l(\*\theta_l|\*\theta'_{-l})$ and $q_l(\*\theta_l|\*\theta'_{-l})$.
        %\end{itemize}
    \label{lemma:DivRecomb}
\end{lemma}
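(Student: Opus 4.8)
\textbf{Proof proposal for Lemma \ref{lemma:DivRecomb}.}

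The plan is to verify directly that the proposed functional $D^{\*\theta'}(q\|\pi) = \sum_{l=1}^L c_l D_l\!\left(q_l(\*\theta_l|\*\theta'_{-l})\|\pi_l(\*\theta_l|\*\theta'_{-l})\right)$ satisfies the two defining properties of a statistical divergence, namely non-negativity and the identification-of-indiscernibles property. Non-negativity is immediate: each $D_l$ is a divergence, so $D_l(q_l(\*\theta_l|\*\theta'_{-l})\|\pi_l(\*\theta_l|\*\theta'_{-l}))\geq 0$, and since the $c_l>0$ are positive scalars, the sum is non-negative. This holds for any fixed conditioning set $\*\theta'$, and by assumption (i) the value does not depend on the choice of $\*\theta'$ anyway, so $D(q\|\pi) := D^{\*\theta'}(q\|\pi)$ is well-defined and non-negative.

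The substance of the argument is the ``only if'' direction of the identification property: I must show $D(q\|\pi)=0$ if and only if $q(\*\theta)=\pi(\*\theta)$ almost everywhere. The ``if'' direction is easy --- if $q=\pi$ a.e., then all their conditionals agree a.e., so each $D_l$ term vanishes and the sum is zero. For the converse, suppose $D(q\|\pi)=0$. Since every summand is non-negative and the $c_l$ are strictly positive, every term must vanish: $D_l(q_l(\*\theta_l|\*\theta'_{-l})\|\pi_l(\*\theta_l|\*\theta'_{-l}))=0$ for each $l$. Because each $D_l$ is itself a divergence, this forces $q_l(\*\theta_l|\*\theta'_{-l}) = \pi_l(\*\theta_l|\*\theta'_{-l})$ almost everywhere, for each $l=1,\dots,L$. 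At this point I have matching full conditionals for $q$ and $\pi$. Invoking assumption (ii) --- that a Hammersley--Clifford-type theorem applies to these collections of conditionals --- I conclude that the full joint distributions are determined by their conditionals, hence $q(\*\theta)=\pi(\*\theta)$ almost everywhere. (One subtlety to handle carefully: the conditioning set $\*\theta'$ is arbitrary, so ``every term vanishes'' really means the conditionals agree for all conditioning values, which is precisely the input a Hammersley--Clifford reconstruction needs; assumption (i) ensures there is no ambiguity from which conditioning set we evaluate at.)

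The main obstacle is the clean deployment of Hammersley--Clifford: its classical form requires a positivity condition on the joint densities, and one must ensure the conditionals are mutually compatible (which they are here, since both families arise from genuine joint distributions $q$ and $\pi$). So the proof should either (a) state the positivity/compatibility hypotheses explicitly as part of what ``a Hammersley--Clifford Theorem holds'' means in condition (ii), or (b) note that in the applications of interest --- the Gaussian conditionals $q(\*U^l)$, $p(\*U^l)$ appearing in the \DGP construction --- positivity is automatic since Gaussian densities are strictly positive on their support. With this caveat absorbed into the hypothesis, the remaining steps are elementary and the Lemma follows; Corollary \ref{corollary:GP_divergence_change} is then obtained by specializing $D_l$ to $f$-divergences (or monotonic transforms thereof), for which the conditional factorization in eq. \eqref{eq:DGP_regularizer} and condition (i) can be checked directly from the layer-wise structure of the \DGP variational family.
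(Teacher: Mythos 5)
Your proposal is correct and follows essentially the same route as the paper's own proof: zero total divergence forces each weighted term to vanish, so by the divergence property of each $D_l$ (together with condition (i), which extends this to every conditioning set) the full conditionals of $q$ and $\pi$ agree, and the Hammersley--Clifford hypothesis in (ii) then identifies the joints. Your added remarks on non-negativity, the easy direction, and the positivity caveat for Hammersley--Clifford only make explicit what the paper treats implicitly.
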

\begin{proof}
First, observe by definition of a divergence,  $D_l(q_l(\*\theta_l|{\*\theta'}_{-l})||\pi_l(\*\theta_l|{\*\theta'}_{-l})) = 0$ for all $l$ and over all potential conditioning sets $\*\theta'$ holds if and only if $q_l(\*\theta_l|{\*\theta'}_{-l}) = \pi_l(\*\theta_l|{\*\theta'}_{-l})$.
Next, note that we have assumed that ${D}^{{\*\theta'}}(q||\pi) = {D}^{{\*\theta}^{\circ}}(q||\pi)$ for all conditioning sets $\*\theta'$, $\*\theta^{\circ}$. 
In other words, if ${D}^{\*\theta'}(q||\pi) = 0$ for some $\*\theta'$, then it will also be $0$ for \textit{any} conditioning set $\*\theta^{\circ}$. 
This immediately entails that for arbitrary $\*\theta'$, ${D}^{\*\theta'}(q||\pi) = 0$ if and only if $q_l(\*\theta_l|\*\theta'_{-l}) = \pi_l(\*\theta_l|\*\theta'_{-l})$ for \textit{all} $l$ and for \textit{any} choice of ${\*\theta'}_{-l}$. 
In other words, the conditionals are the same. Since the positivity condition holds, we can then apply the Hammersley-Clifford Theorem to conclude that the conditionals fully specify the joint. This finally yields the desired result: ${D}^{\*\theta'}(q||\pi) = 0$ if and only if $q(\*\theta) = \pi(\*\theta)$.
\end{proof}
With this technical result in hand, one can now prove Corollary \ref{corollary:GP_divergence_change}, which shows that reverse-engineering prior regularizers inspired by eq. \eqref{eq:DGP_regularizer} is feasible so long as the layer-specific divergences $D^l$ are $f$-divergences or monotonic transformations of $f$-divergences.

\begin{proof}
    Suppressing again $\*Z^l$ and $\*X$ for readability, first recall that 
    \begin{IEEEeqnarray}{rCl}
        q(\{\*U^l\}_{l=1}^L, \{\*F^l\}_{l=1}^L) & = &
        \prod_{l=1}^L
        p(\*F^l|\*U^l, \*F^{l-1})q(\*U^l)
        \nonumber  \\
        p(\{\*U^l\}_{l=1}^L, \{\*F^l\}_{l=1}^L) & = &
        \prod_{l=1}^L
        p(\*F^l|\*U^l, \*F^{l-1})p(\*U^l)
        \nonumber
    \end{IEEEeqnarray}
    and write for a \textit{fixed} conditioning set $\{\*F_{\circ}^l\}_{l=1}^L$ the new divergence 
    \begin{IEEEeqnarray}{rCl}
        &&{D}^{
        \{\*F_{\circ}^l\}_{l=1}^L
        }\left(
        q(\{\*U_l\}_{l=1}^L, \{\*F_l\}_{l=1}^L) \|
        p(\{\*U_l\}_{l=1}^L, 
        \{\*F_l\}_{l=1}^L)\right) 
        \nonumber \\
        &=& \sum_{l=1}^LD^l\left(
        p(\*F^l|\*U^l, \*F_{\circ}^{l-1})q(\*U^l)
        \|
        p(\*F^l|\*U^l, \*F_{\circ}^{l-1})p(\*U^l)
        \right) 
        =
        \sum_{l=1}^LD^l\left(
        q(\*U^l)
        \|
        p(\*U^l)
        \right) 
        \nonumber
        %\label{eq:CustomDivDefSimplified}
    \end{IEEEeqnarray}
    The first equality is simply the definition of the new divergence. The second equality follows by virtue of $D^l$ being a monotonic function of an $f$-divergences or an $f$-divergence for all $l$, which ensures that the $l$-th term is given by
    \begin{IEEEeqnarray}{rCl}
        && D^l\left(p(\*F^l|\*U^l, \*F_{\circ}^{l-1})q(\*U^l)\|p(\*F^l|\*U^l, \*F_{\circ}^{l-1})p(\*U^l)\right) 
        \\ \nonumber
        & = & 
        g\left( \mathbb{E}_{p(\*F^l|\*U^l, \*F_{\circ}^{l-1})p(\*U^l)}\left[f\left( \frac{p(\*F^l|\*U^l, \*F_{\circ}^{l-1})q(\*U^l)}{p(\*F^l|\*U^l, \*F_{\circ}^{l-1})p(\*U^l)} \right)\right] \right).
        \nonumber \\
        & = &
        g\left( \mathbb{E}_{p(\*U^l)}\left[f\left( \frac{q(\*U^l)}{p(\*U^l)} \right)\right] \right) = D^l\left(
        q(\*U^l)
        \|
        p(\*U^l)
        \right). 
        \nonumber
    \end{IEEEeqnarray}
    Now note that we can invoke Lemma \ref{lemma:DivRecomb}: The first condition is satisfied because the derivation was independent of the chosen $\{\*F_{\circ}^l\}_{l=1}^L$. The second condition is satisfied as both conditionals satisfy the positivity condition required for the Hammersley-Clifford Theorem to hold. 
\end{proof}

\subsubsection{Proof of Proposition \ref{Proposition:closed_form_robust_DGP_regression}}
\label{Appendix:GP_loss_change}

\begin{proof}
The likelihood is Gaussian with a fixed variance parameter $\sigma^2$, i.e. for $\*y_i \in \mathbb{R}^d$ with $i=1,2,\dots, n$
\begin{IEEEeqnarray}{rCl}
    p(\*y_i|\*f_i^L) & = & (2\pi\sigma^2)^{-0.5d}\exp
 \left\{ 
     -\frac{1}{2\sigma^2}(\*y_i - \*f_i^L)^T(\*y_i - \*f_i^L)
 \right\} \nonumber
\end{IEEEeqnarray}
With this, note that integrating out the normal density yields
\begin{IEEEeqnarray}{rCl}
    I_{p, c}(\*f_i^L) & = & (2\pi\sigma^2)^{-0.5dc}c^{-0.5d}. 
    \label{eq:I_integral}
\end{IEEEeqnarray}
Note in particular that this is a constant and does not depend on $\*f$, which makes computing the expectation over $q(\*f_i^L)$ depend only on the power likelihood. 
Next, we show that the power likelihood is also available in closed form. 
This is laborious but not difficult and relies on the same algebraic tricks in the Appendix of \citet{RBOCPD}. To simplify notation, we write $\*f = \*f_i^L$. Note also that the variational parameters $\*\mu$ and $\*\Sigma$ are (stochastic) functions of the draws of $\*f_{i}^{1:L-1}$ from the previous layers, but we suppress this dependency, again for readability.
\begin{IEEEeqnarray}{rCl}
    \mathbb{E}_{q(\*f|\*\mu, \*\Sigma)}
    \left[
 \frac{1}{c}
			p(\*y_i|\*f)^{c} 
    \right] % \nonumber \\
    & = &
    \frac{1}{c}(2\pi\sigma^2)^{-0.5dc} \cdot 
    \mathbb{E}_{q(\*f|\*\mu, \*\Sigma)}
    \left[
 \exp
 \left\{ 
     -\frac{c}{2\sigma^2}
     (\*y_i^T\*y_i + \*f^T\*f - 2\*f^T\*y_i)
     %(\*y_i - \*f_i^L)^T(\*y_i - \*f_i^L)
 \right\} 
    \right] 
    \nonumber
    \\
    & = &
    \frac{1}{c}(2\pi\sigma^2)^{-0.5dc} 
    \exp\left\{-\frac{c}{2\sigma^2}\*y_i^T\*y_i\right\}  \cdot 
    \mathbb{E}_{q(\*f|\*\mu, \*\Sigma)}
    \left[
 \exp
 \left\{ 
     -\frac{c}{2\sigma^2}
     (\*f^T\*f - 2\*f^T\*y_i)
     %(\*y_i - \*f_i^L)^T(\*y_i - \*f_i^L)
 \right\} 
    \right] 
     \nonumber
     \\
    & = &
    \frac{1}{c}(2\pi\sigma^2)^{-0.5dc} 
    (2\pi\sigma^2)^{-0.5d} |\*\Sigma|^{-0.5} 
    \exp\left\{-\frac{c}{2\sigma^2}\*y_i^T\*y_i\right\}  \times \nonumber \\
    &&
    \bigintssss
 \exp
 \left\{ 
     -\frac{1}{2}
     \left(
     \frac{c}{\sigma^2}\*f^T\*f - \frac{2c}{\sigma^2}\*f^T\*y_i + 
     (\*f - \*\mu)^T\*\Sigma^{-1}(\*f - \*\mu)
     \right)
 \right\}d\*f 
  \nonumber
  \\
    & = &
    \frac{1}{c}
    (2\pi\sigma^2)^{-0.5dc} 
    (2\pi)^{-0.5d} |\*\Sigma|^{-0.5} 
    \exp\left\{-\frac{1}{2}\left(\frac{c}{\sigma^2}\*y_i^T\*y_i + \*\mu^T\*\Sigma^{-1}\*\mu \right)\right\}  
    \times \nonumber \\
    &&
    \bigintssss
 \exp
 \left\{ 
     -\frac{1}{2}
     \left(
     \frac{c}{\sigma^2}\*f^T\*f - \frac{2c}{\sigma^2}\*f^T\*y_i + 
     \*f^T\*\Sigma^{-1}\*f -
     2\*f^T\*\Sigma^{-1}\*\mu
     \right)
 \right\}d\*f 
    \nonumber
\end{IEEEeqnarray}
The integral suggests one can obtain a closed form through the Gaussian integral by completing the squares. Defining $\widetilde{\*\Sigma}^{-1} = \left(\frac{c}{\sigma^s}\*I_d + \*\Sigma^{-1}\right)$, $\widetilde{\*\mu} = \left( \frac{c}{\sigma^2}\*y_i + \*\Sigma^{-1}\*\mu \right)$,
$\widehat{\*\mu} = \widetilde{\*\Sigma}\widetilde{\*\mu}$, one indeed has
\begin{IEEEeqnarray}{rCl}
    %&&
    \frac{c}{\sigma^2}\*f^T\*f - \frac{2c}{\sigma^2}\*f^T\*y_i + 
     \*f^T\*\Sigma^{-1}\*f -
     2\*f^T\*\Sigma^{-1}\*\mu %\nonumber \\
    &=&
    \*f^T\left(\*I_d\frac{c}{\sigma^2} + \*\Sigma^{-1}\right)\*f
    -
    2\*f^T\left( \frac{c}{\sigma^2}\*y_i + \*\Sigma^{-1}\*\mu \right)  \nonumber\\
    & = &
    \left(\*f - \widehat{\*\mu}\right)^T \widetilde{\*\Sigma}^{-1}
    \left(\*f - \widehat{\*\mu}\right)
    -
    \widetilde{\*\mu}^T\widetilde{\*\Sigma}\widetilde{\*\mu},
    \nonumber
\end{IEEEeqnarray}
%
%\jeremias{Check again with completion of squares to make sure i end up with sigma rathere than its inverse}
%
which allows us to finally rewrite the integral as
\begin{IEEEeqnarray}{rCl}
    && \bigintssss
 \exp
 \left\{ 
     -\frac{1}{2}
     \left(
     \frac{c}{\sigma^2}\*f^T\*f - \frac{2c}{\sigma^2}\*f^T\*y_i + 
     \*f^T\*\Sigma^{-1}\*f -
     2\*f^T\*\Sigma^{-1}\*\mu
     \right)
 \right\}d\*f
  \nonumber
  \\
    & = &
    \exp\left\{ 
 -\frac{1}{2}\widetilde{\*\mu}^T\widetilde{\*\Sigma}\widetilde{\*\mu}
    \right\}
    \bigintssss
 \exp
 \left\{ 
     -\frac{1}{2}
     \left(\*f - \widehat{\*\mu}\right)^T \widetilde{\*\Sigma}^{-1}
     \left(\*f - \widehat{\*\mu}\right)
 \right\}d\*f  
  %\nonumber
  %\\
    %& = &
    =
    \exp\left\{ 
 \frac{1}{2}\widetilde{\*\mu}^T\widetilde{\*\Sigma}\widetilde{\*\mu}
    \right\}
    (2\pi)^{0.5d}
    |\widetilde{\*\Sigma}|^{0.5}.
    \nonumber
\end{IEEEeqnarray}
Putting everthing together and simplifying expressions, this means that
%\theo{Do we need to number every single line of these eqnarrays? use nonumber if not and we can just use final line to number only.}
%
\begin{IEEEeqnarray}{rCl}
\mathbb{E}_{q(\*f|\*\mu, \*\Sigma)}
    \left[
 \frac{1}{c}
			p(\*y_i|\*f)^{c} 
    \right] %\nonumber \\
    & = &
    \frac{1}{c}
    \left({2\pi}\sigma^2\right)^{-0.5dc}
    \frac{|\widetilde{\*\Sigma}|^{0.5}}{ |\*\Sigma|^{0.5} }
    \exp\left\{
 -\frac{1}{2}\left(
     \frac{c}{\sigma^2}\*y_i^T\*y_i + \*\mu^T\*\Sigma^{-1}\*\mu -
     \widetilde{\*\mu}^T\widetilde{\*\Sigma}\widetilde{\*\mu}
 \right)
    \right\} 
    \nonumber
    %\label{eq:q_expectation}
    \quad\quad \quad
\end{IEEEeqnarray}
Depending on whether one uses the $\beta$- or $\gamma$-divergence for robustifying the loss, one thus obtains the closed form expressions
\begin{IEEEeqnarray}{rCl}
    \mathbb{E}_{q(\*f|\*\mu, \*\Sigma)}
    \left[ 
 -\frac{1}{\beta-1}p(\*y_i|\*f)^{\beta-1} +
			\frac{ I_{p, \beta}(\*f) }{\beta}
    \right] & = &
    \mathbb{E}_{q(\*f|\*\mu, \*\Sigma)}
    \left[
 -\frac{1}{\beta-1}
			p(\*y_i|\*f)^{\beta-1} 
    \right] +
    \frac{ I_{p, \beta}(\*f) }{\beta}
    \quad\quad \quad\quad
    \nonumber \\
    \mathbb{E}_{q(\*f|\*\mu, \*\Sigma)}
    \left[ 
 -\frac{1}{\gamma-1}p(\*y_i|\*f)^{\gamma-1} \cdot
    \frac{\gamma}{I_{p,\gamma}(\*f)^{\frac{\gamma-1}{\gamma}}}
    \right] & = &
    \mathbb{E}_{q(\*f|\*\mu, \*\Sigma)}
    \left[
 -\frac{1}{\gamma-1}
			p(\*y_i|\*f)^{\gamma-1} 
    \right] \cdot
    \nonumber
    \frac{\gamma}{I_{p,\gamma}(\*f)^{\frac{\gamma-1}{\gamma}}},
\end{IEEEeqnarray}
with the expectation over $q(\*f|\*\mu, \*\Sigma)$ as in and the integrals $I_{p,\beta}(\*f)$, $I_{p,\gamma}(\*f)$ as defined above.
Note that we have derived the general case for $\*y_i \in \mathbb{R}^d$, where $\*\Sigma$, $\*f$ and $\*\mu$ are matrix- and vector-valued. 
\end{proof}
In fact, we can simplify everything even further in the univariate case. We summarize this in the next part. 
\begin{remark}
Since the derivation of \citet{DeepGPsVI} shows that one in fact only needs to integrate over the marginals $\*f_i^L$, if $d=1$ (as in all experiments in both this paper and \citep{DeepGPsVI}), the computation corresponding to the expression above  simplifies considerably as no matrix inverses and determinants are needed.
In particular, denoting the uni-variate mean and variance parameters as $\mu, \Sigma$ and defining $\widetilde{\Sigma} = \frac{1}{\frac{c}{\sigma^s} + \frac{1}{\Sigma}}$ and $\widetilde{\mu} = \left( \frac{cy_i}{\sigma^2} + \frac{\mu}{\Sigma} \right)$, the expectation term over the posterior $q$ simplifies to
\begin{IEEEeqnarray}{rCl}
\mathbb{E}_{q(f|\mu, \Sigma)}
    \left[
 \frac{1}{c}
			p(y_i|f)^{c} 
    \right] %\nonumber \\
    & = &
    \frac{1}{c}s    
    %(2\pi)^{0.5d}
    \left({2\pi}\sigma^2\right)^{-0.5c}
    %(2\pi)^{-0.5d}
    %(\sigma^2)^{-d}
    \sqrt{\frac{\widetilde{\Sigma}}{ \Sigma }}\cdot
    \exp\left\{
 -\frac{1}{2}\left(
     \frac{cy_i^2}{\sigma^2} + \frac{\mu^2}{\Sigma} -
     \widetilde{\mu}^2\widetilde{\Sigma}
 \right)
    \right\}.
    %\label{eq:q_expectation}
    \nonumber
    \quad\quad \quad
\end{IEEEeqnarray}
\end{remark}

\begin{figure}[t!]
%\vskip -2.0cm
\begin{center}
\includegraphics[trim=  {0.0cm 0.3cm 2.8cm 0.75cm}, clip,  
 width=1.00\columnwidth]{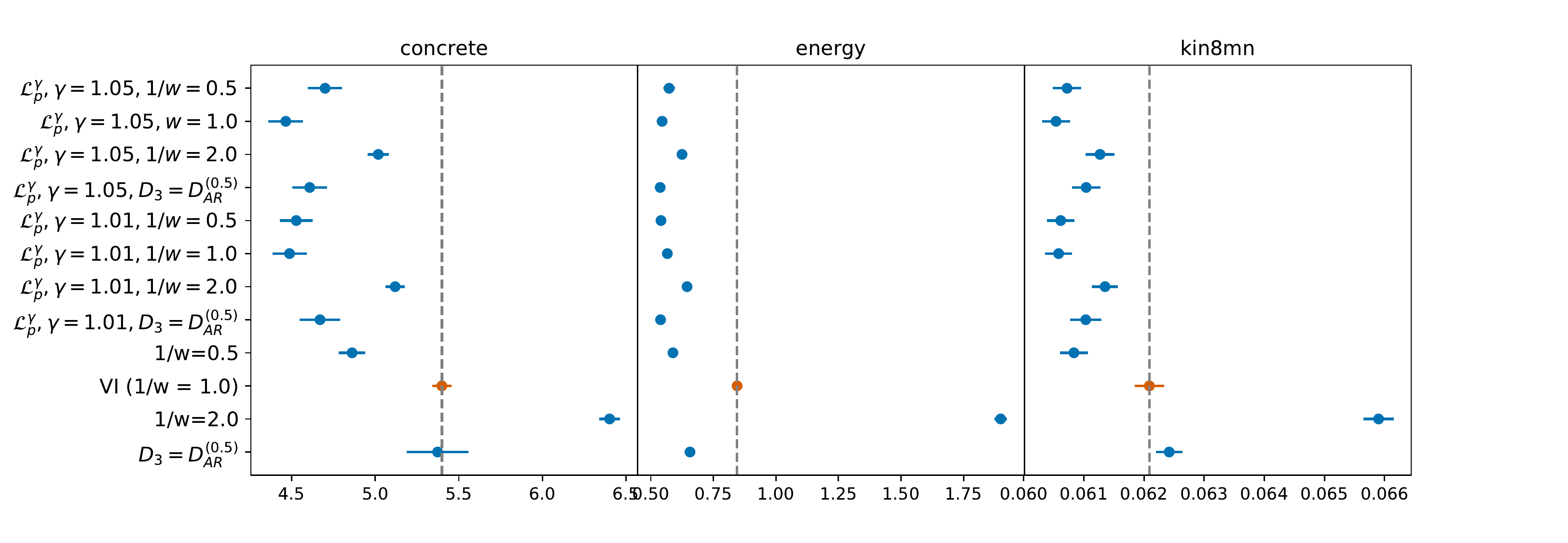}
 \includegraphics[trim=  {0.0cm 0.3cm 2.8cm 0.75cm}, clip,  
 width=1.00\columnwidth]{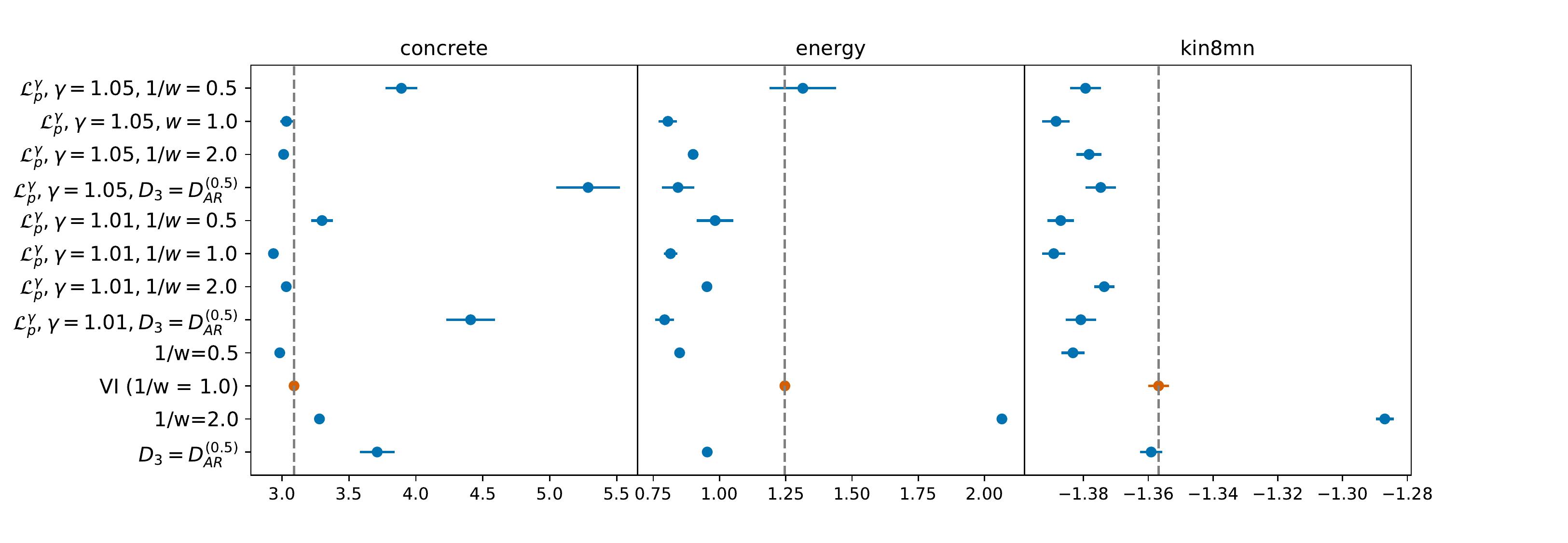}
\caption{
    \bv
	Top row depicts \RMSE, bottom row the \NLL across a range of data sets using \DGPs with $L=3$ layers. Dots correspond to means, whiskers the standard errors. The further to the left, the better the predictive performance. 
}
\label{Fig:DGPResults3}
%\vskip -0.25in
\end{center}
\end{figure}

\subsubsection{Additional experiments varying $D$ (Figure \ref{Fig:DGPResults3})}
\label{Appendix:GP_additional_experiments}

While we showed that \DGPs allow for the variation of both losses and uncertainty quantifiers, the main paper did not use the flexibility afforded by varying $D$. The main reason for this is that much like for the \BNNs, the results when jointly varying loss and uncertainty quantifier are less intuitively interpretable.
We showcase this in Figure \ref{Fig:DGPResults3}, which compares a number of different \GVI posteriors for \DGPs with $L=3$ layers. The loss is either the robust loss $\Lg$ for $\gamma \in \{1.01, 1.05\}$ (top 8 entries in each row) or the standard $\log$ score  (bottom 4 entries in each row). 
We also compare $D = \frac{1}{w} \KLD$ for $w=2.0, 1.0, 0.5$ as well as the composite layer-wise divergence
\begin{IEEEeqnarray}{rCl}
    D(q\|\pi) & = & \sum_{l=1}^3 D_l(q_l\|\pi_l), \quad D_1=D_2=\KLD, D_3 = \RAD \text{ for } \alpha = 0.5. \nonumber
\end{IEEEeqnarray}
Aligned with the intuition that the priors in \DGPs are rather informative due to various hyperparameter optimization schemes, changing the uncertainty quantifier from the \KLD to the \RAD generally typically has either fairly little or even adverse impact.
Similarly, up- or down-weighting the \KLD seems not to be beneficial across the board and will depend on the loss function. For the case of the $\log$ score however, we find a consistent improvement for down-weighting the \KLD: Predictively, it improves the predictions on both metrics and across all data sets relative to standard \VI. Similarly, up-weighting the \KLD term is counterproductive under the $\log$ score and yields a performance deterioration across all data sets.
This indicates that despite best efforts to the contrary, \DGPs are probably violating \ref{assumption:prior} so that their predictive performance can be enhanced by ignoring more prior information, ensuring posteriors that are concentrated around the empirical risk minimizer.

\vskip 0.2in
\bibliography{library}

\begin{thebibliography}{158}
\providecommand{\natexlab}[1]{#1}
\providecommand{\url}[1]{\texttt{#1}}
\expandafter\ifx\csname urlstyle\endcsname\relax
  \providecommand{\doi}[1]{doi: #1}\else
  \providecommand{\doi}{doi: \begingroup \urlstyle{rm}\Url}\fi

\bibitem[Adams and MacKay(2007)]{BOCD}
Ryan~Prescott Adams and David J.~C. MacKay.
\newblock Bayesian online changepoint detection.
\newblock \emph{arXiv preprint arXiv:0710.3742}, 2007.

\bibitem[Aitchison(1975)]{PredictionBayes}
James Aitchison.
\newblock Goodness of prediction fit.
\newblock \emph{Biometrika}, 62\penalty0 (3):\penalty0 547--554, 1975.

\bibitem[Alemi(2019)]{AlemiBottleneck}
Alexander~A. Alemi.
\newblock Variational predictive information bottleneck.
\newblock In \emph{Workshop on Information Theory, Advances in Neural
  Information Processing Systems}, 2019.

\bibitem[Alquier and Guedj(2018)]{PACfDiv}
Pierre Alquier and Benjamin Guedj.
\newblock Simpler {PAC}-{B}ayesian bounds for hostile data.
\newblock \emph{Machine Learning}, 107\penalty0 (5):\penalty0 887--902, 2018.

\bibitem[Alquier et~al.(2016)Alquier, Ridgway, and
  Chopin]{VBConsistencyAlquier2}
Pierre Alquier, James Ridgway, and Nicolas Chopin.
\newblock On the properties of variational approximations of {G}ibbs
  posteriors.
\newblock \emph{The Journal of Machine Learning Research}, 17\penalty0
  (1):\penalty0 8374--8414, 2016.

\bibitem[Amari(2012)]{AmariBook}
Shun-ichi Amari.
\newblock \emph{Differential-geometrical methods in statistics}, volume~28.
\newblock Springer Science \& Business Media, 2012.

\bibitem[Balandat et~al.(2019)Balandat, Karrer, Jiang, Daulton, Letham, Wilson,
  and Bakshy]{BayesOptPyTorch}
Maximilian Balandat, Brian Karrer, Daniel~R. Jiang, Samuel Daulton, Benjamin
  Letham, Andrew~Gordon Wilson, and Eytan Bakshy.
\newblock {B}o{T}orch: Programmable {B}ayesian optimization in {P}y{T}orch.
\newblock \emph{arXiv preprint arXiv:1910.06403}, 2019.

\bibitem[Barp et~al.(2019)Barp, Briol, Duncan, Girolami, and
  Mackey]{MinSteinDiscrepancy}
Alessandro Barp, Francois-Xavier Briol, Andrew~B. Duncan, Mark Girolami, and
  Lester Mackey.
\newblock Minimum {S}tein discrepancy estimators.
\newblock \emph{Advances in Neural Information Processing Systems}, 2019.

\bibitem[Basu et~al.(1998)Basu, Harris, Hjort, and Jones]{BasuDPD}
Ayanendranath Basu, Ian~R. Harris, Nils~L. Hjort, and M.~C. Jones.
\newblock Robust and efficient estimation by minimising a density power
  divergence.
\newblock \emph{Biometrika}, 85\penalty0 (3):\penalty0 549--559, 1998.

\bibitem[Bayes(1763)]{BayesThm}
Thomas Bayes.
\newblock An essay towards solving a problem in the doctrine of chances.
\newblock \emph{Philosophical Transactions of the Royal Society of London},
  53:\penalty0 370--418, 1763.

\bibitem[Beal(2003)]{BealThesis}
Matthew~James Beal.
\newblock \emph{Variational algorithms for approximate Bayesian inference}.
\newblock University College London, 2003.

\bibitem[B{\'e}gin et~al.(2016)B{\'e}gin, Germain, Laviolette, and
  Roy]{PACRenyiDiv}
Luc B{\'e}gin, Pascal Germain, Fran{\c{c}}ois Laviolette, and Jean-Francis Roy.
\newblock {PAC}-{B}ayesian bounds based on the {R}{\'e}nyi divergence.
\newblock In \emph{Artificial Intelligence and Statistics}, pages 435--444,
  2016.

\bibitem[Berger(2006)]{berger2006case}
James~O. Berger.
\newblock The case for objective {B}ayesian analysis.
\newblock \emph{Bayesian analysis}, 1\penalty0 (3):\penalty0 385--402, 2006.

\bibitem[Berger and Bernardo(1992)]{berger1992development}
James~O Berger and Jos{\'e}~M Bernardo.
\newblock On the development of the reference prior method.
\newblock \emph{Bayesian statistics}, 4\penalty0 (4):\penalty0 35--60, 1992.

\bibitem[Berger et~al.(1994)Berger, Moreno, Pericchi, Bayarri, Bernardo, Cano,
  De~la Horra, Mart{\'\i}n, R{\'\i}os-Ins{\'u}a, Betr{\`o}, Gustafson,
  Wasserman, Kadane, Srinivasan, Lavine, O'Hagan, Polasek, Robert, Goutis,
  Ruggeri, Salinetti, and Sivaganesan]{robustBayes}
James~O. Berger, El{\'\i}as Moreno, Luis~Raul Pericchi, M.~Jes{\'u}s Bayarri,
  Jos{\'e}~M. Bernardo, Juan~A. Cano, Juli{\'a}n De~la Horra, Jacinto
  Mart{\'\i}n, David R{\'\i}os-Ins{\'u}a, Dasgupta~A. Betr{\`o}, Bruno, Paul
  Gustafson, Larry Wasserman, Joseph~B. Kadane, Cid Srinivasan, Michael Lavine,
  Anthony O'Hagan, Wolfgang Polasek, Christian~P. Robert, Constantinos Goutis,
  Fabrizio Ruggeri, Gabriella Salinetti, and Siva Sivaganesan.
\newblock An overview of robust {B}ayesian analysis.
\newblock \emph{Test}, 3\penalty0 (1):\penalty0 5--124, 1994.

\bibitem[Bernardo(1979)]{bernardo1979reference}
Jose~M. Bernardo.
\newblock Reference posterior distributions for {B}ayesian inference.
\newblock \emph{Journal of the Royal Statistical Society: Series B
  (Methodological)}, 41\penalty0 (2):\penalty0 113--128, 1979.

\bibitem[Bernardo(2000)]{Bernardo}
Jos{\'e}~M. Bernardo.
\newblock Bayesian theory.
\newblock \emph{Wiley Series in Probability and Statistics. 23 cm. 586 p.},
  2000.

\bibitem[Beskos et~al.(2013)Beskos, Pillai, Roberts, Sanz-Serna, and
  Stuart]{beskos2013optimal}
Alexandros Beskos, Natesh Pillai, Gareth Roberts, Jesus-Maria Sanz-Serna, and
  Andrew Stuart.
\newblock Optimal tuning of the hybrid {M}onte {C}arlo algorithm.
\newblock \emph{Bernoulli}, 19\penalty0 (5A):\penalty0 1501--1534, 2013.

\bibitem[Bialek et~al.(2001)Bialek, Nemenman, and
  Tishby]{infoBottleneckMethod2}
William Bialek, Ilya Nemenman, and Naftali Tishby.
\newblock Predictability, complexity, and learning.
\newblock \emph{Neural computation}, 13\penalty0 (11):\penalty0 2409--2463,
  2001.

\bibitem[Bissiri et~al.(2016)Bissiri, Holmes, and Walker]{Bissiri}
Pier~Giovanni Bissiri, Chris Holmes, and Stephen Walker.
\newblock A general framework for updating belief distributions.
\newblock \emph{Journal of the Royal Statistical Society: Series B (Statistical
  Methodology)}, 78\penalty0 (5):\penalty0 1103--1130, 2016.

\bibitem[Bonilla et~al.(2019)Bonilla, Krauth, and Dezfouli]{InducingBonilla}
Edwin~V. Bonilla, Karl Krauth, and Amir Dezfouli.
\newblock Generic inference in latent {G}aussian process models.
\newblock \emph{Journal of Machine Learning Research}, 20\penalty0
  (117):\penalty0 1--63, 2019.

\bibitem[Bowman et~al.(2016)Bowman, Vilnis, Vinyals, Dai, Jozefowicz, and
  Bengio]{BNNUQ2}
Samuel~R. Bowman, Luke Vilnis, Oriol Vinyals, Andrew Dai, Rafal Jozefowicz, and
  Samy Bengio.
\newblock Generating sentences from a continuous space.
\newblock In \emph{Proceedings of The 20th {SIGNLL} Conference on Computational
  Natural Language Learning}, pages 10--21, Berlin, Germany, August 2016.
  Association for Computational Linguistics.

\bibitem[Box(1980)]{box1980sampling}
George E.~P. Box.
\newblock Sampling and {B}ayes' inference in scientific modelling and
  robustness.
\newblock \emph{Journal of the Royal Statistical Society. Series A (General)},
  pages 383--430, 1980.

\bibitem[Bui et~al.(2016)Bui, Hern{\'a}ndez-Lobato, Hernandez-Lobato, Li, and
  Turner]{DGPEP}
Thang Bui, Daniel Hern{\'a}ndez-Lobato, Jose Hernandez-Lobato, Yingzhen Li, and
  Richard Turner.
\newblock Deep {G}aussian processes for regression using approximate
  expectation propagation.
\newblock In \emph{International Conference on Machine Learning}, pages
  1472--1481, 2016.

\bibitem[Burda et~al.(2016)Burda, Grosse, and Salakhutdinov]{IWVAE}
Yuri Burda, Roger Grosse, and Ruslan Salakhutdinov.
\newblock Importance weighted autoencoders.
\newblock In \emph{International Conference on Learning Representations}, 2016.

\bibitem[Caron et~al.(2012)Caron, Doucet, and Gottardo]{CaronDoucet}
Fran{\c{c}}ois Caron, Arnaud Doucet, and Raphael Gottardo.
\newblock On-line changepoint detection and parameter estimation with
  application to genomic data.
\newblock \emph{Statistics and Computing}, 22\penalty0 (2):\penalty0 579--595,
  2012.

\bibitem[Chen et~al.(2018)Chen, Tao, Zhang, Henao, and Duke]{GLBO}
Liqun Chen, Chenyang Tao, Ruiyi Zhang, Ricardo Henao, and Lawrence~Carin Duke.
\newblock Variational inference and model selection with generalized evidence
  bounds.
\newblock In \emph{International Conference on Machine Learning}, pages
  892--901, 2018.

\bibitem[Ch{\'e}rief-Abdellatif and Alquier(2019)]{MMDBayes}
Badr-Eddine Ch{\'e}rief-Abdellatif and Pierre Alquier.
\newblock {MMD}-{B}ayes: Robust {B}ayesian estimation via maximum mean
  discrepancy.
\newblock \emph{arXiv preprint arXiv:1909.13339}, 2019.

\bibitem[Chernoff(1952)]{chernoff1952measure}
Herman Chernoff.
\newblock A measure of asymptotic efficiency for tests of a hypothesis based on
  the sum of observations.
\newblock \emph{The Annals of Mathematical Statistics}, 23\penalty0
  (4):\penalty0 493--507, 1952.

\bibitem[Choromanska et~al.(2015)Choromanska, Henaff, Mathieu, Arous, and
  LeCun]{NNoptima}
Anna Choromanska, Mikael Henaff, Michael Mathieu, G{\'e}rard~Ben Arous, and
  Yann LeCun.
\newblock The loss surfaces of multilayer networks.
\newblock In \emph{Artificial Intelligence and Statistics}, pages 192--204,
  2015.

\bibitem[Cichocki and Amari(2010)]{ABCdiv}
Andrzej Cichocki and Shun-ichi Amari.
\newblock Families of alpha-beta-and gamma-divergences: Flexible and robust
  measures of similarities.
\newblock \emph{Entropy}, 12\penalty0 (6):\penalty0 1532--1568, 2010.

\bibitem[Csisz{\'a}r(1975)]{Csiszar}
Imre Csisz{\'a}r.
\newblock I-divergence geometry of probability distributions and minimization
  problems.
\newblock \emph{The Annals of Probability}, pages 146--158, 1975.

\bibitem[Cutajar et~al.(2017{\natexlab{a}})Cutajar, Bonilla, Michiardi, and
  Filippone]{RandomFourierDGP}
Kurt Cutajar, Edwin~V Bonilla, Pietro Michiardi, and Maurizio Filippone.
\newblock Random feature expansions for deep {G}aussian processes.
\newblock In \emph{Proceedings of the 34th International Conference on Machine
  Learning-Volume 70}, pages 884--893. JMLR, 2017{\natexlab{a}}.

\bibitem[Cutajar et~al.(2017{\natexlab{b}})Cutajar, Bonilla, Michiardi, and
  Filippone]{randomFeaturesDGP}
Kurt Cutajar, Edwin~V Bonilla, Pietro Michiardi, and Maurizio Filippone.
\newblock Random feature expansions for deep {G}aussian processes.
\newblock In \emph{Proceedings of the 34th International Conference on Machine
  Learning-Volume 70}, pages 884--893. JMLR. org, 2017{\natexlab{b}}.

\bibitem[Dai et~al.(2016)Dai, Damianou, Gonzalez, and Lawrence]{VAEDGPs}
Z.~Dai, A.~Damianou, J.~Gonzalez, and N.~Lawrence.
\newblock Variational auto-encoded deep {G}aussian processes.
\newblock In \emph{International Conference on Learning Representations}, 2016.

\bibitem[Damianou and Lawrence(2013)]{DGPs}
Andreas Damianou and Neil Lawrence.
\newblock Deep {G}aussian processes.
\newblock In \emph{Artificial Intelligence and Statistics}, pages 207--215,
  2013.

\bibitem[David(1998)]{FirstOccurenceBayes}
Herbert~Aron David.
\newblock First (?) occurrence of common terms in probability and
  statistics—a second list, with corrections.
\newblock \emph{The American Statistician}, 52\penalty0 (1):\penalty0 36--40,
  1998.

\bibitem[Dawid et~al.(2016)Dawid, Musio, and Ventura]{dawid2016minimum}
A~Philip Dawid, Monica Musio, and Laura Ventura.
\newblock Minimum scoring rule inference.
\newblock \emph{Scandinavian Journal of Statistics}, 43\penalty0 (1):\penalty0
  123--138, 2016.

\bibitem[De~Laplace(1774)]{LaplaceBayes}
Pierre-Simon De~Laplace.
\newblock M{\'e}moire sur la probabilit{\'e} des causes par les
  {\'e}v{\'e}nements.
\newblock \emph{M{\'e}m. de math. et phys. pr{\'e}sent{\'e}s {\`a} l’Acad.
  roy. des sci}, 6:\penalty0 621--656, 1774.

\bibitem[Dempster et~al.(1977)Dempster, Laird, and Rubin]{EM}
Arthur~P Dempster, Nan~M Laird, and Donald~B Rubin.
\newblock Maximum likelihood from incomplete data via the {EM} algorithm.
\newblock \emph{Journal of the Royal Statistical Society: Series B
  (Methodological)}, 39\penalty0 (1):\penalty0 1--22, 1977.

\bibitem[Dieng et~al.(2017)Dieng, Tran, Ranganath, Paisley, and Blei]{ChiDiv}
Adji~Bousso Dieng, Dustin Tran, Rajesh Ranganath, John Paisley, and David Blei.
\newblock Variational inference via $\chi$ upper bound minimization.
\newblock In \emph{Advances in Neural Information Processing Systems}, pages
  2732--2741, 2017.

\bibitem[Domke and Sheldon(2018)]{IWVI}
Justin Domke and Daniel~R. Sheldon.
\newblock Importance weighting and variational inference.
\newblock In \emph{Advances in neural information processing systems}, pages
  4470--4479, 2018.

\bibitem[Donsker and Varadhan(1975)]{DonskerVaradhan}
Monroe~D Donsker and SR~Srinivasa Varadhan.
\newblock Asymptotic evaluation of certain {M}arkov process expectations for
  large time, i.
\newblock \emph{Communications on Pure and Applied Mathematics}, 28\penalty0
  (1):\penalty0 1--47, 1975.

\bibitem[Fearnhead and Liu(2007)]{FearnheadOnlineBCD}
Paul Fearnhead and Zhen Liu.
\newblock On-line inference for multiple changepoint problems.
\newblock \emph{Journal of the Royal Statistical Society: Series B (Statistical
  Methodology)}, 69\penalty0 (4):\penalty0 589--605, 2007.

\bibitem[Fienberg(2006)]{WhenDidBayesBecomeBayes}
Stephen~E Fienberg.
\newblock When did {B}ayesian inference become "{B}ayesian"?
\newblock \emph{Bayesian analysis}, 1\penalty0 (1):\penalty0 1--40, 2006.

\bibitem[Fisher(1950)]{WhereBayesOccurs}
Ronald~Aylmer Fisher.
\newblock Contributions to mathematical statistics.
\newblock 1950.

\bibitem[Fong and Holmes(2019)]{Edwin}
Edwin Fong and Chris Holmes.
\newblock On the marginal likelihood and cross-validation.
\newblock \emph{arXiv preprint arXiv:1905.08737}, 2019.

\bibitem[Fujisawa and Eguchi(2008)]{GammaDivNotSummable}
Hironori Fujisawa and Shinto Eguchi.
\newblock Robust parameter estimation with a small bias against heavy
  contamination.
\newblock \emph{Journal of Multivariate Analysis}, 99\penalty0 (9):\penalty0
  2053--2081, 2008.

\bibitem[Futami et~al.(2018)Futami, Sato, and Sugiyama]{AISTATSBetaDiv}
Futoshi Futami, Issei Sato, and Masashi Sugiyama.
\newblock Variational inference based on robust divergences.
\newblock In \emph{Proceedings of the Twenty-First International Conference on
  Artificial Intelligence and Statistics}, volume~84 of \emph{Proceedings of
  Machine Learning Research}, pages 813--822. PMLR, 2018.

\bibitem[Ganchev et~al.(2010)Ganchev, Gillenwater, and Taskar]{RegBayes1}
Kuzman Ganchev, Jennifer Gillenwater, and Ben Taskar.
\newblock Posterior regularization for structured latent variable models.
\newblock \emph{Journal of Machine Learning Research}, 11:\penalty0 2001--2049,
  2010.

\bibitem[Gardner et~al.(2018)Gardner, Pleiss, Weinberger, Bindel, and
  Wilson]{GPytorch}
Jacob Gardner, Geoff Pleiss, Kilian~Q. Weinberger, David Bindel, and Andrew~G.
  Wilson.
\newblock {G}pytorch: Blackbox matrix-matrix {G}aussian process inference with
  {GPU} acceleration.
\newblock In \emph{Advances in Neural Information Processing Systems}, pages
  7576--7586, 2018.

\bibitem[Germain et~al.(2016)Germain, Bach, Lacoste, and
  Lacoste-Julien]{PACmeetsBayesianInference}
Pascal Germain, Francis Bach, Alexandre Lacoste, and Simon Lacoste-Julien.
\newblock {PAC}-{B}ayesian theory meets {B}ayesian inference.
\newblock In \emph{Advances in Neural Information Processing Systems}, pages
  1884--1892, 2016.

\bibitem[Ghosal(1998)]{BayesConsistencyReview}
Subhashis Ghosal.
\newblock A review of consistency and convergence rates of posterior
  distributions.
\newblock In \emph{Proc. Varanasi Symp. on Bayesian Inference}, 1998.

\bibitem[Ghosal et~al.(2000)Ghosal, Ghosh, and Van
  Der~Vaart]{BayesianConsistencyClassic1}
Subhashis Ghosal, Jayanta~K Ghosh, and Aad~W Van Der~Vaart.
\newblock Convergence rates of posterior distributions.
\newblock \emph{Annals of Statistics}, 28\penalty0 (2):\penalty0 500--531,
  2000.

\bibitem[Ghosh and Basu(2016)]{GoshBasuPseudoPosterior}
Abhik Ghosh and Ayanendranath Basu.
\newblock Robust {B}ayes estimation using the density power divergence.
\newblock \emph{Annals of the Institute of Statistical Mathematics},
  68\penalty0 (2):\penalty0 413--437, 2016.

\bibitem[Gil(2011)]{closedFormsThesis}
Manuel Gil.
\newblock \emph{On R{\'e}nyi divergence measures for continuous alphabet
  sources}.
\newblock PhD thesis, 2011.

\bibitem[Gil et~al.(2013)Gil, Alajaji, and Linder]{closedFormsPaper}
Manuel Gil, Fady Alajaji, and Tamas Linder.
\newblock R{\'e}nyi divergence measures for commonly used univariate continuous
  distributions.
\newblock \emph{Information Sciences}, 249:\penalty0 124--131, 2013.

\bibitem[Goldstein(1990)]{goldstein1990influence}
M~Goldstein.
\newblock Influence and belief adjustment.
\newblock \emph{Influence Diagrams, Belief Nets and Decision Analysis}, pages
  143--174, 1990.

\bibitem[Goldstein(2006)]{goldstein2006subjective}
Michael Goldstein.
\newblock Subjective {B}ayesian analysis: principles and practice.
\newblock \emph{Bayesian Analysis}, 1\penalty0 (3):\penalty0 403--420, 2006.

\bibitem[Gr{\"u}nwald(2011)]{SafeLearning}
Peter Gr{\"u}nwald.
\newblock Safe learning: bridging the gap between {B}ayes, {MDL} and
  statistical learning theory via empirical convexity.
\newblock In \emph{Proceedings of the 24th Annual Conference on Learning
  Theory}, pages 397--420, 2011.

\bibitem[Gr{\"u}nwald(2012)]{SafeBayesian}
Peter Gr{\"u}nwald.
\newblock The safe {B}ayesian.
\newblock In \emph{International Conference on Algorithmic Learning Theory},
  pages 169--183. Springer, 2012.

\bibitem[Gr{\"u}nwald and Van~Ommen(2017)]{InconsistencyBayesInference}
Peter Gr{\"u}nwald and Thijs Van~Ommen.
\newblock Inconsistency of {B}ayesian inference for misspecified linear models,
  and a proposal for repairing it.
\newblock \emph{Bayesian Analysis}, 12\penalty0 (4):\penalty0 1069--1103, 2017.

\bibitem[Guedj(2019)]{PACPrimer}
Benjamin Guedj.
\newblock A primer on {PAC}-{B}ayesian learning.
\newblock \emph{arXiv preprint arXiv:1901.05353}, 2019.

\bibitem[Hamelijnck et~al.(2019)Hamelijnck, Damoulas, Wang, and
  Girolami]{OllieDGP}
Oliver Hamelijnck, Theodoros Damoulas, Kangrui Wang, and Mark Girolami.
\newblock Multi-resolution multi-task gaussian processes.
\newblock In \emph{Advances in Neural Information Processing Systems}, 2019.

\bibitem[Hampel et~al.(2011)Hampel, Ronchetti, Rousseeuw, and
  Stahel]{RobustStatsBook}
Frank~R Hampel, Elvezio~M Ronchetti, Peter~J Rousseeuw, and Werner~A Stahel.
\newblock \emph{Robust statistics: the approach based on influence functions},
  volume 196.
\newblock John Wiley \& Sons, 2011.

\bibitem[Hegde et~al.(2019)Hegde, Heinonen, L{\"a}hdesm{\"a}ki, and
  Kaski]{diffGP}
Pashupati Hegde, Markus Heinonen, Harri L{\"a}hdesm{\"a}ki, and Samuel Kaski.
\newblock Deep learning with differential gaussian process flows.
\newblock In \emph{The 22nd International Conference on Artificial Intelligence
  and Statistics}, pages 1812--1821, 2019.

\bibitem[Hensman and Lawrence(2014)]{NestedVariationalDGPs}
James Hensman and Neil~D. Lawrence.
\newblock Nested variational compression in deep {G}aussian processes.
\newblock \emph{stat}, 1050:\penalty0 3, 2014.

\bibitem[Hern{\'a}ndez-Lobato and Adams(2015)]{PBP}
Jos{\'e}~Miguel Hern{\'a}ndez-Lobato and Ryan Adams.
\newblock Probabilistic backpropagation for scalable learning of {B}ayesian
  neural networks.
\newblock In \emph{International Conference on Machine Learning}, pages
  1861--1869, 2015.

\bibitem[Hern{\'a}ndez-Lobato et~al.(2016)Hern{\'a}ndez-Lobato, Li, Rowland,
  Hern{\'a}ndez-Lobato, Bui, and Turner]{AlphaDiv}
Jos{\'e}~Miguel Hern{\'a}ndez-Lobato, Yingzhen Li, Mark Rowland, Daniel
  Hern{\'a}ndez-Lobato, Thang~D Bui, and Richard~E Turner.
\newblock Black-box $\alpha$-divergence minimization.
\newblock In \emph{Proceedings of the 33rd International Conference on
  International Conference on Machine Learning-Volume 48}, pages 1511--1520,
  2016.

\bibitem[Higgins et~al.(2017)Higgins, Matthey, Pal, Burgess, Glorot, Botvinick,
  Mohamed, and Lerchner]{bVAE}
Irina Higgins, Loic Matthey, Arka Pal, Christopher Burgess, Xavier Glorot,
  Matthew Botvinick, Shakir Mohamed, and Alexander Lerchner.
\newblock beta-vae: Learning basic visual concepts with a constrained
  variational framework.
\newblock In \emph{International Conference on Learning Representations},
  volume~3, 2017.

\bibitem[Hoffman et~al.(2013)Hoffman, Blei, Wang, and Paisley]{SVI}
Matthew~D Hoffman, David~M Blei, Chong Wang, and John Paisley.
\newblock Stochastic variational inference.
\newblock \emph{The Journal of Machine Learning Research}, 14\penalty0
  (1):\penalty0 1303--1347, 2013.

\bibitem[Holmes and Walker(2017)]{holmes2017assigning}
Chris Holmes and Stephen Walker.
\newblock Assigning a value to a power likelihood in a general bayesian model.
\newblock \emph{Biometrika}, 104\penalty0 (2):\penalty0 497--503, 2017.

\bibitem[Hooker and Vidyashankar(2014)]{MinDisparities}
Giles Hooker and Anand~N Vidyashankar.
\newblock Bayesian model robustness via disparities.
\newblock \emph{Test}, 23\penalty0 (3):\penalty0 556--584, 2014.

\bibitem[Huang et~al.(2018)Huang, Tan, Lacoste, and Courville]{AnnealedVI}
Chin-Wei Huang, Shawn Tan, Alexandre Lacoste, and Aaron~C. Courville.
\newblock Improving explorability in variational inference with annealed
  variational objectives.
\newblock In \emph{Advances in Neural Information Processing Systems}, pages
  9724--9734, 2018.

\bibitem[Hung et~al.(2018)Hung, Jou, and Huang]{GammaDivSummable}
Hung Hung, Zhi-Yu Jou, and Su-Yun Huang.
\newblock Robust mislabel logistic regression without modeling mislabel
  probabilities.
\newblock \emph{Biometrics}, 74\penalty0 (1):\penalty0 145--154, 2018.

\bibitem[Hyv{\"a}rinen(2005)]{scoreMatching}
Aapo Hyv{\"a}rinen.
\newblock Estimation of non-normalized statistical models by score matching.
\newblock \emph{Journal of Machine Learning Research}, 6:\penalty0 695--708,
  2005.

\bibitem[Jankowiak et~al.(2019)Jankowiak, Pleiss, and Gardner]{MartinJankowiak}
Martin Jankowiak, Geoff Pleiss, and Jacob~R Gardner.
\newblock Sparse gaussian process regression beyond variational inference.
\newblock \emph{arXiv preprint arXiv:1910.07123}, 2019.

\bibitem[Jaynes(2003)]{jaynes2003probability}
Edwin~T. Jaynes.
\newblock \emph{Probability theory: The logic of science}.
\newblock Cambridge university press, 2003.

\bibitem[Jeffreys(1961)]{jeffreys1961theory}
H.~Jeffreys.
\newblock Theory of probability: {O}xford {U}niv.
\newblock \emph{Press (earlier editions 1939, 1948)}, 1961.

\bibitem[Jewson et~al.(2018)Jewson, Smith, and Holmes]{Jewson}
Jack Jewson, Jim Smith, and Chris Holmes.
\newblock Principles of {B}ayesian inference using general divergence criteria.
\newblock \emph{Entropy}, 20\penalty0 (6):\penalty0 442, 2018.

\bibitem[Jordan et~al.(1999)Jordan, Ghahramani, Jaakkola, and Saul]{JordanVI}
Michael~I Jordan, Zoubin Ghahramani, Tommi~S Jaakkola, and Lawrence~K Saul.
\newblock An introduction to variational methods for graphical models.
\newblock \emph{Machine learning}, 37\penalty0 (2):\penalty0 183--233, 1999.

\bibitem[Kingma and Ba(2014)]{ADAM}
Diederik~P Kingma and Jimmy Ba.
\newblock Adam: A method for stochastic optimization.
\newblock In \emph{International Conference on Learning Representations}, 2014.

\bibitem[Kingma and Welling(2013)]{VAE}
Diederik~P Kingma and Max Welling.
\newblock Auto-encoding variational {B}ayes.
\newblock In \emph{International Conference on Learning Representations}, 2013.

\bibitem[Knoblauch(2019{\natexlab{a}})]{GVIConsistency}
Jeremias Knoblauch.
\newblock Frequentist consistency of generalized variational inference.
\newblock \emph{arXiv preprint arXiv:1912.04946}, 2019{\natexlab{a}}.

\bibitem[Knoblauch(2019{\natexlab{b}})]{RDGP}
Jeremias Knoblauch.
\newblock Robust deep {G}aussian processes.
\newblock \emph{arXiv preprint arXiv:1904.02303}, 2019{\natexlab{b}}.

\bibitem[Knoblauch and Damoulas(2018)]{BOCPDMS}
Jeremias Knoblauch and Theodoros Damoulas.
\newblock Spatio-temporal {B}ayesian on-line changepoint detection with model
  selection.
\newblock In \emph{Proceedings of the 27th International Conference on Machine
  Learning (ICML)}, 2018.

\bibitem[Knoblauch et~al.(2018)Knoblauch, Jewson, and Damoulas]{RBOCPD}
Jeremias Knoblauch, Jack Jewson, and Theodoros Damoulas.
\newblock Doubly robust {B}ayesian inference for non-stationary streaming data
  using $\beta$-divergences.
\newblock In \emph{Advances in {N}eural {I}nformation {P}rocessing {S}ystems
  ({N}eur{I}{P}{S})}, pages 64--75, 2018.

\bibitem[Kullback and Leibler(1951)]{kullback1951information}
Solomon Kullback and Richard~A Leibler.
\newblock On information and sufficiency.
\newblock \emph{The annals of mathematical statistics}, 22\penalty0
  (1):\penalty0 79--86, 1951.

\bibitem[Kurtek and Bharath(2015)]{InfFct}
Sebastian Kurtek and Karthik Bharath.
\newblock Bayesian sensitivity analysis with the {F}isher--{R}ao metric.
\newblock \emph{Biometrika}, 102\penalty0 (3):\penalty0 601--616, 2015.

\bibitem[Ku{\'s}mierczyk et~al.(2019)Ku{\'s}mierczyk, Sakaya, and
  Klami]{VIUtility}
Tomasz Ku{\'s}mierczyk, Joseph Sakaya, and Arto Klami.
\newblock Variational {B}ayesian decision-making for continuous utilities.
\newblock In \emph{Advances in Neural Information Processing Systems}, 2019.

\bibitem[Lacoste-Julien et~al.(2011)Lacoste-Julien, Husz{\'a}r, and
  Ghahramani]{VIUtilityOld}
Simon Lacoste-Julien, Ferenc Husz{\'a}r, and Zoubin Ghahramani.
\newblock Approximate inference for the loss-calibrated {B}ayesian.
\newblock In \emph{Proceedings of the Fourteenth International Conference on
  Artificial Intelligence and Statistics}, pages 416--424, 2011.

\bibitem[Letarte et~al.(2019)Letarte, Germain, Guedj, and
  Laviolette]{Dichotomize}
Ga{\"e}l Letarte, Pascal Germain, Benjamin Guedj, and Fran{\c{c}}ois
  Laviolette.
\newblock Dichotomize and generalize: Pac-bayesian binary activated deep neural
  networks.
\newblock In \emph{Advances in Neural Information Processing Systems}, 2019.

\bibitem[Li and Turner(2016)]{RenyiDiv}
Yingzhen Li and Richard~E Turner.
\newblock R{\'e}nyi divergence variational inference.
\newblock In \emph{Advances in Neural Information Processing Systems}, pages
  1073--1081, 2016.

\bibitem[Lichman(2013)]{UCI}
Moshe Lichman.
\newblock {UCI} machine learning repository, 2013.

\bibitem[Liese and Vajda(1987)]{ConvexStatisticalDistances}
F~Liese and I~Vajda.
\newblock Convex statistical distances, volume 95 of teubner texts in
  mathematics.
\newblock \emph{BSB BG Teubner Verlagsgesellschaft, Leipzig}, 1987.

\bibitem[Loaiza-Ganem and Cunningham(2019)]{bernoulliVAE}
Gabriel Loaiza-Ganem and John~P. Cunningham.
\newblock The continuous {B}ernoulli: fixing a pervasive error in variational
  autoencoders.
\newblock In \emph{Advances in Neural Information Processing Systems}, 2019.

\bibitem[MacKay(1996)]{BNNs2}
David J.~C. MacKay.
\newblock Bayesian methods for backpropagation networks.
\newblock In \emph{Models of neural networks III}, pages 211--254. Springer,
  1996.

\bibitem[MacKay(1998)]{Laplace2}
David J.~C. MacKay.
\newblock Choice of basis for {L}aplace approximation.
\newblock \emph{Machine learning}, 33\penalty0 (1):\penalty0 77--86, 1998.

\bibitem[Matthews et~al.(2016)Matthews, Hensman, Turner, and
  Ghahramani]{InducingProcesses}
Alexander G. de~G. Matthews, James Hensman, Richard Turner, and Zoubin
  Ghahramani.
\newblock On sparse variational methods and the {K}ullback-{L}eibler divergence
  between stochastic processes.
\newblock \emph{Journal of Machine Learning Research}, 51:\penalty0 231--239,
  2016.

\bibitem[Matthews et~al.(2017)Matthews, Van Der~Wilk, Nickson, Fujii,
  Boukouvalas, Le{\'o}n-Villagr{\'a}, Ghahramani, and Hensman]{gpflow}
Alexander G. de~G. Matthews, Mark Van Der~Wilk, Tom Nickson, Keisuke Fujii,
  Alexis Boukouvalas, Pablo Le{\'o}n-Villagr{\'a}, Zoubin Ghahramani, and James
  Hensman.
\newblock Gpflow: A {G}aussian process library using tensorflow.
\newblock \emph{The Journal of Machine Learning Research}, 18\penalty0
  (1):\penalty0 1299--1304, 2017.

\bibitem[McAllester(1999{\natexlab{a}})]{McAllester1}
David~A. McAllester.
\newblock Some {PAC}-{B}ayesian theorems.
\newblock \emph{Machine Learning}, 37\penalty0 (3):\penalty0 355--363,
  1999{\natexlab{a}}.

\bibitem[McAllester(1999{\natexlab{b}})]{McAllester2}
David~A. McAllester.
\newblock {PAC}-{B}ayesian model averaging.
\newblock In \emph{Proceedings of the twelfth annual conference on
  Computational learning theory}, pages 164--170. ACM, 1999{\natexlab{b}}.

\bibitem[Mihoko and Eguchi(2002)]{mihoko2002robust}
Minami Mihoko and Shinto Eguchi.
\newblock Robust blind source separation by beta divergence.
\newblock \emph{Neural computation}, 14\penalty0 (8):\penalty0 1859--1886,
  2002.

\bibitem[Miller and Dunson(2019)]{DunsonCoarsening}
Jeffrey~W. Miller and David~B. Dunson.
\newblock Robust {B}ayesian inference via coarsening.
\newblock \emph{Journal of the American Statistical Association}, 114\penalty0
  (527):\penalty0 1113--1125, 2019.

\bibitem[Minka(2005)]{MinkaDivergences}
Thomas Minka.
\newblock Divergence measures and message passing.
\newblock Technical report, Technical report, Microsoft Research, 2005.

\bibitem[Minka(2001)]{EP}
Thomas~P Minka.
\newblock Expectation propagation for approximate {B}ayesian inference.
\newblock In \emph{Proceedings of the Seventeenth conference on Uncertainty in
  artificial intelligence}, pages 362--369. Morgan Kaufmann Publishers Inc.,
  2001.

\bibitem[Nakagawa and Hashimoto(2019)]{RobustBayesGamma}
Tomoyuki Nakagawa and Shintaro Hashimoto.
\newblock Robust {B}ayesian inference via $\gamma$-divergence.
\newblock \emph{Communications in Statistics-Theory and Methods}, pages 1--18,
  2019.

\bibitem[Neal(2012)]{BNNs}
Radford~M Neal.
\newblock \emph{{B}ayesian learning for neural networks}, volume 118.
\newblock Springer Science \& Business Media, 2012.

\bibitem[Neal and Hinton(1998)]{EMFreeEnergy}
Radford~M Neal and Geoffrey~E Hinton.
\newblock A view of the {EM} algorithm that justifies incremental, sparse, and
  other variants.
\newblock In \emph{Learning in graphical models}, pages 355--368. Springer,
  1998.

\bibitem[O'Hagan and Oakley(2004)]{o2004probability}
Anthony O'Hagan and Jeremy~E Oakley.
\newblock Probability is perfect, but we can't elicit it perfectly.
\newblock \emph{Reliability Engineering \& System Safety}, 85\penalty0
  (1):\penalty0 239--248, 2004.

\bibitem[Opper and Winther(2000)]{EP2}
Manfred Opper and Ole Winther.
\newblock {G}aussian processes for classification: Mean-field algorithms.
\newblock \emph{Neural computation}, 12\penalty0 (11):\penalty0 2655--2684,
  2000.

\bibitem[O'Ruanaidh(1994)]{FirstWellLog}
Joseph J.~K. O'Ruanaidh.
\newblock \emph{Numerical Bayesian methods applied to signal processing}.
\newblock PhD thesis, University of Cambridge, 1994.

\bibitem[Paisley et~al.(2012)Paisley, Blei, and Jordan]{VISS}
John Paisley, David~M. Blei, and Michael~I. Jordan.
\newblock Variational {B}ayesian inference with stochastic search.
\newblock In \emph{Proceedings of the 29th International Coference on
  International Conference on Machine Learning}, pages 1363--1370, 2012.

\bibitem[Pauli et~al.(2011)Pauli, Racugno, and Ventura]{compositeLikelihood1}
Francesco Pauli, Walter Racugno, and Laura Ventura.
\newblock Bayesian composite marginal likelihoods.
\newblock \emph{Statistica Sinica}, pages 149--164, 2011.

\bibitem[Peng and Dey(1995)]{peng1995bayesian}
Fengchun Peng and Dipak~K Dey.
\newblock Bayesian analysis of outlier problems using divergence measures.
\newblock \emph{Canadian Journal of Statistics}, 23\penalty0 (2):\penalty0
  199--213, 1995.

\bibitem[Qui{\~n}onero-Candela and Rasmussen(2005)]{GPApprox3}
Joaquin Qui{\~n}onero-Candela and Carl~Edward Rasmussen.
\newblock A unifying view of sparse approximate {G}aussian process regression.
\newblock \emph{Journal of Machine Learning Research}, 6\penalty0
  (Dec):\penalty0 1939--1959, 2005.

\bibitem[Ranganath et~al.(2014)Ranganath, Gerrish, and Blei]{BBVI}
Rajesh Ranganath, Sean Gerrish, and David Blei.
\newblock Black box variational inference.
\newblock In \emph{Artificial Intelligence and Statistics}, pages 814--822,
  2014.

\bibitem[Ranganath et~al.(2016)Ranganath, Tran, Altosaar, and Blei]{OperatorVI}
Rajesh Ranganath, Dustin Tran, Jaan Altosaar, and David Blei.
\newblock Operator variational inference.
\newblock In \emph{Advances in Neural Information Processing Systems}, pages
  496--504, 2016.

\bibitem[Regli and Silva(2018)]{SABDiv}
Jean-Baptiste Regli and Ricardo Silva.
\newblock Alpha-beta divergence for variational inference.
\newblock \emph{arXiv preprint arXiv:1805.01045}, 2018.

\bibitem[R{\'e}nyi(1961)]{renyi1961measures}
Alfr{\'e}d R{\'e}nyi.
\newblock On measures of entropy and information.
\newblock In \emph{Proceedings of the Fourth Berkeley Symposium on Mathematical
  Statistics and Probability, Volume 1: Contributions to the Theory of
  Statistics}. The Regents of the University of California, 1961.

\bibitem[Ribatet et~al.(2012)Ribatet, Cooley, and
  Davison]{compositeLikelihood2}
Mathieu Ribatet, Daniel Cooley, and Anthony~C Davison.
\newblock Bayesian inference from composite likelihoods, with an application to
  spatial extremes.
\newblock \emph{Statistica Sinica}, pages 813--845, 2012.

\bibitem[Roberts and Rosenthal(1998)]{roberts1998optimal}
Gareth~O Roberts and Jeffrey~S Rosenthal.
\newblock Optimal scaling of discrete approximations to {L}angevin diffusions.
\newblock \emph{Journal of the Royal Statistical Society: Series B (Statistical
  Methodology)}, 60\penalty0 (1):\penalty0 255--268, 1998.

\bibitem[Roberts et~al.(1997)Roberts, Gelman, and Gilks]{roberts1997weak}
Gareth~O Roberts, Andrew Gelman, and Walter~R Gilks.
\newblock Weak convergence and optimal scaling of random walk {M}etropolis
  algorithms.
\newblock \emph{The annals of applied probability}, 7\penalty0 (1):\penalty0
  110--120, 1997.

\bibitem[Rossi et~al.(2019{\natexlab{a}})Rossi, Marmin, and Filippone]{BNNUQ1}
Simone Rossi, Sebastien Marmin, and Maurizio Filippone.
\newblock Walsh-{H}adamard variational inference for {B}ayesian deep learning.
\newblock \emph{arXiv preprint arXiv:1905.11248}, 2019{\natexlab{a}}.

\bibitem[Rossi et~al.(2019{\natexlab{b}})Rossi, Michiardi, and
  Filippone]{BNNUQ3}
Simone Rossi, Pietro Michiardi, and Maurizio Filippone.
\newblock Good initializations of variational {B}ayes for deep models.
\newblock In \emph{Proceedings of the 36th International Conference on Machine
  Learning}, volume~97, pages 5487--5497, 2019{\natexlab{b}}.

\bibitem[Rue et~al.(2009)Rue, Martino, and Chopin]{INLA}
H{\aa}vard Rue, Sara Martino, and Nicolas Chopin.
\newblock Approximate {B}ayesian inference for latent {G}aussian models by
  using integrated nested {L}aplace approximations.
\newblock \emph{Journal of the royal statistical society: Series b (statistical
  methodology)}, 71\penalty0 (2):\penalty0 319--392, 2009.

\bibitem[Saat{\c{c}}i et~al.(2010)Saat{\c{c}}i, Turner, and Rasmussen]{GPBOCD}
Yunus Saat{\c{c}}i, Ryan~D. Turner, and Carl~E. Rasmussen.
\newblock {G}aussian process change point models.
\newblock In \emph{Proceedings of the 27th International Conference on Machine
  Learning}, pages 927--934, 2010.

\bibitem[Saha et~al.(2019)Saha, Bharath, and Kurtek]{GeometricRenyiAlphaVI}
Abhijoy Saha, Karthik Bharath, and Sebastian Kurtek.
\newblock A geometric variational approach to bayesian inference.
\newblock \emph{Journal of the American Statistical Association}, pages 1--25,
  2019.

\bibitem[Salimans and Knowles(2014)]{ControlVariateVI}
Tim Salimans and David~A Knowles.
\newblock On using control variates with stochastic approximation for
  variational {B}ayes and its connection to stochastic linear regression.
\newblock \emph{arXiv preprint arXiv:1401.1022}, 2014.

\bibitem[Salimbeni and Deisenroth(2017)]{DeepGPsVI}
Hugh Salimbeni and Marc Deisenroth.
\newblock Doubly stochastic variational inference for deep {G}aussian
  processes.
\newblock In \emph{Advances in Neural Information Processing Systems}, pages
  4588--4599, 2017.

\bibitem[Shawe-Taylor and Williamson(1997)]{ShaweTaylor}
John Shawe-Taylor and Robert~C Williamson.
\newblock A {PAC} analysis of a {B}ayesian estimator.
\newblock In \emph{Annual Workshop on Computational Learning Theory:
  Proceedings of the tenth annual conference on Computational learning theory},
  volume~6, pages 2--9, 1997.

\bibitem[Shen and Wasserman(2001)]{BayesianConsistencyClassic2}
Xiaotong Shen and Larry Wasserman.
\newblock Rates of convergence of posterior distributions.
\newblock \emph{The Annals of Statistics}, 29\penalty0 (3):\penalty0 687--714,
  2001.

\bibitem[Shun and McCullagh(1995)]{Laplace3}
Zhenming Shun and Peter McCullagh.
\newblock Laplace approximation of high dimensional integrals.
\newblock \emph{Journal of the Royal Statistical Society: Series B
  (Methodological)}, 57\penalty0 (4):\penalty0 749--760, 1995.

\bibitem[Snelson and Ghahramani(2006)]{GPApprox2}
Edward Snelson and Zoubin Ghahramani.
\newblock Sparse {G}aussian processes using pseudo-inputs.
\newblock In \emph{Advances in neural information processing systems}, pages
  1257--1264, 2006.

\bibitem[S{\o}nderby et~al.(2016)S{\o}nderby, Raiko, Maal{\o}e, S{\o}nderby,
  and Winther]{BNNUQ4}
Casper~Kaae S{\o}nderby, Tapani Raiko, Lars Maal{\o}e, S{\o}ren~Kaae
  S{\o}nderby, and Ole Winther.
\newblock Ladder variational autoencoders.
\newblock In \emph{Advances in neural information processing systems}, pages
  3738--3746, 2016.

\bibitem[Tierney and Kadane(1986)]{Laplace1}
Luke Tierney and Joseph~B Kadane.
\newblock Accurate approximations for posterior moments and marginal densities.
\newblock \emph{Journal of the american statistical association}, 81\penalty0
  (393):\penalty0 82--86, 1986.

\bibitem[Tishby et~al.(2000)Tishby, Pereira, and Bialek]{infoBottleneckMethod}
Naftali Tishby, Fernando~C Pereira, and William Bialek.
\newblock The information bottleneck method.
\newblock \emph{arXiv preprint physics/0004057}, 2000.

\bibitem[Titsias(2009)]{GPApprox4}
Michalis Titsias.
\newblock Variational learning of inducing variables in sparse {G}aussian
  processes.
\newblock In \emph{Artificial Intelligence and Statistics}, pages 567--574,
  2009.

\bibitem[Titsias and L{\'a}zaro-Gredilla(2014)]{BBVI2}
Michalis Titsias and Miguel L{\'a}zaro-Gredilla.
\newblock Doubly stochastic variational {B}ayes for non-conjugate inference.
\newblock In \emph{International Conference on Machine Learning}, pages
  1971--1979, 2014.

\bibitem[Tukey(1960)]{Tukeey1960}
John~W Tukey.
\newblock A survey of sampling from contaminated distributions.
\newblock \emph{Contributions to probability and statistics}, pages 448--485,
  1960.

\bibitem[Turner and Sahani(2011)]{TurnerSahani11}
R.~E. Turner and M.~Sahani.
\newblock Two problems with variational expectation maximisation for
  time-series models.
\newblock In \emph{{B}ayesian time series models}. Cambridge University Press,
  2011.

\bibitem[Turner et~al.(2013)Turner, Bottone, and Stanek]{TurnerVB}
Ryan~D. Turner, Steven Bottone, and Clay~J. Stanek.
\newblock Online variational approximations to non-exponential family change
  point models: with application to radar tracking.
\newblock In \emph{Advances in Neural Information Processing Systems}, pages
  306--314, 2013.

\bibitem[Vafa(2016)]{SamplingDGPs}
Keyon Vafa.
\newblock Training deep {G}aussian processes with sampling.
\newblock In \emph{NIPS 2016 Workshop on Advances in Approximate Bayesian
  Inference}, 2016.

\bibitem[Van~Erven and Harremos(2014)]{RenyiKLD}
Tim Van~Erven and Peter Harremos.
\newblock R{\'e}nyi divergence and {K}ullback-{L}eibler divergence.
\newblock \emph{IEEE Transactions on Information Theory}, 60\penalty0
  (7):\penalty0 3797--3820, 2014.

\bibitem[Varin et~al.(2011)Varin, Reid, and Firth]{compositeLikelihoodOverview}
Cristiano Varin, Nancy Reid, and David Firth.
\newblock An overview of composite likelihood methods.
\newblock \emph{Statistica Sinica}, pages 5--42, 2011.

\bibitem[Walker(2004)]{BayesianConsistencyWalker}
Stephen Walker.
\newblock New approaches to {B}ayesian consistency.
\newblock \emph{The Annals of Statistics}, 32\penalty0 (5):\penalty0
  2028--2043, 2004.

\bibitem[Wang et~al.(2018)Wang, Liu, and Liu]{fDivVI}
Dilin Wang, Hao Liu, and Qiang Liu.
\newblock Variational inference with tail-adaptive f-divergence.
\newblock In \emph{Advances in Neural Information Processing Systems}, pages
  5742--5752, 2018.

\bibitem[Wang et~al.(2019)Wang, Pleiss, Gardner, Tyree, Weinberger, and
  Wilson]{GPexample1}
Ke~Alexander Wang, Geoff Pleiss, Jacob~R Gardner, Stephen Tyree, Kilian~Q
  Weinberger, and Andrew~Gordon Wilson.
\newblock Exact {G}aussian processes on a million data points.
\newblock \emph{arXiv preprint arXiv:1903.08114}, 2019.

\bibitem[Wang et~al.(2016)Wang, Brubaker, Chaib-Draa, and Urtasun]{SMCDGPs}
Yali Wang, Marcus Brubaker, Brahim Chaib-Draa, and Raquel Urtasun.
\newblock Sequential inference for deep {G}aussian process.
\newblock In \emph{Artificial Intelligence and Statistics}, pages 694--703,
  2016.

\bibitem[Williams and Seeger(2001)]{GPApprox1}
Christopher~KI Williams and Matthias Seeger.
\newblock Using the {N}ystr{\"o}m method to speed up kernel machines.
\newblock In \emph{Advances in neural information processing systems}, pages
  682--688, 2001.

\bibitem[Wilson et~al.(2010)Wilson, Nassar, and Gold]{HazardLearningBOCD}
Robert~C Wilson, Matthew~R Nassar, and Joshua~I Gold.
\newblock Bayesian online learning of the hazard rate in change-point problems.
\newblock \emph{Neural computation}, 22\penalty0 (9):\penalty0 2452--2476,
  2010.

\bibitem[Wu et~al.(2019)Wu, Goodman, and Ermon]{AntitheticBBVI}
Mike Wu, Noah Goodman, and Stefano Ermon.
\newblock Differentiable antithetic sampling for variance reduction in
  stochastic variational inference.
\newblock In \emph{Proceedings of Machine Learning Research}, volume~89, pages
  2877--2886, 2019.

\bibitem[Yang et~al.(2019)Yang, Martin, and Bondell]{FisherFVI}
Yue Yang, Ryan Martin, and Howard Bondell.
\newblock Variational approximations using {F}isher divergence.
\newblock \emph{arXiv preprint arXiv:1905.05284}, 2019.

\bibitem[Yang et~al.(2017)Yang, Pati, and Bhattacharya]{alpha-VI}
Yun Yang, Debdeep Pati, and Anirban Bhattacharya.
\newblock $\alpha$-variational inference with statistical guarantees.
\newblock \emph{arXiv preprint arXiv:1710.03266}, 2017.

\bibitem[Zellner(1977)]{zellner1977maximal}
Arnold Zellner.
\newblock Maximal data information prior distributions.
\newblock \emph{New developments in the applications of Bayesian methods},
  pages 211--232, 1977.

\bibitem[Zellner(1988)]{Zellner}
Arnold Zellner.
\newblock Optimal information processing and {B}ayes's theorem.
\newblock \emph{The American Statistician}, 42\penalty0 (4):\penalty0 278--280,
  1988.

\bibitem[Zhang et~al.(2018)Zhang, Sun, Duvenaud, and Grosse]{BNNUQ5}
Guodong Zhang, Shengyang Sun, David Duvenaud, and Roger Grosse.
\newblock Noisy natural gradient as variational inference.
\newblock In Jennifer Dy and Andreas Krause, editors, \emph{Proceedings of the
  35th International Conference on Machine Learning}, volume~80, pages
  5852--5861, 2018.

\bibitem[Zhu et~al.(2014)Zhu, Chen, and Xing]{RegBayes2}
Jun Zhu, Ning Chen, and Eric~P Xing.
\newblock Bayesian inference with posterior regularization and applications to
  infinite latent svms.
\newblock \emph{The Journal of Machine Learning Research}, 15\penalty0
  (1):\penalty0 1799--1847, 2014.

\end{thebibliography}

\end{document}